\setlist{itemsep=0.8pt, topsep=2pt}
\theoremstyle{plain}
\newtheorem{theorem}{Theorem}[section]
\newtheorem{proposition}[theorem]{Proposition}
\newtheorem{lemma}[theorem]{Lemma}
\theoremstyle{definition}
\newtheorem{definition}[theorem]{Definition}
\newtheorem{assumption}{Assumption}
\theoremstyle{remark}
\newtheorem{remark}[theorem]{Remark}
\newcommand{\norm}[1]{\lVert#1\rVert}
\newcommand{\Norm}[1]{\left\lVert#1\right\rVert}
\newcommand{\abs}[1]{\left\lvert#1\right\rvert}
\DeclareMathOperator{\tr}{tr}
\DeclareMathOperator{\id}{id}
\DeclareMathOperator{\Var}{Var}
\DeclareMathOperator{\diag}{diag}
\DeclareMathOperator{\rank}{rank}
\DeclareMathOperator{\MLP}{MLP}
\DeclareMathOperator{\LSA}{LSA}
\DeclareMathOperator{\Tan}{Tan}
\DeclareMathOperator{\GP}{GP}
\newcommand{\E}[1]{\mathbb{E}[#1]}
\newcommand{\EE}[2]
{\mathbb{E}_{#1}[#2]}
\newcommand{\EEbig}[2]{\mathbb{E}_{#1}\left[#2\right]}
\DeclareMathOperator{\RR}{\mathbb{R}}
\DeclareMathOperator{\ZZ}{\mathbb{Z}}
\DeclareMathOperator{\NN}{\mathbb{N}}
\DeclareMathOperator{\XX}{\mathscr{X}}
\DeclareMathOperator{\PP}{\mathscr{P}}
\DeclareMathOperator{\TT}{\mathscr{T}}
\DeclareMathOperator{\BB}{\mathcal{B}}
\DeclareMathOperator{\Brr}{\mathscr{B}}
\DeclareMathOperator{\LL}{\mathcal{L}}
\DeclareMathOperator{\DD}{\mathcal{D}}
\DeclareMathOperator{\WW}{\mathcal{W}}
\DeclareMathOperator{\HH}{\mathcal{H}}
\newcommand\bx{\bm{x}}
\newcommand\ba{\bm{a}}
\newcommand\bw{\bm{w}}
\newcommand\bv{\bm{v}}
\newcommand\bz{\bm{z}}
\newcommand\br{\bm{r}}
\newcommand\bk{\bm{k}}
\newcommand\bW{\mathbf{W}}
\newcommand\bI{\mathbf{I}}
\newcommand\bT{\mathbf{T}}
\newcommand\bR{\mathbf{R}}
\newcommand\bE{\mathbf{E}}
\newcommand\bU{\mathbf{U}}
\newcommand\bV{\mathbf{V}}
\newcommand\bD{\mathbf{D}}
\newcommand\bZ{\mathbf{Z}}
\newcommand\bL{\mathbf{L}}
\newcommand\bH{\mathbf{H}}
\newcommand\bA{\mathbf{A}}
\newcommand\bB{\mathbf{B}}
\newcommand\bK{\mathbf{K}}
\newcommand\mm{m}
\newcommand\qq{\textup{qr}}
\newcommand\bSig{\bm{\Sigma}}
\newcommand\bxi{\bm{\xi}}
\newcommand\deltaL{\frac{\delta\!\LL}{\delta\mu}}
\newcommand\ddeltaL{\frac{\delta^2\!\!\LL}{\delta\mu^2}}
\newcommand\deltaF{\frac{\delta F}{\delta\mu}}
\newcommand\ddeltaF{\frac{\delta^2 \!F}{\delta\mu^2}}
\newcommand*{\rd}{\mathop{}\!\mathrm{d}}
\DeclareMathOperator{\Lip}{\mathrm{Lip}}
\DeclareMathOperator*{\argmin}{arg\,min}
\DeclareMathOperator*{\esssup}{ess\,sup}
\icmltitlerunning{Transformers Learn Nonlinear Features In Context}
\begin{document}

\twocolumn[
\icmltitle{Transformers Learn Nonlinear Features In Context:\\Nonconvex Mean-field Dynamics on the Attention Landscape}


\begin{icmlauthorlist}
\icmlauthor{Juno Kim}{ut,aip}
\icmlauthor{Taiji Suzuki}{ut,aip}
\end{icmlauthorlist}

\icmlaffiliation{ut}{Department of Mathematical Informatics, University of Tokyo, Tokyo, Japan}
\icmlaffiliation{aip}{Center for Advanced Intelligence Project, RIKEN, Tokyo, Japan}
\icmlcorrespondingauthor{Juno Kim}{junokim@g.ecc.u-tokyo.ac.jp}
\vskip 0.3in
]


\printAffiliationsAndNotice{}  

\begin{abstract}
Large language models based on the Transformer architecture have demonstrated impressive capabilities to learn in context. However, existing theoretical studies on how this phenomenon arises are limited to the dynamics of a single layer of attention trained on linear regression tasks. In this paper, we study the optimization of a Transformer consisting of a fully connected layer followed by a linear attention layer. The MLP acts as a common nonlinear representation or feature map, greatly enhancing the power of in-context learning. We prove in the mean-field and two-timescale limit that the infinite-dimensional loss landscape for the distribution of parameters, while highly nonconvex, becomes quite benign. We also analyze the second-order stability of mean-field dynamics and show that Wasserstein gradient flow almost always avoids saddle points. Furthermore, we establish novel methods for obtaining concrete improvement rates both away from and near critical points. This represents the first saddle point analysis of mean-field dynamics in general and the techniques are of independent interest.
\end{abstract}

\section{Introduction}

Attention-based neural architectures such as Transformers have revolutionized modern machine learning, from tasks in natural language and computer vision to multi-modal learning and beyond. Recently, interest has surged in the remarkable ability of such large language models to learn \emph{in context}, leading to a major paradigm shift in how intelligence arises in artificial systems. In-context learning (ICL) refers to the capacity of a pretrained model to solve previously unseen tasks based on demonstrative example prompts without further tuning its parameters.

A vigorous line of research initiated by \citet{Garg22} has sought to understand the mechanism behind ICL from a theoretical perspective, where prompts are real-valued input-output pairs $(x_i,f(x_i))_{i=1}^n$ generated from some function class $\TT$. Studies have shown that Transformers are capable of implementing various statistical learning algorithms such as gradient descent (GD) in context \citep{Oswald23, Akyrek23, Bai23}. In particular, \citet{Guo23} consider the realistic setting of learning \emph{with representations} where MLP layers act as transformations on top of which ICL is performed, and show that such models consistently achieve near-optimal performance.

While promising, these results are based on specific constructions which may not accurately reflect ICL in real models \citep{Shen23}. Other works have analyzed how ICL emerges from the training dynamics of Transformers \citep{Zhang23, Huang23, Ahn23a}. However, they are limited to models consisting of only a single attention layer due to the dynamical complexity and thus can only explain ICL of linear functions. Hence the following central question at the intersection of the two approaches remains unsolved:
\begin{center}
\emph{How does in-context learning with nonlinear representations (features) arise in Transformers with\\ MLP layers, optimized via gradient descent?}
\end{center}
In this paper, we investigate the optimization dynamics of a Transformer model consisting of a two-layer MLP followed by a linear attention layer, pretrained on linear transformations of feature representations. Contrary to existing approaches which attempt to solve for exact dynamics of the attention matrices, we factor out the attention layer via a two-timescale argument and study the geometric properties of the \emph{loss landscape} faced by an overparametrized MLP. Our contributions are highlighted as follows.
\begin{itemize}
\item We show that the MLP layer greatly increases the flexibility of ICL by extending the class of learnable functions to the Barron space and plays an essential role by encoding task-common features during pretraining.
\item Lifting to the mean-field regime, we show that this infinite-dimensional `attention landscape' is benign (strictly saddle) via directional analysis: all critical points are either global minima or saddle points.\footnote{We also include local maxima as saddle points for brevity. A rigorous definition is given in Section \ref{sec:stabwgf}.}
\item We formally prove that mean-field dynamics `almost always' avoids saddles, explaining how the MLP learns globally optimal representations. We develop a novel local stability analysis of Wasserstein gradient flow on the space of measures using tools from Otto calculus, optimal transport and functional analysis.
\item We further derive concrete improvement rates in three regions under slightly modified dynamics: away from saddle points, near global minima and near saddle points. For the last case, we discuss how perturbed dynamics may help ensure global convergence rates.
\end{itemize}

While the benignity of the attention landscape (Section \ref{sec:landscape}) is our central insight into ICL, Sections \ref{sec:mfd} and \ref{sec:iclrate} constitute the first qualitative and quantitative convergence analyses of \emph{nonconvex} mean-field dynamics around saddles and is also of significant independent interest from a technical standpoint.\footnote{\citet{Boufadene24} study a certain energy functional and prove benignity via flow interchange techniques. However, they do not discuss its implications for general gradient flow.} We present many novel results for general functionals and outline another application to three-layer neural networks. Finally, we conduct numerical experiments complementing our theory.

Theoretical preliminaries are provided in Appendix \ref{app:preliminaries}, and proofs of all results in Sections \ref{sec:icfl}-\ref{sec:iclrate} are given throughout Appendices \ref{app:icfl}-\ref{app:iclrate}.

\subsection{Related Works}

\paragraph{In-context learning.} A wide literature has developed around the various aspects of ICL; we only mention those most relevant to our setup. \citet{Akyrek23, Oswald23, Mahankali23} give a construction where a single linear attention layer is equivalent to one step of GD or ridge regression. Transformers are also capable of implementing statistical \citep{Bai23} and reinforcement learning algorithms \citep{Lin23} and model averaging \citep{Zhang23Bayes}. The attention-over-representation viewpoint has been studied by \citet{Guo23} and also \citet{Tsai19, Han23} from a kernel regression perspective. \citet{Zhang23} analyze the optimization of a linear attention-only Transformer and show global convergence; a relationship to preconditioned GD is established in \citet{Ahn23a}. Also, \citet{Huang23} give a stage-wise analysis for the softmax attention-only model. Finally, a joint dynamic framework for MLP and attention has been proposed in \citet{Tian23}. 

\paragraph{Mean-field dynamics (MFD).} Let $h_\theta$ denote a single neuron with parameter $\theta\in\Omega\subseteq\RR^m$ and $\PP(\Omega)$ the space of probability measures over $\Omega$.\footnote{We will also consider the space $\PP_2(\Omega)$ of the space of probability measures on $\Theta$ with bounded second moment that vanish on the boundary of $\Theta$, equipped with the 2-Wasserstein metric.} Consider a width $N$ two-layer neural network with $\frac{1}{N}$ scaling,
\begin{equation*}
\textstyle h_N(\bx) := \frac{1}{N}\sum_{j=1}^N h_{\theta^{(j)}}(\bx).
\end{equation*}
In the infinite-width limit $N\to\infty$, the output can be seen as an expectation $h_\mu(\bx) := \EE{\theta\sim\mu}{h_\theta(\bx)}$ over a distribution $\mu\in\PP(\Omega)$. The corresponding mean-field limit of gradient flow (GF) w.r.t. an objective functional $F: \PP(\Omega)\to\RR$ is known to be equivalent to the Wasserstein gradient flow \citep{Jordan98} and is given by the continuity equation
\begin{equation}\label{eqn:mfd}
\textstyle\partial_t\mu_t = \nabla\cdot\left(\mu_t\nabla\deltaF(\mu_t)\right),\quad t\geq 0,
\end{equation}
see Section \ref{sec:fininfin}. Networks in this regime are capable of dynamic feature learning, compared to the NTK regime where the underlying kernel is essentially frozen.
Works such as \citet{Chizat18, Mei18, Nitanda22} exploit the linearity in $\mu$ and the convexity of the loss to lift to a convex optimization problem on $\PP(\Omega)$ and obtain convergence results. In contrast, the ICL loss is inherently nonconvex due to the additional attention layer.

\paragraph{Landscape analyses.} Certain nonconvex objectives such as matrix completion, sensing and factorization have been proved to be benign via directional analysis \citep{Ge16, Ge17, Li19}. Recently, Gaussian $k$-index models have been shown to possess benign landscapes w.r.t. the projection matrix after factoring out the link function via a similar two-timescale limit \citep{Bietti23}. However, our work focuses on the optimization of the \emph{infinite}-dimensional variable $\mu\in\PP(\Theta)$, and the ICL objective \eqref{eqn:L} has a novel, more complex structure compared to these problems.

\subsection{Concurrent Works}

Since the initial draft of this paper was made public, a couple of other works extending the ICL literature have been released. \citet{Li24} study the optimization of a Transformer consisting of a softmax attention layer followed by a ReLU layer for classification problems, where the MLP learns to distinguish labels. \citet{Chen24} study the training of a multi-head softmax attention model for ICL of multi-task linear regression. \citet{Zhang24b} consider a linear attention layer followed by a linear layer which learns to encode a mean signal vector, but their model does not include any nonlinearities. Our paper remains the first to analyze the full expressive power and feature learning capabilities of the MLP layer.

\section{In-Context Feature Learning}\label{sec:icfl}

\paragraph{Notation.} We denote both the $L^2$-norm of vectors and spectral norm of matrices by $\norm{\cdot}$ and the nuclear norm by $\norm{\cdot}_*$. The unit ball in $\RR^k$ is written as $\mathbb{D}^k=\{\bz\in\RR^k:\norm{\bz}\leq 1\}$. The unit ball in $\RR^{k\times k}$ with respect to spectral norm is written as $\BB_1(k)=\{\bR\in\RR^{k\times k}: \norm{\bR}\leq 1\}$. The orthogonal group in dimension $k$ is denoted by $\mathcal{O}(k)$. The $L^2$-norm of functions is explicitly written as $\norm{\cdot}_{L^2}$.

\subsection{Setup: In-Context Learning}

The basic theoretical framework for studying ICL was first proposed by \citet{Garg22} and has since been widely embraced \citep{Bai23, Zhang23, Ahn23a, Huang23, Lin23, Wu24}. Let $\DD_{\XX}$ be a distribution over the input space $\XX\subseteq\RR^d$, and let $\TT$ be a class of functions $\XX\to\RR$ with a distribution $\DD_{\TT}$ over functions. For each prompt, we generate a new task $f\sim\DD_{\TT}$ and a batch of $n$ example input-output pairs $(\bx_i,y_i)_{i=1}^n$ where $\bx_i\sim\DD_{\XX}$ are i.i.d. and $y_i=f(\bx_i)$. We also independently generate a query token $\bx_{\qq}\sim\DD_{\XX}$. The prompt is gathered into an embedding matrix
\begin{equation*}
\bE=\begin{bmatrix}
    \bE^{\bx}\\ \bE^y
\end{bmatrix}=
\begin{bmatrix}
    \bx_1 &\cdots &\bx_n & \bx_{\qq} \\ y_1 & \cdots &y_n & 0
\end{bmatrix}\in\RR^{(d+1)(n+1)}.
\end{equation*}
In-context learning of a pretrained model $\mathbb{M}$ refers to the ability to form predictions $\widehat{y}_{\qq} = \mathbb{M}(\bE)$ for $y_{\qq}=f(\bx_{\qq})$ without knowledge of the current task $f$ and without updating its parameters.

\subsection{MLP-Attention Transformer}\label{sec:tfarchitecture}

We now formally define our Transformer model, which consists of a feedforward two-layer neural network (MLP) followed by a single linear self-attention (LSA) layer. This serves as a proxy of the original Transformer which consists of alternating feedforward and attention layers. As in \citet{Guo23}, we switch the conventional ordering of the two networks to view attention as a mechanism to exchange feature information encoded into the MLP layer (we may also consider the initial embedding as the first MLP layer).


\paragraph{MLP layer.} A vector-valued neuron with parameter $\theta=(\ba,\bw)^\top\in\Theta\subseteq\RR^k\times\RR^d$ and activation $\sigma:\RR\to\RR$ is defined as $h_\theta(\bx)=\ba\sigma(\bw^\top\bx)$. While the original Transformer takes $k=d$, we allow any $k\leq d$ representing the number of distinct features. The mean-field network corresponding to a measure $\mu\in\PP(\Theta)$ is defined as $\textstyle h_\mu(\bx) = \int_\Theta h_\theta(\bx)\mu(\rd\theta)$. We will also denote $\bSig_{\mu,\nu} = \EE{\bx\sim\DD_{\XX}}{h_\mu(\bx)h_\nu(\bx)^\top}\in\RR^{k\times k}$. As we wish to extract features from the input tokens, the MLP is applied to only the covariates $\bx_i$ and $\bx_{\qq}$ so that the prompt $\bE$ is transformed into
\begin{equation*}
\MLP(\bE)=\begin{bmatrix}
    h_\mu(\bx_1) &\cdots &h_\mu(\bx_n) & h_\mu(\bx_{\qq}) \\ y_1 & \cdots &y_n & 0
\end{bmatrix}.
\end{equation*}

\paragraph{LSA layer.} Linear attention is widely used to accelerate the quadratic complexity of standard attention \citep{Katharopoulos20, Yang23} and can capture various aspects of softmax attention \citep{Ahn23b} while still being theoretically amenable. For query, key and value matrices $\bW^Q,\bW^K,\bW^V\in\RR^{(k+1)(k+1)}$ define
\begin{equation*}
\textstyle \LSA(\bE) = \bW^V\bE\cdot\frac{1}{n}(\bW^K\bE)^\top(\bW^Q\bE).
\end{equation*}
We impose a specific form, shown to achieve the global optimum in \citet{Zhang23, Mahankali23}, where the query and key matrices are consolidated into $\bW\in\RR^{k\times k}$ and $\bW^V$ is reduced to a scalar multiplier $v$:
\begin{equation*}
\bW^V = \begin{bmatrix}*&*\\0_d^\top &v\end{bmatrix},\quad (\bW^K)^\top\bW^Q = \begin{bmatrix}\bW&0_d\\0_d^\top&*\end{bmatrix}.
\end{equation*}
We further absorb $v$ into $\bW$ and fix $v=1$ in order to focus on the more complex dynamics of the MLP layer. Corresponding to the position of $y_{\qq}$, the $(k+1,n+1)$th element of the output $\LSA\circ\MLP(\bE)$ is read out as the model prediction. Multiplying out the matrices yields
\begin{equation*}
\widehat{y}_{\qq} = \frac{1}{n}\sum_{i=1}^n y_i h_\mu(\bx_i)^\top\bW h_\mu(\bx_{\qq}).
\end{equation*}
Hence $\widehat{y}_{\qq}$ can be interpreted as a linear smoother with the kernel $k(\bx,\bx_{\qq}) = \frac{1}{n}h_\mu(\bx)^\top\bW h_\mu(\bx_{\qq})$ encoded by the MLP layer (cf. \citet{Tsai19} for softmax attention).

\paragraph{Regression over features.} In this paper, we study ICL of linear regression tasks over a common nonlinear transformation or feature map $f^\circ\in C(\XX,\RR^k)$, that is $\TT=\{\bv^\top f^\circ|\,\bv\in\RR^k\}$ with covariance $\bSig_{\bv} = \EE{\bv}{\bv\bv^\top}$. By replacing $f^\circ$ by $\bSig_{\bv}^{1/2}f^\circ$, we may assume $\bSig_{\bv} = \bI_k$. 
We also take the $n\to\infty$ (infinite prompt length) limit to disregard sampling error and let $\widehat{y}_{\qq} = \EE{\bx}{f(\bx)h_\mu(\bx)^\top}\bW h_\mu(\bx_{\qq})$ for any task $f\in\TT$; but see Appendix \ref{app:fin} for a discussion on how to incorporate finite samples. Hence our Transformer is pretrained with the following mean squared risk,
\begin{align}\label{eqn:regloss}
&\LL_\textup{TF}(\mu,\bW) := \frac{1}{2}\EEbig{\bx_{\qq},\bv}{(y_{\qq}-\widehat{y}_{\qq})^2}\\
&= \frac{1}{2}\EEbig{\bx_{\qq}}{\Norm{f^\circ(\bx_{\qq}) - \EE{\bx}{f^\circ(\bx)h_\mu(\bx)^\top}\bW h_\mu(\bx_{\qq})}^2}. \nonumber
\end{align}
Our goal is to show that gradient dynamics converges to a global minimum such that $\LL_\textup{TF} = 0$. Then the MLP layer has successfully learned the true representations $f^\circ$, and even for a new or `unseen' task $\bv_\text{new}\in \RR^k$ the Transformer is able to return the correct regression output $y_{\qq}$:
\begin{equation*}
\widehat{y}_{\qq} = \EE{\bx}{\bv_\textup{new}^\top f^\circ(\bx)h_\mu(\bx)^\top}\bW h_\mu(\bx_{\qq}) = \bv_\textup{new}^\top f^\circ(\bx_{\qq}).
\end{equation*}
We call this behavior \emph{in-context feature learning} (ICFL).

\subsection{Expressivity of Representations}

Before delving into the training dynamics of our Transformer, we show by extending classical analyses of two-layer neural networks that adding even a shallow MLP results in greatly increased in-context learning capabilities, justifying our feature-based approach.

\paragraph{Multivariate Barron class.} Barron-type spaces have been well established as the natural function classes for analyzing approximation and generalization of shallow neural networks \citep{Barron94, Weinan20, Weinan22}. Here, we extend the theory to our vector-valued setting. We focus on the ReLU case for ease of presentation, but many results extend to more general activations \citep{Klusowski16, Li20}.

Set $\Theta=\RR^k\times\RR^d$, $\sigma(z) = \max\{0,z\}$ and suppose $M_2 = \EE{\bx\sim\DD_{\XX}}{\norm{\bx}^2} <\infty$. The Barron space $\Brr_p$ of order $p\in [1,\infty]$ is defined as the set of functions $f = h_\mu$,  $\mu\in\PP(\Theta)$ with finite Barron norm
\begin{equation*}
\norm{f}_{\Brr_p} := \inf_{\mu: f=h_\mu} \left(\int\norm{\ba}^p \norm{\bw}^p \mu(\rd\theta) \right)^{1/p}.
\end{equation*}
This turns out to not depend on $p$ (Lemma \ref{thm:barrondef}), so we refer to \emph{the} Barron space and norm as $(\Brr, \norm{\cdot}_{\Brr})$. This space contains a rich variety of functions. The following is an application of the classical Fourier analysis \citep{Barron93}.
\begin{proposition}\label{thm:sobolev}
Suppose $h_\mu$ includes a bias term, i.e. $\XX\subseteq\XX_0\times \{1\}$. If $f=(f_j)_{j=1}^k \in C(\XX_0,\RR^k)$ such that each $f_j$ satisfies $\textstyle\inf_{\widehat{f}_j} \int_{\RR^{d-1}} \norm{\omega}_1^2|\widehat{f}_j(\omega)| <\infty$ for $\widehat{f}_j$ the Fourier transform of an extension of $f_j$ to $\RR^{d-1}$, then $f\in\Brr$. In particular, the Sobolev space $H^s(\XX_0)^k \subset\Brr$ for $s>\frac{d+1}{2}$.
\end{proposition}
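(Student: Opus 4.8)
The plan is to adapt Barron's classical Fourier argument \citep{Barron93} to the vector-valued ReLU setting and exhibit an explicit probability measure $\mu$ with $h_\mu=f$ and finite Barron mass. Since $h_\theta(\bx)=\ba\sigma(\bw^\top\bx)$ decouples across output coordinates --- I take $\ba$ parallel to a standard basis vector $e_j$ for the part of $\mu$ representing $f_j$ --- it suffices to handle each scalar component $f_j$ and then mix the $k$ resulting measures. Fix an extension of $f_j$ to $\RR^{d-1}$ with $\widehat{f}_j\in L^1$ (automatic once $\int(1+\norm{\omega}_1)^2|\widehat{f}_j|<\infty$), and, writing $\bx=(\bar\bx,1)$ with $\bar\bx$ in the bounded domain $\XX_0$, Fourier inversion gives $f_j(\bx)=\int_{\RR^{d-1}}|\widehat{f}_j(\omega)|\cos(\omega^\top\bar\bx+b_j(\omega))\,\rd\omega$ for a measurable phase $b_j$. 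With $R=\sup_{\bar\bx\in\XX_0}\norm{\bar\bx}_\infty$ and $T_\omega=R\norm{\omega}_1$ we have $\omega^\top\bar\bx\in[-T_\omega,T_\omega]$, and on that interval the elementary identity $g(z)=g(-T_\omega)+g'(-T_\omega)(z+T_\omega)+\int_{-T_\omega}^{T_\omega}g''(t)\,\sigma(z-t)\,\rd t$ (valid for $g\in C^2$, since $\sigma''=\delta$), applied to $g_\omega(z)=\cos(z+b_j(\omega))$, writes each cosine as a constant plus a linear term plus an integral of ReLU ridge functions with biases $-t$.

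Substituting $z=\omega^\top\bar\bx$ and integrating against $|\widehat{f}_j(\omega)|\,\rd\omega$ yields, on $\XX_0$, a decomposition $f_j=c_j+\bv_j^\top\bar\bx+\iint g_\omega''(t)\,\sigma(\omega^\top\bar\bx-t)\,\rd t\,|\widehat{f}_j(\omega)|\,\rd\omega$. The constant $c_j$ is realized by a single neuron with $\bw=e_d$ (the bias coordinate) and $\ba=c_je_j$; the linear term by two neurons via the exact identity $\sigma(u)-\sigma(-u)=u$ with weights $\pm(\bv_j,0)$ and $\ba=\pm e_j$; and the remaining integral by the continuous measure that is the pushforward of $|\widehat{f}_j(\omega)|\,\rd\omega\,\rd t$ over $\{|t|\le T_\omega\}$ under $(\omega,t)\mapsto(\ba,\bw)=(g_\omega''(t)e_j,(\omega,-t))$. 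Summing over $j$ gives a positive measure $\rho$ on $\Theta$ with $h_\rho=f$; it is finite (total mass $\lesssim R\sum_j\int_{\RR^{d-1}}(1+\norm{\omega}_1)|\widehat{f}_j|$), and the bounds $|g_\omega''|\le1$, $|g_\omega(-T_\omega)|,|g_\omega'(-T_\omega)|\le1$, $|t|\le T_\omega=R\norm{\omega}_1$ give $\int\norm{\ba}\norm{\bw}\,\rd\rho\lesssim(1+R+R^2)\sum_{j=1}^k\int_{\RR^{d-1}}(1+\norm{\omega}_1)^2|\widehat{f}_j(\omega)|\,\rd\omega<\infty$ (the continuous part contributes $\int|\widehat{f}_j|(2T_\omega\norm{\omega}_2+T_\omega^2)\,\rd\omega$ to the Barron mass, which is where the second power of $\norm{\omega}_1$ enters). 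Rescaling $\ba\mapsto\rho(\Theta)\ba$ and normalizing turns $\rho$ into a probability measure $\mu$ with $h_\mu=f$ and $\int\norm{\ba}^p\norm{\bw}^p\,\rd\mu<\infty$ for all $p$, so $f\in\Brr$. Every interchange of the $\bar\bx$-evaluation with the iterated integrals above is justified by Fubini, using the finite Barron mass together with $|\sigma(\bw^\top\bx)|\le\norm{\bw}\sup_{\XX}\norm{\bx}$.

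For the Sobolev statement I would first apply a bounded linear extension operator $H^s(\XX_0)\to H^s(\RR^{d-1})$ to each component, then control the spectral quantity by Cauchy--Schwarz on the Fourier side: split $(1+\norm{\omega}_1)^2|\widehat{f}_j|=(1+\norm{\omega}_1)^{2-s}\cdot(1+\norm{\omega}_1)^s|\widehat{f}_j|$, where the first factor lies in $L^2(\RR^{d-1})$ in the claimed range of $s$ and the second has $L^2$-norm comparable to $\norm{f_j}_{H^s}$. Combined with the first part this gives $H^s(\XX_0)^k\subset\Brr$.

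The main obstacle is the coupling between the frequency $\omega$ and the bias range: the ReLU second-antiderivative identity must be carried out on the $\omega$-dependent interval $[-T_\omega,T_\omega]$ whose length grows linearly in $\norm{\omega}_1$, so both the weight norm $\norm{(\omega,-t)}$ and the Lebesgue mass of the $t$-integral scale with $\norm{\omega}$, producing the quadratic spectral weight; I would need to check this is the only source of growth and that the affine remainder, the measurable phase selection $b_j(\omega)$, and the passage from $\rho$ to a genuine probability measure cost no further integrability (and to match the precise Sobolev exponent). The remaining ingredients --- Fourier inversion, the $C^2$ identity, the extension operator and the $H^s$ estimate --- are classical or routine bookkeeping.
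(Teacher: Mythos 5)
Your proposal is correct and follows essentially the same route as the paper: reduce to the scalar Barron spectral representation (which the paper simply cites from \citet{Barron93} and \citet{Weinan22b}, and you reprove in full via Fourier inversion and the ReLU second-antiderivative identity), then lift to the vector-valued case by directing each component's second-layer weight along $e_j$ and averaging the $k$ measures. The extra detail you supply --- the explicit ridge decomposition, the normalization of $\rho$ into a probability measure, and the Cauchy--Schwarz argument for the Sobolev embedding --- is consistent with what the paper's citation implicitly relies on.
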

Furthermore, $\Brr$ is exactly the class of representations that can be learned in context, demonstrating the expressive power gained by incorporating the MLP layer:
\begin{lemma}\label{thm:expressive}
$\LL_\textup{TF}(\mu,\bW) = 0$ has a solution such that $\esssup_\mu \norm{\ba}\norm{\bw}<\infty$ if and only if $f^\circ\in\Brr$.
\end{lemma}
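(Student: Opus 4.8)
The plan is to reduce the vanishing of $\LL_\textup{TF}$ to an exact representation identity for $f^\circ$ and then read off membership in $\Brr$ directly from its definition. Writing $\bA_\mu:=\EE{\bx}{f^\circ(\bx)h_\mu(\bx)^\top}\in\RR^{k\times k}$ (finite under the standing integrability assumptions), the loss in \eqref{eqn:regloss} is an expectation of a nonnegative quantity, so $\LL_\textup{TF}(\mu,\bW)=0$ holds if and only if
\begin{equation*}
f^\circ(\bx)=\bA_\mu\bW\,h_\mu(\bx)\qquad\text{for }\DD_\XX\text{-a.e. }\bx,
\end{equation*}
i.e. $f^\circ$ equals a fixed linear image $\mathbf{M}h_\mu$ of the mean-field network, where $\mathbf{M}:=\bA_\mu\bW\in\RR^{k\times k}$.

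For the ``only if'' direction, suppose $(\mu,\bW)$ attains $\LL_\textup{TF}=0$ with $C:=\esssup_\mu\norm{\ba}\norm{\bw}<\infty$. I would push $\mu$ forward under $T(\ba,\bw)=(\mathbf{M}\ba,\bw)$ to obtain $\mu':=T_{\#}\mu$; by linearity of $h_\theta$ in $\ba$ this gives $h_{\mu'}=\mathbf{M}h_\mu$ identically on $\RR^d$, while $\esssup_{\mu'}\norm{\ba}\norm{\bw}\leq\norm{\mathbf{M}}\,C<\infty$. Thus $h_{\mu'}$ has finite Barron ($\Brr_\infty$) norm and coincides $\DD_\XX$-a.e. with $f^\circ$; since a mean-field ReLU network with bounded $\norm{\ba}\norm{\bw}$ is Lipschitz, hence continuous, the a.e. identity upgrades to $f^\circ=h_{\mu'}$ on $\XX$ once $\mathrm{supp}\,\DD_\XX$ is dense in $\XX$, so by the $p$-independence of the Barron norm (Lemma \ref{thm:barrondef}) we conclude $f^\circ\in\Brr$.

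For the ``if'' direction, suppose $f^\circ\in\Brr$. Invoking Lemma \ref{thm:barrondef} with $p=\infty$, pick $\mu$ with $h_\mu=f^\circ$ and $\esssup_\mu\norm{\ba}\norm{\bw}\leq\norm{f^\circ}_{\Brr}+1<\infty$; the standing bound $M_2<\infty$ makes $\bSig_{\mu,\mu}=\EE{\bx}{f^\circ(\bx)f^\circ(\bx)^\top}$ finite, and since $h_\mu=f^\circ$ we get $\bA_\mu=\bSig_{\mu,\mu}$. I would then set $\bW=\bSig_{\mu,\mu}^{+}$, the Moore--Penrose pseudoinverse, so that $\bA_\mu\bW=\bSig_{\mu,\mu}\bSig_{\mu,\mu}^{+}$ is the orthogonal projection onto $\mathrm{range}(\bSig_{\mu,\mu})$. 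It remains to verify that $f^\circ(\bx)\in\mathrm{range}(\bSig_{\mu,\mu})$ for $\DD_\XX$-a.e. $\bx$: any $\bv\in\ker\bSig_{\mu,\mu}$ satisfies $\EE{\bx}{(\bv^\top f^\circ(\bx))^2}=0$, hence $\bv^\top f^\circ=0$ a.e., and intersecting the resulting full-measure sets over a finite basis of $\ker\bSig_{\mu,\mu}$ shows $f^\circ(\bx)\perp\ker\bSig_{\mu,\mu}$ a.e. Therefore $\bA_\mu\bW\,h_\mu(\bx)=f^\circ(\bx)$ a.e., so $\LL_\textup{TF}(\mu,\bW)=0$ with $\esssup_\mu\norm{\ba}\norm{\bw}<\infty$.

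The argument is mostly bookkeeping once the reformulation $f^\circ=\mathbf{M}h_\mu$ is isolated; the one genuinely delicate step is handling a possibly rank-deficient $\bSig_{\mu,\mu}$ in the ``if'' direction via the pseudoinverse together with the quantifier-swap placing $f^\circ$ in its range, and a secondary point is the passage from $\DD_\XX$-a.e. equality to an honest identity of functions in the ``only if'' direction, which leans on continuity of mean-field ReLU networks and $\mathrm{supp}\,\DD_\XX$ being dense in $\XX$.
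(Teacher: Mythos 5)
Your proposal is correct and follows essentially the same route as the paper: the converse direction uses the identical pushforward $(\ba,\bw)\mapsto(\bA\ba,\bw)$ with the operator-norm bound on $\esssup\norm{\ba}\norm{\bw}$, and the forward direction uses the same choice $\bW=\bSig_{\mu^\circ,\mu^\circ}^\dagger$ (the paper verifies $\LL_\textup{TF}=0$ via a trace expansion using $\bSig\bSig^\dagger\bSig=\bSig$, whereas you argue pointwise via the orthogonal projection and the kernel of $\bSig_{\mu,\mu}$ --- an equivalent computation). Your extra care about upgrading $\DD_\XX$-a.e. equality to an identity on $\XX$ is a reasonable refinement that the paper elides.
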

In contrast, \citet{Mahankali23} show that the optimal LSA-only Transformer implements one step of GD for the \emph{linear} regression problem $(\bx_i,y_i)_{i=1}^n$ even when $y_i|\bx_i$ is nonlinear; thus we establish a clear gap in learning ability.

\paragraph{Generalization to unseen tasks.} If the Transformer has successfully learned $f^\circ$, it will achieve perfect accuracy on any new linear task $\bv_\textup{new}^\top f^\circ$ as discussed. On the other hand, if the test task is an arbitrary function $g\in C(\XX)$, we cannot hope to do better than the projection to the linear span of learned features $f_1^\circ, \cdots, f_k^\circ$ since \eqref{eqn:regloss} is a regression loss. We show this lower bound is optimal:

\begin{proposition}\label{thm:spanproj}
Suppose $\LL_\textup{TF}(\mu,\bW) \leq \epsilon$ for $f^\circ\in\Brr$ and $\norm{h_\mu}_{\Brr}, \norm{\bW} \lesssim 1$. Then for any new task $g\in C(\XX)$ with $\norm{g}_{L^2(\DD_{\XX})} \lesssim 1$, the ICL test error satisfies
\begin{align*}
&\EEbig{\bx_{\qq}}{\Norm{g(\bx_{\qq}) - \EE{\bx}{g(\bx)h_\mu(\bx)^\top}\bW h_\mu(\bx_{\qq})}^2}\\
&\lesssim \epsilon + \inf_{\bv\in\RR^k} \norm{g-\bv^\top f^\circ}_{L^2(\DD_{\XX})}^2.
\end{align*}
\end{proposition}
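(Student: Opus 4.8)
The plan is to exploit the fact that the prediction map $g\mapsto\widehat{g}$, where $\widehat{g}(\bx_{\qq}) := \EE{\bx}{g(\bx)h_\mu(\bx)^\top}\bW h_\mu(\bx_{\qq})$, is \emph{linear} in the test task $g$, so that the error on $g$ splits into an error on the component of $g$ inside the learned feature span (controlled by the training loss) and an error on the orthogonal residual (controlled by an operator norm). Concretely, I would fix a Euclidean-minimal minimizer $\bv_*\in\RR^k$ of $\bv\mapsto\norm{g-\bv^\top f^\circ}_{L^2(\DD_{\XX})}$, write $g = \bv_*^\top f^\circ + r$ so that $\norm{r}_{L^2(\DD_{\XX})}^2 = \delta^2 := \inf_{\bv}\norm{g-\bv^\top f^\circ}_{L^2(\DD_{\XX})}^2$, and use $\widehat{g} = \widehat{\bv_*^\top f^\circ} + \widehat{r}$ with the triangle inequality to get
\begin{equation*}
\norm{g-\widehat{g}}_{L^2(\DD_{\XX})} \leq \norm{\bv_*^\top f^\circ - \widehat{\bv_*^\top f^\circ}}_{L^2(\DD_{\XX})} + \norm{r}_{L^2(\DD_{\XX})} + \norm{\widehat{r}}_{L^2(\DD_{\XX})}.
\end{equation*}
It then suffices to bound the first term by $O(\sqrt{\epsilon})$ and the third by $O(\delta)$; squaring and using $(a+b+c)^2\leq 3(a^2+b^2+c^2)$ gives the stated bound.

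For the first term, I would note that $\widehat{\bv_*^\top f^\circ}(\bx_{\qq}) = \bv_*^\top\EE{\bx}{f^\circ(\bx)h_\mu(\bx)^\top}\bW h_\mu(\bx_{\qq})$ by linearity, so that the identity \eqref{eqn:regloss} (which holds because $\bSig_\bv=\bI_k$) yields $\norm{\bv_*^\top f^\circ - \widehat{\bv_*^\top f^\circ}}_{L^2(\DD_{\XX})}^2 \leq \norm{\bv_*}^2\cdot 2\LL_\textup{TF}(\mu,\bW) \leq 2\norm{\bv_*}^2\epsilon$, and then argue $\norm{\bv_*}\lesssim 1$: since $\bv_*^\top f^\circ$ is the $L^2$-projection of $g$, $\norm{\bv_*^\top f^\circ}_{L^2(\DD_{\XX})}\leq\norm{g}_{L^2(\DD_{\XX})}\lesssim 1$, and restricting the feature Gram matrix $G := \EE{\bx}{f^\circ(\bx)f^\circ(\bx)^\top}$ to its range gives $\norm{\bv_*}^2\leq\lambda_{\min}^+(G)^{-1}\norm{g}_{L^2(\DD_{\XX})}^2\lesssim 1$, the constant depending only on the fixed $f^\circ$. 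For the third term, $\widehat{r}$ depends on $r$ only through $\br := \EE{\bx}{r(\bx)h_\mu(\bx)}\in\RR^k$, and Cauchy--Schwarz gives $\norm{\br}\leq\norm{r}_{L^2(\DD_{\XX})}\norm{h_\mu}_{L^2(\DD_{\XX})}$, whence $\norm{\widehat{r}}_{L^2(\DD_{\XX})} = \norm{\br^\top\bW h_\mu(\cdot)}_{L^2(\DD_{\XX})} \leq \norm{\bW}\norm{h_\mu}_{L^2(\DD_{\XX})}^2\norm{r}_{L^2(\DD_{\XX})}$. Finally I would check $\norm{h_\mu}_{L^2(\DD_{\XX})}\lesssim 1$ by passing to a near-optimal representative $\mu'$ with $h_{\mu'}=h_\mu$ and $\int\norm{\ba}\norm{\bw}\,\mu'(\rd\theta)\leq 2\norm{h_\mu}_{\Brr}$, applying the ReLU bound $\norm{h_{\mu'}(\bx)}\leq\norm{\bx}\int\norm{\ba}\norm{\bw}\,\mu'(\rd\theta)$ and using $M_2<\infty$ to conclude $\norm{h_\mu}_{L^2(\DD_{\XX})}\leq 2\sqrt{M_2}\,\norm{h_\mu}_{\Brr}\lesssim 1$; combined with $\norm{\bW}\lesssim 1$ this makes the third term $\lesssim\delta$.

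All of this is routine manipulation; the one point that needs care is the bound $\norm{\bv_*}\lesssim 1$, which uses non-degeneracy of the fixed true feature map $f^\circ$ (a positive lower bound on the nonzero spectrum of $G$) to keep the implicit constant independent of the test task $g$ and of the trained parameters $(\mu,\bW)$. If one prefers, one may simply assume from the outset that $f_1^\circ,\dots,f_k^\circ$ are linearly independent in $L^2(\DD_{\XX})$, which costs no generality since otherwise $k$ can be reduced.
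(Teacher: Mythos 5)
Your proposal is correct and follows essentially the same route as the paper's proof: decompose $g$ into its $L^2(\DD_{\XX})$-projection $\bv_*^\top f^\circ$ onto the feature span plus a residual, bound the projected part by $\norm{\bv_*}^2\cdot 2\LL_\textup{TF}(\mu,\bW)$ with $\norm{\bv_*}$ controlled via the pseudoinverse of the Gram matrix $\EE{\bx}{f^\circ(\bx)f^\circ(\bx)^\top}$, and bound the residual's image under the (linear) prediction map by $\norm{\bW}\,\norm{h_\mu}_{L^2(\DD_{\XX})}^2$ times the residual norm, with $\norm{h_\mu}_{L^2(\DD_{\XX})}\lesssim M_2^{1/2}\norm{h_\mu}_{\Brr}$ exactly as in the paper. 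The only cosmetic difference is that you apply the triangle inequality to norms and then square, whereas the paper works directly with squared norms.
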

This extends the LSA-only case where the optimal output was shown to be the near-optimal linear model in \citet{Zhang23}. This also raises an important question: if the task $g$ depends nonlinearly on $h_{\mu^\circ}$, is it still beneficial to have learned the relevant features $\mu^\circ$? Clearly this depends on both $g$ and the initialization $\mu_0$; however, we present experiments supporting this intuition in Section \ref{sec:exps}.

\subsection{From Finite to Infinite Width}\label{sec:fininfin}

Continuing the above discussion, elements of the Barron space are effectively approximated by finite-width networks, which can be seen as an adaptive kernel method. The proof of the following is essentially due to \citet{Weinan22b}.
\begin{proposition}\label{thm:finapprox}
For any integer $N$ and $f^\circ\in\Brr$, there exists a width $N$ network $\widehat{h}_N$ given by the discrete measure $\widehat{\mu}_N = \frac{1}{N} \sum_{j=1}^N \delta_{\theta^{(j)}}$ with path norm $\norm{\widehat{h}_N}_\mathcal{P}:= \frac{1}{N}\sum_{j=1}^N \norm{\ba^{(j)}}\norm{\bw^{(j)}} \leq 3\norm{f^\circ}_{\Brr}$ and
\begin{equation*}
\inf_{\bW}\LL_\textup{TF}(\widehat{\mu}_N, \bW) \leq \frac{1}{2}\norm{\widehat{h}_N-f^\circ}_{L^2(\DD_{\XX})}^2\leq \frac{M_2\norm{f^\circ}_{\Brr}^2}{N}.
\end{equation*}
\end{proposition}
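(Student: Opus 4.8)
The plan is to split the two inequalities: first construct a width-$N$ network $\widehat h_N$ approximating $f^\circ$ in $L^2(\DD_\XX)$ at rate $O(1/N)$ with small path norm (a Maurey/Monte-Carlo argument in the spirit of \citet{Weinan22b}), and then show purely algebraically that $\inf_\bW\LL_\textup{TF}(\widehat\mu_N,\bW)\le\tfrac12\norm{h_{\widehat\mu_N}-f^\circ}_{L^2}^2$ for \emph{any} discrete $\widehat\mu_N$.

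\emph{Construction of $\widehat h_N$.} Choose $\mu\in\PP(\Theta)$ with $h_\mu=f^\circ$ and $\int\norm{\ba}\norm{\bw}\,\mu(\rd\theta)$ within $\epsilon$ of $\norm{f^\circ}_\Brr$; by $1$-homogeneity of $\sigma$ we may take $\norm{\bw}\equiv1$ on $\mathrm{supp}\,\mu$, so $\rho:=\int\norm{\ba}\,\mu(\rd\theta)\le\norm{f^\circ}_\Brr+\epsilon$. Reweight to the probability measure $\tilde\mu(\rd\theta):=\rho^{-1}\norm{\ba}\,\mu(\rd\theta)$, under which $f^\circ(\bx)=\EE{\theta\sim\tilde\mu}{\phi_\theta(\bx)}$ with $\phi_\theta:=\rho\norm{\ba}^{-1}\ba\,\sigma(\bw^\top\cdot)$ — again a neuron, now with $\norm{\ba'}\norm{\bw'}=\rho$ and $\norm{\phi_\theta(\bx)}\le\rho\norm{\bx}$. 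Drawing $\theta^{(1)},\dots,\theta^{(N)}$ i.i.d.\ from $\tilde\mu$ and setting $\widehat h_N:=\tfrac1N\sum_j\phi_{\theta^{(j)}}$, the path norm equals $\rho$ for \emph{every} realization, while for fixed $\bx$ the vector $\widehat h_N(\bx)-f^\circ(\bx)$ is an average of i.i.d.\ mean-zero terms, so $\E{\norm{\widehat h_N(\bx)-f^\circ(\bx)}^2}=\tfrac1N\E{\norm{\phi_\theta(\bx)-f^\circ(\bx)}^2}\le\tfrac1N\E{\norm{\phi_\theta(\bx)}^2}\le\rho^2\norm{\bx}^2/N$; integrating against $\DD_\XX$ gives $\E{\norm{\widehat h_N-f^\circ}_{L^2}^2}\le\rho^2M_2/N$, so some realization attains at most this. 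Letting $\epsilon\to0$ (a short compactness argument over the parameters, which lie in a fixed compact set once $\norm{\bw^{(j)}}=1$) yields path norm $\le\norm{f^\circ}_\Brr$ (in particular $\le3\norm{f^\circ}_\Brr$) and $L^2$-error $\le M_2\norm{f^\circ}_\Brr^2/N$; alternatively, sampling directly from $\mu$ makes the path norm random, and bounding it by Markov's inequality together with a union bound over the two favorable events is where the constant $3$ naturally appears.

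\emph{Reducing the loss to the $L^2$-error.} Write $g=h_{\widehat\mu_N}$, $\bSig_{g,g}=\EE{\bx}{gg^\top}$, $A=\EE{\bx}{f^\circ g^\top}$, so that $\LL_\textup{TF}(\widehat\mu_N,\bW)=\tfrac12\EEbig{\bx_\qq}{\norm{f^\circ(\bx_\qq)-A\bW g(\bx_\qq)}^2}$ and minimizing over $\bW$ is the linear least-squares problem $\min_B\tfrac12\norm{f^\circ-Bg}_{L^2}^2$ over matrices of the form $B=A\bW$. Every $u\in\ker\bSig_{g,g}$ satisfies $g^\top u=0$ a.s.\ and hence $Au=0$, so $\ker\bSig_{g,g}\subseteq\ker A$ and $B^\star:=A\bSig_{g,g}^{+}$ solves the normal equations $B\bSig_{g,g}=A$; thus $B^\star$ is the \emph{unconstrained} least-squares minimizer and is realized by $\bW=\bSig_{g,g}^{+}$. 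Therefore
\begin{equation*}
\inf_\bW\LL_\textup{TF}(\widehat\mu_N,\bW)\le\LL_\textup{TF}(\widehat\mu_N,\bSig_{g,g}^{+})=\tfrac12\min_B\norm{f^\circ-Bg}_{L^2}^2\le\tfrac12\norm{f^\circ-g}_{L^2}^2,
\end{equation*}
the last step by taking $B=\bI_k$; chaining with the construction bound, which in fact gives $\norm{\widehat h_N-f^\circ}_{L^2}^2\le M_2\norm{f^\circ}_\Brr^2/N$, comfortably inside what the statement requires, completes the proof.

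\emph{Main obstacle.} The delicate step is the construction: making the $O(1/N)$ $L^2$-rate and the path-norm bound hold simultaneously for the \emph{same} network with clean constants. Reweighting by $\norm{\ba}\norm{\bw}$ is what renders the path norm deterministic and decouples the two requirements, but one must still deal with non-attainment of the Barron infimum and verify that the variance picks up only $\EE{\bx}{\norm{\bx}^2}=M_2$ and not a higher moment. The algebraic second step is routine once one observes that optimizing $\bW$ is precisely an unconstrained regression of $f^\circ$ onto the learned features $g$, for which $B=\bI_k$ is an admissible suboptimal choice.
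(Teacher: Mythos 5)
Your proof is correct, and it differs from the paper's in both halves. For the reduction $\inf_{\bW}\LL_\textup{TF}(\widehat{\mu}_N,\bW)\leq\frac{1}{2}\norm{\widehat{h}_N-f^\circ}_{L^2}^2$, the paper plugs in $\bW=\EE{\bx}{f^\circ(\bx)h_\mu(\bx)^\top}^\dagger$ (pseudoinverse of the \emph{cross}-covariance) and grinds through a trace expansion, whereas you take $\bW=\bSig_{\mu,\mu}^{+}$ and observe that, since $\ker\bSig_{g,g}\subseteq\ker A$, this realizes the unconstrained least-squares regression of $f^\circ$ onto $g$, which is dominated by the suboptimal choice $B=\bI_k$; your version is cleaner and makes the "regression onto learned features" structure explicit. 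For the construction, the paper samples directly from a near-optimal $\Brr_2$-representing measure and then applies Markov's inequality plus a union bound to control the path norm and the $L^2$ error simultaneously -- this is exactly where the constant $3$ comes from, and, as written, it only yields the error bound $2M_2\norm{f^\circ}_{\Brr}^2/N$ rather than the stated $M_2\norm{f^\circ}_{\Brr}^2/N$. Your importance-reweighting by $\norm{\ba}\norm{\bw}$ (after normalizing $\norm{\bw}\equiv 1$ by homogeneity, which is the same projection device as Lemma \ref{thm:barrondef}) makes the path norm deterministic, so no union bound is needed: you get path norm $\leq\norm{f^\circ}_{\Brr}$ (stronger than $3\norm{f^\circ}_{\Brr}$) and the exact constant $M_2\norm{f^\circ}_{\Brr}^2/N$ after the $\epsilon\to 0$ limit, whose compactness/dominated-convergence justification you correctly flag (the parameters lie in a fixed compact set and the neurons are dominated by $(\norm{f^\circ}_{\Brr}+1)\norm{\bx}\in L^2(\DD_{\XX})$). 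The only caveat is cosmetic: degenerate neurons with $\ba=0$ or $\bw=0$ should be discarded before normalizing, and the final error bound matches the statement exactly rather than "comfortably."
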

Using the low Rademacher complexity of Barron spaces, we can also simultaneously bound the generalization gap for a finite number of tasks $T$ as $\widetilde{O}(T^{-1/2})$ which is nearly minimax optimal \citep[Theorem 4.1]{Weinan19}. 

Moreover from a dynamical perspective, a propagation of chaos argument \citep{Sznitman91} shows that gradient descent indeed converges to \eqref{eqn:mfd} in the infinite-width limit. Let $F:\PP_2(\Omega) \to \RR$ be any $C^1$ functional such that $\norm{\nabla\deltaF}\leq L_1$, $\nabla\deltaF(\mu,\theta)$ is $L_2$-Lipschitz w.r.t. $\theta$ and $L_3$-Lipschitz w.r.t. $\mu$ in the $\WW_1$ metric. Denote the initial measure as $\mu_0\in \PP_2(\Omega)$, let $\theta_0^{(1)},\cdots, \theta_0^{(N)}$ be i.i.d. samples from $\mu_0$ and consider the empirical GF trajectories
\begin{equation*}
\frac{\rd}{\rd t}\theta_t^{(j)} = -\nabla\deltaF(\widehat{\mu}_{t,N},\theta_t^{(j)}), \quad \widehat{\mu}_{t,N} = \frac{1}{N}\sum_{j=1}^N \delta_{\theta_t^{(j)}}.
\end{equation*}
\begin{proposition}\label{thm:chaos}
For any $T\geq 0$, the $N$-particle empirical measure $\widehat{\mu}_{t,N}$ converges to $\mu_t$ as $\E{\WW_1(\widehat{\mu}_{t,N},\mu_t)}\to 0$ uniformly for all $t\in [0,T]$ as $N\to\infty$.
\end{proposition}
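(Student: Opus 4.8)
The plan is to run a \emph{synchronous coupling} argument: compare the interacting particle trajectories $(\theta_t^{(j)})$ to $N$ independent copies of the mean-field flow launched from the very same initial samples, close a Gr\"onwall estimate on their average discrepancy, and absorb the remaining error via a law of large numbers for empirical measures in Wasserstein distance. First I would record well-posedness of \eqref{eqn:mfd}: under the stated hypotheses ($\norm{\nabla\deltaF}\leq L_1$, $\nabla\deltaF$ being $L_2$-Lipschitz in $\theta$ and $L_3$-Lipschitz in $\mu$ w.r.t.\ $\WW_1$), the continuity equation admits a unique solution $(\mu_t)_{t\in[0,T]}\subset\PP_2(\Omega)$, obtained by the standard McKean--Vlasov fixed point as the pushforward $\mu_t = (X_t)_\#\mu_0$ of the non-autonomous deterministic flow $\dot X_t(\theta) = -\nabla\deltaF(\mu_t,X_t(\theta))$, $X_0=\id$. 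Boundedness of the velocity field also yields the a priori second-moment bound $\int\norm{\theta}^2\mu_t(\rd\theta)\leq (m_0 + L_1 T)^2$, with $m_0^2$ the second moment of $\mu_0$; this will be needed to make the final limit uniform.

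Next I would introduce the coupled mean-field particles $\bar\theta_t^{(j)}$ solving $\frac{\rd}{\rd t}\bar\theta_t^{(j)} = -\nabla\deltaF(\mu_t,\bar\theta_t^{(j)})$ with $\bar\theta_0^{(j)}=\theta_0^{(j)}$, the same i.i.d.\ draws from $\mu_0$ feeding the empirical system. Since their driving field depends only on the fixed curve $(\mu_t)$, the $\bar\theta_t^{(j)}$ are i.i.d.\ with $\bar\theta_t^{(j)}\sim\mu_t$ for every $t$. Writing $\bar\mu_{t,N}=\frac1N\sum_j\delta_{\bar\theta_t^{(j)}}$ and $\Delta_t=\frac1N\sum_j\norm{\theta_t^{(j)}-\bar\theta_t^{(j)}}$, I estimate for each $j$ (smoothing $\norm{\cdot}$ near $0$ to justify differentiation) that $\frac{\rd}{\rd t}\norm{\theta_t^{(j)}-\bar\theta_t^{(j)}} \leq \norm{\nabla\deltaF(\widehat\mu_{t,N},\theta_t^{(j)})-\nabla\deltaF(\mu_t,\bar\theta_t^{(j)})} \leq L_2\norm{\theta_t^{(j)}-\bar\theta_t^{(j)}} + L_3\WW_1(\widehat\mu_{t,N},\mu_t)$. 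Averaging over $j$ and using $\WW_1(\widehat\mu_{t,N},\mu_t)\leq\WW_1(\widehat\mu_{t,N},\bar\mu_{t,N})+\WW_1(\bar\mu_{t,N},\mu_t)\leq\Delta_t+\WW_1(\bar\mu_{t,N},\mu_t)$ (first inequality via the pairwise coupling) gives $\frac{\rd}{\rd t}\Delta_t \leq (L_2+L_3)\Delta_t + L_3\WW_1(\bar\mu_{t,N},\mu_t)$, hence by Gr\"onwall and $\Delta_0=0$,
\[
\Delta_t \;\leq\; L_3\,e^{(L_2+L_3)T}\int_0^T \WW_1(\bar\mu_{s,N},\mu_s)\,\rd s .
\]

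Taking expectations and invoking the quantitative Wasserstein law of large numbers for i.i.d.\ samples, $\E{\WW_1(\bar\mu_{s,N},\mu_s)}\to 0$; the uniform second-moment bound on $(\mu_s)$ makes this rate uniform over $s\in[0,T]$, so $\sup_{t\leq T}\E{\Delta_t}\to 0$. Combining with $\E{\WW_1(\widehat\mu_{t,N},\mu_t)}\leq\E{\Delta_t}+\E{\WW_1(\bar\mu_{t,N},\mu_t)}$ gives the claimed uniform convergence. The main obstacle is exactly this uniformity step: one must ensure the empirical-measure convergence rate is controlled uniformly in $t$, which hinges on the uniform-in-time moment estimate for $\mu_t$, and — if $\Omega$ is unbounded — may require a truncation/tightness argument restricting to compact sublevel sets before applying the Wasserstein LLN. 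The coupling and Gr\"onwall steps themselves are routine given the Lipschitz assumptions.
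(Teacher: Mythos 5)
Your proposal is correct and follows essentially the same route as the paper: a synchronous coupling with $N$ independent copies of the mean-field flow started from the same initial samples, a Gr\"onwall estimate on the average particle discrepancy using the $L_2$- and $L_3$-Lipschitz bounds, and the Fournier--Guillin-type Wasserstein law of large numbers (uniformized via the second-moment bound on $\mu_t$) to control the i.i.d.\ empirical error. The only difference is organizational --- you close a single Gr\"onwall inequality on $\Delta_t$ after inserting the triangle inequality into the differential estimate, whereas the paper applies Gr\"onwall twice on the Wasserstein distances directly --- and your worry about needing a truncation argument is unnecessary since the quantitative LLN already only requires the uniform second moment.
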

See Remark \ref{rem:chaos} for the case of the ICFL objective. Hence it is natural to analyze optimization in the mean-field or extremely overparametrized regime.

\section{Benign Attention Landscape}\label{sec:landscape}

In this Section, we characterize the infinite-dimensional landscape of the ICFL objective. We will see that while highly nonconvex, $\LL_\textup{TF}$ possesses various desirable properties that make global optimization feasible via first-order methods. We first state two mild assumptions.

\begin{assumption}
The nonlinearity is $C^2$ and bounded as $|\sigma|\leq R_1$, $|\sigma'|\leq R_2$, $|\sigma''|\leq R_3$. The parameter space is $\Theta=\mathbb{D}^k\times \RR^d$. The input distribution $\DD_{\XX}$ has finite 4th moment, $\EE{\bx\sim\DD_{\XX}}{\norm{\bx}^j} = M_j<\infty$ for $j=2,4$.
\end{assumption}

The smoothness of $\sigma$ (which rules out ReLU activation) and restriction of the second layer to $\mathbb{D}^k$ are technicalities to easily ensure regularity bounds. Alternatively, we may take the parameter space to be $\Theta=\RR^k\times\RR^d$ without no-flux constraints and rely on the second moment bound of $\ba$; see Lemma \ref{thm:secondmoment} and the preceding comments. Only the assumption $|\sigma|\leq R_1$ is needed in this Section, which implies that $h_\theta,h_\mu$ lie within the ball of radius $R_1$ in $\RR^k$. 

Next, from the solvability condition of Lemma \ref{thm:expressive} we are naturally led to take $f^\circ=h_{\mu^\circ}$ for some `true' distribution $\mu^\circ$, which allows for a rich class of feature representations. We only require nondegeneracy of $f^\circ$:

\begin{assumption}\label{ass:features}
$f^\circ = h_{\mu^\circ}$ for some $\mu^\circ\in\PP_2(\Theta)$ satisfying $\underline{r}\bI_k\preceq\bSig_{\mu^\circ,\mu^\circ}\preceq \overline{r}\bI_k$ for $\overline{r}\geq \underline{r}>0$.\footnote{Before task rescaling, this is equivalent to assuming $\bSig_{\bv}$ is also invertible and $f^\circ$ is in a rescaled Barron class due to the pushforward argument in Section \ref{sec:nospur}.}
\end{assumption}
In particular, we do not assume any Gaussianity as in \citet{Oswald23, Akyrek23, Zhang23} nor orthonormality as in \citet{Huang23}. Note $\underline{r} \leq \frac{R_1^2}{k}$ automatically since $R_1^2\geq\tr\bSig_{\mu^\circ,\mu^\circ}\geq k\underline{r}$, and also $\bSig_{\mu,\nu}\preceq R_1^2\bI_k$ for all $\mu,\nu$. We will subsequently take $\underline{r}, \overline{r}=\Theta(\frac{1}{k})$ to extract the dependency on $k$.

One implicit assumption is that the number of true features $\dim h_{\mu^\circ}$ is known and equal to $k$. When the dimensions do not match, attention can perform the regulatory function of selecting important features \citep{Yasuda23}; we leave a dynamical characterization to future work. Experiments on a misspecified model are also conducted in Section \ref{sec:exps}.

\subsection{Fast Convergence of Attention}

In order to isolate the more nuanced dynamics of $\mu$, we first notice that minimizing $\LL_\textup{TF}$ over $\bW$ is a least-squares regression problem. In particular, $\LL_\textup{TF}$ is convex with respect to $\bW$ (strongly convex unless $\bSig_{\mu^\circ,\mu}$ or $\bSig_{\mu,\mu}$ are singular) and thus is optimized potentially much more quickly.

A possibility is that the MLP $\bx\mapsto h_\mu(\bx)$ degenerates to completely lie within a low-dimensional linear subspace of $\RR^k$. As the regression \eqref{eqn:regloss} is ill-conditioned in this case, we set $\PP_2^0(\Theta):=\{\mu\in\PP_2(\Theta):\rank\bSig_{\mu,\mu}<k\}$ and restrict our attention to $\PP_2^+(\Theta)= \PP_2(\Theta)\setminus\PP_2^0(\Theta)$.\footnote{We show the singular set $\PP_2^0(\Theta)$ is sparse in a strong sense in Proposition \ref{thm:dense}, justifying subsequent calculations. \citet{Jiang22} suggest that adaptive optimization methods can outperform SGD by biasing trajectories away from ill-conditioned regions.} Similarly to the asymptotic convergence of $\bW$ to $\Var(\bx)^{-1}$ for the LSA-only model \citep{Zhang23}, we then have:

\begin{lemma}\label{thm:wconv}
For any fixed $\mu\in\PP_2^+(\Theta)$ and any initialization $\bW_0\in\RR^{k\times k}$, the flow $\frac{\rd}{\rd t}\bW_t = -\nabla_{\bW}\LL_\textup{TF}(\mu,\bW_t)$ converges linearly to some $\bW_\mu\in\argmin_{\bW}\LL_\textup{TF}(\mu,\bW)$ which satisfies $\bSig_{\mu^\circ,\mu}\bW_\mu = \bSig_{\mu^\circ,\mu}\bSig_{\mu,\mu}^{-1}$.
\end{lemma}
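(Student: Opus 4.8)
The plan is to exploit that, for fixed $\mu$, the map $\bW\mapsto\LL_\textup{TF}(\mu,\bW)$ is a convex quadratic, so that its gradient flow is a \emph{linear} matrix ODE whose asymptotics can be read off directly from a spectrum. First I would substitute $f^\circ = h_{\mu^\circ}$ (Assumption \ref{ass:features}) into \eqref{eqn:regloss}, expand the square, and integrate out $\bx_\qq$ using $\EE{\bx}{h_\nu(\bx)h_\rho(\bx)^\top}=\bSig_{\nu,\rho}$ and $\bSig_{\mu,\mu^\circ}=\bSig_{\mu^\circ,\mu}^\top$; this yields $\LL_\textup{TF}(\mu,\bW) = \tfrac12\tr\bSig_{\mu^\circ,\mu^\circ} - \tr(P\bW) + \tfrac12\tr(P\bW Q\bW^\top)$ with $P := \bSig_{\mu,\mu^\circ}\bSig_{\mu^\circ,\mu}\succeq 0$ and $Q := \bSig_{\mu,\mu}$, where $Q\succ 0$ precisely because $\mu\in\PP_2^+(\Theta)$. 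Differentiating gives $\nabla_\bW\LL_\textup{TF}(\mu,\bW) = -P + P\bW Q = -P(\bW - Q^{-1})Q$.

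Next I would substitute $\bR_t := \bW_t - Q^{-1}$ to turn the flow into the homogeneous Sylvester equation $\tfrac{\rd}{\rd t}\bR_t = -P\bR_t Q$, then vectorize: $\tfrac{\rd}{\rd t}\mathrm{vec}(\bR_t) = -(Q\otimes P)\,\mathrm{vec}(\bR_t)$, so $\mathrm{vec}(\bR_t) = e^{-t(Q\otimes P)}\mathrm{vec}(\bR_0)$. The key structural point is that $Q\otimes P$ is symmetric positive semidefinite, with eigenvalues $\{\lambda_i(Q)\lambda_j(P)\}$ and kernel $\{\mathrm{vec}(\bR): P\bR = 0\}$ (using invertibility of $Q$). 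Hence $e^{-t(Q\otimes P)}$ converges to the orthogonal projector $\Pi_0$ onto $\ker(Q\otimes P)$ at the exponential rate $\lambda^+ = \lambda_{\min}(Q)\,\lambda_{\min}^+(P)$, i.e. the smallest positive eigenvalue. Consequently $\bW_t\to\bW_\mu := Q^{-1} + \bR_\infty$ linearly, where $\mathrm{vec}(\bR_\infty) = \Pi_0\mathrm{vec}(\bR_0)$; in the degenerate case $P = 0$ the flow is constant and every $\bW$ is a minimizer, so the statement is vacuous.

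Finally, to identify the limit I would note that $\mathrm{vec}(\bR_\infty)\in\ker(Q\otimes P)$ gives $P\bR_\infty = 0$, and since $P = \bSig_{\mu^\circ,\mu}^\top\bSig_{\mu^\circ,\mu}$ one has $\ker P = \ker\bSig_{\mu^\circ,\mu}$ (because $A^\top A v = 0\Rightarrow Av=0$), hence $\bSig_{\mu^\circ,\mu}\bR_\infty = 0$, i.e. $\bSig_{\mu^\circ,\mu}\bW_\mu = \bSig_{\mu^\circ,\mu}\bSig_{\mu,\mu}^{-1}$; moreover $\nabla_\bW\LL_\textup{TF}(\mu,\bW_\mu) = -P\bR_\infty Q = 0$, so $\bW_\mu$ is a global minimizer by convexity. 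The one subtlety to get right — and the only real obstacle — is that $P$ need not be invertible, so $\LL_\textup{TF}(\mu,\cdot)$ is convex but not necessarily strongly convex and both the minimizer and the limit depend on $\bW_0$; this is exactly why one cannot invoke strong convexity and must instead use the orthogonal-projection-onto-the-kernel argument, which still yields a genuine exponential rate on the orthogonal complement of the kernel.
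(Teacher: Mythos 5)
Your proposal is correct and takes essentially the same route as the paper: both reduce the flow to the homogeneous linear matrix ODE $\frac{\rd}{\rd t}(\bW_t-\bSig_{\mu,\mu}^{-1}) = -P(\bW_t-\bSig_{\mu,\mu}^{-1})Q$ with $P=\bSig_{\mu,\mu^\circ}\bSig_{\mu^\circ,\mu}$, $Q=\bSig_{\mu,\mu}$, and diagonalize the map $\bR\mapsto P\bR Q$ --- the paper via an SVD/spectral change of basis $\bZ_t=\bU_1^\top\bW_t\bU_2$ followed by an entrywise solve, you via Kronecker vectorization and spectral projection onto $\ker(Q\otimes P)$ --- arriving at the same $\bW_0$-dependent limit and the same identification $\bSig_{\mu^\circ,\mu}\bW_\mu=\bSig_{\mu^\circ,\mu}\bSig_{\mu,\mu}^{-1}$ through $\ker P=\ker\bSig_{\mu^\circ,\mu}$. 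The only blemish is a sign typo in an intermediate identity: $-P+P\bW Q$ equals $+P(\bW-Q^{-1})Q$, not $-P(\bW-Q^{-1})Q$; the flow $\frac{\rd}{\rd t}\bR_t=-P\bR_t Q$ that you actually analyze is the correct one, so nothing downstream is affected.
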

Therefore it is reasonable to suppose that $\bW$ is updated sufficiently quickly and has already converged to $\bW_\mu$ for each $\mu$ -- formally by modeling as two-timescale dynamics \citep{Berglund06} -- leading us to study the objective
\begin{equation}\label{eqn:L}
\LL(\mu):= \inf_{\bW}\LL_\textup{TF}(\mu,\bW) = \frac{1}{2}\EEbig{\bx}{\norm{\zeta_{\mu^\circ,\mu}(\bx)}^2}
\end{equation}
where we denote $\zeta_{\mu^\circ,\mu}(\bx):=h_{\mu^\circ}(\bx) - \bSig_{\mu^\circ,\mu}\bSig_{\mu,\mu}^{-1} h_\mu(\bx)$. Note the constant bound $0\leq \LL(\mu)\leq \LL_\textup{TF}(\mu,0_{k\times k}) \leq \frac{k\overline{r}}{2}$.

\subsection{No Spurious Local Minima}\label{sec:nospur}

For an orthogonal matrix $\bR\in \mathcal{O}(k)$, define $\bR\sharp\mu$ as the pushforward of $\mu$ along the rotation map $\bR: (\ba,\bw)\mapsto (\bR\ba,\bw)$ so that $h_{\bR\sharp\mu}(\bx)= \int_{\Theta} \bR h_\theta(\bx)\rd\mu(\theta) = \bR h_\mu(\bx)$. Since the convex hull of $\mathcal{O}(k)\subset\RR^{k\times k}$ is equal to $\BB_1(k)$, this can be extended to any $\bR\in\BB_1(k)$ by decomposing $\bR=\sum_{j=1}^m \alpha_j\bR_j$ and defining $\bR\sharp\mu = \sum_{j=1}^m \alpha_j\bR_j\sharp\mu$. See Lemma \ref{thm:hull} for details of the construction. Achieving zero loss implies that we have learned the true representation $h_{\mu^\circ}$ up to a linear transformation: 

\begin{lemma}\label{thm:globalminima}
The pushforwards $\bR\sharp\mu^\circ$ for any invertible $\bR\in \BB_1(k)$ are global minima of $\LL$. Conversely, any global minimum $\mu$ of $\LL$ satisfies $h_\mu(\bx) = \bR h_{\mu^\circ}(\bx),\,\forall\bx\in\XX$ for some invertible matrix $\bR$.
\end{lemma}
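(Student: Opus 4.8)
The plan is to prove the two directions of Lemma~\ref{thm:globalminima} separately, relying on the explicit formula $\LL(\mu) = \tfrac12\EE{\bx}{\norm{\zeta_{\mu^\circ,\mu}(\bx)}^2}$ from \eqref{eqn:L} together with Assumption~\ref{ass:features}.

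\textbf{Global minima have zero loss.} For the first direction, let $\bR\in\BB_1(k)$ be invertible. By the pushforward construction in Section~\ref{sec:nospur}, $h_{\bR\sharp\mu^\circ}(\bx) = \bR h_{\mu^\circ}(\bx)$ for all $\bx$, hence $\bSig_{\mu^\circ,\bR\sharp\mu^\circ} = \EE{\bx}{h_{\mu^\circ}(\bx) h_{\mu^\circ}(\bx)^\top \bR^\top} = \bSig_{\mu^\circ,\mu^\circ}\bR^\top$ and $\bSig_{\bR\sharp\mu^\circ,\bR\sharp\mu^\circ} = \bR\bSig_{\mu^\circ,\mu^\circ}\bR^\top$. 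Since $\bSig_{\mu^\circ,\mu^\circ}\succeq\underline r\bI_k\succ0$ and $\bR$ is invertible, $\bSig_{\bR\sharp\mu^\circ,\bR\sharp\mu^\circ}$ is invertible, so $\bR\sharp\mu^\circ\in\PP_2^+(\Theta)$ and the formula \eqref{eqn:L} applies. Plugging in, $\bSig_{\mu^\circ,\bR\sharp\mu^\circ}\bSig_{\bR\sharp\mu^\circ,\bR\sharp\mu^\circ}^{-1} h_{\bR\sharp\mu^\circ}(\bx) = \bSig_{\mu^\circ,\mu^\circ}\bR^\top(\bR\bSig_{\mu^\circ,\mu^\circ}\bR^\top)^{-1}\bR h_{\mu^\circ}(\bx) = \bSig_{\mu^\circ,\mu^\circ}\bR^\top\bR^{-\top}\bSig_{\mu^\circ,\mu^\circ}^{-1}\bR^{-1}\bR h_{\mu^\circ}(\bx) = h_{\mu^\circ}(\bx)$, so $\zeta_{\mu^\circ,\bR\sharp\mu^\circ}\equiv0$ and $\LL(\bR\sharp\mu^\circ)=0$. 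Since $\LL\geq0$ always, these are global minima.

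\textbf{Zero loss forces a linear relation.} For the converse, suppose $\mu$ is a global minimum, i.e. $\LL(\mu)=0$. The subtle point is that $\mu$ need not lie in $\PP_2^+(\Theta)$ a priori, so I would argue directly from the definition \eqref{eqn:regloss}--\eqref{eqn:L} rather than the formula involving $\bSig_{\mu,\mu}^{-1}$: $\LL(\mu)=0$ means there exists $\bW$ (or a minimizing sequence) with $\EE{\bx}{\norm{h_{\mu^\circ}(\bx) - \bSig_{\mu^\circ,\mu}\bW h_\mu(\bx)}^2}=0$, hence $h_{\mu^\circ}(\bx) = \bSig_{\mu^\circ,\mu}\bW h_\mu(\bx)$ for $\DD_{\XX}$-a.e.\ $\bx$, and by continuity of $h_{\mu^\circ}$ and $h_\mu$ (both are continuous since $\sigma\in C^2$ and measures have bounded second moment) this holds for all $\bx\in\XX$ provided $\DD_{\XX}$ has full support on $\XX$; alternatively one restates the conclusion as an $L^2(\DD_{\XX})$ identity. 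Set $\bR := \bSig_{\mu^\circ,\mu}\bW$, so $h_\mu = \bR\, h_{\mu^\circ}$... wait — this gives $h_{\mu^\circ} = \bR h_\mu$, i.e. the relation in the stated direction is $h_\mu(\bx) = \bR' h_{\mu^\circ}(\bx)$; to get this form, note that from $h_{\mu^\circ} = \bSig_{\mu^\circ,\mu}\bW h_\mu$ we also compute $\bSig_{\mu^\circ,\mu^\circ} = \EE{\bx}{h_{\mu^\circ}h_{\mu^\circ}^\top} = \bSig_{\mu^\circ,\mu}\bW\,\EE{\bx}{h_\mu h_{\mu^\circ}^\top} = \bSig_{\mu^\circ,\mu}\bW\bSig_{\mu,\mu^\circ}$, and since the left side is invertible, $\bSig_{\mu^\circ,\mu}\bW$ has a right inverse $\bSig_{\mu,\mu^\circ}\bSig_{\mu^\circ,\mu^\circ}^{-1}$; being a square $k\times k$ matrix with a right inverse it is invertible, and $h_\mu(\bx) = (\bSig_{\mu^\circ,\mu}\bW)^{-1} h_{\mu^\circ}(\bx) =: \bR h_{\mu^\circ}(\bx)$ for all $\bx$, with $\bR$ invertible.

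\textbf{Main obstacle.} The routine parts (the first direction, and the algebra of the second) are straightforward; the step I expect to require the most care is handling the possibility that a global minimizer $\mu$ has $\rank\bSig_{\mu,\mu}<k$, i.e.\ lies in the excluded singular set $\PP_2^0(\Theta)$ where \eqref{eqn:L} was not directly derived. The computation above resolves this cleanly — $\LL(\mu)=0$ together with invertibility of $\bSig_{\mu^\circ,\mu^\circ}$ forces $\bSig_{\mu^\circ,\mu}\bW$ to be invertible, hence $h_\mu$ spans all of $\RR^k$ as $\bx$ varies, hence $\bSig_{\mu,\mu}$ has full rank after all — so in fact every global minimum automatically belongs to $\PP_2^+(\Theta)$. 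I would also double-check the measure-zero-vs-pointwise issue: the lemma asserts the identity for all $\bx\in\XX$, which requires either full support of $\DD_{\XX}$ or is understood in the $L^2(\DD_{\XX})$ sense; the continuity of $h_\theta$ in $\bx$ and dominated convergence make $h_\mu$ continuous, so the a.e.\ identity propagates to $\overline{\mathrm{supp}\,\DD_{\XX}}$, which I would state as the precise meaning of ``$\forall\bx\in\XX$'' here.
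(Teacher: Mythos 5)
Your proof is correct and follows essentially the same route as the paper: direct verification that $\zeta_{\mu^\circ,\bR\sharp\mu^\circ}\equiv 0$ for the forward direction, and extraction of the linear relation $h_{\mu^\circ}=\bA h_\mu$ with $\bA$ invertible (forced by $\bSig_{\mu^\circ,\mu^\circ}\succ 0$) for the converse. Your extra care in arguing via an optimal $\bW$ rather than the $\bSig_{\mu,\mu}^{-1}$ formula — thereby showing a posteriori that any global minimizer lies in $\PP_2^+(\Theta)$ — and your remark on the a.e.-to-everywhere step are refinements the paper glosses over, but they do not change the argument.
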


The following theorem is the main result of this Section. It states that for any $\mu$ that is not a global minimum, it is either (1) possible to move in a direction where $\LL$ is strictly decreasing, or (2) $\LL$ possesses an unstable direction. In particular, all local minima must also be global minima.

\begin{theorem}[no spurious local minima]\label{thm:landscape}
For any $\mu\in\PP_2^+(\Theta)$ that is not a global minimum the following hold:
\begin{enumerate}[\normalfont(i)]
    \item There exists $\bR\in\BB_1(k)$ depending on $\mu$ such that along the linear homotopy $\bar{\mu}_s=(1-s)\mu+s\bR\sharp\mu^\circ$ we have $\frac{\rd}{\rd s}\big|_{t=0}\LL(\bar{\mu}_s)\leq 0$.\label{item:firstorder}
    \item If $\frac{\rd}{\rd s}\big|_{s=0}\LL(\bar{\mu}_s)=0$ for all $\bR\in\BB_1(k)$ above, then $\LL(\mu)\geq \frac{\underline{r}}{2}$ and $\frac{\rd^2}{\rd s^2}|_{s=0}\LL(\bar{\mu}_s)\leq -\frac{4}{kR_1^2}\LL(\mu)^2$ for some $\bR\in\BB_1(k)$.\label{item:secondorder}
\end{enumerate}
\end{theorem}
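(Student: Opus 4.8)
The plan is to read off both claims from the first and second variations of $\LL$ along the homotopy, exploiting the least-squares structure of \eqref{eqn:L}. Write $\bV_\mu:=\bSig_{\mu^\circ,\mu}\bSig_{\mu,\mu}^{-1}$, so that $\zeta_{\mu^\circ,\mu}=h_{\mu^\circ}-\bV_\mu h_\mu$, and $\bZ_\mu:=\bSig_{\mu^\circ,\mu}\bSig_{\mu,\mu}^{-1}\bSig_{\mu,\mu^\circ}$; then $\bSig_{\mu^\circ,\mu^\circ}-\bZ_\mu=\EE{\bx}{\zeta_{\mu^\circ,\mu}\zeta_{\mu^\circ,\mu}^\top}\succeq 0$ and $\LL(\mu)=\tfrac12\tr(\bSig_{\mu^\circ,\mu^\circ}-\bZ_\mu)$. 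On $\PP_2^+(\Theta)$ one has $\LL(\mu)=\tfrac12\EE{\bx}{\norm{h_{\mu^\circ}-\bV_\mu h_\mu}^2}$ with $\bV_\mu$ the unique least-squares minimizer; by the envelope theorem and the normal equation $\EE{\bx}{h_\mu\zeta_{\mu^\circ,\mu}^\top}=0$, using $h_{\bR\sharp\mu^\circ}=\bR h_{\mu^\circ}$ and cyclicity of the trace,
\begin{equation*}
\tfrac{\rd}{\rd s}\big|_{s=0}\LL(\bar\mu_s)=-\EEbig{\bx}{\zeta_{\mu^\circ,\mu}^\top\bV_\mu\bR h_{\mu^\circ}}=-\tr\!\big(\bR\,(\bSig_{\mu^\circ,\mu^\circ}-\bZ_\mu)\,\bV_\mu\big).
\end{equation*}

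For \eqref{item:firstorder}: as $\BB_1(k)=\conv\mathcal O(k)$ is the spectral-norm unit ball and the nuclear norm is its dual, $\max_{\bR\in\BB_1(k)}\tr\!\big(\bR\,(\bSig_{\mu^\circ,\mu^\circ}-\bZ_\mu)\bV_\mu\big)=\norm{(\bSig_{\mu^\circ,\mu^\circ}-\bZ_\mu)\bV_\mu}_*$, so choosing $\bR$ to attain this makes $\tfrac{\rd}{\rd s}|_{s=0}\LL(\bar\mu_s)=-\norm{(\bSig_{\mu^\circ,\mu^\circ}-\bZ_\mu)\bV_\mu}_*\le 0$. For the first half of \eqref{item:secondorder}, if this derivative vanishes for all $\bR$ then $(\bSig_{\mu^\circ,\mu^\circ}-\bZ_\mu)\bV_\mu=0$, hence $(\bSig_{\mu^\circ,\mu^\circ}-\bZ_\mu)\bSig_{\mu^\circ,\mu}=0$ (as $\bSig_{\mu,\mu}\succ 0$), i.e.\ $\bSig_{\mu^\circ,\mu^\circ}-\bZ_\mu$ vanishes on $V:=\mathrm{range}(\bSig_{\mu^\circ,\mu})$. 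Being symmetric and PSD, it equals $P^\perp(\bSig_{\mu^\circ,\mu^\circ}-\bZ_\mu)P^\perp$ where $P^\perp$ is the orthogonal projection onto $V^\perp$; since also $\mathrm{range}(\bZ_\mu)\subseteq V$, both $\bSig_{\mu^\circ,\mu^\circ}$ and $\bZ_\mu$ are block-diagonal for $\RR^k=V\oplus V^\perp$, with $\bSig_{\mu^\circ,\mu^\circ}-\bZ_\mu=P^\perp\bSig_{\mu^\circ,\mu^\circ}P^\perp=:\bT^\perp$ supported on $V^\perp$. Therefore $\LL(\mu)=\tfrac12\tr\bT^\perp\ge\tfrac{\underline r}{2}\dim V^\perp$ by Assumption \ref{ass:features}; and $\LL(\mu)>0$ (as $\mu$ is not a global minimum, cf.\ Lemma \ref{thm:globalminima}) forces $\dim V^\perp\ge 1$, so $\LL(\mu)\ge\underline r/2$.

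For the second half of \eqref{item:secondorder} I must exhibit one unstable direction, and the crucial point is which one. The naive choice $\bR=P^\perp$ (inject the ``missing'' features $P^\perp h_{\mu^\circ}$) leaves, in $\tfrac{\rd^2}{\rd s^2}\LL$, a competing positive term that cannot be dominated when $\bSig_{\mu,\mu}$ is ill-conditioned; the fix is to additionally rotate $V^\perp$ so that the added features are annihilated by the \emph{current} regression map $\bV_\mu$. Concretely, take $\bR$ a partial isometry with initial space $V^\perp$ and final space $\ker\bV_\mu=\bSig_{\mu,\mu}(V^\perp)$ — of equal dimension $k-\rank\bSig_{\mu^\circ,\mu}$ — so $\norm{\bR}=1$, $\bR^\top\bR=P^\perp$, $\bR\bSig_{\mu^\circ,\mu}=0$, and $\bV_\mu\bR=0$. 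Then $\bSig_{\mu^\circ,\bar\mu_s}=(1-s)\bSig_{\mu^\circ,\mu}+s\bSig_{\mu^\circ,\mu^\circ}\bR^\top$ and $\bSig_{\bar\mu_s,\bar\mu_s}=(1-s)^2\bSig_{\mu,\mu}+s^2\bR\bT^\perp\bR^\top$; inserting these into $\LL(\bar\mu_s)=\tfrac12\tr\bSig_{\mu^\circ,\mu^\circ}-\tfrac12\tr\big(\bSig_{\bar\mu_s,\bar\mu_s}^{-1}\bSig_{\bar\mu_s,\mu^\circ}\bSig_{\mu^\circ,\bar\mu_s}\big)$, expanding $\bSig_{\bar\mu_s,\bar\mu_s}^{-1}$ by a Neumann series, and collecting the $s^2$-coefficient, all cross terms cancel thanks to the block-diagonality of $\bSig_{\mu^\circ,\mu^\circ}$ and the identities $\bSig_{\mu^\circ,\mu^\circ}\bSig_{\mu^\circ,\mu}=\bZ_\mu\bSig_{\mu^\circ,\mu}$, $\bR\bSig_{\mu^\circ,\mu}=0$, $\bV_\mu\bR=0$ (the last killing the would-be positive term $\tr(\bT^\perp(\bV_\mu\bR)^\top\bV_\mu\bR)$), leaving
\begin{equation*}
\tfrac{\rd^2}{\rd s^2}\big|_{s=0}\LL(\bar\mu_s)=-\tr\!\big(\bR^\top\bSig_{\mu,\mu}^{-1}\bR\,(\bT^\perp)^2\big).
\end{equation*}
Since $\norm{h_\mu}\le R_1$ gives $\bSig_{\mu,\mu}\preceq R_1^2\bI_k$ and $\bR^\top\bR=P^\perp$, we get $\bR^\top\bSig_{\mu,\mu}^{-1}\bR\succeq R_1^{-2}P^\perp$; as $(\bT^\perp)^2$ is supported on $V^\perp$ this yields $\tfrac{\rd^2}{\rd s^2}|_{s=0}\LL(\bar\mu_s)\le-R_1^{-2}\tr((\bT^\perp)^2)\le-R_1^{-2}(\dim V^\perp)^{-1}(\tr\bT^\perp)^2\le-\tfrac1{kR_1^2}(2\LL(\mu))^2=-\tfrac{4}{kR_1^2}\LL(\mu)^2$, by Cauchy--Schwarz on the eigenvalues of $\bT^\perp$, $\dim V^\perp\le k$, and $\tr\bT^\perp=2\LL(\mu)$.

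The main obstacle is pinning down this last direction — the computation closes only because the injected features lie in $\ker\bV_\mu$, not merely in $V^\perp$ — after which the expansion and estimate are elementary; the remaining routine checks are the vanishing of the cross terms in the $s^2$-expansion and $\bar\mu_s\in\PP_2^+(\Theta)$ for small $s$, which is immediate since $\bSig_{\bar\mu_s,\bar\mu_s}\to\bSig_{\mu,\mu}\succ 0$.
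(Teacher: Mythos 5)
Your proposal is correct. Part \ref{item:firstorder} and the first half of \ref{item:secondorder} follow the paper's argument essentially verbatim: the first variation is $-2\tr(\bR\bL_\mu\bSig_{\mu^\circ,\mu}\bSig_{\mu,\mu}^{-1})$ with $\bL_\mu=\frac12\EE{\bx}{\zeta_{\mu^\circ,\mu}\zeta_{\mu^\circ,\mu}^\top}$ (your $\frac12(\bSig_{\mu^\circ,\mu^\circ}-\bZ_\mu)$), spectral/nuclear duality gives \ref{item:firstorder}, and criticality forces $\bL_\mu\bSig_{\mu^\circ,\mu}=0$, hence rank deficiency of the regression map and $\LL(\mu)\ge\underline r/2$ (the paper extracts a single unit vector $\bz\in V^\perp$ where you keep the full projection $P^\perp$ — same content). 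Where you genuinely diverge is the unstable direction. The paper first derives the second variation for \emph{arbitrary} $\bR\in\BB_1(k)$ at an arbitrary $\mu$ — a long trace computation yielding $-4\tr(\bL_\mu^2\bR^\top\bSig_{\mu,\mu}^{-1}\bR)$ plus a cross term — and then chooses the orthogonal $\bR=\bV\bU^\top$ from the SVD of $\bSig_{\mu^\circ,\mu}\bSig_{\mu,\mu}^{-1}$ so that $\bSig_{\mu^\circ,\mu}\bSig_{\mu,\mu}^{-1}\bR$ is symmetric and the cross term dies via $\bL_\mu\bSig_{\mu^\circ,\mu}\bSig_{\mu,\mu}^{-1}=0$. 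You instead pick a rank-deficient partial isometry $\bR:V^\perp\to\ker\bV_\mu$, which makes $\bSig_{\mu^\circ,\bar\mu_s}$ and $\bSig_{\bar\mu_s,\bar\mu_s}$ decouple exactly so that the $s^2$ coefficient collapses to the single term $-\tr(\bR^\top\bSig_{\mu,\mu}^{-1}\bR(2\bL_\mu)^2)$ without ever needing the general formula. Both choices land on the identical quantity and the identical final bound (your $\bR^\top\bSig_{\mu,\mu}^{-1}\bR\succeq R_1^{-2}P^\perp$ suffices because $\bL_\mu^2$ is supported on $V^\perp$, versus the paper's $\succeq R_1^{-2}\bI_k$); your route is shorter and arguably more transparent about \emph{why} the direction is unstable (inject the missing features into the kernel of the current readout), while the paper's general formula \eqref{eqn:secondderiv} is reused later for the robust version in Theorem \ref{thm:naiveescape}, which your specialized computation would not supply. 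One cosmetic slip: $\ker\bV_\mu=\bSig_{\mu,\mu}(\ker\bSig_{\mu^\circ,\mu})=\bSig_{\mu,\mu}\bigl((\mathrm{range}\,\bSig_{\mu,\mu^\circ})^\perp\bigr)$, which is not $\bSig_{\mu,\mu}(V^\perp)$ in general since $V=\mathrm{range}\,\bSig_{\mu^\circ,\mu}$; this is harmless because your argument only uses $\dim\ker\bV_\mu=\dim V^\perp=k-\rank\bSig_{\mu^\circ,\mu}$ and the defining properties $\bR^\top\bR=P^\perp$, $\bR\bSig_{\mu^\circ,\mu}=0$, $\bV_\mu\bR=0$ of the partial isometry.
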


The proof, deferred to Appendix \ref{app:landscape}, exploits the linearity of the MLP output in $\mu$ to analyze linearized perturbations.

As a corollary of \ref{item:secondorder}, critical points cannot exist in the band $0<\LL<\frac{\underline{r}}{2}$. The threshold, controlled by $\underline{r}$, corresponds to the minimum loss when the features $h_\mu$ are uninformative in the sense that the regression coefficient $\bSig_{\mu^\circ,\mu}\bSig_{\mu,\mu}^{-1}$ against the true features is singular. This observation can be improved to the following quantitative guarantee:

\begin{proposition}[accelerated convergence phase]\label{thm:accel}
Let $\delta\in[0,\frac{\underline{r}^2}{4R_1^2}]$. For any $\mu\in\PP_2^+(\Theta)$ such that
\begin{equation*}
\frac{\underline{r} - \sqrt{\underline{r}^2-4R_1^2\delta}}{4} \leq \LL(\mu)\leq \frac{\underline{r} + \sqrt{\underline{r}^2 -4R_1^2\delta}}{4},
\end{equation*}
there exists $\bR\in\BB_1(k)$ such that along $\bar{\mu}_s=(1-s)\mu+s\bR\sharp\mu^\circ$ we have $\frac{\rd}{\rd s}\big|_{s=0}\LL(\bar{\mu}_s)\leq -\delta$.
\end{proposition}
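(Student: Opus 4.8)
The plan is to rerun the first-order analysis behind Theorem~\ref{thm:landscape} while tracking constants, and to choose the rotation $\bR$ so that the descent rate is controlled from below by $\LL(\mu)$ itself.

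First I would recall the first-variation identity. Since $\LL(\nu)=\inf_\bW\LL_\textup{TF}(\nu,\bW)$ and the unconstrained least-squares solution $\bSig_{\mu^\circ,\nu}\bSig_{\nu,\nu}^{-1}$ is already of the admissible form $\bSig_{\mu^\circ,\nu}\bW$, for $\nu\in\PP_2^+(\Theta)$ one has $\LL(\nu)=\min_{\bA\in\RR^{k\times k}}\frac12\EE{\bx}{\norm{h_{\mu^\circ}(\bx)-\bA h_\nu(\bx)}^2}$ with minimizer $\bA_\nu:=\bSig_{\mu^\circ,\nu}\bSig_{\nu,\nu}^{-1}$ and residual $\zeta_{\mu^\circ,\nu}=h_{\mu^\circ}-\bA_\nu h_\nu$. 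Along $\bar\mu_s=(1-s)\mu+s\bR\sharp\mu^\circ$ we have $h_{\bar\mu_s}=(1-s)h_\mu+s\bR h_{\mu^\circ}$, so differentiating the inner problem (the minimizer varies smoothly with $s$ near $0$ because $\bSig_{\mu,\mu}$ is invertible) and using the orthogonality $\EE{\bx}{h_\mu\zeta_{\mu^\circ,\mu}^\top}=0$ gives
\[
\frac{\rd}{\rd s}\big|_{s=0}\LL(\bar\mu_s)=-\tr(\bA_\mu\bR\bC),
\]
where $\bC:=\EE{\bx}{\zeta_{\mu^\circ,\mu}\zeta_{\mu^\circ,\mu}^\top}=\bSig_{\mu^\circ,\mu^\circ}-\bSig_{\mu^\circ,\mu}\bSig_{\mu,\mu}^{-1}\bSig_{\mu,\mu^\circ}\succeq 0$ satisfies $\tr\bC=2\LL(\mu)$; this identity is already derived in the proof of Theorem~\ref{thm:landscape}, which I would cite.

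Next I would exhibit a suitable $\bR$. In the stated band the upper endpoint is at most $\underline r/2$, so $\LL(\mu)<\underline r/2$ (the case $\delta=0$ is exactly Theorem~\ref{thm:landscape}(i)); hence $\norm{\bC}\le\tr\bC=2\LL(\mu)<\underline r$ and therefore $\bA_\mu\bSig_{\mu,\mu}\bA_\mu^\top=\bSig_{\mu^\circ,\mu^\circ}-\bC\succeq(\underline r-2\LL(\mu))\bI_k\succ 0$. In particular $\bA_\mu$ is invertible, and comparing least singular values using $\norm{\bSig_{\mu,\mu}}\le R_1^2$ gives $\sigma_{\min}(\bA_\mu)\ge\sqrt{\underline r-2\LL(\mu)}/R_1$. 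Now take $\bR:=\sigma_{\min}(\bA_\mu)\,\bA_\mu^{-1}$, which lies in $\BB_1(k)$ since $\norm{\bR}=1$; then $\tr(\bA_\mu\bR\bC)=\sigma_{\min}(\bA_\mu)\tr\bC=2\sigma_{\min}(\bA_\mu)\LL(\mu)$, so
\[
\frac{\rd}{\rd s}\big|_{s=0}\LL(\bar\mu_s)=-2\sigma_{\min}(\bA_\mu)\LL(\mu)\le-\frac{2\LL(\mu)\sqrt{\underline r-2\LL(\mu)}}{R_1}.
\]

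Finally I would close the estimate by elementary algebra. Writing $u=\LL(\mu)$, the band endpoints are exactly the roots of $4u^2-2\underline r u+R_1^2\delta$, so the hypothesis is equivalent to $2u(\underline r-2u)\ge R_1^2\delta$; and since $R_1^2\ge\tr\bSig_{\mu^\circ,\mu^\circ}\ge\underline r$, one checks the lower endpoint satisfies $u_-\ge\delta/2$, hence $2u\ge\delta$. Multiplying the two, $4u^2(\underline r-2u)=2u\cdot 2u(\underline r-2u)\ge 2uR_1^2\delta\ge R_1^2\delta^2$, i.e.\ $2u\sqrt{\underline r-2u}\ge R_1\delta$, which makes the rate in the last display at most $-\delta$. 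I expect the main obstacle to be the rigorous envelope-theorem differentiation of the inner least-squares problem along the homotopy — handled via invertibility of $\bSig_{\bar\mu_s,\bar\mu_s}$ for small $s$, valid since $\mu\in\PP_2^+(\Theta)$ — together with the somewhat fiddly reconciliation of the quadratic band with the $\sqrt{\,\cdot\,}$-shaped rate, which goes through only because $\underline r\le R_1^2$; the rest is bookkeeping already present in the proof of Theorem~\ref{thm:landscape}.
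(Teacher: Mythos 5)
Your proof is correct, and while it starts from the same first-variation identity $\frac{\rd}{\rd s}\big|_{s=0}\LL(\bar\mu_s)=-\tr(\bA_\mu\bR\bC)=-2\tr(\bR\bL_\mu\bSig_{\mu^\circ,\mu}\bSig_{\mu,\mu}^{-1})$ as the paper, the key estimate is genuinely different. The paper takes the nuclear-norm-optimal $\bR$ (so the decrease equals $2\norm{\bL_\mu\bSig_{\mu^\circ,\mu}\bSig_{\mu,\mu}^{-1}}_*$) and argues by contrapositive: assuming this nuclear norm is below $\delta/2$, it combines the expansion $\bL_\mu^2=\tfrac12\bL_\mu\bSig_{\mu^\circ,\mu^\circ}-\tfrac12\bL_\mu\bSig_{\mu^\circ,\mu}\bSig_{\mu,\mu}^{-1}\bSig_{\mu,\mu^\circ}$ with $(\tr\bL_\mu)^2\ge\tr\bL_\mu^2$ to force $4\LL(\mu)^2-2\underline r\,\LL(\mu)+R_1^2\delta>0$, i.e.\ $\LL(\mu)$ outside the band. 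You instead construct an explicit suboptimal $\bR=\sigma_{\min}(\bA_\mu)\bA_\mu^{-1}$, for which the decrease is exactly $2\sigma_{\min}(\bA_\mu)\LL(\mu)$, and lower-bound $\sigma_{\min}(\bA_\mu)$ via $\bA_\mu\bSig_{\mu,\mu}\bA_\mu^\top=\bSig_{\mu^\circ,\mu^\circ}-\bC\succeq(\underline r-2\LL(\mu))\bI_k$; all the supporting steps check out ($\norm{\bR}=1$, invertibility of $\bA_\mu$ on the interior of the band, $u_-\ge\delta/2$ using $R_1^2\ge\underline r$, and the square-root algebra). What your route buys is a constructive direction of improvement and a rate of the form $2\LL(\mu)\sqrt{\underline r-2\LL(\mu)}/R_1$, which near the top of the band is actually stronger than the paper's $\LL(\mu)(\underline r-2\LL(\mu))/R_1^2$; what it costs is that the subsequent Theorem~\ref{thm:accelrate} is phrased around the paper's quadratic form of the rate, so one would either keep the paper's bound for that purpose or redo the ODE comparison with your square-root rate. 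The one point worth making explicit in a final write-up is the well-definedness of $\tfrac{\rd}{\rd s}\big|_{s=0}\LL(\bar\mu_s)$, i.e.\ that $\bSig_{\bar\mu_s,\bar\mu_s}$ stays invertible for small $s>0$; you gesture at this correctly, and it is guaranteed by Proposition~\ref{thm:dense} together with the invertibility of your $\bR$.
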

See Appendix \ref{app:accel} for the proof. In other words, once in the band $(0,\frac{\underline{r}}{2})$ we are guaranteed a non-vanishing gradient which moreover becomes steeper closer to the center of the band, proportional to $\LL(\mu)(\frac{\underline{r}}{2}-\LL(\mu))$. We prove that for MFD this results in an acceleration-deceleration phase when converging to global minima in Theorem \ref{thm:accelrate}. 

\section{Mean-field Dynamics Avoids Saddle Points}\label{sec:mfd}

\subsection{Local Geometry of Wasserstein Space}

Strict saddle properties such as Theorem \ref{thm:landscape} have powerful implications for nonconvex optimization. In finite dimensions, a central result states that GD almost always avoids saddle points and converges to global optima \citep{Lee19}; see Appendix \ref{app:recap} for a recap. We develop the analogous general result for Wasserstein gradient flows (WGF) \eqref{eqn:mfd} by combining tools from functional analysis, optimal transport and metric geometry. 

Let $F: \PP_2(\Omega)\to\RR$ a general $C^2$ functional with domain $\Omega\subseteq\RR^m$. We use the elegant formalism of Otto calculus \citep{Otto01} to analyze local behavior of distributional flows. The reader is referred to Appendix \ref{app:measure} as well as \citet{Ambrosio05, Villani09} for expository details. There is a one-to-one equivalence between absolutely continuous curves $(\mu_t)$ in $\PP_2(\Omega)$ and time-dependent gradient vector fields $(\bv_t)$ on $\Omega$ solving $\partial_t\mu_t=\nabla\cdot(\bv_t\mu_t)$. This motivates the formal definition of the tangent space to $\PP_2(\Omega)$ at $\mu$ as
\begin{equation}\label{eqn:tangentspace}
\Tan_\mu\PP_2(\Omega):=\overline{\{\bv=\nabla\psi: \psi\in C_c^\infty(\Omega)\}}^{L^2(\Omega,\mu)}
\end{equation}
with the inherited inner product. We can view nearby measures as slight pushforwards of $\mu$ along the optimal transport map $\id_\Omega+\epsilon\bv$, $\epsilon>0$ analogously to the exponential map.


\subsection{Stability of Wasserstein Gradient Flow}\label{sec:stabwgf}

With the above framework in mind, we derive a local transport characterization of MFD by lifting to the tangent space.

\begin{lemma}\label{thm:localevo}
The WGF $(\mu_t)$ in a neighborhood of a critical point $\mu^\dagger$ of $F$ can be written as $\mu_t=(\id_\Omega+\epsilon\bv_t)\sharp\mu^\dagger$ where the velocity field $\bv_t$ changes as
\begin{equation}\label{eqn:localevo}
\partial_t\bv_t(\theta) = -\int \bH_{\mu^\dagger}(\theta,\theta') \bv_t(\theta') \mu^\dagger(\rd\theta') +o(1).
\end{equation}
Here, $\bH_\mu:(\Omega\times\Omega,\mu\otimes\mu)\to\RR^{m\times m}$ denotes the matrix-valued kernel $\textstyle\bH_\mu(\theta,\theta') := \nabla_\theta\nabla_{\theta'}\ddeltaF(\mu,\theta,\theta')$.
\end{lemma}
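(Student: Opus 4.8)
Proof strategy. The plan is to reduce the continuity equation \eqref{eqn:mfd} to its Lagrangian (method-of-characteristics) form, parametrize the flow near $\mu^\dagger$ by an $\epsilon$-small transport field, and Taylor expand to first order using the Otto-calculus dictionary between absolutely continuous curves in $\PP_2(\Omega)$ and gradient velocity fields. First I would recall that a WGF $(\mu_t)$ can be written as $\mu_t=(\Phi_t)\sharp\mu_0$ where the flow map solves $\partial_t\Phi_t(\theta)=-\nabla\deltaF(\mu_t,\Phi_t(\theta))$: inserting $\partial_t\mu_t=-\nabla\cdot(\bv_t\mu_t)$ with $\bv_t=\dot\Phi_t\circ\Phi_t^{-1}$ into \eqref{eqn:mfd} forces $\bv_t=-\nabla\deltaF(\mu_t)$. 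Writing the (small) initial perturbation as $\mu_0=(\id_\Omega+\epsilon\bv_0)\sharp\mu^\dagger$ and defining $\bv_t$ by $\id_\Omega+\epsilon\bv_t:=\Phi_t\circ(\id_\Omega+\epsilon\bv_0)$, so that $\mu_t=(\id_\Omega+\epsilon\bv_t)\sharp\mu^\dagger$, differentiation in $t$ yields the exact identity $\epsilon\,\partial_t\bv_t(\theta)=-\nabla\deltaF(\mu_t,\theta+\epsilon\bv_t(\theta))$; everything then reduces to expanding the right-hand side to order $\epsilon$.

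Next I would expand in both arguments. For the $\mu$-dependence, the second variation gives $\deltaF(\mu_t,\cdot)-\deltaF(\mu^\dagger,\cdot)=\int\ddeltaF(\mu^\dagger,\cdot,\theta')(\mu_t-\mu^\dagger)(\rd\theta')+o(\WW_1(\mu_t,\mu^\dagger))$, while the linearized continuity equation $\frac{\rd}{\rd s}\big|_{s=0}\big((\id_\Omega+s\bv_t)\sharp\mu^\dagger\big)=-\nabla\cdot(\bv_t\mu^\dagger)$ gives $\mu_t-\mu^\dagger=-\epsilon\nabla\cdot(\bv_t\mu^\dagger)+O(\epsilon^2)$. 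Substituting, integrating by parts in $\theta'$, applying $\nabla_\theta$, and using $\bH_\mu(\theta,\theta')=\nabla_\theta\nabla_{\theta'}\ddeltaF(\mu,\theta,\theta')$ turns this contribution into exactly $\epsilon\int\bH_{\mu^\dagger}(\theta,\theta')\bv_t(\theta')\mu^\dagger(\rd\theta')$. For the spatial shift, $\nabla\deltaF(\mu^\dagger,\theta+\epsilon\bv_t(\theta))=\nabla\deltaF(\mu^\dagger,\theta)+\epsilon\,\nabla_\theta^2\deltaF(\mu^\dagger,\theta)\bv_t(\theta)+O(\epsilon^2)$: criticality of $\mu^\dagger$ kills $\nabla\deltaF(\mu^\dagger,\theta)$ for $\mu^\dagger$-a.e.\ $\theta$, and since at a critical point $\deltaF(\mu^\dagger,\cdot)$ is stationary along $\operatorname{supp}\mu^\dagger$ (combined with regularity/full support of $\operatorname{supp}\mu^\dagger$), the order-$\epsilon$ Hessian correction $\nabla_\theta^2\deltaF(\mu^\dagger,\theta)$ vanishes $\mu^\dagger$-almost everywhere too — or is, under the paper's conventions, folded into the diagonal of $\bH_{\mu^\dagger}$. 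Dividing by $\epsilon$ then produces \eqref{eqn:localevo}.

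The main obstacles I anticipate are twofold. First, one must justify that the pushforward parametrization is legitimately available — that near $\mu^\dagger$ the WGF stays inside an $\epsilon$-tube of transports of $\mu^\dagger$ with a well-defined leading-order field $\bv_t$, and that $\bv_t$ remains, to leading order, in $\Tan_{\mu^\dagger}\PP_2(\Omega)$. Tangency is reassuring because the driving integral operator sends $\bv$ to the vector field $\theta\mapsto\int\bH_{\mu^\dagger}(\theta,\theta')\bv(\theta')\mu^\dagger(\rd\theta')=\nabla_\theta\int\nabla_{\theta'}\ddeltaF(\mu^\dagger,\theta,\theta')\cdot\bv(\theta')\mu^\dagger(\rd\theta')$, which is manifestly a gradient in $\theta$, so the gradient structure of the tangent space is preserved along the linearized evolution. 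Second, the $o(1)$ remainder must be controlled uniformly over a neighborhood of $\mu^\dagger$ and uniformly in $t$; this needs boundedness and Lipschitz control of $\nabla\deltaF$, $\nabla_\theta^2\deltaF$, $\ddeltaF$ and $\nabla_\theta\nabla_{\theta'}\ddeltaF$ — the $C^2$ hypothesis on $F$, quantified by the uniform bounds of Assumption~1 in the present application — together with the observation that $\theta+\epsilon\bv_t(\theta)$ stays within a fixed neighborhood of $\operatorname{supp}\mu^\dagger$ for all $t,\theta$. The genuinely delicate point, and where I expect most of the effort to go, is making precise the sense in which criticality plus regularity of $\operatorname{supp}\mu^\dagger$ annihilates the diagonal term $\nabla_\theta^2\deltaF(\mu^\dagger,\theta)$ after averaging against $\mu^\dagger$, so that only the Hessian kernel $\bH_{\mu^\dagger}$ survives in \eqref{eqn:localevo}.
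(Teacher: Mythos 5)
Your proposal follows essentially the same route as the paper's proof: both derive the exact identity $\epsilon\,\partial_t\bv_t = -\nabla\frac{\delta F}{\delta\mu}(\mu_t)\circ(\id_\Omega+\epsilon\bv_t)$ (you via the Lagrangian flow map, the paper via testing against smooth $g$), then split the right-hand side into the spatial-shift term, the $\mu$-perturbation term yielding $\epsilon\,\HH_{\mu^\dagger}\bv_t$ through the second variation and the linearized continuity equation, and the vanishing $\nabla\frac{\delta F}{\delta\mu}(\mu^\dagger)$ term. The "delicate point" you flag — why the diagonal correction $\nabla_\theta^2\frac{\delta F}{\delta\mu}$ contributes only $o(\epsilon)$ — is precisely the step the paper also leaves at the level of a formal $o(\epsilon)$ assertion, so your treatment is on par with the paper's.
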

The tangent field $\nabla\deltaF(\mu_t)$ satisfies a similar dynamics (Lemma \ref{thm:evo}); we posit $\bH_\mu$ is the fundamental quantity governing second-order behavior of WGF. This facilitates stability analysis via the spectral theory of linear operators,
\begin{lemma}\label{thm:generalvalid}
Suppose the kernel $\bH_\mu$ is Hilbert-Schmidt for $\mu\in\PP_2(\Omega)$, that is $\iint\norm{\bH_\mu}^2 \rd\mu\otimes\rd\mu<\infty$. Then the corresponding integral operator on $L^2(\Omega,\mu;\RR^m)$,
\begin{equation}\label{eqn:hh}
\HH_\mu: f\mapsto \HH_\mu\! f(\theta) = \int \bH_\mu(\theta,\theta') f(\theta')\mu(\rd\theta')
\end{equation}
is compact self-adjoint, hence there exists an orthonormal basis $\{\psi_j\}_{j\in\ZZ}$\, for $L^2(\Omega,\mu; \RR^m)$ of eigenfunctions of $\HH_\mu$.
\end{lemma}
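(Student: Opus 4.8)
The plan is to recognize Lemma~\ref{thm:generalvalid} as an instance of the classical theory of Hilbert--Schmidt integral operators, adapted to the vector-valued matrix-kernel setting, and then invoke the spectral theorem. Write $V := L^2(\Omega,\mu;\RR^m)$. First I would verify that $\HH_\mu$ is a well-defined bounded operator on $V$: for $f\in V$, Cauchy--Schwarz in $\theta'$ gives $\norm{\HH_\mu\! f(\theta)} \le \left(\int\norm{\bH_\mu(\theta,\theta')}^2\mu(\rd\theta')\right)^{1/2}\norm{f}_{L^2(\mu)}$, and integrating the square over $\theta$ yields $\norm{\HH_\mu\! f}_{L^2(\mu)} \le \left(\iint\norm{\bH_\mu}^2\,\rd\mu\otimes\rd\mu\right)^{1/2}\norm{f}_{L^2(\mu)}$; since $m$ is finite the spectral norm and Frobenius norm on $\RR^{m\times m}$ are equivalent, so the Hilbert--Schmidt hypothesis says precisely that $\bH_\mu\in L^2(\mu\otimes\mu;\RR^{m\times m})$, and the same computation identifies $\HH_\mu$ as a Hilbert--Schmidt operator. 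Hilbert--Schmidt operators are compact: approximate $\bH_\mu$ in $L^2(\mu\otimes\mu;\RR^{m\times m})$ by finite sums $\sum_i g_i(\theta)h_i(\theta')^\top$ of elementary tensors (dense in this space), which induce finite-rank operators converging to $\HH_\mu$ in operator norm.

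Next I would establish self-adjointness, for which the crucial structural fact is the symmetry relation $\bH_\mu(\theta,\theta')^\top = \bH_\mu(\theta',\theta)$. This holds because the second variation $\ddeltaF(\mu,\cdot,\cdot)$ is a symmetric function of its two spatial arguments --- it is the kernel of a symmetric bilinear form on perturbations, so $\ddeltaF(\mu,\theta,\theta') = \ddeltaF(\mu,\theta',\theta)$; applying $\nabla_\theta\nabla_{\theta'}$ and using equality of mixed partials (legitimate under the $C^2$ assumption on $F$) gives exactly $\bH_\mu(\theta,\theta')^\top = \bH_\mu(\theta',\theta)$. With this in hand, for $f,g\in V$ one expands $\langle\HH_\mu f,g\rangle = \iint g(\theta)^\top\bH_\mu(\theta,\theta')f(\theta')\,\mu(\rd\theta)\mu(\rd\theta')$; the integrand is absolutely integrable by Cauchy--Schwarz and the Hilbert--Schmidt bound, so Fubini applies, and substituting $\bH_\mu(\theta,\theta') = \bH_\mu(\theta',\theta)^\top$ and relabelling the variables turns this into $\langle f,\HH_\mu g\rangle$. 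Hence $\HH_\mu$ is self-adjoint.

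Finally, $\HH_\mu$ is a compact self-adjoint operator on $V$, which is a separable Hilbert space (as $\Omega\subseteq\RR^m$ is separable metric and $\mu$ a Borel probability measure). The spectral theorem for compact self-adjoint operators then supplies a countable orthonormal system of eigenfunctions with real eigenvalues accumulating only at $0$, whose closed span together with $\ker\HH_\mu$ is all of $V$; adjoining an orthonormal basis of $\ker\HH_\mu$ (which exists by separability and consists of eigenfunctions for eigenvalue $0$) yields the desired orthonormal basis $\{\psi_j\}_{j\in\ZZ}$ of $V$ made of eigenfunctions of $\HH_\mu$, the $\ZZ$-indexing being purely notational. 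The argument poses no genuine obstacle; the only points that need care are the transpose bookkeeping in the symmetry of $\bH_\mu$ and checking that the regularity of $F$ indeed justifies both the symmetry and smoothness of $\ddeltaF$ and the interchanges of integration used above.
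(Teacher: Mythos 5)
Your proposal is correct and follows essentially the same route as the paper: derive the kernel symmetry $\bH_\mu(\theta,\theta')^\top=\bH_\mu(\theta',\theta)$ from the symmetry of the second functional derivative (which the paper obtains by computing the mixed derivative $\frac{\rd^2}{\rd\epsilon_1\rd\epsilon_2}F$ in both orders), use it to verify self-adjointness by Fubini and relabelling, and conclude compactness from the Hilbert--Schmidt hypothesis plus the spectral theorem. You merely spell out the standard functional-analytic details (boundedness, finite-rank approximation, separability) that the paper leaves implicit.
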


We are thus motivated to define the set of \emph{strict saddle} points as $\mathscr{G}^\dagger:=\{\mu\in\PP_2(\Omega): \nabla\deltaF(\mu) = 0,\,\lambda_\textup{min}(\HH_\mu) <0\}$. Near such points, we now apply the center-stable manifold theorem for Banach spaces (Theorem \ref{thm:centerstable}). This tells us that $\Tan_{\mu^\dagger}\PP_2(\Omega)$ can be decomposed into a direct sum of $\HH_{\mu^\dagger}$-invariant subspaces $\mathscr{E}^s\oplus \mathscr{E}^u$ such that all flows \eqref{eqn:localevo} converging to $\mu^\dagger$ must be eventually contained in the graph of a $C^1$ map $h:\mathscr{E}^s\to \mathscr{E}^u$ defined near the origin. Denoting the reversed WGF for time $t$ as $\omega_t^-$ whenever it is defined -- which forms a bi-Lipschitz inverse for the forward flow \citep[Theorem 11.1.4]{Ambrosio05} -- we conclude:


\begin{theorem}\label{thm:measure}
For any $C^2$ functional $F:\PP_2(\Omega)\to\RR$ with Hilbert-Schmidt kernel $\bH_\mu$, the set $\mathscr{G}_0^\dagger=\{\mu_0\in\PP_2(\Omega):\lim_{t\to\infty}\mu_t\in\mathscr{G}^\dagger\}$ of initializations which converge to strictly saddle points is contained in the countable union $\cup_{\ell\in\NN}\cup_{j\in\NN} \omega_\ell^-(\mathscr{V}_j)$ of images of submanifolds $\mathscr{V}_j$.
\end{theorem}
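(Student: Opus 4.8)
The plan is to lift to the mean-field setting the classical argument that gradient descent almost surely avoids strict saddles \citep{Lee19}: near each strict saddle one produces a center-stable manifold of positive codimension that must contain every locally converging orbit, then pulls it back along the (invertible) Wasserstein flow and makes the family of charts countable. The two new ingredients are the local transport representation of MFD in Lemma \ref{thm:localevo} and the Banach-space center-stable manifold theorem (Theorem \ref{thm:centerstable}); separability of $\PP_2(\Omega)$ supplies the countability.

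First I would fix $\mu^\dagger\in\mathscr{G}^\dagger$ and set up an Otto-calculus chart at $\mu^\dagger$. By Lemma \ref{thm:generalvalid}, $\HH_{\mu^\dagger}$ is compact self-adjoint, so its nonzero spectrum is discrete; as $\lambda_\textup{min}(\HH_{\mu^\dagger})<0$, the eigenspace attached to the most negative eigenvalue gives a finite-dimensional, nontrivial unstable subspace $\mathscr{E}^u\subset\Tan_{\mu^\dagger}\PP_2(\Omega)$, separated by a spectral gap from the complementary center-stable subspace $\mathscr{E}^s$. In the coordinate $\bv_t$ of Lemma \ref{thm:localevo}, \eqref{eqn:localevo} becomes the autonomous semiflow $\partial_t\bv_t=-\HH_{\mu^\dagger}\bv_t+\mathcal{R}(\bv_t)$ on the Hilbert space $\Tan_{\mu^\dagger}\PP_2(\Omega)$, where the $C^2$ hypothesis on $F$ makes the remainder $\mathcal{R}$ of class $C^1$ with $\mathcal{R}(0)=0$ and $D\mathcal{R}(0)=0$, hence Lipschitz with arbitrarily small constant on a small ball $B_{\mu^\dagger}$. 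These are precisely the hypotheses of Theorem \ref{thm:centerstable}, which furnishes a locally invariant $C^1$ center-stable manifold $\mathscr{W}^{cs}(\mu^\dagger)\subseteq B_{\mu^\dagger}$ — the graph near the origin of a $C^1$ map $\mathscr{E}^s\to\mathscr{E}^u$, of codimension $\dim\mathscr{E}^u\geq 1$ — with the property that any forward orbit remaining in $B_{\mu^\dagger}$ for all large times lies on $\mathscr{W}^{cs}(\mu^\dagger)$.

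Next I would pull the manifold back along the flow and count charts. If $\mu_0\in\mathscr{G}_0^\dagger$ then $\mu_t\to\mu^\dagger$ for some $\mu^\dagger\in\mathscr{G}^\dagger$, so $\mu_t\in B_{\mu^\dagger}$ for all $t\geq\ell$ with $\ell\in\NN$, and invariance forces $\mu_\ell\in\mathscr{W}^{cs}(\mu^\dagger)$; since $\mu_\ell$ lies on the genuine WGF trajectory issued from $\mu_0$, the reversed flow $\omega_\ell^-$ is defined at $\mu_\ell$ with $\omega_\ell^-(\mu_\ell)=\mu_0$ — using that $\omega_\ell^-$ is the bi-Lipschitz inverse of the forward flow \citep[Theorem 11.1.4]{Ambrosio05} — hence $\mu_0\in\omega_\ell^-(\mathscr{W}^{cs}(\mu^\dagger))$. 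Finally, $\PP_2(\Omega)$ is a separable metric space and therefore Lindelöf, so the open cover $\{B_{\mu^\dagger}\}_{\mu^\dagger\in\mathscr{G}^\dagger}$ of $\mathscr{G}^\dagger$ has a countable subcover $\{B_{\mu^\dagger_j}\}_{j\in\NN}$; setting $\mathscr{V}_j:=\mathscr{W}^{cs}(\mu^\dagger_j)$ and ranging over $\ell\in\NN$ yields $\mathscr{G}_0^\dagger\subseteq\bigcup_{\ell\in\NN}\bigcup_{j\in\NN}\omega_\ell^-(\mathscr{V}_j)$.

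The hard part will be the first step: rigorously justifying that the Otto-calculus velocity field $\bv_t$ gives a bona fide $C^1$ chart on an \emph{open} neighborhood of $\mu^\dagger$ in which \eqref{eqn:mfd} becomes the stated autonomous $C^1$ semiflow — in particular, uniformly controlling the $o(1)$ remainder in \eqref{eqn:localevo} over that neighborhood, and checking that all nearby measures are legitimate small pushforwards $(\id_\Omega+\epsilon\bv)\sharp\mu^\dagger$ so that the center-stable manifold theorem applies on an honest Banach open set rather than a formal one. One must also confirm that the invertibility and bi-Lipschitz regularity of $\omega_t^-$ survive the two-timescale reduction producing $\LL$ from $\LL_\textup{TF}$. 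Once the chart and the semiflow regularity are in place, the invariance of $\mathscr{W}^{cs}$, the pullback identity $\omega_\ell^-(\mu_\ell)=\mu_0$, and the Lindelöf extraction are comparatively routine.
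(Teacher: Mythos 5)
Your proposal matches the paper's proof in all essentials: you linearize via Lemma \ref{thm:localevo}, take $\mathscr{E}^u$ to be the eigenspace of the most negative eigenvalue of the compact self-adjoint $\HH_{\mu^\dagger}$ (with the spectral gap coming from discreteness of the spectrum away from $0$), invoke Theorem \ref{thm:centerstable}, and then use separability of $\PP_2(\Omega)$ plus the reversed flow $\omega_\ell^-$ exactly as the paper does. Your closing caveat about the formal nature of the Otto-calculus chart and the $o(1)$ remainder is also the very point the paper concedes in the remark following the theorem, so the proposal is correct and follows the same route.
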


\begin{remark}
We point out that Otto calculus is only formal in the sense that existence and regularity issues are ignored, so it is difficult to rigorously turn the above into a meaningful measure-theoretic statement. This is compounded by the fact that there is no well-behaved canonical measure on $\PP_2(\Omega)$. A possible justification is to restrict to the subspace of measures with smooth positive Lebesgue density whose geometry is well-behaved \citep{Lott08, Villani09}, but this is outside of the scope of our paper.
\end{remark}

For the ICFL objective \eqref{eqn:L}, Proposition \ref{thm:operator} together with Theorem \ref{thm:landscape}\ref{item:secondorder} will show that all critical points that are not global optima are strictly saddle in $\mathscr{G}^\dagger$. Hence Theorem \ref{thm:measure} applies to $\LL$ with the domain of interest replaced by $\PP_2^+(\Theta)$, and thus `almost all' convergent flows in $\PP_2^+(\Theta)$ must converge to global minima.

\paragraph{Application to three-layer networks.} The problem \eqref{eqn:L} can also be motivated by the training dynamics of a certain three-layer neural network in a teacher-student setting. We construct the first two layers identically to our MLP layer $h_\mu$ and add a linear third layer given by the transformation $\bT\in\RR^{k\times k}$. Then the $L^2$ loss with respect to a teacher network $\bx\mapsto \bT^\star h_{\mu^\star}(\bx)$ can be written as
\begin{equation*}
\LL_\textup{NN}(\mu,\bT) = \EE{\bx}{\norm{\bT^\star h_{\mu^\star}(\bx) - \bT h_\mu(\bx)}^2}.
\end{equation*}
By setting $\mu^\circ = \bT^\star\sharp \mu^\star$ and taking the two-timescale limit where the last layer updates infinitely quickly, we see that $\bT$ must converge to $\bSig_{\mu^\circ,\mu}\bSig_{\mu,\mu}^{-1}$ and we end up with the regression objective \eqref{eqn:L}, hence Sections \ref{sec:landscape}-\ref{sec:iclrate} also directly apply to this problem. We remark that the two-timescale regime has been leveraged to show convergence of SGD for \emph{two}-layer networks in \citet{Marion23}.

\section{Convergence Rates for ICFL}\label{sec:iclrate}

Theorem \ref{thm:measure} is encouraging but only qualitative. In this Section, we develop brand-new approaches to obtain quantitative improvement results for mean-field dynamics, in particular for the ICFL objective, (1) away from critical points; (2) near global minima; and (3) near saddle points.

\subsection{MFD with Birth-Death}

Consider the WGF \eqref{eqn:mfd} for $F=\LL$, where we have
\begin{equation*}
\deltaL(\mu,\theta) = -\EEbig{\bx}{\zeta_{\mu^\circ,\mu}(\bx)^\top \bSig_{\mu^\circ,\mu}\bSig_{\mu,\mu}^{-1} h_\theta(\bx)}.
\end{equation*}
To preserve the benign landscape, we do not add entropic regularization typically required in mean-field analyses. However, a different modification will be beneficial in obtaining concrete rates. For a fixed distribution $\pi\in\PP_2(\Theta)$, if at any time $t\geq 0$ the density ratio $\inf_\Theta\frac{\rd\mu_t}{\rd\pi}$ is no larger then a small threshold $\gamma$, we perform the discrete update $\mu_t \gets (1-\gamma)\mu_t +\gamma\pi$. Forcing $\mu_t$ slightly towards $\pi$ ensures sufficient mass to decrease the objective at all times. This can be implemented by a birth-death process where a fraction $\gamma$ of all neurons are randomly deleted and replaced with samples from $\pi$ whenever $\LL$ does not sufficiently decrease; see Algorithm \ref{algo} in the Appendix. 
For $\pi$, we require:

\begin{assumption}\label{ass:pi}
$\pi$ is spherically symmetric in the $\ba$ component, that is $\pi(\ba,\bw)=\pi(\ba',\bw)$ if $\norm{\ba}=\norm{\ba'}$. Also, $\mu^\circ$ has finite density w.r.t. $\pi$ as $\norm{\frac{\rd\mu^\circ}{\rd\pi}}_\infty \leq R_4$.
\end{assumption}

\begin{remark}\label{rmk:forcing}
The continuous-time version of birth-death can be written as a PDE with discontinuous forcing,
\begin{equation*}
\textstyle\partial_t\mu_t = \nabla\cdot\left(\mu_t\nabla\deltaL(\mu_t)\right) + \mathbf{1}_{\left\{\inf_\Theta\frac{\rd\mu_t}{\rd\pi}\leq \gamma\right\}}\gamma(\pi-\mu_t),
\end{equation*}
which also ensures an $\Omega(\gamma)$ lower bound. A similar perturbation has been studied before for convex MFD in \citet{Wei19}. Birth-death mechanisms can also accelerate convergence of mean-field networks \citep{Rotskoff19}.
\end{remark}

\subsection{First-order Improvement}

We first give a result which translates nonzero gradients along a direction of improvement into a first-order rate of decrease for the gradient flow. 
Unlike convex mean-field \emph{Langevin} dynamics which relies on a log-Sobolev inequality to control dissipation \citep{Nitanda22}, our idea is to exploit the mobility of the second layer mass. The argument works for any objective built on top of the MLP layer $h_\mu$; see Proposition \ref{thm:generalfirst} in the Appendix for the general result.

\begin{proposition}\label{thm:mfdfirst}
Suppose MFD with birth-death on $\LL$ at time $t$ satisfies Theorem \ref{thm:landscape}\ref{item:firstorder} with $\frac{\rd}{\rd s}\big|_{s=0}\LL(\bar{\mu}_s) \leq -\delta$. Then $\frac{\rd}{\rd t}\LL(\mu_t) \leq -R_4^{-1}\gamma\delta^2$.
\end{proposition}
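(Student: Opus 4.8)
The plan is to track the time-derivative of $\LL(\mu_t)$ along the WGF by lifting the ``good direction'' $\bR\sharp\mu^\circ - \mu$ to a displacement in Wasserstein space and testing it against the Wasserstein gradient. First I would write out the chain rule for the mean-field dynamics: since $\partial_t\mu_t = \nabla\cdot(\mu_t\nabla\deltaL(\mu_t))$, we have $\frac{\rd}{\rd t}\LL(\mu_t) = -\int \norm{\nabla\deltaL(\mu_t,\theta)}^2\mu_t(\rd\theta)$ (plus a nonpositive birth-death term whenever the forcing fires, which only helps). The issue is that this dissipation is in terms of the $\Theta$-gradient of $\deltaL$, whereas the landscape result \ref{item:firstorder} only gives us information about the linear (flat) direction $\frac{\rd}{\rd s}\big|_{s=0}\LL(\bar\mu_s) = \int \deltaL(\mu,\theta)(\rd(\bR\sharp\mu^\circ) - \rd\mu)(\theta) \leq -\delta$. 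So the core of the argument is a variational/Cauchy--Schwarz step: bound the flat decrease $-\delta$ by something controlled by the true dissipation $\int\norm{\nabla\deltaL}^2\rd\mu_t$.

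The key trick — flagged in the text as ``exploiting the mobility of the second-layer mass'' — is that because $h_\theta(\bx) = \ba\sigma(\bw^\top\bx)$ is \emph{linear} in $\ba$, moving mass in the $\ba$-direction is cheap and reproduces the flat perturbation up to rescaling. Concretely, for a neuron at $\theta = (\ba,\bw)$, $\nabla_{\ba}\deltaL(\mu,\theta) = -\EE{\bx}{\sigma(\bw^\top\bx)\,(\bSig_{\mu,\mu}^{-1}\bSig_{\mu^\circ,\mu}^\top\zeta_{\mu^\circ,\mu}(\bx))}$, and one checks that $-\ba^\top\nabla_{\ba}\deltaL(\mu,\theta) - (\text{same at }\bR\sharp\mu^\circ) $ reconstructs exactly the flat derivative $\frac{\rd}{\rd s}\big|_{s=0}\LL(\bar\mu_s)$ because $\deltaL(\mu,\theta)$ is itself homogeneous of degree one in $\ba$ (it is linear in $h_\theta$, hence in $\ba$), so $\deltaL(\mu,\theta) = \ba^\top\nabla_{\ba}\deltaL(\mu,\theta)$. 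Therefore
\[
\delta \;\le\; -\frac{\rd}{\rd s}\Big|_{s=0}\LL(\bar\mu_s) \;=\; \int \ba^\top\nabla_\ba\deltaL(\mu,\theta)\,\mu(\rd\theta) \;-\; \int \ba^\top\nabla_\ba\deltaL(\mu,\theta)\,(\bR\sharp\mu^\circ)(\rd\theta).
\]
Here is where Assumptions \ref{ass:pi} (spherical symmetry of $\pi$ in $\ba$, and $\norm{\rd\mu^\circ/\rd\pi}_\infty \le R_4$) and the birth-death lower bound $\inf_\Theta \frac{\rd\mu_t}{\rd\pi} \ge \gamma$ enter: the second integral, against $\bR\sharp\mu^\circ$, can be rewritten as an integral against $\gamma\pi \le \mu_t$ (up to the density bound $R_4$), so it is dominated in absolute value by $R_4\gamma^{-1}$ times an integral against $\mu_t$ of the same integrand. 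Combining, $\delta \lesssim R_4\gamma^{-1}\int |\ba^\top\nabla_\ba\deltaL(\mu_t,\theta)|\,\mu_t(\rd\theta)$, and then Cauchy--Schwarz (using $\norm{\ba}\le 1$ on $\mathbb{D}^k$) gives $\delta \le R_4\gamma^{-1}\left(\int\norm{\nabla\deltaL(\mu_t,\theta)}^2\mu_t(\rd\theta)\right)^{1/2}$. Squaring yields $\int\norm{\nabla\deltaL}^2\rd\mu_t \ge R_4^{-2}\gamma^2\delta^2$... — I would need to be careful to get the stated $R_4^{-1}\gamma\delta^2$ rather than $R_4^{-2}\gamma^2\delta^2$, which suggests the clean bookkeeping uses the birth-death forcing term $\gamma(\pi - \mu_t)$ directly rather than the density ratio, i.e. one tests the \emph{forcing} against $\deltaL$ and uses that $\gamma\pi$ can absorb a $\gamma$-fraction of $\bR\sharp\mu^\circ$ at cost $R_4$, giving the linear-in-$\gamma$ rate.

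The main obstacle I anticipate is exactly this constant-tracking step: carefully relating the flat perturbation $\bR\sharp\mu^\circ - \mu_t$ (a direction that need not be a gradient field, hence not literally realizable by the WGF) to the actual dissipation, while correctly accounting for the birth-death mechanism so that the threshold $\gamma$ appears with the right power. A secondary technical point is justifying the identity $\deltaL(\mu,\theta) = \ba^\top\nabla_\ba\deltaL(\mu,\theta)$ and the interchange of differentiation and the $\bx$-expectation, which follows from the boundedness assumptions on $\sigma$ and the 4th-moment bound on $\DD_{\XX}$, together with the fact that $\bSig_{\mu,\mu}^{-1}$ is well-defined on $\PP_2^+(\Theta)$. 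Modulo these, the argument is a one-line Cauchy--Schwarz once the algebraic reduction is set up.
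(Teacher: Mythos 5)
Your outline follows the paper's argument (Proposition \ref{thm:generalfirst} specialized to $\LL$): drop the dissipation to the $\ba$-gradient, use the degree-one homogeneity $\deltaL(\mu,\theta)=\ba^\top\nabla_{\ba}\deltaL(\mu,\theta)$ together with $\norm{\ba}\le 1$ to get $\frac{\rd}{\rd t}\LL(\mu_t)\le-\int(\deltaL(\mu_t,\theta))^2\mu_t(\rd\theta)$, and compare this with the flat derivative $\int\deltaL\,\rd(\bR\sharp\mu^\circ-\mu_t)\le-\delta$ using the density bounds from birth-death and Assumption \ref{ass:pi}. The one genuine gap is exactly the constant-tracking step you flag, and your guess about how to repair it (testing the forcing term $\gamma(\pi-\mu_t)$) is not the paper's route. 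The fix is the order of operations: do not push the density ratio through pointwise before Cauchy--Schwarz. Instead write
\begin{equation*}
\abs{\int\deltaL(\mu_t,\theta)\,(\bR\sharp\mu^\circ-\mu_t)(\rd\theta)}
=\abs{\int\deltaL(\mu_t,\theta)\left(\frac{\rd\,\bR\sharp\mu^\circ}{\rd\mu_t}-1\right)\mu_t(\rd\theta)}
\le\chi(\bR\sharp\mu^\circ,\mu_t)\left(\int\big(\deltaL(\mu_t,\theta)\big)^2\mu_t(\rd\theta)\right)^{1/2},
\end{equation*}
and note that, because the density ratio integrates to one against $\mu_t$,
\begin{equation*}
\chi^2(\bR\sharp\mu^\circ,\mu_t)=\int\frac{\rd\,\bR\sharp\mu^\circ}{\rd\mu_t}\,\rd(\bR\sharp\mu^\circ)-1\le\Norm{\frac{\rd\,\bR\sharp\mu^\circ}{\rd\mu_t}}_\infty-1\le\gamma^{-1}R_4,
\end{equation*}
so the sup of the density ratio enters only \emph{linearly}; squaring gives $\delta^2\le\gamma^{-1}R_4\int(\deltaL)^2\rd\mu_t\le-\gamma^{-1}R_4\,\frac{\rd}{\rd t}\LL(\mu_t)$, i.e. the stated $-R_4^{-1}\gamma\delta^2$. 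Your version bounds $\rd\bR\sharp\mu^\circ/\rd\mu_t\le R_4/\gamma$ in $L^\infty$ first and applies Cauchy--Schwarz afterwards, which squares the factor and yields only $-R_4^{-2}\gamma^2\delta^2$, a strictly weaker bound since $\gamma\le 1\le R_4$. Two small additional remarks: the Euler identity $\deltaL=\ba^\top\nabla_{\ba}\deltaL$ holds only for the specific representative of the functional derivative given by \eqref{eqn:deltaL} (which satisfies $\int\deltaL(\mu_t)\rd\mu_t=0$), so you should fix that normalization; and the discrete birth-death update does not merely "help" --- the paper checks it leaves $\LL$ unchanged to first order, using $\int\deltaL\,\rd\pi=0$ from the spherical symmetry of $\pi$ (the same symmetry, via the convex decomposition of $\bR$ into orthogonal matrices from Lemma \ref{thm:hull}, is what gives $\norm{\rd(\bR\sharp\mu^\circ)/\rd\pi}_\infty\le R_4$).
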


Since a steep gradient is guaranteed in the accelerated convergence phase by Proposition \ref{thm:accel}, we further establish the following convergence rate.

\begin{theorem}[accelerated convergence rate]\label{thm:accelrate}
Once $\LL(\mu_t)\leq0.49\underline{r}$ is satisfied, MFD with birth-death will converge in loss with $\LL(\mu_{t+\tau})\leq \epsilon$ in at most $\tau=O(\frac{k^2}{\gamma\epsilon})$ time.
\end{theorem}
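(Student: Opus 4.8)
The plan is to combine the accelerated first-order rate from Proposition \ref{thm:mfdfirst} with the quantitative gradient lower bound of Proposition \ref{thm:accel}, tracking the loss $\LL(\mu_t)$ as a scalar ODE inside the band $(0,\frac{\underline{r}}{2})$. First I would note that by Theorem \ref{thm:landscape}\ref{item:secondorder} (and the discussion of critical points not existing in the band $0<\LL<\frac{\underline r}{2}$), once $\LL(\mu_t)\leq 0.49\underline{r}<\frac{\underline r}{2}$, the flow stays in this region for all subsequent time: the loss is monotone nonincreasing along MFD (the birth-death update also only decreases $\LL$, since it is triggered only when $\LL$ does not sufficiently decrease, or we absorb its effect into the rate), so $\LL(\mu_{t'})\leq 0.49\underline r$ for all $t'\geq t$. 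Hence for all later times we may invoke Proposition \ref{thm:accel} with the admissible choice $\delta = \delta(\LL(\mu_{t'}))$ obtained by inverting the stated band condition: solving $\frac{\underline r - \sqrt{\underline r^2 - 4R_1^2\delta}}{4}\leq \LL \leq \frac{\underline r + \sqrt{\underline r^2 - 4R_1^2\delta}}{4}$ for $\delta$ gives $\delta = \frac{4}{R_1^2}\,\LL(\mu_{t'})\big(\tfrac{\underline r}{2} - \LL(\mu_{t'})\big)$, which is exactly the quadratic $\LL(\tfrac{\underline r}{2}-\LL)$ shape mentioned after the proposition.

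Next I would feed this $\delta$ into Proposition \ref{thm:mfdfirst} to get the differential inequality
\begin{equation*}
\frac{\rd}{\rd t}\LL(\mu_t) \leq -R_4^{-1}\gamma\,\delta^2 = -\frac{16\gamma}{R_4 R_1^4}\,\LL(\mu_t)^2\Big(\frac{\underline r}{2}-\LL(\mu_t)\Big)^2.
\end{equation*}
Writing $L(t):=\LL(\mu_t)$, I would then integrate this scalar ODE from $t$ to $t+\tau$. To avoid the $(\frac{\underline r}{2}-L)^2$ factor degenerating, I would split the interval at the time $L$ first drops below $\frac{\underline r}{4}$: while $L\in[\frac{\underline r}{4}, 0.49\underline r]$ one has $L^2\geq (\underline r/4)^2$ so $\dot L \leq -c\gamma\underline r^2(\frac{\underline r}{2}-L)^2$ for an absolute constant $c$, a Riccati-type inequality in the variable $\frac{\underline r}{2}-L$ whose solution crosses $\frac{\underline r}{4}$ in time $O(\frac{1}{\gamma\underline r^2\cdot\underline r^2}\cdot\frac{1}{\underline r}) $ — more carefully, integrating $\rd u / u^2 \geq c\gamma\underline r^2\,\rd t$ with $u=\frac{\underline r}{2}-L$ bounded below by $\frac{\underline r}{4}$ throughout this phase gives an $O(\frac{1}{\gamma\underline r^3})$ bound; and while $L\in(0,\frac{\underline r}{4}]$ the factor $(\frac{\underline r}{2}-L)^2\geq (\underline r/4)^2$ so $\dot L \leq -c'\gamma\underline r^2 L^2$, and integrating $\rd L/L^2 \geq c'\gamma\underline r^2\,\rd t$ yields $L(t+\tau)\leq \frac{1}{c'\gamma\underline r^2\tau}$, i.e. $L\leq\epsilon$ after $\tau = O(\frac{1}{\gamma\underline r^2\epsilon})$. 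Finally, substituting the scaling $\underline r=\Theta(\frac1k)$ stated after Assumption \ref{ass:features} converts $\frac{1}{\underline r^2}$ into $k^2$ (and the first-phase $\frac{1}{\underline r^3}=k^3$ term is dominated by $\frac{k^2}{\gamma\epsilon}$ for $\epsilon\leq\underline r=\Theta(1/k)$, which we may assume WLOG since otherwise there is nothing to prove), giving the claimed $\tau = O(\frac{k^2}{\gamma\epsilon})$.

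The main obstacle I anticipate is making the two-timescale / birth-death bookkeeping rigorous enough that the clean ODE $\dot L \leq -R_4^{-1}\gamma\delta^2$ actually holds at \emph{every} time rather than just when the birth-death trigger fires: one must argue that between birth-death events the density lower bound $\inf_\Theta\frac{\rd\mu_t}{\rd\pi}\geq\gamma$ needed for Proposition \ref{thm:mfdfirst} is maintained (or that the discrete resets happen often enough), and that the discrete $\mu_t\gets(1-\gamma)\mu_t+\gamma\pi$ updates do not increase $\LL$ by more than a lower-order amount — ideally by showing the combination of forcing and the direction-of-improvement argument still nets a decrease of order $\gamma\delta^2$ per unit time. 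A secondary technical point is justifying that $\bW$ has effectively equilibrated to $\bW_\mu$ (Lemma \ref{thm:wconv}) so that working with $\LL$ rather than $\LL_\textup{TF}$ is legitimate along the perturbed trajectory; since we remain in $\PP_2^+(\Theta)$ with $\bSig_{\mu,\mu}$ bounded below (a consequence of the band condition and the forcing toward $\pi$), this should follow from the linear convergence rate in Lemma \ref{thm:wconv} being fast on the slow timescale. Once these are in place, the remaining integration is the routine Riccati/quadratic ODE estimate sketched above.
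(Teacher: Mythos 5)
Your proposal is correct and follows essentially the same route as the paper: invert Proposition \ref{thm:accel} to get $\delta = \frac{4}{R_1^2}\LL(\frac{\underline r}{2}-\LL)$, feed it into Proposition \ref{thm:mfdfirst} to obtain $\dot{\LL}\leq -\frac{16\gamma}{R_1^4R_4}\LL^2(\frac{\underline r}{2}-\LL)^2$, split the band at $\frac{\underline r}{4}$, integrate the two Riccati inequalities to get $O(\frac{1}{\gamma\underline r^3})+O(\frac{1}{\gamma\underline r^2\epsilon})$, and substitute $\underline r=\Theta(1/k)$ with $\epsilon=O(\underline r)$ so the second term dominates. (One small slip: in the first phase $u=\frac{\underline r}{2}-L$ is bounded \emph{below} by $0.01\underline r$, not $\frac{\underline r}{4}$, but this still yields the same $O(\frac{1}{\gamma\underline r^3})$ bound; the bookkeeping concerns you raise about birth-death are handled in the paper's proof of Proposition \ref{thm:mfdfirst}, which shows the discrete updates leave $\LL$ unchanged.)
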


The rate is quadratic in the \emph{feature} dimension $k$ and independent of $d = \dim\XX$. Hereafter, big $O$ notation hides at most polynomial dependency on constants $R_j,M_j$, while dependency on $k,d,\gamma$ is made explicit and $\underline{r}, \overline{r}=\Theta(\frac{1}{k})$.

\subsection{Second-order Improvement}

We now arrive at the main difficulty of our analysis: the behavior of mean-field dynamics near critical points. In the finite-dimensional case, local stability is determined by the Hessian matrix. We show that the mean-field analogue is


\begin{lemma}\label{thm:evo}
For a smooth functional $F:\PP_2(\Theta)\to\RR$, the velocity field $\nabla\deltaF$ of \eqref{eqn:mfd} satisfies the evolution equation
\begin{equation}\label{eqn:evo}
\partial_t\left[\nabla\deltaF(\mu_t)\right] = -\HH_{\mu_t} \left[\nabla\deltaF(\mu_t)\right].
\end{equation}
\end{lemma}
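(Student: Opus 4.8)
The plan is a direct chain-rule computation in the linear functional calculus, combined with one integration by parts against the continuity equation \eqref{eqn:mfd}. Write $\Phi_t(\theta):=\deltaF(\mu_t,\theta)$, so that the velocity field of interest is $\nabla\deltaF(\mu_t)=\nabla_\theta\Phi_t$. For each fixed $\theta$, the map $\mu\mapsto\deltaF(\mu,\theta)$ is itself a $C^1$ functional on $\PP_2(\Theta)$ whose first variation is, by the very definition of the second variation of $F$, the kernel $\ddeltaF(\mu,\theta,\cdot)$. Differentiating $\Phi_t$ in time along the curve $(\mu_t)$ via the fundamental identity for functional derivatives therefore gives
\begin{equation*}
\partial_t\Phi_t(\theta)=\int_\Theta\ddeltaF(\mu_t,\theta,\theta')\,\partial_t\mu_t(\rd\theta').
\end{equation*}
Any additive-in-$\theta'$ ambiguity in $\ddeltaF$ is harmless here since $\partial_t\mu_t$ has zero total mass.

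Next I would substitute $\partial_t\mu_t=\nabla\cdot(\mu_t\nabla\deltaF(\mu_t))$ and integrate by parts in $\theta'$. The boundary term vanishes because $\mu_t\in\PP_2(\Theta)$ carries the no-flux condition on $\partial\Theta$ (or decays at infinity in the unbounded directions of $\Theta$) while $\nabla_{\theta'}\ddeltaF(\mu_t,\theta,\cdot)$ is bounded under the smoothness hypothesis on $F$, leaving
\begin{equation*}
\partial_t\Phi_t(\theta)=-\int_\Theta\nabla_{\theta'}\ddeltaF(\mu_t,\theta,\theta')\cdot\nabla_{\theta'}\deltaF(\mu_t,\theta')\,\mu_t(\rd\theta').
\end{equation*}
Applying $\nabla_\theta$ to both sides and interchanging it with the integral (justified by dominated convergence using boundedness of $\nabla_\theta\nabla_{\theta'}\ddeltaF$), I recognize $\nabla_\theta\nabla_{\theta'}\ddeltaF(\mu_t,\theta,\theta')=\bH_{\mu_t}(\theta,\theta')$ and $\nabla_{\theta'}\deltaF(\mu_t,\theta')=[\nabla\deltaF(\mu_t)](\theta')$, so the right-hand side is exactly $-\HH_{\mu_t}[\nabla\deltaF(\mu_t)](\theta)$ by the definition \eqref{eqn:hh} of the operator $\HH_{\mu_t}$, which is the claimed identity \eqref{eqn:evo}.

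The computation is short, so the real work is purely in the regularity bookkeeping: justifying differentiation under the integral sign in both $t$ and $\theta$, and the vanishing of the boundary term in the integration by parts. Under the standing smoothness and boundedness assumptions on $F$ — in particular on $\nabla\deltaF$ and the relevant mixed second and third variations — together with the fact that absolutely continuous curves in $\PP_2(\Theta)$ have $L^2(\mu_t)$ velocity fields satisfying the continuity equation in the weak (no-flux) sense, these steps go through, and I would record the details in the appendix. As a consistency check, linearizing \eqref{eqn:evo} about a critical point $\mu^\dagger$ via the pushforward ansatz $\mu_t=(\id_\Theta+\epsilon\bv_t)\sharp\mu^\dagger$ and keeping leading order reproduces Lemma \ref{thm:localevo}, in line with the claim that $\bH_\mu$ is the quantity governing second-order behavior of Wasserstein gradient flow.
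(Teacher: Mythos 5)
Your proposal is correct and follows essentially the same route as the paper's proof: differentiate the functional derivative along the curve to get $\int\ddeltaF(\mu_t,\theta,\theta')\,\partial_t\mu_t(\rd\theta')$, substitute the continuity equation and integrate by parts in $\theta'$, then apply $\nabla_\theta$ and identify the kernel $\bH_{\mu_t}$. The extra regularity bookkeeping and the consistency check against Lemma \ref{thm:localevo} are sensible additions but do not change the argument.
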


This is the non-perturbative or tangent curve version of Lemma \ref{thm:localevo}. For the specific objective $\LL$, we show that $\HH_\mu$ is Hilbert-Schmidt and derive regularity properties in Lemma \ref{thm:kernelvalid} and \ref{thm:kernelreg}. Next, the following lemma translates second-order instability into a spectral bound for $\HH_\mu$.

\begin{proposition}\label{thm:banana}
Suppose MFD with birth-death at time $t$ satisfies Theorem \ref{thm:landscape}\ref{item:secondorder} with $\frac{\rd^2}{\rd s^2}\big|_{s=0}\LL(\bar{\mu}_s) \leq -\Lambda$. Then the smallest eigenvalue $\lambda_0$ of $\HH_{\mu_t}$ satisfies $\lambda_0\leq -R_4^{-1}\gamma\Lambda$.
\end{proposition}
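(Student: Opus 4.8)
The plan is to connect the second-order directional derivative of $\LL$ along the homotopy $\bar\mu_s = (1-s)\mu + s\bR\sharp\mu^\circ$ to the quadratic form associated with the kernel $\bH_\mu$, and then to exploit the birth-death lower bound on the density ratio to extract a negative eigenvalue of $\HH_\mu$. First I would express $\frac{\rd^2}{\rd s^2}\big|_{s=0}\LL(\bar\mu_s)$ in terms of the first and second linear functional derivatives $\deltaL$ and $\ddeltaL$. Writing the signed perturbation $\chi := \bR\sharp\mu^\circ - \mu$ (a difference of probability measures, so $\int\chi = 0$), the chain rule for flat derivatives gives
\begin{equation*}
\frac{\rd^2}{\rd s^2}\Big|_{s=0}\LL(\bar\mu_s) = \iint \ddeltaL(\mu,\theta,\theta')\,\chi(\rd\theta)\chi(\rd\theta').
\end{equation*}
So second-order instability along the homotopy says precisely that the bilinear form of the Hessian operator $\frac{\delta^2\LL}{\delta\mu^2}$ is $\leq -\Lambda$ when tested against $\chi\otimes\chi$.

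The next step is to pass from this "flat" Hessian form to the Otto Hessian, i.e. to the operator $\HH_\mu$ whose kernel is $\bH_\mu(\theta,\theta') = \nabla_\theta\nabla_{\theta'}\ddeltaL(\mu,\theta,\theta')$. The key observation is that $\chi$ is a difference of two probability measures both of which have controlled density with respect to $\pi$: by Assumption \ref{ass:features}/\ref{ass:pi} we have $\norm{\frac{\rd\mu^\circ}{\rd\pi}}_\infty\leq R_4$, and since $\bR\sharp$ acts only on the $\ba$-coordinate while $\pi$ is spherically symmetric in $\ba$ (Assumption \ref{ass:pi}), $\bR\sharp\mu^\circ$ also has density $\leq R_4$ with respect to $\pi$; meanwhile the birth-death mechanism maintains $\inf_\Theta\frac{\rd\mu_t}{\rd\pi}\geq\gamma$. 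I would therefore construct a transport-type test field: intuitively, the perturbation $\chi$ can be realized infinitesimally as $\nabla\cdot(\mu\, f)$ for a gradient field $f\in\Tan_\mu\PP_2(\Theta)$ solving the elliptic equation $-\nabla\cdot(\mu\nabla\varphi) = \chi$, and the flat Hessian form against $\chi\otimes\chi$ equals (up to the first-derivative curvature correction that vanishes at a critical point, since $\nabla\deltaL(\mu)=0$ there) the Otto quadratic form $\langle f, \HH_\mu f\rangle_{L^2(\mu)}$. Because $\frac{\rd\mu}{\rd\pi}\geq\gamma$ and $\frac{\rd(\bR\sharp\mu^\circ)}{\rd\pi}\leq R_4$, the field $f$ can be chosen bounded in $L^2(\mu)$ with $\norm{f}_{L^2(\mu)}^2 \leq R_4\gamma^{-1}$ (this is where the factor $R_4^{-1}\gamma$ enters: a mass of size $\geq\gamma$ present everywhere lets us "move" a probability-measure-sized perturbation with $O(\gamma^{-1})$ transport cost, and the $R_4$ bounds the size of the target mass). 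Normalizing, $\hat f = f/\norm{f}_{L^2(\mu)}$ is a unit test function with
\begin{equation*}
\langle \hat f, \HH_\mu\hat f\rangle_{L^2(\mu)} = \frac{\langle f,\HH_\mu f\rangle_{L^2(\mu)}}{\norm{f}_{L^2(\mu)}^2} \leq \frac{-\Lambda}{R_4\gamma^{-1}} = -R_4^{-1}\gamma\Lambda,
\end{equation*}
and since $\lambda_0 = \lambda_{\min}(\HH_\mu) = \inf_{\norm{g}_{L^2(\mu)}=1}\langle g,\HH_\mu g\rangle$ by the Rayleigh characterization (valid by Lemma \ref{thm:generalvalid}, as $\HH_\mu$ is compact self-adjoint for $\LL$), this gives $\lambda_0\leq -R_4^{-1}\gamma\Lambda$ as claimed.

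The main obstacle I expect is making the identification between the flat Hessian form $\iint\ddeltaL\,\chi\otimes\chi$ and the Otto quadratic form $\langle f,\HH_\mu f\rangle_{L^2(\mu)}$ fully rigorous: the two differ in general by a term involving $\nabla\deltaL(\mu)$ contracted against the acceleration of the transport interpolation, and one must argue this correction term vanishes (or is negligible) precisely because we are at a critical point $\mu$ with $\nabla\deltaL(\mu)=0$, or more carefully handle the case where Proposition \ref{thm:banana} is invoked near but not exactly at a critical point. A secondary technical point is solving $-\nabla\cdot(\mu\nabla\varphi)=\chi$ and controlling $\int\norm{\nabla\varphi}^2\rd\mu$ in terms of the density ratios — this is a weighted Poincaré-type estimate, and the spherical symmetry of $\pi$ in $\ba$ together with the structure $h_\theta = \ba\sigma(\bw^\top\bx)$ is what guarantees $\bR\sharp\mu^\circ$ remains $\pi$-bounded so that the construction closes. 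I would handle the regularity of $\bH_\mu$ (Hilbert–Schmidt-ness, needed for the spectral decomposition) by invoking Lemma \ref{thm:kernelvalid} and Lemma \ref{thm:kernelreg} as stated, and reduce the rest to the elementary Rayleigh-quotient inequality above.
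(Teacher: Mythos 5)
Your first step (writing the second directional derivative as $\iint \ddeltaL(\mu_t,\theta,\theta')\,\chi(\rd\theta)\chi(\rd\theta')$ with $\chi=\bR\sharp\mu^\circ-\mu_t$) and your identification of where the factor $R_4^{-1}\gamma$ comes from (birth-death keeps $\frac{\rd\mu_t}{\rd\pi}\geq\gamma$ while $\bR\sharp\mu^\circ$ has $\pi$-density at most $R_4$ by spherical symmetry, giving a $\chi^2$ bound of $\gamma^{-1}R_4$) both match the paper. However, the middle of your argument has a genuine gap. You propose to realize $\chi$ as $-\nabla\cdot(\mu_t\nabla\varphi)$ and claim $\norm{\nabla\varphi}_{L^2(\mu_t)}^2\leq R_4\gamma^{-1}$. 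This is a bound on the weighted $H^{-1}(\mu_t)$ norm of $\chi$, which is \emph{not} controlled by the $\chi^2$ divergence alone: relating the two requires a Poincar\'e inequality for $\mu_t$ on $\Theta=\mathbb{D}^k\times\RR^d$, which is neither assumed nor available (the domain is noncompact in $\bw$ and $\mu_t$ need not satisfy any spectral gap). Moreover, the correction term you flag is handled by appealing to $\nabla\deltaL(\mu_t)=0$, but the proposition is invoked along the dynamics at times $t$ where $\mu_t$ is merely \emph{near} a saddle (cf.\ Theorem \ref{thm:naiveescape}), so exact criticality cannot be used.

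The paper sidesteps the elliptic equation entirely by exploiting the structure that you mention only in passing: since $\LL$ depends on $\mu$ only through $h_\mu$ and $h_\theta=\ba\sigma(\bw^\top\bx)$ is linear in $\ba$, the second variation is exactly bilinear in the second-layer variables, i.e.
\begin{equation*}
\ddeltaL(\mu,\theta,\theta') = (\ba\;\;0_d)^\top\,\bH_\mu(\theta,\theta')\,(\ba'\;\;0_d).
\end{equation*}
One then tests the spectral decomposition of $\HH_{\mu_t}$ directly against the (non-gradient) $L^2(\mu_t;\RR^{k+d})$ function $g(\theta)=\bigl(\frac{\rd\bar\mu}{\rd\mu_t}-1\bigr)(\ba\;\;0_d)$; Parseval gives $-\Lambda\geq -|\lambda_0|\int\bigl(\frac{\rd\bar\mu}{\rd\mu_t}-1\bigr)^2\norm{\ba}^2\rd\mu_t\geq -|\lambda_0|\,\gamma^{-1}R_4$ using $\norm{\ba}\leq 1$ on $\Theta$. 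No transport map, no Poincar\'e inequality, and no criticality assumption are needed. To repair your proof you should replace the elliptic-equation step with this exact bilinearity identity; the rest of your Rayleigh-quotient reasoning then goes through.
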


Therefore we expect that even if the dynamics is close to a saddle point and Proposition \ref{thm:mfdfirst} is not useful, as long as the $L^2$-component along the eigenfunction $\psi_0$ corresponding to $\lambda_0$ is not exactly zero, it will blow up exponentially in time until $\mu_t$ escapes and makes progress. In detail,

\begin{theorem}\label{thm:naiveescape}
Suppose MFD with birth-death on $\LL$ satisfies Theorem \ref{thm:landscape}\ref{item:firstorder} with $\frac{\rd}{\rd s}\big|_{s=0}\LL(\bar{\mu}_s) \geq - O(k^{-1}\LL(\mu_t)^2)$ at time $t$. Further suppose $\psi_0$ satisfies
\begin{equation*} \bigg|\int\psi_0^\top\nabla\deltaL(\mu_t)\rd\mu_t\bigg| \geq \alpha
\end{equation*}
for some $\alpha> 0$. Then for the time interval
\begin{equation*}
\tau = \widetilde{O}\left(\frac{k}{\gamma\LL(\mu_t)^2}\log\frac{1}{\alpha}\right),
\end{equation*}
MFD in the region $\{\mu\in\PP_2(\Theta):\lambda_\textup{min}(\bSig_{\mu,\mu})=\Omega(\frac{1}{k})\}$ decreases $\LL$ as
\begin{equation*}
\LL(\mu_{t+\tau})\leq \LL(\mu_t) - \widetilde{\Omega}\left(\frac{\gamma^2\alpha\LL(\mu_t)^4}{k^5d}\right).
\end{equation*}
\end{theorem}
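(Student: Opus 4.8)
The plan is to combine the second-order instability from Proposition~\ref{thm:banana} with the linear tangent evolution of Lemma~\ref{thm:evo} to show that the component of $\nabla\deltaL(\mu_t)$ along the unstable eigenfunction $\psi_0$ grows exponentially, and then convert this growth into a lower bound on the dissipation rate $\frac{\rd}{\rd t}\LL(\mu_t) = -\int\norm{\nabla\deltaL(\mu_t)}^2\rd\mu_t$. First I would invoke Proposition~\ref{thm:banana}: since we are in the regime where the first-order rate is nearly flat, Theorem~\ref{thm:landscape}\ref{item:secondorder} gives $\frac{\rd^2}{\rd s^2}|_{s=0}\LL(\bar\mu_s)\leq -\frac{4}{kR_1^2}\LL(\mu_t)^2$, hence $\lambda_0 = \lambda_\textup{min}(\HH_{\mu_t}) \leq -\frac{4\gamma}{R_4 kR_1^2}\LL(\mu_t)^2 =: -\Lambda_0$ with $\Lambda_0 = \Omega(\gamma\LL(\mu_t)^2/k)$. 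Writing $c_t := \int\psi_0^\top\nabla\deltaL(\mu_t)\rd\mu_t$ for the (time-varying) projection onto the eigenfunction, the evolution equation~\eqref{eqn:evo} together with the spectral bound gives $\frac{\rd}{\rd t}c_t \geq \Lambda_0 c_t - (\text{error terms})$, since $\HH_{\mu_t}\psi_0 = \lambda_0\psi_0$ and self-adjointness lets me isolate the $\psi_0$-mode. The error terms come from (a) the drift of the eigenfunction $\psi_0$ and eigenvalue $\lambda_0$ as $\mu_t$ moves, (b) the birth-death forcing, and (c) higher-order remainders; I would control these using the regularity bounds on $\HH_\mu$ from Lemmas~\ref{thm:kernelvalid}--\ref{thm:kernelreg} and the Lipschitz dependence of the kernel, which is where the extra powers of $k$ and $d$ in the final rate enter.

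Next I would run a Gr\"onwall-type argument on $|c_t|$: starting from $|c_t|\geq\alpha$, as long as $|c_s|$ stays below some $O(1)$ ceiling (beyond which $\mu_s$ has clearly escaped the saddle), we get $|c_s| \geq \alpha\exp(\tfrac12\Lambda_0 (s-t))$ up to the error terms, which remain negligible on the relevant timescale provided the exponential term dominates. Solving $\alpha\exp(\tfrac12\Lambda_0\tau) = \Theta(1)$ gives the escape time $\tau = O(\Lambda_0^{-1}\log\tfrac1\alpha) = \widetilde O(k/(\gamma\LL(\mu_t)^2)\log\tfrac1\alpha)$, matching the claimed interval. Over this window, I would integrate the dissipation: $\LL(\mu_{t+\tau}) - \LL(\mu_t) = -\int_t^{t+\tau}\int\norm{\nabla\deltaL(\mu_s)}^2\rd\mu_s\,\rd s \leq -\int_t^{t+\tau} c_s^2\,\rd s$ by Cauchy--Schwarz (the $\psi_0$-component of the squared $L^2$-norm is at least $c_s^2$), and then lower-bound $\int_t^{t+\tau}c_s^2\rd s$. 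The subtlety is that $c_s$ is small at the start; but since $c_s$ grows geometrically, the integral is dominated by its final portion where $c_s = \Theta(1)$, and the number of time units over which $c_s$ stays within a constant factor of its ceiling is $\Theta(\Lambda_0^{-1})$, giving $\int c_s^2 \gtrsim \Lambda_0^{-1}\cdot(\text{ceiling})^2$. However, I must also account for $\LL(\mu_s)$ itself changing over the window, and the restriction to the region $\{\lambda_\textup{min}(\bSig_{\mu,\mu}) = \Omega(1/k)\}$ is what keeps $\bSig_{\mu,\mu}^{-1}$ and hence all the constants in $\nabla\deltaL$ and $\HH_\mu$ bounded throughout; I would use a continuity/bootstrap argument to ensure $\LL(\mu_s)$ does not drop below a constant fraction of $\LL(\mu_t)$ before the escape (it only decreases, so this is one-sided and easy) so that $\Lambda_0$ and the final bound can be stated in terms of $\LL(\mu_t)$.

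The main obstacle I anticipate is rigorously controlling the error terms in the tangent ODE for $c_t$ --- specifically, that the eigenpair $(\lambda_0,\psi_0)$ of $\HH_{\mu_s}$ drifts as $s$ varies, so $c_s = \langle\psi_0(\mu_t),\nabla\deltaL(\mu_s)\rangle_{L^2(\mu_s)}$ is not cleanly an eigenmode at later times, and the inner product is taken with respect to the moving measure $\mu_s$. Handling this requires either (i) freezing $\psi_0 = \psi_0(\mu_t)$ and bounding $\langle\psi_0, \HH_{\mu_s}\nabla\deltaL(\mu_s)\rangle_{L^2(\mu_s)} - \lambda_0 c_s$ by the operator-norm difference $\norm{\HH_{\mu_s}-\HH_{\mu_t}}$ times $\norm{\nabla\deltaL(\mu_s)}$, controlled by the $\WW_2$-Lipschitzness of $\mu\mapsto\HH_\mu$ and by how far $\mu_s$ has moved (itself bounded by $\int\norm{\nabla\deltaL}\rd s$, closing a loop), or (ii) a perturbative spectral-gap argument. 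Either way the accumulated error must be shown to be $o(\Lambda_0 |c_s|)$ on the timescale $\tau$, which is what forces the $\widetilde\Omega$ and the somewhat lossy $k^5 d$ denominator in the final improvement --- the $d$ entering through the dimension of the velocity field and the $k$-powers through the conditioning of $\bSig_{\mu,\mu}$, $\bSig_{\mu^\circ,\mu}$ and the norm bounds on the kernel. The birth-death forcing contributes an additional $O(\gamma)$ perturbation to the tangent evolution which is harmless since it is lower-order compared to $\Lambda_0$ but must be tracked; I would defer the detailed constant-chasing to the appendix and present the Gr\"onwall-plus-dissipation skeleton in the main argument.
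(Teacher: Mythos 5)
Your overall architecture (spectral bound via Proposition \ref{thm:banana}, linearized evolution of the $\psi_0$-component via Lemma \ref{thm:evo}, Gr\"onwall growth, then integrate the dissipation) matches the first half of the paper's argument, but there is a genuine gap at the step you flag as the "main obstacle" and then wave away: the error term in the tangent ODE is of order $C_6\,\WW_1(\mu_s,\mu^\dagger)$ (kernel and measure drift), and this is \emph{not} negligible "provided the exponential term dominates" --- it grows precisely as the flow moves, and the linearization is only valid in a small Wasserstein ball whose radius must itself be taken proportional to $|\lambda_0|\alpha/C_6$, which is tiny. Consequently you cannot run Gr\"onwall until $|c_s|$ reaches an $O(1)$ ceiling. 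The paper resolves this with a dichotomy that your proposal is missing: fix the ball $\BB_{\WW}(\Delta)$ with $\Delta=|\lambda_0|\alpha/(2C_6)$; \emph{either} the flow stays inside for all of $[0,\tau]$, in which case the error is dominated by $\tfrac12|\lambda_0|\alpha_0(t)$ and the exponential growth of $\alpha_0$ yields the decrease, \emph{or} the flow exits at some $\tau_e\leq\tau$, in which case the Benamou--Brenier formula (Proposition \ref{thm:benamou}) converts the displacement $\WW_2(\mu_{\tau_e},\mu^\dagger)>\Delta$ directly into a loss decrease of at least $\Delta^2/\tau$. The time $\tau\asymp|\lambda_0|^{-1}\log(C_7/\alpha)$ is then chosen to balance the two branches. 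Without the escape branch your argument has no way to handle trajectories that leave the linearization regime early, which is the generic case.

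A second symptom that the ceiling argument cannot work as written: your claimed lower bound $\int c_s^2\,\rd s\gtrsim\Lambda_0^{-1}\cdot(\text{ceiling})^2$ with a $\Theta(1)$ ceiling would give a loss decrease of order $k/(\gamma\LL(\mu_t)^2)$, which exceeds the total available loss $\LL(\mu_t)\leq k\overline r/2$ --- the correct guaranteed decrease is only $\widetilde\Omega(|\lambda_0|\alpha/(C_6\tau))$, i.e.\ linear in $\alpha$, exactly because the ball radius (and hence both branches of the dichotomy) scales with $\alpha$. Two smaller omissions: (i) Theorem \ref{thm:landscape}\ref{item:secondorder} as stated requires the first derivative to vanish exactly, so you need the robust version the paper derives from \eqref{eqn:firstderiv}--\eqref{eqn:secondderiv}, showing that $\frac{\rd}{\rd s}\big|_{s=0}\LL(\bar\mu_s)\geq-\delta$ with $\delta=O(k^{-1}\LL^2)$ still forces $\frac{\rd^2}{\rd s^2}\big|_{s=0}\LL(\bar\mu_s)\leq-\frac{2}{kR_1^2}\LL(\mu_t)^2$; and (ii) the projection should be taken against the \emph{frozen} reference measure $\mu^\dagger$ (with $\psi_0$ normalized in $L^2(\mu^\dagger)$), with the mismatch to $\int\norm{\nabla\frac{\delta\LL}{\delta\mu}}^2\rd\mu_s$ absorbed into a $C_1C_2\,\WW_1$ correction, rather than tracking an inner product over the moving measure.
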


Simply put, we make $\widetilde{\Omega}(\alpha)$ progress in $\widetilde{O}(\log\frac{1}{\alpha})$ time. The proof idea is to find a $\WW_2$-ball where if $\mu_t$ does not escape in time $\tau$, the exponential blowup guarantees improvement of $\LL$; if $\mu_t$ does escape, $\LL$ must have decreased enough to warrant such displacement (via the Benamou-Brenier formula). Again, we present general versions of Proposition \ref{thm:banana} and Theorem \ref{thm:naiveescape} as Proposition \ref{thm:operator} and Theorem \ref{thm:generalescape}.

\paragraph{Dimensional dependency.} The rate is polynomial in the number of features $k$ but only linear in $d$, mitigating the curse of dimensionality. Initially $\LL$ decreases by $\widetilde{\Omega}(k^{-5}d^{-1})$ in time $\widetilde{O}(k)$ when $\LL=\Theta(1)$. As training progresses, the rate worsens to $\widetilde{\Omega}(k^{-9}d^{-1})$ in time $\widetilde{O}(k^3)$ when $\LL=\Theta(\frac{1}{k})$ due to the smaller curvature of $\LL$, until we enter the accelerated convergence phase and Theorem \ref{thm:accelrate} takes over.

\begin{remark}
Since $\LL$ becomes ill-conditioned if $h_\mu(\bx)$ is nearly constrained on a subspace, we have assumed that $\lambda_\textup{min}(\bSig_{\mu,\mu})$ is locally bounded below (on the same order as the upper bound $\frac{R_1^2}{k}$) to obtain regularity estimates. We expect this to not be a problem in practice since $\bW$ will not diverge without timescale separation. In our experiments, $\lambda_\textup{min}(\bSig_{\mu,\mu})$ never changed by over 25\% during each training phase.
\end{remark}

\subsection{Escaping from Saddle Points Efficiently}

Theorem \ref{thm:naiveescape} on its own cannot ensure convergence rates. The flow might be initialized at or pass near multiple saddle points with very small $\alpha$ values, taking longer to escape. This is an unavoidable problem of nonconvex gradient descent even in finite dimensions \citep{Du17}. In contrast, it has been shown that simply adding uniform noise allows GD to escape saddle points efficiently \citep{Ge15, Jin17}. Here, we suggest without proof an adaptation to the Wasserstein gradient flow.

The main problem is how to apply `random' perturbations in the infinite-dimensional space $\PP_2(\Omega)$. Motivated by the characterization of the tangent space \eqref{eqn:tangentspace}, we propose the following scheme which constructs perturbations in the velocity space using vector-valued Gaussian processes. See Definition \ref{def:gp} and Algorithm \ref{algo} for details.
\begin{enumerate}[nosep]
\item Generate a random velocity field $\bxi:\Omega\to\RR^m$ from a stationary Gaussian process $\GP(0,\bK)$ with bounded kernel $\bK: \Omega\times\Omega\to \RR^{m\times m}$.
\item Run the pushforward dynamics $\partial_t\mu_t = \nabla\cdot(\bxi\mu_t)$ from $\mu_0=\mu^\dagger$ for fixed time $\Delta t$.
\end{enumerate} 
This can bypass the dimensional dependency in \citet{Ge15} and ensure a nonzero $\psi_0$-component for $\nabla\deltaF(\mu_{\Delta t})$, which is approximately normally distributed with variance $O(\Delta t)$ (Lemma \ref{thm:gpnormal}). Unfortunately this naive approach is not enough to ensure large $\alpha$, at least in polynomial time, since the eigenfunction $\psi_0$ and base measure also change along the perturbation. \citet{Jin17} bypass this issue in finite dimensions via a geometric argument; we conjecture that our method also guarantees polynomial escape time. If this is true, we may combine Proposition \ref{thm:mfdfirst} with $\delta=O(k^{-1}\LL(\mu_t)^2)$, yielding $O(\frac{k^6}{\gamma^3 t^3})$ convergence away from saddle points, and Theorem \ref{thm:naiveescape} to conclude that \emph{perturbed} WGF enjoys polynomial convergence to global minima.


\begin{figure*}[t]
\centering
\includegraphics[width=0.95\textwidth]{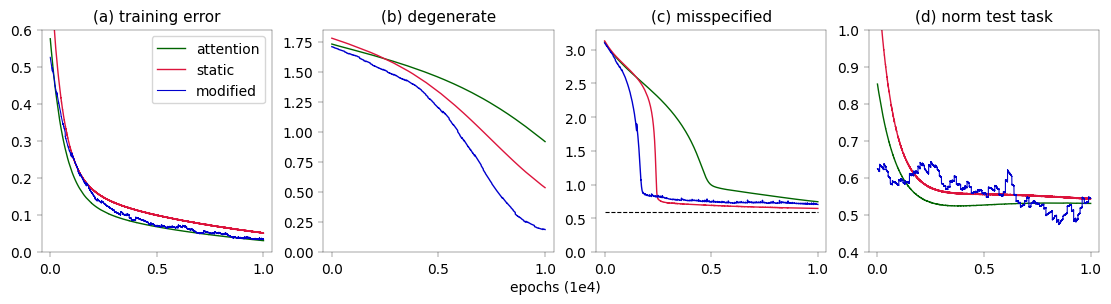}
\caption{(a) Training error of the attention, static and modified Transformers. (b) Learning degenerate features with $\rank\bSig_{\mu^\circ,\mu^\circ}<k$. (c) Training a misspecified model containing two extra features. (d) Test error for the nonlinear norm task $\norm{h_{\mu^\circ}(\bx)}$.}
\label{fig}
\end{figure*}

\section{Numerical Experiments}\label{sec:exps}

Complementing our theoretical analyses, we now explore some empirical aspects of in-context feature learning of a toy Transformer. We compare three models: the \emph{attention} Transformer is constructed as in Section \ref{sec:tfarchitecture} and jointly optimizes the loss $\LL_\textup{TF}(\mu,\bW)$, while the \emph{static} and \emph{modified} Transformers directly minimize $\LL(\mu)$ without passing through the LSA layer. All models are pretrained using SGD on 10K prompts each containing 1K token pairs. For the MLP we set $d=20$, $k=5$ with 500 sigmoid neurons and $\DD_{\XX}\sim \mathcal{N}(0,\bI_d)$. The modified model additionally implements the birth-death and perturbation dynamics of Section \ref{sec:iclrate} if $\LL$ has not decreased by 1\% every 100 epochs.

Figure \ref{fig}(a) shows that the attention and static Transformers exhibit similar dynamics and successfully converge to global optima, justifying the two-timescale approach. This is repeated for the more difficult problem of learning degenerate features ($\rank\bSig_{\mu^\circ,\mu^\circ}<k$) in Figure \ref{fig}(b), where the modified model shows significantly better performance, lending support to our perturbative scheme. Next, Figure \ref{fig}(c) plots the training curve for a misspecified model where the true features $h_{\mu^\circ}$ are 7-dimensional. While zero loss is not achievable due to the increased task complexity, all models still converge to a well-behaved minimum. Finally, we compute the test loss w.r.t. the nonlinear feature-based task $\bx\mapsto\norm{h_{\mu^\circ}(\bx)}$ in Figure \ref{fig}(d). Accuracy still sharply improves when the relevant features are learned, confirming that ICFL can generalize beyond linear regression even in one-layer Transformers and further demonstrating the importance of feature learning during pretraining.

\section{Conclusion}

In this paper, we explored the training dynamics of a Transformer with one MLP and one attention layer, enabling in-context feature learning of regression tasks on a rich class of representations. We showed that the loss landscape becomes benign in the two-timescale and mean-field limit and developed instability and improvement guarantees for the Wasserstein gradient flow. To our knowledge, this represents both the first work to theoretically study how features are learned in context, and the first analysis of nonconvex mean-field dynamics for strict saddle objectives. We hope our insights may be extended to more complex in-context learning behavior in deeper MLP-attention models.

\section*{Impact Statement}

This paper aims to deepen our perception of how in-context learning ability arises in Transformer architectures, which is intimately connected to ethical issues such as AI privacy, fairness and accountability. We hope that our study will lead to a better understanding of the reasoning capabilities of large language models and facilitate the development of more transparent and socially beneficial AI systems.

\section*{Acknowledgments}

JK was partially supported by JST CREST (JPMJCR2015). TS was partially supported by JSPS KAKENHI (20H00576) and JST CREST (JPMJCR2115).

\bibliography{icml2024.bib}
\bibliographystyle{icml2024.bst}

\clearpage
\renewcommand{\contentsname}{Table of Contents}
{
\hypersetup{linkcolor=black}
\tableofcontents
}

\addtocontents{toc}{\protect\newpage}
\newpage
\appendix
\onecolumn

\section{Preliminaries}\label{app:preliminaries}

We begin by providing some necessary background for mean-field dynamics. Let $\Omega\subseteq\RR^m$ be a Euclidean domain with smooth boundary $\partial\Omega$. For $p\geq 1$, let $\PP_p(\Omega)$ be the $p$-Wasserstein space of probability measures on $\Omega$ vanishing on $\partial\Omega$ with finite $p$th moment. We will mostly be concerned with the space $\PP_2(\Omega)$.

\begin{definition}[functional derivative]\label{def:funcderiv}
The functional derivative $\frac{\delta F}{\delta\mu}$ of a functional $F:\PP_p(\Omega)\to\RR$ is defined (if one exists) as a functional $\frac{\delta F}{\delta\mu}:\PP_p(\Omega)\times\Omega \to\RR$ satisfying for all $\nu\in\PP_p(\Omega)$,
\begin{equation*}
\frac{\rd}{\rd\epsilon}\bigg\vert_{\epsilon=0} F(\mu+\epsilon(\nu-\mu)) = \int_\Omega \frac{\delta F}{\delta\mu}(\mu,\theta) (\nu-\mu)(\rd\theta).
\end{equation*}
Note that the functional derivative is defined up to additive constants. We say a functional $F$ is $C^1$ if $\nabla\deltaF(\mu,\theta)$ is well-defined and continuous, and $C^2$ if $\nabla_\theta\nabla_{\theta'}\ddeltaF(\mu,\theta,\theta')$ is well-defined and continuous. Furthermore, the functional $F$ is convex if for all $\nu\in\PP_p(\Omega)$ it holds that
\begin{equation*}
F(\nu)\geq F(\mu)+ \int_\Omega \frac{\delta F}{\delta\mu}(\mu,\theta) (\nu-\mu)(\rd\theta).
\end{equation*}
\end{definition}

\begin{definition}[$p$-Wasserstein metric]
The $p$-Wasserstein distance between $\mu,\nu\in \PP_p(\Omega)$ is defined as
\begin{equation*}
\WW_p(\mu,\nu) = \left(\inf_{\gamma\in\Pi(\mu,\nu)} \int_{\RR^m} \norm{x-y}^p \rd \gamma(x,y)\right)^\frac{1}{p}
\end{equation*}
where $\Pi(\mu,\nu)$ denotes the set of joint distributions on $\Omega\times \Omega$ whose first and second factors have marginal laws $\mu$ and $\nu$, respectively.
\end{definition}
We consider $\PP_p(\Omega)$ as a metric space with respect to $\WW_p$, which metrizes weak convergence on $\PP_p(\Omega)$ \citep[Theorem 6.9]{Villani09}. By H\"{o}lder's inequality it always holds that $\PP_2(\Omega)\subset\PP_1(\Omega)$ and $\WW_1(\mu,\nu)\leq\WW_2(\mu,\nu)$. The $\WW_1$ metric is also characterized via Kantorovich-Rubinstein duality as
\begin{equation*}
\WW_1(\mu,\nu) = \sup_{\norm{f}_{\Lip}\leq 1} \int_\Omega f\rd\mu-\int_\Omega f\rd\nu,
\end{equation*}
where the supremum runs over all 1-Lipschitz functions $f:\Omega\to\RR$, which makes it well-suited for perturbation analyses.

We develop more advanced theory concerning the local metric geometry and characterization of flows on $\PP_2(\Omega)$ in Appendix \ref{app:measure}. As a consequence, one can show the following variational formulation of the $\WW_2$ metric:
\begin{proposition}[Benamou-Brenier formula]\label{thm:benamou}
For $\mu,\nu\in\PP_2(\Omega)$ it holds that
\begin{equation*}
\WW_2(\mu,\nu)^2 = \inf\left\{\int_0^1 \norm{\bv_t}_{L^2(\Omega,\mu_t;\RR^m)}^2 \rd t: \partial_t\mu_t +\nabla\cdot(\bv_t\mu_t) = 0, \; \mu_0=\mu,\,\mu_1=\nu\right\},
\end{equation*}
where the infimum runs over all unit time flows $(\mu_t)_{t\in [0,1]}$ from $\mu$ to $\nu$.
\end{proposition}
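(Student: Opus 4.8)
The plan is to establish the two inequalities separately, leaning on the correspondence between absolutely continuous curves in $(\PP_2(\Omega),\WW_2)$ and distributional solutions of the continuity equation developed in Appendix \ref{app:measure} (following \citet{Ambrosio05}), together with the existence of optimal transport plans.

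\emph{Lower bound.} Let $(\mu_t)_{t\in[0,1]}$ be any curve admitting a velocity field $(\bv_t)$ with $\partial_t\mu_t+\nabla\cdot(\bv_t\mu_t)=0$, $\mu_0=\mu$, $\mu_1=\nu$, and $\int_0^1\norm{\bv_t}_{L^2(\Omega,\mu_t;\RR^m)}^2\rd t<\infty$. The crucial estimate is that such a curve is absolutely continuous in $\WW_2$ with metric derivative bounded by $\abs{\mu_t'}\leq\norm{\bv_t}_{L^2(\Omega,\mu_t;\RR^m)}$ for a.e. $t$; this follows by transporting mass along the (regularized) flow of $\bv_t$ and estimating the resulting coupling. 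Since the $\WW_2$-distance between the endpoints is at most the length of any connecting curve, Cauchy--Schwarz gives
\begin{equation*}
\WW_2(\mu,\nu)^2 \leq \left(\int_0^1 \abs{\mu_t'}\,\rd t\right)^2 \leq \left(\int_0^1 \norm{\bv_t}_{L^2(\Omega,\mu_t;\RR^m)}\,\rd t\right)^2 \leq \int_0^1 \norm{\bv_t}_{L^2(\Omega,\mu_t;\RR^m)}^2\,\rd t,
\end{equation*}
so $\WW_2(\mu,\nu)^2$ is at most the claimed infimum.

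\emph{Upper bound.} Pick an optimal plan $\gamma\in\Pi(\mu,\nu)$ for the $\WW_2$ cost, which exists by tightness and lower semicontinuity of the transport cost. Define the displacement interpolation $\mu_t:=e_t\sharp\gamma$ with $e_t(x,y):=(1-t)x+ty$. One verifies that $(\mu_t)$ solves the continuity equation with a velocity field $\bv_t$ determined on $\mathrm{supp}\,\mu_t$ by $\bv_t(e_t(x,y))=y-x$; by $c$-cyclical monotonicity of $\gamma$ and the non-branching of $\WW_2$-geodesics, this field is well-defined $\mu_t$-a.e. and lies in $\Tan_{\mu_t}\PP_2(\Omega)$. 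Since $\norm{\bv_t}_{L^2(\Omega,\mu_t;\RR^m)}^2=\int\norm{x-y}^2\rd\gamma(x,y)=\WW_2(\mu,\nu)^2$ for every $t\in[0,1]$, integrating over $t$ shows the admissible curve $(\mu_t)$ has action exactly $\WW_2(\mu,\nu)^2$, hence the infimum is at most $\WW_2(\mu,\nu)^2$.

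I expect the main obstacle to be the upper bound: one must confirm that the displacement-interpolation velocity genuinely belongs to the tangent space $\overline{\{\nabla\psi:\psi\in C_c^\infty(\Omega)\}}^{L^2(\Omega,\mu_t)}$ rather than merely being an $L^2$ vector field, which rests on $c$-cyclical monotonicity of the optimal plan and the no-branching property of geodesics, and one must handle the boundary condition (measures vanishing on $\partial\Omega$) and the possible non-uniqueness of the fibers of $e_t$. This is precisely where the full machinery of \citet{Ambrosio05} — rather than a soft argument — becomes necessary.
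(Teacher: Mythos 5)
The paper does not actually prove this proposition; its ``proof'' is a pointer to \citet{Ambrosio05}, Chapter 8 and \citet{Santambrogio15}, Section 6.1. Your sketch is precisely the standard argument carried out in those references: the lower bound via the metric-derivative estimate $\abs{\mu'}(t)\leq\norm{\bv_t}_{L^2(\Omega,\mu_t;\RR^m)}$ (which is exactly Theorem \ref{thm:accurves} quoted in the appendix) followed by Cauchy--Schwarz on the unit interval, and the upper bound via the constant-speed displacement interpolation $\mu_t=e_t\sharp\gamma$ of an optimal plan. Both halves are correct in outline.

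Two remarks on the difficulties you flag. First, the tangent-space issue is not actually an obstacle for this statement: the infimum in the Benamou--Brenier formula runs over \emph{all} Borel velocity fields solving the continuity equation, not only over fields in $\Tan_{\mu_t}\PP_2(\Omega)$, so for the upper bound you only need $\bv_t$ to be well-defined $\mu_t$-a.e.\ and square-integrable. Well-definedness for $t\in(0,1)$ follows from cyclical monotonicity (intermediate points of distinct transport segments do not collide), and the endpoints $t\in\{0,1\}$ form a null set in the time integral; moreover, even where $e_t$ fails to be injective one can take the conditional expectation $\bv_t=\E{y-x\mid e_t(x,y)}$, which by Jensen only decreases the action, so the bound $\int_0^1\norm{\bv_t}^2_{L^2(\Omega,\mu_t;\RR^m)}\rd t\leq\WW_2(\mu,\nu)^2$ survives. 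Second, the point genuinely worth attention is the geometry of $\Omega$: the displacement interpolation moves mass along straight segments, so one needs $\Omega$ convex (as it is for the paper's $\Theta=\mathbb{D}^k\times\RR^d$) or an extension-and-restriction argument for general domains with boundary.
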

\begin{proof}
See e.g. \citet{Ambrosio05}, Chapter 8 or \citet{Santambrogio15}, Section 6.1.
\end{proof}

The formula can be used to bound the movement of Wasserstein flows in relation to the magnitude of the gradient field. For convenience, we will use the following time-rescaled version which is easily checked:
\begin{equation*}
\WW_2(\mu,\nu)^2 = \inf\left\{\tau\int_0^\tau \norm{\bv_t}_{L^2(\Omega,\mu_t;\RR^m)}^2 \rd t: \partial_t\mu_t +\nabla\cdot(\bv_t\mu_t) = 0, \; \mu_0=\mu,\,\mu_\tau=\nu,\, \tau>0 \right\}.
\end{equation*}
When the velocity field $\bv_t = -\nabla\frac{\delta F}{\delta\mu}(\mu_t)$ is given as the functional derivative of a given functional $F$, the dynamics can be interpreted as the continuous-time limit of a discrete gradient descent process on $F$ w.r.t. the $\WW_2$ metric via the celebrated JKO scheme \citep{Jordan98}. Specifically, the implicit Euler scheme
\begin{equation*}
\mu_\eta^{(k+1)} = \argmin_{\mu\in\PP_2(\Omega)} \frac{1}{2}\WW_2(\mu,\mu^{(k)})^2+\eta F(\mu), \quad \mu^{(0)}=\mu_0
\end{equation*}
converges weakly in the limit $\eta\to 0$ to the solution of the continuity or Fokker-Planck equation $\partial_t\mu_t=\nabla\cdot\left(\mu_t\nabla\deltaF(\mu_t)\right)$ in the sense that $\mu_\eta^{\lfloor t/\eta\rfloor} \rightharpoonup \mu_t$ for all time $t\geq 0$. Hence we refer to this process as the Wasserstein gradient flow on $\PP_2(\Omega)$ with respect to $F$.

\paragraph{Implementation.} We provide a simple summary of the proposed modified mean-field dynamics in Algorithm \ref{algo}. Here $\theta_k^{(1)}, \cdots, \theta_k^{(N)}$ denote the values of the $N$ particles at step $k$ with empirical distribution $\widehat{\mu}_k = \frac{1}{N}\sum_{j=1}^N\delta_{\theta_k^{(j)}}$, $\epsilon$ is the convergence error and $\delta_b,\delta_p$ are improvement thresholds for applying the birth-death and perturbation procedures, respectively. We also set learning rate $\eta$, perturbation step size $\eta_p$ and a waiting time $\tau$ for escaping saddle points. More generally, $\delta_b,\delta_p$ could be decreased and $\tau$ could be increased depending on the current objective value as suggested in Theorem \ref{thm:naiveescape}. In addition, the density ratio $\frac{\rd\mu_t}{\rd\pi}$ could be estimated at certain steps to directly check for the birth-death condition; see \citet{Sugiyama18} for an overview of applicable methods, especially in high dimensions. 

\begin{algorithm}[t]
\caption{Mean-field dynamics with birth-death and perturbation}
\label{algo}
\begin{algorithmic}
\REQUIRE i.i.d. samples $\theta_0^{(1)},\cdots,\theta_0^{(N)}\sim\mu_0$
\WHILE{$\LL(\widehat{\mu}_k) >\epsilon$}
\STATE Update all particles as $\theta_{k+1}^{(j)} = \theta_k^{(j)} - \eta\nabla\deltaL(\widehat{\mu}_k, \theta_k^{(j)})$, $j\in [N]$

\IF{$\LL(\widehat{\mu}_k) - \LL(\widehat{\mu}_{k+1}) \leq \delta_b$} 
\STATE Randomly replace $\lfloor \gamma N\rfloor$ neurons with i.i.d. samples from $\pi$
\ENDIF

\IF{$\LL(\widehat{\mu}_k) - \LL(\widehat{\mu}_{k+1}) \leq \delta_p$ \textbf{and} $k - k_p > \tau$}
\STATE $k_p\gets k$
\STATE Generate a Gaussian process $\bxi\sim \GP(0,\bK)$
\STATE Update all particles as $\theta_{k+1}^{(j)} = \theta_k^{(j)} - \eta_p \nabla\bxi(\theta_k^{(j)})$, $j\in [N]$
\ENDIF
\STATE $k\gets k+1$
\ENDWHILE
\end{algorithmic}
\end{algorithm}

\section{Proofs for Section \ref{sec:icfl}}\label{app:icfl}

\subsection{Barron Class Analysis of Representations}

\begin{lemma}\label{thm:barrondef}
For any $p>1$ it holds that $\norm{f}_{\Brr_1} = \norm{f}_{\Brr_p} = \norm{f}_{\Brr_\infty}$, where
\begin{equation*}
\norm{f}_{\Brr_\infty}:= \inf_{\mu:f=h_\mu}\esssup_{(\ba,\bw)\sim\mu} \norm{\ba}\norm{\bw}.
\end{equation*}
\end{lemma}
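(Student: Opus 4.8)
The plan is to prove the two chains $\norm{f}_{\Brr_1}\le\norm{f}_{\Brr_p}\le\norm{f}_{\Brr_\infty}$ and $\norm{f}_{\Brr_\infty}\le\norm{f}_{\Brr_1}$, which together pin all three quantities to a common value; we may assume $\norm{f}_{\Brr_1}<\infty$, as the statement is otherwise vacuous. For the first chain I would fix an arbitrary representing measure $\mu$ with $f=h_\mu$ and apply Jensen's inequality (equivalently the power-mean / Lyapunov inequality) to the probability measure $\mu$ and the nonnegative integrand $(\ba,\bw)\mapsto\norm{\ba}\norm{\bw}$: this says precisely that $p\mapsto\bigl(\int(\norm{\ba}\norm{\bw})^p\,\rd\mu\bigr)^{1/p}$ is nondecreasing on $[1,\infty]$, with supremum $\esssup_\mu\norm{\ba}\norm{\bw}$. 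Since this holds for every $\mu$ representing $f$, passing to the infimum over such $\mu$ on each side yields $\norm{f}_{\Brr_1}\le\norm{f}_{\Brr_p}\le\norm{f}_{\Brr_\infty}$ for all $p\in[1,\infty]$.

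For the reverse inequality I would exploit that the ReLU activation is positively $1$-homogeneous, so that on $\{\ba\ne 0,\ \bw\ne 0\}$ one has the scaling identity $h_{(\ba,\bw)}=\norm{\ba}\norm{\bw}\,h_{(\hat\ba,\hat\bw)}$, where $\hat\ba=\ba/\norm{\ba}$ and $\hat\bw=\bw/\norm{\bw}$. Given any representing $\mu$ with $C:=\int\norm{\ba}\norm{\bw}\,\rd\mu<\infty$, I would define $\mu'$ to be the pushforward of the probability measure $C^{-1}\norm{\ba}\norm{\bw}\,\mu(\rd\ba,\rd\bw)$ under the normalization map $T:(\ba,\bw)\mapsto(\sqrt C\,\hat\ba,\ \sqrt C\,\hat\bw)$. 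The degenerate set $\{\ba=0\}\cup\{\bw=0\}$ is null for the reweighted measure and contributes nothing to $h_\mu$ (since $\sigma(0)=0$), so it may be discarded. A direct change-of-variables computation combined with the scaling identity shows that $\mu'$ is a probability measure with $h_{\mu'}=h_\mu=f$, while $\norm{\ba}\norm{\bw}\equiv C$ on $\operatorname{supp}\mu'$ by construction. Hence $\norm{f}_{\Brr_\infty}\le C$, and taking the infimum over representing $\mu$ gives $\norm{f}_{\Brr_\infty}\le\norm{f}_{\Brr_1}$, closing the loop.

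I expect no deep difficulty here: the only real content is the construction in the second step, and the only genuine care needed is the measure-theoretic bookkeeping — confirming that $C^{-1}\norm{\ba}\norm{\bw}\,\mu$ is a probability measure, that its pushforward under $T$ still represents $f$ (this is the step where positive homogeneity of $\sigma$ is used essentially, and is why restricting to ReLU in this subsection is convenient), and that the degenerate directions can be safely ignored.
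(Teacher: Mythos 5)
Your proposal is correct and follows essentially the same route as the paper: the easy chain via Jensen/H\"older applied measure-by-measure, and the reverse inequality via reweighting $\mu$ by $\norm{\ba}\norm{\bw}/C$ and pushing forward under normalization, using the positive $1$-homogeneity of ReLU. Your version of the second step is in fact slightly more explicit than the paper's (you distribute the total mass $C$ as $\sqrt{C}$ into each of the two normalized factors, whereas the paper normalizes the projected measure and leaves the compensating rescaling of the parameters implicit), but the underlying construction is identical.
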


\begin{proof}
Note $\norm{f}_{\Brr_1} \leq \norm{f}_{\Brr_p} \leq \norm{f}_{\Brr_\infty}$ trivially by H\"{o}lder's inequality. For $f\in\Brr_1$, choose a measure $\mu$ such that $f=h_\mu$ and $\int \norm{\ba}\norm{\bw}\mu(\rd\theta) \leq \norm{f}_{\Brr_1}+\epsilon$ and define the nonnegative measure $\underline{\mu}$ on $\mathbb{S}^{k-1} \times \mathbb{S}^{d-1}$ as
\begin{equation*}
\underline{\mu}(A,B) = \int_{\underline{\ba}\in A, \underline{\bw}\in B} \norm{\ba}\norm{\bw} \mu(\rd\ba,\rd\bw), \quad \underline{\ba} = \frac{\ba}{\norm{\ba}}, \quad \underline{\bw} = \frac{\bw}{\norm{\bw}}
\end{equation*}
for Borel sets $A\subseteq\mathbb{S}^{k-1}$, $B\subseteq\mathbb{S}^{d-1}$. Then we can rewrite $f$ via the `projected' measure $\underline{\mu}$ as
\begin{align*}
f &= \int h_\theta(\bx)\mu(\rd\theta) = \int \norm{\ba}\norm{\bw}\cdot \underline{\ba} \sigma(\underline{\bw}^\top\bx) \mu(\rd\ba,\rd\bw) = \int \underline{\ba} \sigma(\underline{\bw}^\top\bx) \underline{\mu}(\rd\ba,\rd\bw).
\end{align*}
Factoring out the total mass of $\underline{\mu}$ to form a probability distribution on $\PP(\mathbb{S}^{k-1} \times \mathbb{S}^{d-1})$, we obtain a representation of $f$ such that the $\infty$-Barron norm becomes bounded as
\begin{equation*}
\norm{f}_{\Brr_\infty} \leq \underline{\mu}(\mathbb{S}^{k-1}, \mathbb{S}^{d-1}) \esssup_{(\ba,\bw)\sim \underline{\mu}} \norm{\ba}\norm{\bw} \leq \norm{f}_{\Brr_1}+\epsilon.
\end{equation*}
Taking $\epsilon\to 0$ shows the reverse inequality.
\end{proof}

\paragraph{Proof of Proposition \ref{thm:sobolev}.} If $f_j\in C(\XX,\RR)$ satisfies $\textstyle\inf_{\widehat{f}_j} \int_{\RR^{d-1}} \norm{\omega}_1^2|\widehat{f}_j(\omega)| <\infty$ for some transform $\widehat{f}_j$, it admits a representation
\begin{equation*}
f_j(\bx) = \int a_j\sigma(\bw^\top\bx) \mu_j(\rd a_j,\rd\bw)
\end{equation*}
for a probability distribution $\mu_j$ on $\RR\times\RR^d$ \citep{Barron93, Weinan22b}. Consider the scaled inclusion map
\begin{equation*}
\iota_j: \RR\times\RR^d\hookrightarrow \RR^k\times\RR^d, \quad \iota_j(a_j,\bw) = (ka_je_j,\bw),
\end{equation*}
where $e_j$ is the unit vector with all zeros except for a single 1 at the $j$th coordinate. Then for the averaged pushforward measure $\bar{\mu} = \frac{1}{k} \sum_{j=1}^k \iota_j\sharp\mu_j \in \PP(\Theta)$ it holds that
\begin{align*}
h_{\bar{\mu}}(\bx) =\frac{1}{k} \sum_{j=1}^k \int \ba_j\sigma(\bw^\top \bx) \iota_j\sharp\mu_j (\rd\ba_j,\rd\bw) = \sum_{j=1}^k \int a_je_j \sigma(\bw^\top \bx) \mu_j(\rd\ba_j,\rd\bw) = \sum_{j=1}^k e_jf_j(\bx) = f(\bx),
\end{align*}
and therefore $f\in\Brr$. \qed

\paragraph{Proof of Lemma \ref{thm:expressive}.} Let $\mathbf{A}^\dagger$ denote the pseudoinverse of a matrix $\mathbf{A}$. If $f^\circ = h_{\mu^\circ}$ for some distribution $\mu^\circ\in\PP(\Theta)$ with $\norm{f^\circ}_{\Brr}<\infty$, then setting $\bW^\circ = \bSig_{\mu^\circ,\mu^\circ}^\dagger = \EE{\bx}{f^\circ(\bx)f^\circ(\bx)^\top}^\dagger$,
\begin{align*}
\LL_\textup{TF} (\mu^\circ,\bW^\circ) &= \frac{1}{2}\EEbig{\bx_{\qq}}{\Norm{f^\circ(\bx_{\qq}) - \bSig_{\mu^\circ,\mu^\circ} \bW^\circ f^\circ(\bx_{\qq})}^2} \\
& = \frac{1}{2} \tr \bSig_{\mu^\circ,\mu^\circ} -\tr\left(\bSig_{\mu^\circ,\mu^\circ} \bW^\circ \bSig_{\mu^\circ,\mu^\circ} \right) + \frac{1}{2}\tr\left(\bSig_{\mu^\circ,\mu^\circ} \bW^\circ \bSig_{\mu^\circ,\mu^\circ} \bW^\circ \bSig_{\mu^\circ,\mu^\circ}\right)\\
&=0.
\end{align*}
Conversely, $\LL_\textup{TF}(\mu,\bW) = 0$ implies that $f^\circ(\bx_{\qq}) = \EE{\bx}{f^\circ(\bx)h_\mu(\bx)^\top}\bW h_\mu(\bx_{\qq})$ or $f^\circ = \bA h_\mu$ for some $\bA\in\RR^{k\times k}$. Then the pushforward measure $\bA\sharp\mu$ of $\mu$ along the map $(\ba,\bw) \mapsto (\bA\ba,\bw)$ satisfies
\begin{equation*}
h_{\bA\sharp\mu}(\bx) = \int \bA\ba\sigma(\bw^\top\bx) \mu(\rd\theta) = \bA h_\mu(\bx) = f^\circ(\bx), \quad \esssup_{\bA\sharp\mu} \norm{\ba}\norm{\bw} \leq \norm{\bA} \esssup_\mu \norm{\ba}\norm{\bw} <\infty,
\end{equation*}
thus $f^\circ = h_{\bA\sharp\mu} \in \Brr$. \qed

\paragraph{Proof of Proposition \ref{thm:spanproj}.} Since the minimization problem is standard linear regression, we can explicitly set
\begin{equation*}
\bv = \EE{\bx}{f^\circ(\bx)f^\circ(\bx)^\top}^\dagger \EE{\bx}{g(\bx)f^\circ(\bx)}, \quad \norm{\bv} \leq \norm{\EE{\bx}{f^\circ(\bx)f^\circ(\bx)^\top}^\dagger} \cdot \norm{f^\circ}_{L^2(\DD_{\XX})} \norm{g}_{L^2(\DD_{\XX})}.
\end{equation*}
Writing $g_0 = g - \bv^\top f^\circ$, we can bound
\begin{align*}
&\frac{1}{2}\EEbig{\bx_{\qq}}{\Norm{g(\bx_{\qq}) - \EE{\bx}{g(\bx)h_\mu(\bx)^\top}\bW h_\mu(\bx_{\qq})}^2}\\
&\leq \norm{\bv}^2 \EEbig{\bx_{\qq}}{\Norm{f^\circ(\bx_{\qq}) - \EE{\bx}{f^\circ(\bx)h_\mu(\bx)^\top}\bW h_\mu(\bx_{\qq})}^2} + \EEbig{\bx_{\qq}}{\Norm{g_0(\bx_{\qq}) - \EE{\bx}{g_0(\bx)h_\mu(\bx)^\top}\bW h_\mu(\bx_{\qq})}^2}\\
&\leq 2\norm{\bv}^2\epsilon +2\norm{g_0}_{L^2(\DD_{\XX})}^2 + 2\norm{g_0}_{L^2(\DD_{\XX})}^2 \norm{h_\mu}_{L^2(\DD_{\XX})}^4 \norm{\bW}^2.
\end{align*}
The statement follows by noting that
\begin{equation*}
\norm{h_\mu}_{L^2(\DD_{\XX})} \leq \left(\int \EE{\bx}{\norm{h_\theta(\bx)}^2} \mu(\rd\theta)\right)^{1/2} \leq \left(\int \norm{\ba}^2\norm{\bw}^2 \EE{\bx}{\norm{\bx}^2} \mu(\rd\theta)\right)^{1/2} = M_2^{1/2} \norm{h_\mu}_{\Brr}
\end{equation*}
from the limiting argument in Lemma \ref{thm:barrondef}. \qed

\subsection{Finite-width Approximation and Optimization}

\paragraph{Proof of Proposition \ref{thm:finapprox}.} For any network $\mu$, we may take $\bW = \EE{\bx}{f^\circ(\bx)h_\mu(\bx)^\top}^\dagger$ so that
\begin{align*}
\inf_{\bW}\LL_\textup{TF}(\mu, \bW) &\leq \frac{1}{2} \EEbig{\bx_{\qq}}{\Norm{f^\circ(\bx_{\qq}) - \EE{\bx}{f^\circ(\bx)h_\mu(\bx)^\top}\bW h_\mu(\bx_{\qq})}^2}\\
&= \frac{1}{2} \EE{\bx}{\norm{f^\circ(\bx)}^2} - \tr\left(\EE{\bx}{f^\circ(\bx)h_\mu(\bx)^\top} \bW \EE{\bx}{h_\mu(\bx)f^\circ(\bx)^\top}\right)\\&\qquad +\frac{1}{2} \tr\left(\bW^\top \EE{\bx}{h_\mu(\bx)f^\circ(\bx)^\top}\EE{\bx}{f^\circ(\bx)h_\mu(\bx)^\top} \bW \bSig_{\mu,\mu}\right)\\
&\leq \frac{1}{2} \EE{\bx}{\norm{f^\circ(\bx)}^2} - \tr\left(\EE{\bx}{f^\circ(\bx)h_\mu(\bx)^\top}\right) + \frac{1}{2} \tr\bSig_{\mu,\mu}\\
&= \frac{1}{2} \EE{\bx}{\norm{f^\circ(\bx) - h_\mu(\bx)}^2}.
\end{align*}

Now let $\mu^\circ\in\PP(\Theta)$ be a distribution such that $f^\circ = h_{\mu^\circ}$ and $\int\norm{\ba}^2 \norm{\bw}^2 \mu^\circ(\rd\theta) \leq (1+\epsilon)\norm{f^\circ}_{\Brr}^2$. Let $\theta^{(1)}, \cdots,\theta^{(N)}$ be an i.i.d. sample from $\mu^\circ$. Then from $\EE{\theta\sim\mu^\circ}{h_\theta(\bx)} = h_{\mu^\circ}(\bx)$, it holds on average that
\begin{align*}
&\mathbb{E}_{\widehat{\mu}_N}\EE{\bx}{\norm{\widehat{h}_N(\bx)-f^\circ(\bx)}^2}\\
&= \mathbb{E}_{\bx}\EEbig{\widehat{\mu}_N}{\Bigg\lVert\frac{1}{N}\sum_{j=1}^N h_{\theta^{(j)}}(\bx)-f^\circ(\bx)\Bigg\rVert^2}\\
&= \frac{1}{N^2}\sum_{j=1}^N \mathbb{E}_{\bx}\EEbig{\widehat{\mu}_N}{\norm{h_{\theta^{(j)}}(\bx)-h_{\mu^\circ}(\bx)}^2} + \frac{1}{N^2}\sum_{j\neq \ell} \mathbb{E}_{\bx}\EEbig{\widehat{\mu}_N}{(h_{\theta^{(j)}}(\bx)-h_{\mu^\circ}(\bx))^\top (h_{\theta^{(\ell)}}(\bx)-h_{\mu^\circ}(\bx))}\\
&\leq \frac{1}{N^2}\sum_{j=1}^N \mathbb{E}_{\bx}\EE{\theta^{(j)}\sim\mu^\circ}{\norm{h_{\theta^{(j)}}(\bx)}^2}\\
&\leq \frac{1}{N}\int \norm{\ba}^2 \EE{\bx}{(\bw^\top\bx)^2} \mu^\circ(\rd\theta) \\
&\leq \frac{(1+\epsilon)M_2}{N} \norm{f^\circ}_{\Brr}^2.
\end{align*}
Moreover, the path norm is bounded on average as $\EE{\widehat{\mu}_N}{\norm{\widehat{h}_N}_\mathcal{P}} \leq (1+\epsilon)\norm{f^\circ}_{\Brr}$. Then by Markov's inequality, the event $\norm{\widehat{h}_N-f^\circ}_{L^2(\DD_{\XX})}^2 > \frac{2M_2\norm{f^\circ}_{\Brr}^2}{N}$ has probability at most $\frac{1+\epsilon}{2}$, and the event $\norm{\widehat{h}_N}_\mathcal{P} > 3\norm{f^\circ}_{\Brr}$ has probability at most $\frac{1+\epsilon}{3}$. Hence the stated bounds hold with positive probability as $\epsilon\to 0$, thus for some size $N$ network $\widehat{\mu}_N$. \qed

For the propagation of chaos result, we require the following bounds.
\begin{lemma}\label{thm:expmoment}
The second moment $\mm_2(\mu_t) = \int\norm{\theta}^2 \mu_t(\rd\theta)$ satisfies $\mm_2(\mu_t)\leq e^{2L_1^2}\mm_2(\mu_0)$.
\end{lemma}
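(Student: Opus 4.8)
The plan is to bound the growth of the second moment directly along the characteristics of the continuity equation \eqref{eqn:mfd}. Recall that \eqref{eqn:mfd} is the Eulerian form of the McKean--Vlasov ODE $\frac{\rd}{\rd t}\theta_t = -\nabla\deltaF(\mu_t,\theta_t)$ with $\mu_t=\Law(\theta_t)$, so that $\mm_2(\mu_t) = \EE{\theta_t\sim\mu_t}{\norm{\theta_t}^2}$. First I would differentiate this in time,
\begin{equation*}
\frac{\rd}{\rd t}\mm_2(\mu_t) = -2\,\EEbig{\theta_t\sim\mu_t}{\theta_t^\top\nabla\deltaF(\mu_t,\theta_t)}.
\end{equation*}
Since the drift field is globally bounded by $L_1$, the characteristics are globally defined and $L_1$-Lipschitz in $t$, so $t\mapsto\mm_2(\mu_t)$ is locally Lipschitz and the identity above holds for a.e.\ $t$ (interchanging $\frac{\rd}{\rd t}$ with the expectation is then routine, or one may work with the truncated moments $\EE{}{\min\{\norm{\theta_t}^2,R\}}$ and let $R\to\infty$).

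Next I would estimate the right-hand side using $\norm{\nabla\deltaF}\leq L_1$ together with Cauchy--Schwarz and Jensen: $-2\theta_t^\top\nabla\deltaF(\mu_t,\theta_t) \leq 2L_1\norm{\theta_t}$, hence $\frac{\rd}{\rd t}\mm_2(\mu_t)\leq 2L_1\,\EE{}{\norm{\theta_t}}\leq 2L_1\sqrt{\mm_2(\mu_t)}$. Applying $2ab\leq a^2+b^2$ with $a=\tfrac{1}{\sqrt2}$ and $b=\sqrt2\,L_1\sqrt{\mm_2(\mu_t)}$ gives
\begin{equation*}
\frac{\rd}{\rd t}\mm_2(\mu_t)\leq \tfrac12 + 2L_1^2\,\mm_2(\mu_t),
\end{equation*}
and Grönwall's inequality applied to $\mm_2(\mu_t)+\tfrac{1}{4L_1^2}$ yields the claimed exponential growth bound $\mm_2(\mu_t)\leq e^{2L_1^2 t}\big(\mm_2(\mu_0)+\tfrac{1}{4L_1^2}\big)$, i.e.\ $\mm_2(\mu_t)\lesssim e^{2L_1^2}\mm_2(\mu_0)$ up to the harmless additive constant and over the bounded time horizon $[0,T]$ relevant to Proposition \ref{thm:chaos}. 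A fully Lagrangian alternative avoids Grönwall entirely: from $\norm{\frac{\rd}{\rd t}\theta_t}\leq L_1$ one gets $\norm{\theta_t}\leq\norm{\theta_0}+L_1 t$ pointwise, and integrating $\norm{\theta_t}^2$ against $\mu_0$ gives $\mm_2(\mu_t)\leq (\sqrt{\mm_2(\mu_0)}+L_1 t)^2$.

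I do not expect a genuine obstacle here. The only points requiring minor care are the measure-theoretic justification for differentiating the moment — handled cleanly by the uniform drift bound and the $\WW_2$-absolute continuity of the flow, or simply by the pointwise estimate $\norm{\theta_t}\le\norm{\theta_0}+L_1 t$, which also covers initializations $\mu_0$ of unbounded support — and keeping track of the lower-order additive constant so that it can be absorbed into (or carried alongside) the exponential factor.
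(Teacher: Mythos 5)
Your proposal is correct and follows essentially the same route as the paper: differentiate $\mm_2(\mu_t)$ along the flow, bound the cross term $-2\theta^\top\nabla\deltaF$ via $\norm{\nabla\deltaF}\leq L_1$, and close with Gr\"onwall. In fact you are more careful than the paper's one-line proof, which silently drops both the additive constant from the Young-inequality step and the factor of $t$ in the exponent; your explicit handling of these (and the pointwise Lagrangian bound $\norm{\theta_t}\leq\norm{\theta_0}+L_1t$) is the right way to make the estimate rigorous as it is used in Proposition \ref{thm:chaos}.
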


\begin{proof}
The assertion follows immediately from
\begin{equation*}
\frac{\rd}{\rd t}\mm_2(\mu_t) = \int\norm{\theta}^2 \partial_t\mu_t(\rd\theta) = -2\int\theta^\top\nabla\deltaF(\mu_t,\theta) \mu_t(\rd\theta) \leq 2L_1^2\mm_2(\mu_t).
\end{equation*}
\end{proof}

\begin{lemma}\label{thm:fournier}
Let $\mu\in\PP_2(\Omega)$ and $\theta^{(1)},\cdots,\theta^{(N)}$ be an i.i.d. sample from $\mu$ with corresponding empirical distribution $\widehat{\mu}_N = \frac{1}{N}\sum_{j=1}^N \delta_{\theta^{(j)}}$. Then for dimension $m\geq 3$ it holds that $\E{\WW_1(\mu,\widehat{\mu}_N)} \leq C_m\cdot \mm_2(\mu)^{1/2} N^{-1/m}$. The rate is replaced by $N^{-1/2}\log N$ if $m=2$ and $N^{-1/2}$ if $m=1$.
\end{lemma}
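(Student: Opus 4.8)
The plan is to adapt the dyadic partitioning argument of Fournier--Guillin \citep{Fournier15}, keeping careful track of the dependence on the second moment $\mm_2(\mu)$; here and below $C_m$ denotes a positive constant depending only on $m$ that may change between occurrences. For each level $\ell\in\NN$ let $\mathcal{Q}_\ell$ be the partition of $\RR^m$ into half-open dyadic cubes of side $2^{-\ell}$, nested so that every cube of $\mathcal{Q}_\ell$ is the union of $2^m$ cubes of $\mathcal{Q}_{\ell+1}$. Transporting the discrepancy mass cube-by-cube down the hierarchy up to a level $L$ (to be chosen) and bounding the residual inside the smallest cubes by their diameter $\sqrt{m}\,2^{-L}$ yields the standard hierarchical estimate
\begin{equation*}
\WW_1(\mu,\widehat{\mu}_N) \;\leq\; C_m\sum_{\ell=0}^{L} 2^{-\ell}\sum_{Q\in\mathcal{Q}_\ell}\abs{\mu(Q)-\widehat{\mu}_N(Q)}\;+\;C_m\,2^{-L}.
\end{equation*}

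Next I would take the expectation over the i.i.d. sample. Since $N\widehat{\mu}_N(Q)\sim\mathrm{Binomial}(N,\mu(Q))$ for each cube $Q$, controlling its variance gives $\E{\abs{\mu(Q)-\widehat{\mu}_N(Q)}}\leq\min\{\sqrt{\mu(Q)/N},\,2\mu(Q)\}$, whence
\begin{equation*}
\E{\WW_1(\mu,\widehat{\mu}_N)} \;\leq\; C_m\sum_{\ell=0}^{L} 2^{-\ell}\sum_{Q\in\mathcal{Q}_\ell}\min\!\Big\{\sqrt{\tfrac{\mu(Q)}{N}},\,\mu(Q)\Big\}\;+\;C_m\,2^{-L}.
\end{equation*}

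It then remains to optimize this bound. At each level $\ell$ I would split the cubes of $\mathcal{Q}_\ell$ into those meeting the ball of radius $r_\ell$ about the origin and those disjoint from it. There are at most $C_m(1+r_\ell 2^\ell)^m$ cubes of the first kind, so by Cauchy--Schwarz together with $\sum_Q\mu(Q)\le 1$ their contribution is at most $C_m((1+r_\ell 2^\ell)^m/N)^{1/2}$; the cubes of the second kind carry total mass at most $\mm_2(\mu)/r_\ell^2$ by Markov's inequality. Choosing each $r_\ell$ to balance these two terms and then choosing the truncation level $L=L(N)$ to balance the resulting sum over $\ell$ against the residual $C_m2^{-L}$, the geometric-type series in $\ell$ collapses to $\E{\WW_1(\mu,\widehat{\mu}_N)}\leq C_m\,\mm_2(\mu)^{1/2}N^{-1/m}$ when $m\geq 3$; for $m=2$ the borderline series contributes the extra factor $\log N$, and for $m=1$ the ``near'' term alone already yields the parametric rate $N^{-1/2}$.

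The main obstacle is extracting the clean prefactor $\mm_2(\mu)^{1/2}$: a direct application of the hierarchical bound would instead involve the diameter of $\mathrm{supp}\,\mu$, and it is precisely the near/far decomposition with the Markov tail estimate that trades this for the second moment, at the cost of having to balance the radii $r_\ell$ and the truncation level $L$ simultaneously against $N$ so that every piece collapses to the advertised rate with no residual $\mu$-dependence beyond $\mm_2$. As a sanity check, rescaling $\mu$ by a factor $\lambda$ multiplies both $\WW_1(\mu,\widehat{\mu}_N)$ and $\mm_2(\mu)^{1/2}$ by $\lambda$, consistent with the stated bound. All of the above is essentially the content of \citet{Fournier15}, Theorem~1, specialized to the exponents $p=1$ and $q=2$, so one may alternatively invoke that result directly.
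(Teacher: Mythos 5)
Your sketch correctly reconstructs the Fournier--Guillin multiscale dyadic argument, including the near/far truncation that converts the support-diameter dependence into the $\mm_2(\mu)^{1/2}$ prefactor, and this is precisely what the paper relies on: its proof of this lemma is simply a citation to \citet{Fournier15} for $m\geq 2$ and \citet{Bobkov19} for $m=1$. Since you yourself observe that one may invoke Theorem~1 of \citet{Fournier15} directly with $p=1$, $q=2$, your approach coincides with the paper's.
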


\begin{proof}
See e.g. \citet{Fournier15} for the case $m\geq 2$ and \citet{Bobkov19} for $m=1$.
\end{proof}

\paragraph{Proof of Proposition \ref{thm:chaos}.} Consider the coupled process
\begin{equation*}
\frac{\rd}{\rd t}\tilde{\theta}_t^{(j)} = -\nabla\deltaF(\mu_t,\tilde{\theta}_t^{(j)}), \quad \tilde{\theta}_0^{(j)} = \theta_0^{(j)}, \quad j\in [N]
\end{equation*}
and write the corresponding empirical distribution as $\tilde{\mu}_{t,N} = \frac{1}{N}\sum_{j=1}^N \delta_{\tilde{\theta}_t^{(j)}}$. For any finite time horizon $T\geq 0$, it holds that
\begin{align*}
\frac{1}{N}\sum_{j=1}^N \norm{\theta_T^{(j)} - \tilde{\theta}_T^{(j)}} &= \frac{1}{N}\sum_{j=1}^N \Norm{\int_0^T \nabla\deltaF(\widehat{\mu}_{t,N}, \theta_t^{(j)}) - \nabla\deltaF(\mu_t, \tilde{\theta}_t^{(j)}) \rd t} \\
&\leq \int_0^T \frac{L_2}{N}\sum_{j=1}^N \norm{\theta_t^{(j)} - \tilde{\theta}_t^{(j)}} + L_3 \WW_1(\widehat{\mu}_{t,N}, \mu_t) \rd t.
\end{align*}
Then applying Gronwall's inequality and taking the expectation over random initialization, we have for all $t\in [0,T]$
\begin{equation*}
\E{\WW_1(\widehat{\mu}_{t,N}, \tilde{\mu}_{t,N})} \leq \mathbb{E} \Bigg[\frac{1}{N}\sum_{j=1}^N \norm{\theta_t^{(j)} - \tilde{\theta}_t^{(j)}} \Bigg] \leq L_3e^{L_2T} \int_0^t \E{\WW_1(\widehat{\mu}_{s,N}, \mu_s)} \rd s.
\end{equation*}
Since each trajectory $\tilde{\theta}_t^{(j)}$ of the coupled process is an independent sample from the true distribution $\mu_t$, by Lemma \ref{thm:expmoment} and \ref{thm:fournier} it moreover holds that
\begin{align*}
\E{\WW_1(\widehat{\mu}_{t,N},\mu_t)} &\leq \E{\WW_1(\widehat{\mu}_{t,N}, \tilde{\mu}_{t,N})} + \E{\WW_1(\tilde{\mu}_{t,N}, \mu_t)}\\
&\leq L_3e^{L_2T} \int_0^t \E{\WW_1(\widehat{\mu}_{s,N}, \mu_s)} \rd s + C_m e^{L_1^2} \mm_2(\mu_0)^{1/2} N^{-1/m}
\end{align*}
with the appropriate modification when $m=1,2$. Hence another application of Gronwall's inequality yields
\begin{equation*}
\E{\WW_1(\widehat{\mu}_{t,N},\mu_t)} \leq C_m \mm_2(\mu_0)^{1/2} N^{-1/m} \exp(L_1^2 + L_3Te^{L_2T})\to 0
\end{equation*}
as $N\to \infty$. The convergence is uniform for any finite horizon $T$. \qed

\begin{remark}\label{rem:chaos}
When $F=\LL$, we rely on the Lipschitz constants obtained in Lemma \ref{thm:funcgrad} to obtain the same statement, with the caveat that the flow must not reach the singular set $\PP_2^0(\Theta)$ in order to ensure existence and regularity of the flow; this will be a recurring issue. The result is clearly still valid for mean-field dynamics incorporating birth-death by the ordinary law of large numbers, assuming the update happens at the same instant for $\widehat{\mu}_t$ and $\mu_t$. See also \citet{Rotskoff19} for a more involved study of birth-death dynamics.
\end{remark}

\begin{remark}
The above bounds are not optimized; compare for example \citet{Berthier23}. Explicit \emph{uniform-in-time} propagation of chaos bounds have recently been proved for convex mean-field Langevin dynamics \citep{Chen22, Suzuki23} and convex-concave descent-ascent dynamics \citep{Kim24}. It remains an open problem to prove such results for general nonconvex mean-field dynamics, with or without the entropic regularization framework.
\end{remark}

\section{Proofs for Section \ref{sec:landscape}}

\subsection{Auxiliary Results}\label{app:aux}

We will use the following elementary results from linear algebra without proof.

\begin{lemma}\label{thm:blocknorm}
The spectral norm of a block matrix $\bA = \begin{bmatrix}
    \bA_{1,1}&\bA_{1,2}\\\bA_{2,1}&\bA_{2,2}
\end{bmatrix}$ is bounded as $\norm{\bA}\leq \sum_{i,j=1}^2 \norm{\bA_{i,j}}$.
\end{lemma}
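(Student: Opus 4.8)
The plan is to reduce the claim to the triangle inequality for the spectral norm by splitting $\bA$ into its four block-supported pieces. Concretely, for $i,j\in\{1,2\}$ let $\bA^{(i,j)}$ denote the matrix of the same dimensions as $\bA$ whose $(i,j)$ block equals $\bA_{i,j}$ and whose other three blocks are zero, so that $\bA = \sum_{i,j=1}^2 \bA^{(i,j)}$. Since $\norm{\cdot}$ is a norm on matrices, subadditivity gives $\norm{\bA}\leq\sum_{i,j=1}^2\norm{\bA^{(i,j)}}$, and it then remains only to show $\norm{\bA^{(i,j)}} = \norm{\bA_{i,j}}$ for each pair $(i,j)$.

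The second step is the observation that padding a matrix with zero rows and zero columns leaves its singular values — and hence its spectral norm — unchanged. One can see this directly from the variational definition: writing a unit vector $\bz$ in the domain of $\bA^{(i,j)}$ in block form $\bz=(\bz_1,\bz_2)$, the product $\bA^{(i,j)}\bz$ is the vector whose $i$th block equals $\bA_{i,j}\bz_j$ and whose other block vanishes, so $\norm{\bA^{(i,j)}\bz} = \norm{\bA_{i,j}\bz_j} \leq \norm{\bA_{i,j}}\,\norm{\bz_j}\leq \norm{\bA_{i,j}}$; conversely, taking $\bz_j$ to be a right singular vector of $\bA_{i,j}$ associated with its largest singular value and setting the remaining block to zero attains equality. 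Combining the two steps yields $\norm{\bA}\leq\sum_{i,j=1}^2\norm{\bA_{i,j}}$.

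Alternatively one can argue in one shot from $\norm{\bA}=\sup_{\norm{\bz}=1}\norm{\bA\bz}$ by expanding $\norm{\bA\bz}^2$ block-wise and applying the triangle inequality within each block, but this route produces a slightly more cumbersome cross-term estimate while giving the same final bound. Since the statement is elementary, there is no genuine obstacle; the only point requiring a moment's care is the invariance of the spectral norm under zero-padding, which is exactly what makes the block-supported pieces of the first step carry the norms of their defining blocks. The same argument extends verbatim to a general $p\times q$ block decomposition, giving $\norm{\bA}\leq\sum_{i,j}\norm{\bA_{i,j}}$.
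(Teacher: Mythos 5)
Your proof is correct. The paper states this lemma without proof (it is listed among "elementary results from linear algebra"), and your argument — splitting $\bA$ into the four block-supported pieces, applying subadditivity of the spectral norm, and observing that zero-padding preserves the spectral norm — is exactly the standard justification the paper implicitly relies on.
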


\begin{lemma}
The spectral and nuclear norms are dual: $\norm{\bA}_* = \max_{\norm{\bB}\leq 1} \langle \bA, \bB\rangle$ and $\norm{\bA} = \max_{\norm{\bB}_*\leq 1} \langle \bA, \bB\rangle$ for any $\bA\in\RR^{m\times m}$, $m\geq 1$. In particular, $\tr(\bA^\top\bB) \leq \norm{\bA}\norm{\bB}_*$ for any $\bA,\bB\in\RR^{m\times m}$.
\end{lemma}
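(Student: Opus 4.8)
The plan is to derive both duality identities directly from the singular value decomposition (SVD) and then read off the trace bound as an immediate corollary. Fix $\bA\in\RR^{m\times m}$ and write its SVD as $\bA=\bU\bSig\bV^\top$ with $\bU,\bV\in\mathcal{O}(m)$ and $\bSig=\diag(\sigma_1,\dots,\sigma_m)$, $\sigma_1\geq\cdots\geq\sigma_m\geq 0$, so that $\norm{\bA}=\sigma_1$ and $\norm{\bA}_*=\sum_i\sigma_i$. Throughout, $\langle\bA,\bB\rangle=\tr(\bA^\top\bB)$ denotes the trace (Frobenius) inner product.

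For the first identity $\norm{\bA}_*=\max_{\norm{\bB}\leq 1}\langle\bA,\bB\rangle$, I would first record two elementary facts: (a) the spectral norm is invariant under left/right multiplication by orthogonal matrices, so $\norm{\bU^\top\bB\bV}=\norm{\bB}$; and (b) every diagonal entry of a matrix $\bM$ satisfies $|\bM_{ii}|=|e_i^\top\bM e_i|\leq\norm{\bM}$. Then for any $\bB$ with $\norm{\bB}\leq 1$, cyclicity of the trace gives $\langle\bA,\bB\rangle=\tr(\bSig\,\bU^\top\bB\bV)=\sum_i\sigma_i(\bU^\top\bB\bV)_{ii}\leq\sum_i\sigma_i=\norm{\bA}_*$ by (a) and (b). Equality is attained at $\bB_\star=\bU\bV^\top$, which has $\norm{\bB_\star}=1$ and $\langle\bA,\bB_\star\rangle=\tr(\bV\bSig\bV^\top)=\tr\bSig=\norm{\bA}_*$; hence the maximum equals $\norm{\bA}_*$.

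For the second identity $\norm{\bA}=\max_{\norm{\bB}_*\leq 1}\langle\bA,\bB\rangle$, I would instead expand the competitor in its own SVD: write $\bB=\sum_i\tau_i\bu_i\bv_i^\top$ with $\tau_i\geq 0$, $\sum_i\tau_i=\norm{\bB}_*\leq 1$ and orthonormal systems $\{\bu_i\},\{\bv_i\}$. Since $\langle\bA,\bu_i\bv_i^\top\rangle=\bu_i^\top\bA\bv_i\leq\norm{\bA}$ for each $i$ (as $\bu_i,\bv_i$ are unit vectors), bilinearity yields $\langle\bA,\bB\rangle\leq\norm{\bA}\sum_i\tau_i\leq\norm{\bA}$. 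Taking $\bB_\star=\bu_1\bv_1^\top$ for top singular vectors $\bu_1,\bv_1$ of $\bA$ gives $\norm{\bB_\star}_*=1$ and $\langle\bA,\bB_\star\rangle=\sigma_1=\norm{\bA}$, so the maximum equals $\norm{\bA}$. Finally, the bound $\tr(\bA^\top\bB)=\langle\bA,\bB\rangle\leq\norm{\bA}\norm{\bB}_*$ follows from this by homogeneity: it is trivial when $\bB=0$, and otherwise one applies $\langle\bA,\bB/\norm{\bB}_*\rangle\leq\norm{\bA}$ and rescales.

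There is no genuine obstacle here — the statement is entirely classical — but the only points needing a line of care are justifying facts (a) and (b), and noting that rank-deficiency of $\bA$ or $\bB$ is harmless since vanishing singular values contribute nothing to either the sum defining $\norm{\cdot}_*$ or the maxima above.
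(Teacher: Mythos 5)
Your proof is correct. The paper states this lemma in Appendix B (Section ``Auxiliary Results'') explicitly \emph{without proof}, treating it as an elementary linear algebra fact, so there is no argument in the paper to compare against. Your SVD-based derivation is the standard one: the trace-cyclicity bound $\tr(\bSig\,\bU^\top\bB\bV)\leq\sum_i\sigma_i$ with equality at $\bB=\bU\bV^\top$ for the first duality, the rank-one decomposition of the competitor for the second, and rescaling for the trace inequality are all sound, and the edge cases you flag (rank deficiency, $\bB=0$) are handled appropriately.
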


\begin{lemma}
For a positive semi-definite matrix $\bA\in\RR^{k\times k}$ it holds that $\frac{1}{k}(\tr\bA)^2 \leq \tr\bA^2\leq(\tr\bA)^2$.
\end{lemma}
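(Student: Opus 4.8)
Since $\bA$ is symmetric positive semi-definite, the plan is to diagonalize it and reduce both inequalities to elementary facts about a list of nonnegative real numbers. Write $\lambda_1,\dots,\lambda_k\geq 0$ for the eigenvalues of $\bA$ (counted with multiplicity); the spectral theorem gives $\bA = \bU\diag(\lambda_1,\dots,\lambda_k)\bU^\top$ for some $\bU\in\mathcal{O}(k)$, and since trace is invariant under conjugation we have $\tr\bA = \sum_{i=1}^k\lambda_i$ and $\tr\bA^2 = \tr\big(\bU\diag(\lambda_1^2,\dots,\lambda_k^2)\bU^\top\big) = \sum_{i=1}^k\lambda_i^2$. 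So it suffices to prove $\frac1k\big(\sum_i\lambda_i\big)^2\leq\sum_i\lambda_i^2\leq\big(\sum_i\lambda_i\big)^2$ for nonnegative reals $\lambda_i$.

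For the lower bound, I would apply the Cauchy–Schwarz inequality to the vectors $(1,\dots,1)$ and $(\lambda_1,\dots,\lambda_k)$ in $\RR^k$:
\begin{equation*}
\Big(\sum_{i=1}^k\lambda_i\Big)^2 = \Big(\sum_{i=1}^k 1\cdot\lambda_i\Big)^2 \leq \Big(\sum_{i=1}^k 1\Big)\Big(\sum_{i=1}^k\lambda_i^2\Big) = k\sum_{i=1}^k\lambda_i^2,
\end{equation*}
which rearranges to $\frac1k(\tr\bA)^2\leq\tr\bA^2$. For the upper bound, I would simply expand the square and discard the cross terms, which are nonnegative precisely because each $\lambda_i\geq 0$:
\begin{equation*}
\Big(\sum_{i=1}^k\lambda_i\Big)^2 = \sum_{i=1}^k\lambda_i^2 + \sum_{i\neq j}\lambda_i\lambda_j \geq \sum_{i=1}^k\lambda_i^2,
\end{equation*}
giving $\tr\bA^2\leq(\tr\bA)^2$.

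There is no real obstacle here; the only point requiring the positive semi-definiteness hypothesis is the upper bound, where nonnegativity of the eigenvalues makes the cross terms nonnegative (the lower bound, being an instance of Cauchy–Schwarz / power-mean, holds for arbitrary symmetric $\bA$). One could alternatively phrase the whole argument spectrally without eigenvalues — the upper bound as $\norm{\bA}_F^2\le\norm{\bA}_*^2$ via $\norm{\bA}_F\le\norm{\bA}_*$, and the lower bound as $\norm{\bA}_*^2\le k\norm{\bA}_F^2$ via Cauchy–Schwarz on the singular values — but the eigenvalue computation above is the cleanest route.
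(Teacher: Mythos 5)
Your proof is correct: diagonalizing and reducing to $\frac1k(\sum_i\lambda_i)^2\le\sum_i\lambda_i^2\le(\sum_i\lambda_i)^2$ for nonnegative eigenvalues, with Cauchy--Schwarz for the lower bound and nonnegativity of the cross terms for the upper bound, is exactly the standard argument; your remark that only the upper bound actually uses positive semi-definiteness is also accurate. The paper itself states this lemma without proof as an elementary linear-algebra fact, so there is no alternative approach to compare against.
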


The neural network output is continuous and well-behaved in the following sense:

\begin{lemma}\label{thm:lip}
The map $\theta\mapsto h_\theta(\bx)$ on $\Theta$ is $(R_1^2+R_2^2\norm{\bx}^2)^{1/2}$-Lipschitz for each $\bx\in\XX$. Also, the map $\mu\mapsto h_\mu(\bx)$ on $\PP_2(\Theta)$ is $(kR_1^2+kR_2^2\norm{\bx}^2)^{1/2}$-Lipschitz w.r.t. 1-Wasserstein distance for each $\bx\in\XX$.
\end{lemma}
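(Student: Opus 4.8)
\textbf{Proof proposal for Lemma \ref{thm:lip}.}

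The plan is to estimate each Lipschitz constant directly from the definition $h_\theta(\bx) = \ba\sigma(\bw^\top\bx)$, using the assumed bounds $|\sigma|\leq R_1$, $|\sigma'|\leq R_2$ together with the constraint $\ba\in\mathbb{D}^k$ (so $\norm{\ba}\leq 1$). For the first claim, I would fix $\bx$ and write, for two parameters $\theta=(\ba,\bw)$ and $\theta'=(\ba',\bw')$,
\begin{equation*}
h_\theta(\bx) - h_{\theta'}(\bx) = \ba\bigl(\sigma(\bw^\top\bx) - \sigma(\bw'^\top\bx)\bigr) + (\ba-\ba')\sigma(\bw'^\top\bx),
\end{equation*}
and bound the two terms separately: the first by $\norm{\ba}\cdot R_2\abs{(\bw-\bw')^\top\bx}\leq R_2\norm{\bx}\,\norm{\bw-\bw'}$ (mean value theorem on $\sigma$ plus Cauchy--Schwarz and $\norm{\ba}\leq 1$), the second by $R_1\norm{\ba-\ba'}$. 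Combining and using $u+v\leq \sqrt{2(u^2+v^2)}$, or more sharply the Cauchy--Schwarz-type bound $R_1\norm{\ba-\ba'} + R_2\norm{\bx}\norm{\bw-\bw'}\leq (R_1^2+R_2^2\norm{\bx}^2)^{1/2}(\norm{\ba-\ba'}^2+\norm{\bw-\bw'}^2)^{1/2} = (R_1^2+R_2^2\norm{\bx}^2)^{1/2}\norm{\theta-\theta'}$, yields the stated constant.

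For the second claim I would use Kantorovich--Rubinstein duality for $\WW_1$: since $h_\mu(\bx) = \int h_\theta(\bx)\,\mu(\rd\theta)$ is vector-valued, I would test against an arbitrary unit vector $\bz\in\RR^k$ and note that $\theta\mapsto \bz^\top h_\theta(\bx)$ is $(R_1^2+R_2^2\norm{\bx}^2)^{1/2}$-Lipschitz by the first part, hence $\abs{\bz^\top(h_\mu(\bx)-h_\nu(\bx))}\leq (R_1^2+R_2^2\norm{\bx}^2)^{1/2}\WW_1(\mu,\nu)$ for every such $\bz$. Taking the supremum over $\bz$ gives $\norm{h_\mu(\bx)-h_\nu(\bx)}\leq (R_1^2+R_2^2\norm{\bx}^2)^{1/2}\WW_1(\mu,\nu)$, which is actually a factor $\sqrt{k}$ \emph{better} than stated; the weaker constant $(kR_1^2+kR_2^2\norm{\bx}^2)^{1/2}$ in the lemma also follows a fortiori (or alternatively by bounding each of the $k$ coordinate functions $\theta\mapsto (h_\theta(\bx))_i$ separately and summing the squares). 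I would present the duality argument as the clean route.

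There is no serious obstacle here — this is a routine regularity estimate — but the one point requiring a little care is the vector-valued nature of $h_\theta(\bx)$: the $\WW_1$ duality is stated for scalar Lipschitz functions, so one must reduce to scalars either by projecting onto unit vectors or by working coordinatewise, and in the latter case keep track of how the per-coordinate Lipschitz constants aggregate (this is precisely where the $\sqrt{k}$ discrepancy between the sharp bound and the stated bound comes from). Everything else is the triangle inequality and the mean value theorem.
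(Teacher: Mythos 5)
Your proof is correct, and the first claim is handled exactly as in the paper: split $\ba_1\sigma(\bw_1^\top\bx)-\ba_2\sigma(\bw_2^\top\bx)$ by the triangle inequality into an $\ba$-difference term (bounded via $|\sigma|\leq R_1$) and a $\bw$-difference term (bounded via $|\sigma'|\leq R_2$, Cauchy--Schwarz, and $\norm{\ba}\leq 1$), then combine with Cauchy--Schwarz on the pair $(R_1, R_2\norm{\bx})$. The only difference is in the second claim. The paper reduces the vector-valued map to scalars coordinatewise: each coordinate $\theta\mapsto h_\theta(\bx)_j$ inherits the Lipschitz constant $(R_1^2+R_2^2\norm{\bx}^2)^{1/2}$, Kantorovich--Rubinstein duality is applied to each of the $k$ coordinates separately, and summing the squares produces exactly the stated factor of $\sqrt{k}$. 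You instead test against an arbitrary unit vector $\bz\in\RR^k$ and take a supremum, which is equally valid and in fact yields the sharper constant $(R_1^2+R_2^2\norm{\bx}^2)^{1/2}$ with no $k$-dependence; the lemma's stated constant then follows a fortiori. Your observation that the $\sqrt{k}$ in the lemma is an artifact of the coordinatewise aggregation is accurate, and your diagnosis of the one delicate point (reducing the vector-valued integrand to scalars before invoking $\WW_1$ duality) matches where the paper's argument actually does the work.
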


\begin{proof}
For $\theta_1=(\ba_1,\bw_1), \theta_2=(\ba_2,\bw_2)$ we have
\begin{align*}
\norm{h_{\theta_1}(\bx) - h_{\theta_2}(\bx)} &= \norm{\ba_1\sigma(\bw_1^\top\bx) - \ba_2\sigma(\bw_2^\top\bx)}\\
&\leq \norm{\ba_1-\ba_2}\cdot|\sigma(\bw_1^\top\bx)| + \norm{\ba_2}\cdot|\sigma(\bw_1^\top\bx)-\sigma(\bw_2^\top\bx)|\\
&\leq R_1\norm{\ba_1-\ba_2} + R_2\norm{\bw_1-\bw_2}\cdot\norm{\bx}\\
&\leq (R_1^2+R_2^2\norm{\bx}^2)^{1/2} \norm{\theta_1-\theta_2}.
\end{align*}
The difference of each coordinate $|h_{\theta_1}(\bx)_j - h_{\theta_2}(\bx)_j|$ satisfies the same bound for $1\leq j\leq k$, implying that
\begin{equation*}
|h_\mu(\bx)_j-h_\nu(\bx)_j| = \abs{\int_\Theta h_\theta(\bx)_j\mu(\rd\theta) - \int_\Theta h_\theta(\bx)_j\nu(\rd\theta)} \leq (R_1^2+R_2^2\norm{\bx}^2)^{1/2} \WW_1(\mu,\nu)
\end{equation*}
and hence $\norm{h_\mu(\bx)-h_\nu(\bx)}\leq (kR_1^2+kR_2^2\norm{\bx}^2)^{1/2} \WW_1(\mu,\nu)$.
\end{proof}

\paragraph{Proof of Lemma \ref{thm:wconv}.} The gradient flow equation for $\bW$ is given as
\begin{align*}
\frac{\rd}{\rd t}\bW_t &= -\frac{1}{2} \nabla_{\bW} |_{\bW_t}\tr\left(-2\bSig_{\mu^\circ,\mu}\bW\bSig_{\mu,\mu^\circ} + \bSig_{\mu^\circ,\mu}\bW\bSig_{\mu,\mu}\bW^\top\bSig_{\mu,\mu^\circ}\right)\\
&= -\bSig_{\mu,\mu^\circ}\bSig_{\mu^\circ,\mu}(\bW_t\bSig_{\mu,\mu}-\bI_k).
\end{align*}
Denote the singular value decomposition of $\bSig_{\mu,\mu^\circ}$ as $\bU_1\bD_1\bV_1^\top$ and the spectral decomposition of $\bSig_{\mu,\mu}$ as $\bU_2\bD_2\bU_2^\top$ where $\bU_1,\bU_2,\bV_1\in\mathcal{O}(k)$ and $\bD_j=\diag(d_{j,1},\cdots,d_{j,k})$. Since we assume $\bSig_{\mu,\mu}=\EE{\bx}{h_\mu(\bx)h_\mu(\bx)^\top}$ is positive definite, we also have $b_{2,i}>0$ for all $i$. Further defining the auxiliary matrix $\bZ_t=\bU_1^\top\bW_t\bU_2$, the dynamics for $\bZ_t$ is expressed as
\begin{align*}
\frac{\rd}{\rd t}\bZ_t &= -\bU_1^\top (\bU_1\bD_1\bV_1^\top)(\bV_1\bD_1\bU_1^\top)(\bU_1\bZ_t\bD_2\bU_2^\top-\bI_k)\bU_2\\
&= -\bD_1^2\bZ_t\bD_2+\bD_1^2\bU_1^\top\bU_2.
\end{align*}
Writing $\bU_1^\top\bU_2 = (u_{i,j})_{1\leq i,j\leq k}$, for each entry $z_{i,j}(t):=(\bZ_t)_{i,j}$ we obtain that $z_{i,j}'(t) = -d_{1,i}^2 (d_{2,j}z_{i,j}(t)-u_{i,j})$ and therefore
\begin{equation*}
\lim_{t\to\infty} z_{i,j}(t) = \begin{rcases}
\begin{dcases}
    d_{2,j}^{-1}u_{i,j} & d_{1,i}\neq 0\\
    z_{i,j}(0) & d_{1,i}=0
\end{dcases}
\end{rcases} = \mathbf{1}_{\{d_{1,i}\neq 0\}} d_{2,j}^{-1}u_{i,j} + \mathbf{1}_{\{d_{1,i}=0\}} z_{i,j}(0).
\end{equation*}
This can be recast in matrix form as $\lim_{t\to\infty}\!\bZ_t = \bD_1^\dagger\bD_1 \bU_1^\top\bU_2\bD_2^{-1} +(\bI_k-\bD_1^\dagger\bD_1)\bZ_0$, and the convergence rate is exponential. We conclude for the limit $\bW_\mu:=\lim_{t\to\infty}\bW_t$ that
\begin{equation*}
\bSig_{\mu^\circ,\mu}\bW_\mu= (\bV_1\bD_1\bU_1^\top) \bU_1 \big(\bD_1^\dagger\bD_1 \bU_1^\top\bU_2\bD_2^{-1} +(\bI_k-\bD_1^\dagger\bD_1)\bZ_0\big)\bU_2^\top = (\bV_1\bD_1\bU_1^\top)(\bU_2\bD_2^{-1}\bU_2^\top) = \bSig_{\mu^\circ,\mu}\bSig_{\mu,\mu}^{-1}.
\end{equation*}\qed

\begin{proposition}\label{thm:dense}
For any $\mu\in\PP_2(\Theta)$, $\nu\in\PP_2^+(\Theta)$, there are at most $k$ values $t\in[0,1]$ such that $(1-t)\mu+t\nu\in\PP_2^0(\Theta)$. Consequently, $\PP_2^+(\Theta)$ is dense in $\PP_2(\Theta)$.
\end{proposition}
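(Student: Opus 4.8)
The plan is to exploit the linearity of the mean-field map $\mu\mapsto h_\mu$ to turn the rank condition into a polynomial root-counting problem in $t$, and then to use positive semi-definiteness to sharpen the naive degree bound by a factor of two. Write $\mu_t = (1-t)\mu + t\nu$. Since $\mu\mapsto h_\mu(\bx)$ is affine for each fixed $\bx$, we have $h_{\mu_t}(\bx) = (1-t)h_\mu(\bx) + t h_\nu(\bx)$, and this remains a well-defined bounded $\RR^k$-valued function of $\bx$ for \emph{every} $t\in\RR$, not only $t\in[0,1]$ (using $|\sigma|\le R_1$ and $\ba\in\mathbb{D}^k$, or the second-moment bounds in the unconstrained case). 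Hence $\bSig_{\mu_t,\mu_t} = \EE{\bx}{h_{\mu_t}(\bx)h_{\mu_t}(\bx)^\top}$ is a symmetric-matrix-valued polynomial in $t$ with each entry of degree at most $2$, namely $\bSig_{\mu_t,\mu_t} = (1-t)^2\bSig_{\mu,\mu} + t(1-t)(\bSig_{\mu,\nu}+\bSig_{\mu,\nu}^\top) + t^2\bSig_{\nu,\nu}$.

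Next I would set $p(t) := \det\bSig_{\mu_t,\mu_t}$; expanding the determinant as a sum of products of $k$ entries shows $p$ is a real polynomial of degree at most $2k$. The key point is that $\bSig_{\mu_t,\mu_t}$ is an expectation of the rank-one PSD matrices $h_{\mu_t}(\bx)h_{\mu_t}(\bx)^\top$, hence PSD, for every $t\in\RR$; therefore $p(t)\ge 0$ on all of $\RR$. A nonnegative real polynomial has every real root of even multiplicity, so whenever $p\not\equiv 0$ its number of distinct real roots is at most $\tfrac12\deg p\le k$. And $p\not\equiv 0$ since $p(1) = \det\bSig_{\nu,\nu}\neq 0$ because $\nu\in\PP_2^+(\Theta)$. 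As $\mu_t\in\PP_2^0(\Theta)$ if and only if $p(t)=0$, at most $k$ values of $t$ in $\RR$, hence in $[0,1]$, give $\mu_t\in\PP_2^0(\Theta)$.

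For the density statement, fix any $\mu\in\PP_2(\Theta)$ and any $\nu\in\PP_2^+(\Theta)$ (such $\nu$ exist, e.g.\ $\mu^\circ$ from Assumption \ref{ass:features}). The segment $\mu_t=(1-t)\mu+t\nu$ lies in $\PP_2^+(\Theta)$ for all but at most $k$ values of $t$, in particular along some sequence $t_n\downarrow 0$; and the mixture coupling (retain the $\mu$-sample with probability $1-t$, otherwise resample from $\nu$) gives $\WW_2(\mu_{t_n},\mu)^2\le 2t_n(\mm_2(\mu)+\mm_2(\nu))\to 0$, so $\mu$ is a $\WW_2$-limit of points of $\PP_2^+(\Theta)$.

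The main obstacle I anticipate is precisely the degree improvement: the obvious bound on $\deg p$ is $2k$, and reaching the stated bound of $k$ distinct roots forces one to use that $\bSig_{\mu_t,\mu_t}\succeq 0$ for \emph{all} real $t$ — so that every real root has even multiplicity — rather than only on the physical interval $[0,1]$ where $\mu_t$ is a genuine probability measure. The remaining steps (linearity, the explicit quadratic form of $\bSig_{\mu_t,\mu_t}$, and the mixture coupling) are routine.
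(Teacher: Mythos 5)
Your proof is correct, and it takes a genuinely different route from the paper's. The paper argues by contradiction on the null directions: if $k+1$ distinct $t_j$ gave degenerate mixtures, each would produce a nonzero $\bz_j\in\RR^k$ with $(1-t_j)\bz_j^\top h_\mu + t_j\bz_j^\top h_\nu\equiv 0$; these $k+1$ vectors are linearly dependent, and a minimal dependence relation, combined with the nondegeneracy of $h_\nu$ (which forces $\sum_j \frac{t_jb_j}{1-t_j}\bz_j=0$ as well), yields a shorter dependence relation and a contradiction. You instead observe that $\bSig_{\mu_t,\mu_t}$ extends to a PSD-valued quadratic matrix polynomial on all of $\RR$ (since $(1-t)h_\mu+th_\nu$ is a well-defined function for every real $t$), so $p(t)=\det\bSig_{\mu_t,\mu_t}$ is a nonnegative polynomial of degree at most $2k$ with $p(1)>0$, and the even-multiplicity property of real roots of nonnegative polynomials halves the naive degree bound to give at most $k$ distinct zeros. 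Your argument is shorter, gives the bound on all of $\RR$ rather than just $[0,1]$, and additionally records that each degenerate $t$ is a root of even multiplicity; the paper's argument is more structural in that it exhibits the degenerate directions $\bz_j$ explicitly. For the density claim your mixture coupling giving $\WW_2(\mu_{t_n},\mu)^2\le 2t_n(\mm_2(\mu)+\mm_2(\nu))$ is actually more careful than the paper, which only asserts weak convergence of a subsequence; since $\WW_2$ metrizes weak convergence together with convergence of second moments, your explicit coupling is the right way to close that step.
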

Note in particular that $\bR\sharp\mu^\circ\in \PP_2^+(\Theta)$ for any invertible $\bR\in\BB_1(k)$ as $\bSig_{\bR\sharp\mu^\circ,\bR\sharp\mu^\circ} \succeq\underline{r}\bR\bR^\top$. This justifies the computations which appear in the statement and proof of Theorem \ref{thm:landscape}.

\begin{proof}
Suppose there exist $k+1$ distinct $t_j\in [0,1]$, $j=0,1,\cdots,k$ such that $(1-t_j)\mu+t_j\nu \in\PP_2^0(\Theta)$; note that $t_j\neq 1$ since $\nu\in\PP_2^+(\Theta)$. Then there exist nonzero vectors $\bz_j$ such that $(1-t_j)\bz_j^\top h_\mu(\bx) +t_j\bz_j^\top h_\nu(\bx) \equiv 0$, and which must be linearly dependent. Without loss of generality, let $\{\bz_j\}_{j=0}^\ell$ be a minimally dependent subset of $\{\bz_j\}_{j=0}^k$ so that $\sum_{j=0}^\ell b_j\bz_j=0$ for constants $b_j$ not all zero. Suppose $b_0\neq 0$. Then the equality
\begin{equation*}
0\equiv\sum_{j=0}^\ell b_j\bz_j^\top h_\mu(\bx) +\frac{t_jb_j}{1-t_j}\bz_j^\top h_\nu(\bx) = \bigg(\sum_{j=0}^\ell \frac{t_jb_j}{1-t_j}\bz_j^\top\bigg) h_\nu(\bx)
\end{equation*}
implies that
\begin{equation*}
\sum_{j=0}^{\ell-1}\bigg(\frac{t_j}{1-t_j}-\frac{t_\ell}{1-t_\ell}\bigg)b_j\bz_j = \sum_{j=0}^\ell \frac{t_jb_j}{1-t_j}\bz_j - \frac{t_\ell}{1-t_\ell} \sum_{j=0}^\ell b_j\bz_j = 0,
\end{equation*}
which contradicts the minimality of $\{\bz_j\}_{j=0}^\ell$ since the coefficient of $\bz_0$ is nonzero. This proves the first claim. Denseness of $\PP_2^+(\Theta)$ immediately follows: for any $\mu\in\PP_2(\Theta)$, all but finitely many mixture distributions $(1-t)\mu+t\mu^\circ$ lie in $\PP_2^+(\Theta)$, so there exists a subsequence weakly converging to $\mu$ in $\PP_2^+(\Theta)$.
\end{proof}

\begin{lemma}\label{thm:hull}
Any element $\bR\in\BB_1(k)$ can be expressed as a convex combination of finitely many elements $\bR_1,\cdots,\bR_m$ of $\mathcal{O}(k)$.  In particular, the pushforward can be defined for any $\bR\in\BB_1(k)$.
\end{lemma}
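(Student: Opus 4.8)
The plan is to reduce to the diagonal case via the singular value decomposition and then use that $\mathcal{O}(k)$ is closed under left and right multiplication by orthogonal matrices. First I would write $\bR = \bU\bD\bV^\top$ with $\bU,\bV\in\mathcal{O}(k)$ and $\bD = \diag(\sigma_1,\dots,\sigma_k)$, where $\sigma_1,\dots,\sigma_k\geq 0$ are the singular values of $\bR$; since $\norm{\bR}\leq 1$ each $\sigma_i$ lies in $[0,1]\subseteq[-1,1]$.

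Next I would observe that the hypercube $[-1,1]^k$ is the convex hull of its $2^k$ vertices $\{-1,1\}^k$, so there exist nonnegative weights $\{\alpha_s\}_{s\in\{-1,1\}^k}$ with $\sum_s\alpha_s = 1$ and $(\sigma_1,\dots,\sigma_k) = \sum_s \alpha_s s$, hence $\bD = \sum_s \alpha_s\diag(s)$. Each sign matrix $\diag(s)$ is orthogonal, so setting $\bR_s := \bU\diag(s)\bV^\top\in\mathcal{O}(k)$ yields $\bR = \bU\bD\bV^\top = \sum_s \alpha_s\bR_s$, a convex combination of at most $2^k$ elements of $\mathcal{O}(k)$ (Carathéodory's theorem would trim this to $k^2+1$ terms, but the exact count is immaterial). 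For the second assertion, given such a decomposition $\bR=\sum_j\alpha_j\bR_j$ with $\bR_j\in\mathcal{O}(k)$ one simply sets $\bR\sharp\mu := \sum_j \alpha_j\,\bR_j\sharp\mu$ as in Section \ref{sec:nospur}; note $h_{\bR\sharp\mu}(\bx) = \sum_j\alpha_j\bR_j h_\mu(\bx) = \bR h_\mu(\bx)$, so the quantity of interest $h_{\bR\sharp\mu}$ depends only on $\bR$ and not on the chosen decomposition.

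There is essentially no hard step here; the only point that needs care is the passage from $\bR$ to the diagonal matrix $\bD$, which is valid precisely because bi-invariance of $\mathcal{O}(k)$ transports a convex decomposition of $\bD$ into one of $\bR$. The cube-vertex fact used in the second step is standard, but if a self-contained argument is preferred, one can instead decompose each coordinate pair using a planar rotation of angle $\pm\arccos\sigma_i$ inside a suitable two-dimensional block; the vertex argument is cleaner and suffices.
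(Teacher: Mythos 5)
Your proof is correct and follows essentially the same route as the paper: singular value decomposition, then expressing the diagonal factor as a convex combination of $\pm 1$ sign matrices (the paper's $\bD_1,\dots,\bD_{2^k}$), and defining the pushforward by linearity over the decomposition. Your added remark that $h_{\bR\sharp\mu}$ is independent of the chosen decomposition is a small but welcome clarification that the paper leaves implicit.
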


\begin{proof}
Denote the singular value decomposition of $\bR$ as $\bU\bD\bV^\top$ and denote by $\bD_1,\cdots,\bD_{2^k}$ all diagonal matrices with every diagonal element equal to $\pm 1$. Since every diagonal element of $\bD$ has absolute value at most $1$, $\bD$ is contained in the convex hull of $\bD_1,\cdots,\bD_{2^k}$ and hence $\bR$ can be written a convex combination of $\bU\bD_1\bV^\top,\cdots,\bU\bD_{2^k}\bV^\top\in\mathcal{O}(k)$.

Furthermore, writing $\bR=\sum_{j=1}^m \alpha_j\bR_j$ for $\alpha_j\in (0,1)$, $\sum_{j=1}^m \alpha_j = 1$ we may define for all $\mu\in\PP_2(\Theta)$ the pushforward measure $\bR\sharp\mu:= \sum_{j=1}^m \alpha_j\bR_j\sharp\mu$ so that
\begin{equation*}
    h_{\bR\sharp\mu}(\bx)=\int_{\Theta} h_\theta(\bx)\rd\bR\sharp\mu(\theta) = \sum_{j=1}^m\alpha_j \int_{\Theta} h_\theta(\bx)\rd\bR_j\sharp\mu(\theta) = \sum_{j=1}^m\alpha_j \int_{\Theta} \bR_j h_\theta(\bx)\rd\mu(\theta) = \bR h_\mu(\bx).
\end{equation*}
We remark that simply defining $\bR\sharp\mu$ as the pushforward along the map $\bR:(\ba,\bw)\mapsto (\bR\ba,\bw)$ would not preserve the bounded density condition (Assumption \ref{ass:pi}) for pushforwards of $\mu^\circ$.
\end{proof}

\paragraph{Proof of Lemma \ref{thm:globalminima}.} It is straightforward to check that
\begin{equation*}
\LL(\bR\sharp\mu^\circ) = \frac{1}{2} \EEbig{\bx}{\norm{h_{\mu^\circ}(\bx) - (\bSig_{\mu^\circ,\mu^\circ}\bR^\top)(\bR\bSig_{\mu^\circ,\mu^\circ}\bR^\top)^{-1}\bR h_{\mu^\circ}(\bx)}^2}=0.
\end{equation*}
Conversely, $\LL(\mu)=0$ implies that $h_{\mu^\circ}(\bx)= \bSig_{\mu^\circ,\mu}\bSig_{\mu,\mu}^{-1} h_\mu(\bx)$ a.e. Since $\bx\mapsto h_\mu(\bx)$ is always continuous, equality holds for all $\bx\in\XX$. Finally, $\bSig_{\mu^\circ,\mu}\bSig_{\mu,\mu}^{-1}$ cannot be singular since the image of $h_{\mu^\circ}$ is not constrained on a lower-dimensional subspace by Assumption \ref{ass:features}. \qed

\subsection{Proof of Theorem \ref{thm:landscape}}\label{app:landscape}

We study the first- and second-order properties of the optimization landscape for the functional $\LL$. Let us denote
\begin{equation*}
\bL_\mu=\frac{1}{2}\EEbig{\bx}{\zeta_{\mu^\circ,\mu}(\bx)\zeta_{\mu^\circ,\mu}(\bx)^\top} = \frac{1}{2}\bSig_{\mu^\circ,\mu^\circ} - \frac{1}{2}\bSig_{\mu^\circ,\mu} \bSig_{\mu,\mu}^{-1} \bSig_{\mu,\mu^\circ}
\end{equation*}
so that $\bL_\mu$ is positive semi-definite and $\tr\bL_\mu=\LL(\mu)$. Let $\bR\in\BB_1(k)$ and $\bar{\mu}_s=(1-s)\mu +s\bR\sharp\mu^\circ$ for $s\in[0,1]$. By linearity of the mean-field mapping $\mu\mapsto h_\mu$,
\begin{align*}
&\frac{\rd}{\rd s}h_{\bar{\mu}_s}(\bx) = \bR h_{\mu^\circ}(\bx)-h_\mu(\bx),\quad \frac{\rd}{\rd s}\bSig_{\mu^\circ,\bar{\mu}_s} = \bSig_{\mu^\circ,\mu^\circ} \bR^\top- \bSig_{\mu^\circ,\mu},\\
&\frac{\rd}{\rd s}\bSig_{\bar{\mu}_s,\bar{\mu}_s} = 2s \bSig_{\mu^\circ,\mu^\circ} +(1-2s)(\bR\bSig_{\mu^\circ,\mu} + \bSig_{\mu,\mu^\circ}\bR^\top) -2(1-s)\bSig_{\mu,\mu}.
\end{align*}
Then the time derivative of $\LL(\bar{\mu}_s)$ for $s\in [0,1]$ is obtained as
\begin{align*}
    \frac{\rd}{\rd s}\LL(\bar{\mu}_s) &=-\EEbig{\bx}{\zeta_{\mu^\circ,\bar{\mu}_s}(\bx)^\top\frac{\rd}{\rd s}\left(\bSig_{\mu^\circ,\bar{\mu}_s}\bSig_{\bar{\mu}_s, \bar{\mu}_s}^{-1} h_{\bar{\mu}_s}(\bx)\right)}\\
    &=-\EEbig{\bx}{\zeta_{\mu^\circ,\bar{\mu}_s}(\bx)^\top\bSig_{\mu^\circ,\bar{\mu}_s}\bSig_{\bar{\mu}_s, \bar{\mu}_s}^{-1}(\bR h_{\mu^\circ}(\bx)-h_\mu(\bx))},
\end{align*}
where we have used that
\begin{equation*}
\EEbig{\bx}{h_{\bar{\mu}_s}(\bx)\zeta_{\mu^\circ, \bar{\mu}_s}(\bx)^\top} = \EEbig{\bx}{h_{\bar{\mu}_s}(\bx)(h_{\mu^\circ}(\bx)^\top-h_{\bar{\mu}_s}(\bx)^\top \bSig_{\bar{\mu}_s,\bar{\mu}_s}^{-1}\bSig_{\bar{\mu}_s,\mu^\circ})} = 0.
\end{equation*}
In particular, the derivative at $s=0$ is equal to 
\begin{align*}
\frac{\rd}{\rd s}\bigg\vert_{s=0}\LL(\bar{\mu}_s) &= -\EEbig{\bx}{\zeta_{\mu^\circ,\mu}(\bx)^\top\bSig_{\mu^\circ,\mu}\bSig_{\mu,\mu}^{-1}(\bR h_{\mu^\circ}(\bx)-h_\mu(\bx))}\\
&= -\EEbig{\bx}{\zeta_{\mu^\circ,\mu}(\bx)^\top\bSig_{\mu^\circ,\mu}\bSig_{\mu,\mu}^{-1} \bR\zeta_{\mu^\circ,\mu}(\bx)}\\
&= -2\tr\left(\bR\bL_\mu \bSig_{\mu^\circ,\mu}\bSig_{\mu,\mu}^{-1}\right).
\end{align*}
We may choose the pushforward $\bR$ so that this quantity is minimized over $\bR\in\BB_1(k)$. Via duality of the spectral and nuclear norms, this yields
\begin{equation}\label{eqn:firstderiv}
\frac{\rd}{\rd s}\bigg\vert_{s=0}\LL(\bar{\mu}_s) =\min_{\norm{\bR}\leq 1} -2\tr\left(\bR\bL_\mu \bSig_{\mu^\circ,\mu}\bSig_{\mu,\mu}^{-1}\right) = -2\,\norm{\bL_\mu \bSig_{\mu^\circ,\mu}\bSig_{\mu,\mu}^{-1}}_* \leq 0,
\end{equation}
proving the first claim.

Now if the above first order analysis does not yield a direction of improvement (strict decrease) for $\LL$, it must be the case that $\bL_\mu \bSig_{\mu^\circ,\mu}\bSig_{\mu,\mu}^{-1}=0$. If $\mu$ is not a global minimum then $\bL_\mu\neq 0$ and hence $\rank \bSig_{\mu^\circ,\mu}\bSig_{\mu,\mu}^{-1} <k$, so that the linear regression predictions $\bSig_{\mu^\circ,\mu}\bSig_{\mu,\mu}^{-1}h_\mu(\bx)$ are contained in a lower-dimensional subspace $\{\bz\}^\perp$ for some $\bz\in\mathbb{S}^{k-1}$. This further implies that
\begin{equation*}
\LL(\mu) \geq \frac{1}{2}\EEbig{\bx}{(\bz^\top h_{\mu^\circ}(\bx))^2} = \frac{1}{2}\bz^\top\bSig_{\mu^\circ,\mu^\circ}\bz \geq \frac{1}{2}\underline{r},
\end{equation*}
confirming the critical point lower bound.

We proceed to analyze the second-order stability of critical points. The second derivative along any pushforward $\bR\in\BB_1(k)$ is computed as
\begin{align*}
\frac{\rd^2}{\rd s^2}\bigg\vert_{s=0}\LL(\bar{\mu}_s) &= - \frac{\rd}{\rd s}\bigg\vert_{s=0}\EEbig{\bx}{\zeta_{\mu^\circ,\bar{\mu}_s}(\bx)^\top\bSig_{\mu^\circ,\bar{\mu}_s}\bSig_{\bar{\mu}_s, \bar{\mu}_s}^{-1} (\bR h_{\mu^\circ}(\bx)-h_\mu(\bx))}\\
&= \EEbig{\bx}{\frac{\rd}{\rd s}\bigg\vert_{s=0} \left(\bSig_{\mu^\circ,\bar{\mu}_s}\bSig_{\bar{\mu}_s, \bar{\mu}_s}^{-1}h_{\bar{\mu}_s}(\bx)\right)^\top \bSig_{\mu^\circ,\bar{\mu}_s}\bSig_{\bar{\mu}_s, \bar{\mu}_s}^{-1} (\bR h_{\mu^\circ}(\bx)-h_\mu(\bx))}\\
&\qquad - \EEbig{\bx}{\zeta_{\mu^\circ,\bar{\mu}_s}(\bx)^\top \frac{\rd}{\rd s}\bigg\vert_{s=0} \bSig_{\mu^\circ,\bar{\mu}_s}\bSig_{\bar{\mu}_s, \bar{\mu}_s}^{-1} (\bR h_{\mu^\circ}(\bx)-h_\mu(\bx))}.
\end{align*}
The first term can be expanded as
\begin{align*}
&\mathbb{E}_{\bx}\Big[(\bR h_{\mu^\circ}(\bx)-h_\mu(\bx))^\top\bSig_{\mu,\mu}^{-1}\bSig_{\mu,\mu^\circ}\bSig_{\mu^\circ,\mu} \bSig_{\mu,\mu}^{-1}(\bR h_{\mu^\circ}(\bx)-h_\mu(\bx))\\
&\qquad - h_\mu(\bx)^\top\bSig_{\mu,\mu}^{-1} (\bR\bSig_{\mu^\circ,\mu} + \bSig_{\mu,\mu^\circ}\bR^\top -2\bSig_{\mu,\mu}) \bSig_{\mu,\mu}^{-1}\bSig_{\mu,\mu^\circ}\bSig_{\mu^\circ,\mu} \bSig_{\mu,\mu}^{-1}(\bR h_{\mu^\circ}(\bx)-h_\mu(\bx))\\
&\qquad + h_\mu(\bx)^\top\bSig_{\mu,\mu}^{-1}(\bR\bSig_{\mu^\circ,\mu^\circ}-\bSig_{\mu,\mu^\circ}) \bSig_{\mu^\circ,\mu} \bSig_{\mu,\mu}^{-1}(\bR h_{\mu^\circ}(\bx)-h_\mu(\bx))\Big]\\
&= \mathbb{E}_{\bx}\Big[(\bR h_{\mu^\circ}(\bx)+h_\mu(\bx))^\top\bSig_{\mu,\mu}^{-1}\bSig_{\mu,\mu^\circ}\bSig_{\mu^\circ,\mu} \bSig_{\mu,\mu}^{-1}(\bR h_{\mu^\circ}(\bx)-h_\mu(\bx))\\
&\qquad - h_\mu(\bx)^\top\bSig_{\mu,\mu}^{-1} (\bR\bSig_{\mu^\circ,\mu}\bSig_{\mu,\mu}^{-1}\bSig_{\mu,\mu^\circ} + \bSig_{\mu,\mu^\circ}\bR^\top\bSig_{\mu,\mu}^{-1}\bSig_{\mu,\mu^\circ}) \bSig_{\mu^\circ,\mu} \bSig_{\mu,\mu}^{-1}(\bR h_{\mu^\circ}(\bx)-h_\mu(\bx))\\
&\qquad + h_\mu(\bx)^\top\bSig_{\mu,\mu}^{-1}(\bR\bSig_{\mu^\circ,\mu^\circ}-\bSig_{\mu,\mu^\circ}) \bSig_{\mu^\circ,\mu} \bSig_{\mu,\mu}^{-1}(\bR h_{\mu^\circ}(\bx)-h_\mu(\bx))\Big]\\
&= \tr\left(\bSig_{\mu,\mu}^{-1}\bSig_{\mu,\mu^\circ}\bSig_{\mu^\circ,\mu} \bSig_{\mu,\mu}^{-1}\bR \bSig_{\mu^\circ,\mu^\circ}\bR^\top\right) - \tr\left(\bSig_{\mu^\circ,\mu} \bSig_{\mu,\mu}^{-1}\bSig_{\mu,\mu^\circ}\right)\\
&\qquad + \tr\left((\bR\bSig_{\mu^\circ,\mu^\circ} - \bR\bSig_{\mu^\circ,\mu}\bSig_{\mu,\mu}^{-1}\bSig_{\mu,\mu^\circ} -\bSig_{\mu,\mu^\circ} - \bSig_{\mu,\mu^\circ}\bR^\top\bSig_{\mu,\mu}^{-1}\bSig_{\mu,\mu^\circ}) \bSig_{\mu^\circ,\mu} \bSig_{\mu,\mu}^{-1}(\bR\bSig_{\mu^\circ,\mu}\bSig_{\mu,\mu}^{-1}-\bI_k)\right)\\
& = \tr\left(\bSig_{\mu,\mu}^{-1}\bSig_{\mu,\mu^\circ}\bSig_{\mu^\circ,\mu} \bSig_{\mu,\mu}^{-1}\bR \bSig_{\mu^\circ,\mu^\circ}\bR^\top\right) -\tr\bSig_{\mu^\circ,\mu^\circ} +2\tr\bL_\mu\\
&\qquad +2\tr\left(\bR\bL_\mu\bSig_{\mu^\circ,\mu} \bSig_{\mu,\mu}^{-1}(\bR\bSig_{\mu^\circ,\mu}\bSig_{\mu,\mu}^{-1}-\bI_k)\right) -\tr\left((\bSig_{\mu^\circ,\mu^\circ} -2\bL_\mu)(\bR^\top\bSig_{\mu,\mu}^{-1}\bSig_{\mu,\mu^\circ}+\bI_k) (\bSig_{\mu^\circ,\mu} \bSig_{\mu,\mu}^{-1}\bR-\bI_k)\right)\\
&= 2\tr\left(\bL_\mu(\bSig_{\mu^\circ,\mu} \bSig_{\mu,\mu}^{-1}\bR+\bR^\top \bSig_{\mu,\mu}^{-1}\bSig_{\mu,\mu^\circ}-\bI_k)\bSig_{\mu^\circ,\mu} \bSig_{\mu,\mu}^{-1}\bR\right),
\end{align*}
where we have taken advantage of the symmetry of $\bL_\mu$ to cancel out various terms. The second term can be expanded as
\begin{align*}
&\mathbb{E}_{\bx}\Big[- \zeta_{\mu^\circ,\mu}(\bx)^\top (\bSig_{\mu^\circ,\mu^\circ}\bR^\top-\bSig_{\mu^\circ,\mu})\bSig_{\mu,\mu}^{-1}(\bR h_{\mu^\circ}(\bx)-h_\mu(\bx))\\
&\qquad+ \zeta_{\mu^\circ,\mu}(\bx)^\top\bSig_{\mu^\circ,\mu}\bSig_{\mu,\mu}^{-1} (\bR\bSig_{\mu^\circ,\mu} + \bSig_{\mu,\mu^\circ}\bR^\top -2\bSig_{\mu,\mu})\bSig_{\mu,\mu}^{-1} (\bR h_{\mu^\circ}(\bx)-h_\mu(\bx))\Big]\\
&= 2\tr\left(\bL_\mu(-\bSig_{\mu^\circ,\mu^\circ} \bR^\top-\bSig_{\mu^\circ,\mu} + \bSig_{\mu^\circ,\mu}\bSig_{\mu,\mu}^{-1}\bR\bSig_{\mu^\circ,\mu} + \bSig_{\mu^\circ,\mu}\bSig_{\mu,\mu}^{-1}\bSig_{\mu,\mu^\circ}\bR^\top) \bSig_{\mu,\mu}^{-1}\bR\right)\\
&= -4\tr\left(\bL_\mu^2\bR^\top\bSig_{\mu,\mu}^{-1}\bR\right) +2\tr\left(\bL_\mu(\bSig_{\mu^\circ,\mu} \bSig_{\mu,\mu}^{-1}\bR-\bI_k)\bSig_{\mu^\circ,\mu} \bSig_{\mu,\mu}^{-1}\bR\right).
\end{align*}
Combining the above, we obtain
\begin{equation}\label{eqn:secondderiv}
\frac{\rd^2}{\rd s^2}\bigg\vert_{s=0}\LL(\bar{\mu}_s) = -4\tr\left(\bL_\mu^2\bR^\top\bSig_{\mu,\mu}^{-1}\bR\right) + 2\tr\left(\bL_\mu(2\bSig_{\mu^\circ,\mu} \bSig_{\mu,\mu}^{-1}\bR+\bR^\top \bSig_{\mu,\mu}^{-1}\bSig_{\mu,\mu^\circ}-2\bI_k)\bSig_{\mu^\circ,\mu} \bSig_{\mu,\mu}^{-1}\bR\right).
\end{equation}
When $\bL_\mu \bSig_{\mu^\circ,\mu}\bSig_{\mu,\mu}^{-1}=0$, we may
take $\bR\in\mathcal{O}(k)$ such that $\bSig_{\mu^\circ,\mu} \bSig_{\mu,\mu}^{-1}\bR$ is symmetric, i.e. $\bR=\bV\bU^\top$ where $\bU\bD\bV^\top$ is the singular value decomposition of $\bSig_{\mu^\circ,\mu}\bSig_{\mu,\mu}^{-1}$. Then the second trace term vanishes since $\bSig_{\mu^\circ,\mu} \bSig_{\mu,\mu}^{-1}\bR\bL_\mu = (\bL_\mu^\top\bSig_{\mu^\circ,\mu} \bSig_{\mu,\mu}^{-1}\bR)^\top = (\bL_\mu \bSig_{\mu^\circ,\mu} \bSig_{\mu,\mu}^{-1}\bR)^\top = 0$ and we have that
\begin{equation*}
\frac{\rd^2}{\rd s^2}\bigg\vert_{s=0}\LL(\bar{\mu}_s) = -4\tr\left(\bL_\mu^2\bR^\top\bSig_{\mu,\mu}^{-1}\bR\right) \leq -\frac{4}{R_1^2}\tr\bL_\mu^2 \leq -\frac{4}{kR_1^2}\LL(\mu)^2,
\end{equation*}
which moreover implies the constant bound $\frac{\rd^2}{\rd s^2}\big|_{s=0}\LL(\bar{\mu}_s)\leq -\frac{\underline{r}^2}{kR_1^2}$. This concludes the second claim. \qed

\subsection{Proof of Proposition \ref{thm:accel}}\label{app:accel}

Observe that the term $\bL_\mu \bSig_{\mu^\circ,\mu}\bSig_{\mu,\mu}^{-1}$ lower bounding the first order decrease of $\LL$ in the proof of Theorem \ref{thm:landscape} also appears in the expansion
\begin{equation*}
\bL_\mu^2 = \frac{1}{2}\bL_\mu \bSig_{\mu^\circ,\mu^\circ} - \frac{1}{2}\bL_\mu \bSig_{\mu^\circ,\mu}\bSig_{\mu,\mu}^{-1}\bSig_{\mu,\mu^\circ}.
\end{equation*}
Supposing $\norm{\bL_\mu \bSig_{\mu^\circ,\mu}\bSig_{\mu,\mu}^{-1}}_*< \frac{\delta}{2}$ then allows us to construct the following inequality,
\begin{align*}
\LL(\mu)^2 &= (\tr\bL_\mu)^2 \geq \tr\bL_\mu^2\\
&= \frac{1}{2}\tr\left(\bL_\mu\bSig_{\mu^\circ,\mu^\circ}\right) - \frac{1}{2} \tr\left(\bL_\mu \bSig_{\mu^\circ,\mu}\bSig_{\mu,\mu}^{-1}\bSig_{\mu,\mu^\circ}\right)\\
&\geq \frac{\underline{r}}{2}\LL(\mu) - \frac{1}{2}\norm{\bL_\mu \bSig_{\mu^\circ,\mu}\bSig_{\mu,\mu}^{-1}}_*\norm{\bSig_{\mu,\mu^\circ}}\\
&> \frac{\underline{r}}{2}\LL(\mu) - \frac{R_1^2\delta}{4},
\end{align*}
which implies either $4\LL(\mu) < \underline{r} -\sqrt{\underline{r}^2-4R_1^2\delta}$ or $4\LL(\mu) > \underline{r} + \sqrt{\underline{r}^2-4R_1^2\delta}$. The bounds are non-vacuous only when $\delta\leq\frac{\underline{r}^2}{4R_1^2}$ and are strictly tighter for larger $\delta$. Taking the contrapositive yields the desired statement. \qed

\subsection{Finite Prompt and Task Length}\label{app:fin}

In this subsection, we give a brief indication as to how to incorporate finite prompt length into our setting. That is, instead of \eqref{eqn:regloss} we consider the $n$-sample in-context prediction loss
\begin{equation*}
\LL_\textup{TF}^n(\mu,\bW)= \frac{1}{2}\EEbig{\bx_1,\cdots,\bx_n,\bx_{\qq}}{\Norm{f^\circ(\bx_{\qq}) - \frac{1}{n}\sum_{i=1}^n f^\circ(\bx_i) h_\mu(\bx_i)^\top\bW h_\mu(\bx_{\qq})}^2}.
\end{equation*}
By treating the sampling process as stochastic noise, we can bound the perturbation magnitude using concentration as follows (ignoring constants).
\begin{align*}
&\abs{\LL_\textup{TF}^n(\mu,\bW) - \LL_\textup{TF}(\mu,\bW)} \\
&\leq \frac{1}{2}\EEbig{\bx_1,\cdots,\bx_n,\bx_{\qq}}{\Bigg\vert\Norm{f^\circ(\bx_{\qq}) - \frac{1}{n}\sum_{i=1}^n f^\circ(\bx_i) h_\mu(\bx_i)^\top\bW h_\mu(\bx_{\qq})}^2 - \Norm{f^\circ(\bx_{\qq}) -\EE{\bx}{f^\circ(\bx) h_\mu(\bx)^\top}\bW h_\mu(\bx_{\qq})}^2 \Bigg\vert}\\
&\lesssim \EEbig{\bx_1,\cdots,\bx_n,\bx_{\qq}}{\Norm{\left(\frac{1}{n}\sum_{i=1}^n f^\circ(\bx_i) h_\mu(\bx_i)^\top -\EE{\bx}{f^\circ(\bx) h_\mu(\bx)^\top}\right)\bW h_\mu(\bx_{\qq})}}\\
&\lesssim \sqrt{\frac{\log k}{n}} \quad\text{for large enough } n,
\end{align*}
by an application of the matrix Bernstein inequality \citep[Section 1.6.3]{Tropp15}. Moreover, the joint objective is still convex in $\bW$ so that $\LL^n (\mu):= \inf_{\bW}\LL_\textup{TF}^n (\mu,\bW)$
also satisfies
\begin{equation}\label{eqn:perturbbound}
\abs{\LL^n (\mu) - \LL(\mu)} \lesssim \sqrt{\frac{\log k}{n}}
\end{equation}
uniformly over all $\mu$ of interest, assuming $\bSig_{\mu,\mu}^{-1}$ is uniformly bounded as in Theorem \ref{thm:naiveescape}. With some more work, the additional stochastic error due to a finite number $T$ of \emph{tasks} may also be bounded with high probability as $O(T^{-1/2})$.

The main issue is that even with this guarantee, the landscape $\LL^n$ may no longer be benign so that our results on mean-field dynamics in the subsequent sections do not directly apply. For an illustration, consider the $\epsilon$-perturbed quadratic function $-x^2 - \epsilon \exp(-(\frac{x}{\epsilon})^2)$ on $\RR$ with a bump at the origin; the point $x=0$ becomes a local minimum for all $\epsilon>0$. Nevertheless, it is easily shown that such attraction basins must still be small with radius at most $O(\sqrt{\epsilon})$. Similarly, combining the strict curvature bound Theorem \ref{thm:landscape}\ref{item:secondorder} and \eqref{eqn:perturbbound} with regularity estimates for $\LL^n$ as in Lemma \ref{thm:funcgrad} yields an $n^{-1/4}$ upper bound for the $\WW_2$ radius of potential local attraction basins. Hence it is plausible that the dynamics still mostly manages to avoid being trapped in local minima.

Finally, we note that complexity bounds for finite task and prompt lengths have been established for the single LSA layer model in \citet{Wu24}. We could also consider noisy data $y_i=f(\bx_i)+\varepsilon_i$, $\varepsilon_i\sim\mathcal{N}(0,\sigma_\varepsilon^2)$, which only leads to $\LL_\textup{TF}$ being shifted by a constant $\frac{1}{2}\sigma_\varepsilon^2$.

\section{Proofs for Section \ref{sec:mfd}}\label{app:measure}

\subsection{Recap: Finite-dimensional Dynamics}\label{app:recap}

To help gain intuition, we draw parallels with the ordinary GF for a $C^2$ nonconvex function $f:\RR^m\to\RR$,
\begin{equation*}
    \rd\bz_t = -\nabla_{\bz} f(\bz_t)\rd t.
\end{equation*}
A strict saddle point $\bz^\dagger$ is defined as a critical point such that $\lambda_\textup{min}(\textup{Hess}_f (\bz^\dagger)) <0$, where $\textup{Hess}_f$ is the local curvature or Hessian matrix of $f$. \citet{Lee19} show that the set of initial values $\bz_0$ for which $\lim_{t\to\infty}\bz_t$ converges to a strict saddle point has measure zero.\footnote{More precisely, this is shown for iterates of discrete gradient descent, but the proof is easily adapted to the continuous-time flow.} If every saddle point of $f$ is strict and all local minima are also global minima, $\bz_t$ converges to global minima for almost all initializations. The result follows easily from the center-stable manifold theorem \citep[Theorem III.7]{Shub13}, which states that all stable local orbits must be contained in a local embedded disk tangent to the stable eigenspace of $\textup{Hess}_f$ at $\bz^\dagger$.

\subsection{Local Geometry of Wasserstein Space}

We present some background theory on the metric geometry of Wasserstein spaces. The following result characterizes absolutely continuous curves in $\PP_2(\Omega)$.

\begin{theorem}[\citet{Ambrosio05}, Theorem 8.3.1 and Proposition 8.4.5]\label{thm:accurves}
Let $I\subset\RR$ be an open interval and $\mu_t:I\to\PP_2(\Omega)$ an absolutely continuous curve with metric derivative $\abs{\mu'}\in L^1(I)$. Then among all Borel vector fields $\bv_t\in L^2(\Omega,\mu_t)$ satisfying the continuity equation $\partial_t\mu_t+\nabla\cdot(\bv_t\mu_t) = 0$, there exists an $L^1(I)$-a.e. unique minimal norm velocity field $(\bv_t)$ such that
\begin{equation*}
\norm{\bv_t}_{L^2(\Omega,\mu_t)}\leq \abs{\mu'}(t).
\end{equation*}
The field $(\bv_t)$ is also uniquely characterized by the condition that $\bv_t$ is $L^1(I)$-a.e. contained in the $L^2(\Omega,\mu_t)$-closure of the subspace $\{\nabla\psi: \psi\in C_c^\infty(\Omega)\}$. Conversely, a narrowly continuous curve given by the continuity equation for some square-integrable Borel velocity field $\bv_t$ with $\norm{\bv_t}_{L^2(\Omega,\mu_t)}\in L^1(I)$ satisfies $\abs{\mu'}(t)\leq \norm{\bv_t}_{L^2(\Omega,\mu_t)}$ a.e.
\end{theorem}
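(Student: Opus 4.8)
I would prove the two halves of the statement separately: the converse inequality $\abs{\mu'}(t)\le\norm{\bv_t}_{L^2(\Omega,\mu_t)}$ is the easy direction and I would dispatch it first, then construct the minimal-norm field from an arbitrary absolutely continuous curve. For the converse, the difficulty is that a Borel field $\bv_t$ with only $L^2(\mu_t)$ control need not generate a flow, so I would regularize in space: mollify the momentum $\bv_t\mu_t$ and the measure $\mu_t$ separately with $\rho_\epsilon$ and let $\bv_t^\epsilon$ be their ratio. The pair $(\mu_t^\epsilon,\bv_t^\epsilon)$ still solves the continuity equation, is smooth in $\theta$, and satisfies $\norm{\bv_t^\epsilon}_{L^2(\mu_t^\epsilon)}\le\norm{\bv_t}_{L^2(\mu_t)}$ by Jensen applied to convolution. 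Now the classical characteristic flow $X_{s,t}^\epsilon$ of $\bv_t^\epsilon$ exists and pushes $\mu_s^\epsilon$ to $\mu_t^\epsilon$, whence $\WW_2(\mu_s^\epsilon,\mu_t^\epsilon)^2\le\int\norm{X_{s,t}^\epsilon-\id_\Omega}^2\rd\mu_s^\epsilon\le(t-s)\int_s^t\norm{\bv_r^\epsilon}_{L^2(\mu_r^\epsilon)}^2\rd r$ by Cauchy--Schwarz in time; sending $\epsilon\to0$ and then applying Lebesgue differentiation gives $\abs{\mu'}(t)\le\norm{\bv_t}_{L^2(\mu_t)}$ a.e.

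\emph{Construction of the minimal field.} For the forward direction I would first observe that for each $\varphi\in C_c^\infty(\Omega)$ the scalar map $t\mapsto\int\varphi\rd\mu_t$ is absolutely continuous, since $\abs{\int\varphi\rd\mu_t-\int\varphi\rd\mu_s}\le\norm{\varphi}_{\Lip}\WW_1(\mu_s,\mu_t)\le\norm{\varphi}_{\Lip}\int_s^t\abs{\mu'}$; write $L_t(\varphi):=\frac{\rd}{\rd t}\int\varphi\rd\mu_t$, defined for a.e. $t$. To bound it, take an optimal plan $\gamma_h$ between $\mu_t$ and $\mu_{t+h}$ and Taylor-expand: $\abs{\int\varphi\rd\mu_{t+h}-\int\varphi\rd\mu_t}\le\int\abs{\varphi(y)-\varphi(x)}\rd\gamma_h\le\norm{\nabla\varphi}_{L^2(\mu_t)}\,\WW_2(\mu_t,\mu_{t+h})+o(h)$ by Cauchy--Schwarz in $\gamma_h$. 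Dividing by $h$ and passing to the limit yields $\abs{L_t(\varphi)}\le\abs{\mu'}(t)\,\norm{\nabla\varphi}_{L^2(\mu_t)}$ at a.e. $t$, so $L_t$ factors through $\nabla\varphi$ and extends to a bounded linear functional on the Hilbert space $V_t:=\overline{\{\nabla\psi:\psi\in C_c^\infty(\Omega)\}}^{L^2(\Omega,\mu_t)}$.

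\emph{Riesz representation and minimality.} By the Riesz theorem there is a unique $\bv_t\in V_t$ with $L_t(\varphi)=\int\nabla\varphi\cdot\bv_t\rd\mu_t$ and $\norm{\bv_t}_{L^2(\mu_t)}\le\abs{\mu'}(t)$; a separability/measurable-selection argument over a countable dense family of test functions makes $t\mapsto\bv_t$ Borel. Unwinding the definition, $\frac{\rd}{\rd t}\int\varphi\rd\mu_t=\int\nabla\varphi\cdot\bv_t\rd\mu_t$ is exactly the weak formulation of $\partial_t\mu_t+\nabla\cdot(\bv_t\mu_t)=0$, so $(\bv_t)$ solves the continuity equation. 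If $\widetilde{\bv}_t$ is any other solution, then $\int\nabla\varphi\cdot(\bv_t-\widetilde{\bv}_t)\rd\mu_t=0$ for all $\varphi$, i.e. $\bv_t-\widetilde{\bv}_t\perp V_t$; since $\bv_t\in V_t$, the Pythagorean identity $\norm{\widetilde{\bv}_t}_{L^2(\mu_t)}^2=\norm{\bv_t}_{L^2(\mu_t)}^2+\norm{\bv_t-\widetilde{\bv}_t}_{L^2(\mu_t)}^2$ shows simultaneously that $\bv_t$ is the a.e.-unique field of minimal norm and that it is the unique solution lying in $V_t$. Combining with the converse bound forces $\norm{\bv_t}_{L^2(\mu_t)}=\abs{\mu'}(t)$ a.e.

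\emph{Main obstacle.} The two places where real care is needed — as opposed to bookkeeping — are (i) the measurability in $t$ of the Riesz selection $t\mapsto\bv_t$ and the joint measurability of the resulting Borel velocity field, which I would handle via countably many test functions and separability of $L^2(\mu_t)$; and (ii) making the regularization in the converse fully rigorous, since one must pass the characteristic-flow bound to the limit simultaneously with the narrow convergence $\mu_t^\epsilon\rightharpoonup\mu_t$ and lower semicontinuity of $\WW_2$, keeping the $L^1(I)$-bound on $\norm{\bv_t^\epsilon}_{L^2(\mu_t^\epsilon)}$ uniform in $\epsilon$. Of these, the measurable-selection step is the one I expect to be the genuinely delicate part of a complete proof.
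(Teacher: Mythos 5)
This is a result the paper imports verbatim from the cited reference (Ambrosio--Gigli--Savar\'e, Theorem 8.3.1 and Proposition 8.4.5) and does not prove itself, so the only comparison available is with the textbook argument — and your sketch reproduces it correctly: space-mollification of the momentum $\bv_t\mu_t$ plus the characteristic flow and lower semicontinuity of $\WW_2$ for the inequality $\abs{\mu'}(t)\leq\norm{\bv_t}_{L^2(\Omega,\mu_t)}$, and the Riesz-representation construction of $\bv_t$ in the closure of $\{\nabla\psi:\psi\in C_c^\infty(\Omega)\}$ with the Pythagorean identity for minimality and uniqueness. The two delicate points you flag (measurable selection over a countable family of test functions, and passing the mollified flow bound to the limit) are exactly the ones the reference handles, so the proposal is sound.
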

This motivates the formal definition of the tangent space \eqref{eqn:tangentspace}. The space can also be retrieved by the following variational principle: a vector field $\bv\in L^2(\Omega,\mu)$ belongs to $\Tan_\mu\PP_2(\Omega)$ if and only if $\norm{\bv+\bw}_{L^2(\Omega,\mu)} \geq \norm{\bv}_{L^2(\Omega,\mu)}$ for all divergence-free fields $\bw\in L^2(\Omega,\mu)$ such that $\nabla\cdot(\bw\mu)=0$. Moreover, for every $\bv\in L^2(\Omega,\mu)$ there exists a unique representative $\Pi\bv\in \Tan_\mu\PP_2(\Omega)$ equivalent to $\bv$ modulo divergence-free fields. Geometrically, this allows us to describe infinitesimal transport along curves $\mu_t$ via their tangent vectors.

\begin{proposition}[\citet{Ambrosio05}, Theorem 8.3.1 and Proposition 8.4.6]\label{thm:expapprox}
Let $\mu_t:I\to\PP_2(\Omega)$ be an absolutely continuous curve with velocity field $\bv_t\in\Tan_{\mu_t}\PP_2(\Omega)$ determined as in Theorem \ref{thm:accurves}. Then for a.e. $t\in I$ we have
\begin{equation*}
\WW_2(\mu_{t+\epsilon},(\id_\Omega +\epsilon\bv_t)\sharp \mu_t) = o(\epsilon).
\end{equation*}
\end{proposition}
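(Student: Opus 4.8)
Since $\WW_2\geq 0$, it suffices to exhibit, for a full-measure set of $t\in I$ and each small $\epsilon>0$, a coupling of $(\id_\Omega+\epsilon\bv_t)\sharp\mu_t$ and $\mu_{t+\epsilon}$ of cost $o(\epsilon^2)$; throughout we may regard all measures as living on $\RR^m$, where the Wasserstein distance is unchanged. Fix $t$ outside a Lebesgue-null set so that: (a) the metric derivative $\abs{\mu'}(t)=\lim_{h\to0}\WW_2(\mu_{t+h},\mu_t)/\abs{h}$ exists; (b) $\norm{\bv_t}_{L^2(\Omega,\mu_t)}=\abs{\mu'}(t)$, which holds a.e.\ by the refinement in Theorem \ref{thm:accurves}; and (c) $t$ is a Lebesgue point of each of the maps $s\mapsto\int_\Omega\langle\nabla\psi,\bv_s\rangle\rd\mu_s$ (which lie in $L^1_{\mathrm{loc}}(I)$, being dominated by $\norm{\nabla\psi}_\infty\abs{\mu'}$) as $\psi$ ranges over a countable $C_c^\infty(\Omega)$-dense family, a countable intersection of conull sets. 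For each $\epsilon$ pick an optimal plan $\gamma_\epsilon$ for $\WW_2(\mu_t,\mu_{t+\epsilon})$; to keep the exposition light, assume it is induced by a map $T_\epsilon$ (the general case being recovered using the barycentric projection $x\mapsto\int y\,\rd\gamma_{\epsilon,x}(y)$ of the disintegration $\gamma_\epsilon=\mu_t\otimes\gamma_{\epsilon,\cdot}$). Put $\tilde{\bv}_\epsilon:=\epsilon^{-1}(T_\epsilon-\id_\Omega)$, so that $\norm{\tilde{\bv}_\epsilon}_{L^2(\Omega,\mu_t)}=\WW_2(\mu_t,\mu_{t+\epsilon})/\epsilon\to\abs{\mu'}(t)=\norm{\bv_t}_{L^2(\Omega,\mu_t)}$; in particular $(\tilde{\bv}_\epsilon)_\epsilon$ is bounded in $L^2(\Omega,\mu_t;\RR^m)$.

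The crux is to show $\tilde{\bv}_\epsilon\to\bv_t$ strongly in $L^2(\Omega,\mu_t;\RR^m)$. For weak convergence I would test the continuity equation: for $\psi\in C_c^\infty(\Omega)$, absolute continuity of $s\mapsto\int\psi\rd\mu_s$ yields $\int\psi\rd\mu_{t+\epsilon}-\int\psi\rd\mu_t=\int_t^{t+\epsilon}\!\int\langle\nabla\psi,\bv_s\rangle\rd\mu_s\,\rd s$, and dividing by $\epsilon$ the right side converges at our $t$ to $\int\langle\nabla\psi,\bv_t\rangle\rd\mu_t$ by (c). On the left, a second-order Taylor expansion gives $\int\psi\rd\mu_{t+\epsilon}=\int\psi\circ T_\epsilon\,\rd\mu_t=\int\psi\rd\mu_t+\epsilon\int\langle\nabla\psi,\tilde{\bv}_\epsilon\rangle\rd\mu_t+O\!\big(\norm{\nabla^2\psi}_\infty\WW_2^2(\mu_t,\mu_{t+\epsilon})\big)$, and the error is $O(\epsilon^2)$; hence $\int\langle\nabla\psi,\tilde{\bv}_\epsilon\rangle\rd\mu_t\to\int\langle\nabla\psi,\bv_t\rangle\rd\mu_t$ for every $\psi$. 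Decomposing $\tilde{\bv}_\epsilon=\Pi\tilde{\bv}_\epsilon+\tilde{\bv}_\epsilon^\perp$ orthogonally with respect to $\Tan_{\mu_t}\PP_2(\Omega)=\overline{\{\nabla\psi:\psi\in C_c^\infty(\Omega)\}}$ and its complement of divergence-free fields, these pairings only see $\Pi\tilde{\bv}_\epsilon$, which is bounded; any weak subsequential limit $\bar{\bv}$ belongs to the weakly closed subspace $\Tan_{\mu_t}\PP_2(\Omega)$ and satisfies $\langle\bar{\bv}-\bv_t,\nabla\psi\rangle_{L^2(\mu_t)}=0$ for all $\psi$, so $\bar{\bv}=\bv_t$ (using $\bv_t\in\Tan_{\mu_t}\PP_2(\Omega)$). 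Thus $\Pi\tilde{\bv}_\epsilon\rightharpoonup\bv_t$, and combining weak lower semicontinuity of the norm with $\limsup\norm{\tilde{\bv}_\epsilon}_{L^2(\mu_t)}^2\leq\norm{\bv_t}_{L^2(\mu_t)}^2$ forces $\tilde{\bv}_\epsilon^\perp\to0$ and $\Pi\tilde{\bv}_\epsilon\to\bv_t$ in norm, i.e.\ $\tilde{\bv}_\epsilon\to\bv_t$ strongly.

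To finish, let $\pi_\epsilon$ be the pushforward of $\gamma_\epsilon$ under $(x,y)\mapsto(x+\epsilon\bv_t(x),y)$; this is a coupling of $(\id_\Omega+\epsilon\bv_t)\sharp\mu_t$ and $\mu_{t+\epsilon}$, so
\begin{equation*}
\WW_2^2\big(\mu_{t+\epsilon},(\id_\Omega+\epsilon\bv_t)\sharp\mu_t\big)\;\leq\;\int\norm{x+\epsilon\bv_t(x)-T_\epsilon(x)}^2\rd\mu_t(x)\;=\;\epsilon^2\,\norm{\bv_t-\tilde{\bv}_\epsilon}_{L^2(\Omega,\mu_t)}^2\;=\;o(\epsilon^2),
\end{equation*}
using $T_\epsilon-\id_\Omega=\epsilon\tilde{\bv}_\epsilon$ and the strong convergence of the previous step. (In the general plan case one instead expands $\int\norm{x+\epsilon\bv_t(x)-y}^2\rd\gamma_\epsilon=\WW_2^2(\mu_t,\mu_{t+\epsilon})-2\epsilon^2\!\int\langle\tilde{\bv}_\epsilon,\bv_t\rangle\rd\mu_t+\epsilon^2\norm{\bv_t}_{L^2(\mu_t)}^2$ and lets $\epsilon\to0$, the three terms cancelling to leading order.) This is precisely the asserted $o(\epsilon)$ bound on $\WW_2$.

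I expect the middle step to be the main obstacle: pinning down the secant direction $\tilde{\bv}_\epsilon$ as the tangent field $\bv_t$, rather than merely a bounded family. This hinges on the characterization of $\Tan_{\mu_t}\PP_2(\Omega)$ as the $L^2(\mu_t)$-orthocomplement of the divergence-free fields, so that weak limits are identified by testing against smooth gradients alone; the handling of singular base measures (barycentric projections in place of transport maps, and verifying the resulting fields have the correct tangential part) and the bookkeeping needed to impose the countably many ``a.e.\ $t$'' conditions simultaneously are the technical points requiring care.
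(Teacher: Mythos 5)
The paper does not actually prove this statement---it is quoted directly from \citet{Ambrosio05} (Theorem 8.3.1 and Proposition 8.4.6)---and your reconstruction follows the same route as that source: rescale the optimal secant plans, identify the weak $L^2(\mu_t)$-limit of their (barycentric) velocities via the continuity equation at Lebesgue points together with the characterization of $\Tan_{\mu_t}\PP_2(\Omega)$ as the orthocomplement of divergence-free fields, upgrade to strong convergence using $\WW_2(\mu_t,\mu_{t+\epsilon})/\epsilon\to\abs{\mu'}(t)=\norm{\bv_t}_{L^2(\Omega,\mu_t)}$, and read off the coupling cost. The argument is correct, including the parenthetical general-plan version (which is the one that actually matters, since optimal maps need not exist for general $\mu_t$): there the cross term only sees the barycentric projection, so weak tangential convergence plus the metric-derivative identity already make the three leading terms cancel as you indicate.
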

In light of Proposition \ref{thm:expapprox}, the tangent space can alternatively be defined using optimal transport plans. Denote by $\Gamma_o(\mu,\nu) \subset \PP_2(\Omega\times\Omega)$ the set of optimal transport plans from $\mu$ to $\nu$ with cost function the 2-norm and let
\begin{equation}\label{eqn:tanot}
\Tan_\mu\PP_2(\Omega)=\overline{\{\lambda(\br-\id_\Omega): (\id_\Omega\times\br) \sharp\mu \in \Gamma_o(\mu,\br\sharp\mu), \;\lambda>0\}}^{L^2(\Omega,\mu)};
\end{equation}
this construction is equivalent to \eqref{eqn:tangentspace} \citep[Theorem 8.5.1]{Ambrosio05}.


\subsection{Stability of Wasserstein Gradient Flow}
We now proceed with the proofs.

\paragraph{Proof of Lemma \ref{thm:localevo}.} Let $\mu^\dagger$ be a critical point of $F$, that is $\deltaF(\mu^\dagger) =0$. From the description \eqref{eqn:tanot} for the tangent space at $\mu^\dagger$, we write a local WGF $(\mu_t)$ as $\mu_t=(\id_\Omega+\epsilon\bv_t)\sharp\mu^\dagger$ for a velocity field $\bv_t\in\Tan_{\mu^\dagger}\PP_2(\Omega)$. The evolution of $\bv_t$ is derived as follows: for any smooth integrable function $g:\Omega\to\RR$, the identity $\int g\rd\mu_t = \int g\circ(\id_\Omega+\epsilon\bv_t) \rd\mu^\dagger$ implies that
\begin{align*}
\int \nabla g\cdot \nabla\deltaF(\mu_t)\rd\mu_t = -\int g\rd(\partial_t\mu_t) = -\epsilon\int \nabla g\circ(\id_\Omega+\epsilon\bv_t) \cdot\partial_t\bv_t \rd\mu^\dagger = -\epsilon\int \nabla g \cdot\partial_t\bv_t\circ(\id_\Omega+\epsilon\bv_t)^{-1} \rd\mu_t,
\end{align*}
and hence $\partial_t\bv_t = -\epsilon^{-1}\nabla\deltaF(\mu_t)\circ (\id_\Omega+\epsilon\bv_t)$. On the other hand, by Proposition \ref{thm:expapprox} we can locally approximate the pushforward displacement by the absolutely continuous curve defined by $\partial_s\tilde{\mu}_s+\nabla\cdot(\bv_t\mu_s) = 0$ initialized at $\tilde{\mu}_0=\mu^\dagger$:
\begin{align*}
\nabla\deltaF(\tilde{\mu}_\epsilon,\theta)-\nabla\deltaF(\mu^\dagger,\theta) &= \nabla\deltaF(\tilde{\mu}_s,\theta)\bigg\vert_{s=0}^\epsilon\\
&=\nabla_\theta\int_0^\epsilon \int \ddeltaF(\tilde{\mu}_s,\theta,\theta') \partial_s\tilde{\mu}_s(\rd\theta') \rd s\\
&=\nabla_\theta \int_0^\epsilon \int \nabla_{\theta'} \ddeltaF(\tilde{\mu}_s,\theta,\theta') \bv_t(\theta') \tilde{\mu}_s(\rd\theta') \rd s\\
&=\int_0^\epsilon \int \bH_{\mu^\dagger}(\theta,\theta') \bv_t(\theta') \mu^\dagger(\rd\theta') + O(\WW_2(\tilde{\mu}_s,\mu^\dagger)) \rd s\\
&= \epsilon\HH_{\mu^\dagger}\bv_t + o(\epsilon)
\end{align*}
so that
\begin{align*}
\partial_t\bv_t &= -\frac{1}{\epsilon}\bigg(\underbrace{\nabla\deltaF(\mu_t)\circ(\id_\Omega+\epsilon\bv_t)-\nabla\deltaF(\mu_t)}_{=o(\epsilon)}+ \underbrace{\nabla\deltaF(\mu_t)-\nabla\deltaF(\tilde{\mu}_\epsilon)}_{=o(\epsilon)} + \nabla\deltaF(\tilde{\mu}_\epsilon)-\nabla\deltaF(\mu^\dagger) + \underbrace{\nabla\deltaF(\mu^\dagger)}_{=0} \bigg)\\
&=-\HH_{\mu^\dagger}\bv_t + o(1).
\end{align*}
Here, we see that the $o(1)$ perturbation term is more precisely of order $O(\WW_2(\mu_t,\mu^\dagger))$ and vanishes when the $L^2$-norm of the velocity field $\bv_t$ goes to zero. \qed

\paragraph{Proof of Lemma \ref{thm:generalvalid}.} It will suffice to show $\bH_\mu$ is symmetric in the sense that $\bH_\mu(\theta,\theta')^\top = \bH_\mu(\theta',\theta)$ for all $\theta,\theta'\in\Omega$. We appeal directly to Definition \ref{def:funcderiv}: for any $\mu,\nu_1,\nu_2$,
\begin{align*}
\frac{\rd^2}{\rd\epsilon_1 \rd\epsilon_2}\bigg\vert_{\epsilon_1=\epsilon_2=0} F(\mu+\epsilon_1(\nu_1-\mu)+\epsilon_2(\nu_2-\mu)) &= \frac{\rd}{\rd\epsilon_2}\bigg\vert_{\epsilon_2=0} \int \deltaF(\mu+\epsilon_2(\nu_2-\mu), \theta) (\nu_1-\mu)(\rd\theta)\\
&= \iint \ddeltaF(\mu,\theta,\theta') (\nu_1-\mu)(\rd\theta) (\nu_2-\mu)(\rd\theta'),
\end{align*}
and comparing with the same computation with the indices swapped yields that $\ddeltaF$ is symmetric in $\theta,\theta'$. Therefore the Hessian matrix satisfies $\nabla_\theta\nabla_{\theta'}\ddeltaF(\mu,\theta,\theta') = \nabla_{\theta'}\nabla_\theta\ddeltaF(\mu,\theta',\theta)^\top$. Then for any functions $f,g\in L^2(\Omega,\mu; \RR^m)$ it holds that
\begin{align*}
\left\langle f, \HH_\mu\! g\right\rangle_{L^2(\Omega,\mu; \RR^m)} &= \iint f(\theta)^\top \bH_\mu(\theta,\theta')g(\theta')\mu(\rd\theta)\mu(\rd\theta') \\
&= \iint g(\theta)^\top \bH_\mu(\theta,\theta')f(\theta')\mu(\rd\theta)\mu(\rd\theta')\\
&= \left\langle  \HH_\mu\! f, g\right\rangle_{L^2(\Omega,\mu; \RR^m)},
\end{align*}
thus $\HH_\mu$ is self-adjoint. Since the kernel is Hilbert-Schmidt by assumption, $\HH_\mu$ is also compact, and we can invoke the spectral theorem to conclude the statement. \qed

\begin{theorem}[\citet{Gallay93}, Theorem 1.1]\label{thm:centerstable}
Let $\mathscr{E}$ be a Banach space, $\bA$ a linear operator on $\mathscr{E}$, and $f:\mathscr{E}\to\mathscr{E}$ a $C^k$ perturbation with $f(0)=0$, $Df(0)=0$, where $k>1$. Consider the differential equation
\begin{equation}\label{eqn:banachde}
\textstyle\frac{\rd}{\rd t}\bz_t=\bA\bz_t+f(\bz_t), \quad t\geq 0.
\end{equation}
Assume that $\mathscr{E}$ is the direct sum of two closed, $\bA$-invariant subspaces $\mathscr{E}^s,\mathscr{E}^u$. The corresponding restrictions $\bA^s=\bA\vert_{\mathscr{E}^s}$, $\bA^u=\bA\vert_{\mathscr{E}^u}$ generate strongly continuous semigroups $e^{\bA^s t}$, $e^{-\bA^u t}$ for $t\geq 0$ which moreover satisfy for real numbers $0\leq\lambda^s<\lambda^u$,
\begin{equation*}
\textstyle\sup_{t\geq 0} \norm{e^{\bA^s t}} e^{-\lambda^s t} < \infty, \quad \sup_{t\geq 0} \norm{e^{-\bA^u t}} e^{\lambda^u t} < \infty.
\end{equation*}
Further assume there exists a spectral gap of $\lambda^u>k\lambda^s$ and that $\mathscr{E}^s$ has the $C^k$ extension property. Let $\BB_r, \BB_r^s,\BB_r^u$ denote the balls of radius $r$ around the origin in $\mathscr{E}, \mathscr{E}^s, \mathscr{E}^u$, respectively. Then for sufficiently small $r>0$, there exists a $C^k$ map $h: \BB_r^s\to \BB_r^u$ with $h(0)=0$, $Dh(0)=0$ whose graph $\mathscr{V}\subset\BB_r$ (the local center-stable manifold) has the following properties.
\begin{enumerate}[\normalfont(i)]
\item (Invariance) For all initial values $\bz_0\in \mathscr{V}$ there exists a $C^1$ curve $\bz_t:\RR_{\geq 0}\to\mathscr{E}$ such that as long as $\bz_t\in\BB_r$, then $\bz_t\in\mathscr{V}$ and \eqref{eqn:banachde} holds. 
\item (Uniqueness) If $\bz_t$ is any solution of \eqref{eqn:banachde} such that $\bz_t\in\BB_r$ for all $t\geq 0$, then $\bz_t\in\mathscr{V}$ for all $t\geq 0$.
\end{enumerate}
\end{theorem}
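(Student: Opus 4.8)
The plan is to run the Lyapunov--Perron variation-of-constants scheme. First I would globally modify the nonlinearity: choose a small radius $r>0$ and, using the $C^k$ extension property of $\mathscr E^s$ (which replaces the $C^\infty$ bump function that is unavailable in a general Banach space), produce $\tilde f:\mathscr E\to\mathscr E$ that agrees with $f$ on $\BB_r$, vanishes outside $\BB_{2r}$, satisfies $\tilde f(0)=0$, $D\tilde f(0)=0$, and has global Lipschitz constant $\Lip(\tilde f)\le\delta$ with $\delta$ as small as desired (shrink $r$). Since the modified and original equations coincide as long as the orbit stays in $\BB_r$, it suffices to build a global invariant manifold for $\dot\bz=\bA\bz+\tilde f(\bz)$ and then localize.

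Second, pick an exponent $\eta$ with $\lambda^s<\eta<\lambda^u/k$ — possible exactly because of the spectral gap $\lambda^u>k\lambda^s$ — and work in the weighted Banach space $X_\eta=\{\bz\in C([0,\infty);\mathscr E):\|\bz\|_\eta:=\sup_{t\ge0}e^{-\eta t}\|\bz_t\|<\infty\}$. For $a\in\mathscr E^s$ with $\|a\|\le r$ define
\[
(\mathcal T_a\bz)_t=e^{\bA^s t}a+\int_0^t e^{\bA^s(t-\tau)}P^s\tilde f(\bz_\tau)\,\rd\tau-\int_t^\infty e^{-\bA^u(\tau-t)}P^u\tilde f(\bz_\tau)\,\rd\tau,
\]
where $P^s,P^u$ are the projections onto $\mathscr E^s,\mathscr E^u$; the backward integral converges in $X_\eta$ because $\eta<\lambda^u$. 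Using $\|e^{\bA^s t}\|\lesssim e^{\lambda^s t}$, $\|e^{-\bA^u t}\|\lesssim e^{-\lambda^u t}$ and $\Lip(\tilde f)\le\delta$, one checks $\mathcal T_a$ maps $X_\eta$ into itself and is a contraction for $\delta$ small; let $\bz[a]$ be its unique fixed point. Every fixed point solves the modified ODE with $P^s\bz[a]_0=a$, and conversely every bounded solution with that stable data is a fixed point. Set $h(a):=P^u\bz[a]_0=-\int_0^\infty e^{-\bA^u\tau}P^u\tilde f(\bz[a]_\tau)\,\rd\tau$; its graph $\mathscr V$ is the candidate center-stable manifold. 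Since $\tilde f(0)=0$ forces $\bz[0]\equiv0$ we get $h(0)=0$, and linearizing the fixed-point identity with $D\tilde f(0)=0$ gives $D_a\bz[0]v=e^{\bA^s\cdot}v$, hence $Dh(0)=0$; for $r$ small this also gives $h(\BB^s_r)\subset\BB^u_r$ and $\mathscr V\subset\BB_r$.

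Third — and this is the technical heart and the main obstacle — I would prove $h$ is $C^k$. The family $\mathcal T_a$ is $C^k$ jointly in $(a,\bz)$ and uniformly contracting, so $C^1$ dependence of $a\mapsto\bz[a]$ follows from the uniform contraction principle with parameters; higher derivatives are obtained inductively by the fiber-contraction theorem, formally differentiating $\bz=\mathcal T_a\bz$ and noting that the $j$-th derivative solves an affine fixed-point equation $(I-D_\bz\mathcal T_a)D_a^j\bz=(\text{Faà di Bruno remainder})$. The catch is that this remainder involves $D^i\tilde f(\bz_\tau)$ paired with up to $j$ copies of lower $a$-derivatives of the orbit, so it naturally lands in the space $X_{j\eta}$ carrying $j$ times the exponential weight, and the affine equation is well posed and contracting there precisely when $j\eta<\lambda^u$. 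Choosing $\eta$ close to $\lambda^s$ and using $k\eta<\lambda^u$ covers all $j\le k$ simultaneously — this is exactly where the gap $\lambda^u>k\lambda^s$ is consumed. Evaluating the resulting $C^k$ map $a\mapsto\bz[a]$ at $t=0$ gives $h\in C^k$; the bulk of the write-up is the Leibniz/Faà di Bruno bookkeeping in these weighted norms.

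Finally, properties (i) and (ii) follow from translation-invariance of the construction. For (i): if $\bz_0\in\mathscr V$ then $\bz[P^s\bz_0]$ is a modified-ODE solution through $\bz_0$, and since $t\mapsto\bz[P^s\bz_0]_{t+s}$ is again the fixed point for its own stable data, the whole orbit stays on $\mathscr V$; wherever it remains in $\BB_r$ the modified and original dynamics coincide, giving the stated local invariance. For (ii): a solution of the original equation with $\bz_t\in\BB_r$ for all $t\ge0$ is a bounded solution of the modified equation, hence lies in $X_\eta$, hence equals $\bz[P^s\bz_0]$ by uniqueness of the fixed point, so $\bz_0\in\mathscr V$; applying this at every later time shows $\bz_t\in\mathscr V$ throughout. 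This produces the $C^k$ map $h:\BB^s_r\to\BB^u_r$ with $h(0)=0$, $Dh(0)=0$ and the asserted invariance/uniqueness, completing the proof.
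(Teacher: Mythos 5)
This theorem is not proved in the paper at all; it is imported verbatim from \citet{Gallay93}, and your Lyapunov--Perron scheme (smooth cut-off, weighted spaces $X_\eta$ with $\lambda^s<\eta<\lambda^u/k$, fiber-contraction induction for the $C^k$ regularity consuming the gap $\lambda^u>k\lambda^s$, and translation invariance of the fixed point for properties (i)--(ii)) is essentially the standard argument used in that reference. The one imprecision worth flagging is the cut-off step: the $C^k$ extension property is assumed only for $\mathscr{E}^s$, so the truncation must be performed in the $\mathscr{E}^s$-component alone (with smallness in the $\mathscr{E}^u$-direction supplied by the contraction and the graph structure), rather than by a global bump making $\tilde f$ vanish outside a ball $\BB_{2r}\subset\mathscr{E}$, which a general Banach space $\mathscr{E}$ need not admit.
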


\paragraph{Proof of Theorem \ref{thm:measure}.} Let $\mu^\dagger\in\mathscr{G}^\dagger$ be a strict saddle point. We apply the local center-stable manifold theorem to the system \eqref{eqn:localevo} on $L^2(\Omega,\mu^\dagger;\RR^m)$. By the spectral theorem, the operator $\HH_{\mu^\dagger}$ has a complete set of eigenvalues $\lambda_j$ and corresponding eigenfunctions $\psi_j$ for $j\in\ZZ$, ordered such that
\begin{equation*}
\lambda_1\geq\lambda_2\geq\cdots\geq 0, \quad \lambda_0 =\cdots = \lambda_{-(p-1)} <\lambda_{-p}\leq \cdots \leq 0.
\end{equation*}
Since the spectrum may possess a limit point at 0, we cannot separate $\HH_{\mu^\dagger}$ into absolutely convergent and divergent components. Instead, we set the cutoff at the largest negative eigenvalue $\lambda_0=\lambda_{\text{min}}(\HH_{\mu^\dagger})$, taking all possibly multiple eigenvalues, and defining the subspace $\mathscr{E}^u$ as the span of the corresponding $\psi_0,\cdots,\psi_{-(p-1)}$. Then we are guaranteed a jump $\lambda_{-(p-1)}<\lambda_{-p}$ since the spectrum is discrete, and we choose $\lambda_s = -\lambda_{-p}, \lambda^u = -\lambda_0$ and $k\in (1,|\lambda_0/\lambda_{-p}|)$ so that the spectral gap condition is satisfied -- we only need continuity (i.e. $k\geq 0$) for our argument. Moreover, the $C^k$ extension property for $\mathscr{E}^s$ holds automatically as $L^2(\Omega,\mu^\dagger;\RR^m)$ is a Hilbert space. Therefore, any convergent local flow $(\bv_t)$ defined in an open neighborhood $\BB_{\mu^\dagger}$ must be contained in a graph $\mathscr{V}_{\mu^\dagger}\subset\BB_{\mu^\dagger}$ containing $\mu^\dagger$.


The rest of the proof is similar to \citet{Lee19}. Since the collection $\{\BB_{\mu^\dagger}:\mu^\dagger\in\mathscr{G}^\dagger\}$ forms an open cover of $\mathscr{G}^\dagger$ and $\PP_2(\Omega)$ is separable with respect to 2-Wasserstein distance \citep[Proposition 7.1.5]{Ambrosio05}, we can extract a countable subcover $\{\BB_j:j\in\NN\}$ containing $\mathscr{G}^\dagger$. If the WGF $(\mu_t)_{t\geq 0}$ converges to a strict saddle point, there exists an index $j$ and an integer threshold $\ell$ such that $\mu_t\in\BB_j$ for $t\geq \ell$. In particular, $\mu_t$ must be contained in the corresponding center-stable manifold $\mathscr{V}_j$ for $t\geq\ell$.

Let $\omega_t^-(\nu)$ denote the result of running the reversed gradient flow $\partial\nu_{-t}= -\nabla\cdot(\nu_{-t}\nabla\deltaF(\nu_{-t}))$, $\nu_0=\nu$ for time $t$ whenever it exists; time inversion $t\mapsto -t$ shows that $\omega_t^-(\mu_t)=\mu_0$ for the forward flow $(\mu_t)_{t\geq 0}$. Since $\mu_\ell\in\mathscr{V}_j$ for some integer time $\ell$ and $\mathscr{V}_j$, it holds that
\begin{align*}
\mathscr{G}_0^\dagger\subseteq\bigcup_{j\in\NN}\bigcup_{\ell\in\NN}\omega_\ell^-(\mathscr{V}_j),
\end{align*}
hence $\mathscr{G}_0^\dagger$ must be contained in the countable union of images of graphs. \qed

\section{Proofs for Section \ref{sec:iclrate}}\label{app:iclrate}

\subsection{First-order Improvement}

\begin{proposition}\label{thm:generalfirst}
Let $F$ be a functional depending on $\mu$ only through the MLP layer $h_\mu$. Suppose MFD \eqref{eqn:mfd} at time $t$ admits a distribution $\bar{\mu}\in\PP_2(\Theta)$ with $\chi^2(\bar{\mu},\mu_t)\leq\bar{\chi}^2$ such that along the linear homotopy $\bar{\mu}_s=(1-s)\mu_t+s\bar{\mu}$ we have $\frac{\rd}{\rd s}\big|_{s=0}F(\bar{\mu}_s) \leq -\delta\leq 0$. Then $\frac{\rd}{\rd t}F(\mu_t) \leq -\bar{\chi}^{-2}\delta^2$.
\end{proposition}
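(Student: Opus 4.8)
The plan is to reduce the claim to the \emph{energy dissipation identity} of Wasserstein gradient flow and then lower-bound the dissipation using the only structural input at hand: $F$ factors through the linear map $\mu\mapsto h_\mu$, and each neuron $h_\theta=\ba\,\sigma(\bw^\top\cdot)$ is linear in its second-layer weight $\ba$. Integrating \eqref{eqn:mfd} against $\deltaF(\mu_t)$ and integrating by parts (using the no-flux condition on $\partial\Theta$) gives
\[
\frac{\rd}{\rd t}F(\mu_t)=\int\deltaF(\mu_t,\theta)\,\partial_t\mu_t(\rd\theta)=-\int\Norm{\nabla\deltaF(\mu_t,\theta)}^2\mu_t(\rd\theta),
\]
so it suffices to show $\int\Norm{\nabla\deltaF(\mu_t,\theta)}^2\mu_t(\rd\theta)\ge\bar\chi^{-2}\delta^2$.

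The one new ingredient is a pointwise bound $\abs{\deltaF(\mu,\theta)}\le\Norm{\nabla\deltaF(\mu,\theta)}$. To obtain it, write $F(\mu)=G(h_\mu)$; since $h_{(1-\epsilon)\mu+\epsilon\nu}=(1-\epsilon)h_\mu+\epsilon h_\nu$ and $h_\nu-h_\mu=\int h_\theta(\cdot)\,(\nu-\mu)(\rd\theta)$, one may fix the representative $\deltaF(\mu,\theta)=DG(h_\mu)[h_\theta]$ of the functional derivative (legitimate since the derivative is unique only up to an additive constant, and this choice vanishes at $\ba=0$ because $h_{(0,\bw)}\equiv0$). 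As $\ba\mapsto h_{(\ba,\bw)}$ is linear, so is $\ba\mapsto\deltaF(\mu,(\ba,\bw))$, and a linear map $g(\ba)=\ba^\top\Phi$ obeys $g(\ba)=\ba^\top\nabla g(\ba)$; hence $\deltaF(\mu,(\ba,\bw))=\ba^\top\nabla_\ba\deltaF(\mu,(\ba,\bw))$. Cauchy--Schwarz together with $\norm{\ba}\le1$ on $\Theta=\mathbb{D}^k\times\RR^d$ then gives $\abs{\deltaF(\mu,\theta)}\le\norm{\ba}\,\Norm{\nabla_\ba\deltaF(\mu,\theta)}\le\Norm{\nabla\deltaF(\mu,\theta)}$.

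It remains to close the loop at time $t$. Set $r=\rd\bar\mu/\rd\mu_t$, which exists and lies in $L^2(\mu_t)$ since $\chi^2(\bar\mu,\mu_t)<\infty$. By Definition~\ref{def:funcderiv} the hypothesis on the homotopy reads $-\delta\ge\frac{\rd}{\rd s}\big|_{s=0}F(\bar\mu_s)=\int\deltaF(\mu_t,\theta)\,(r(\theta)-1)\,\mu_t(\rd\theta)$, and this quantity is unchanged under a shift of representative. Applying Cauchy--Schwarz and the previous paragraph,
\[
\delta\le\Bigl(\int\deltaF(\mu_t,\theta)^2\mu_t(\rd\theta)\Bigr)^{1/2}\Bigl(\int(r(\theta)-1)^2\mu_t(\rd\theta)\Bigr)^{1/2}\le\Bigl(\int\Norm{\nabla\deltaF(\mu_t,\theta)}^2\mu_t(\rd\theta)\Bigr)^{1/2}\bar\chi,
\]
since $\int(r-1)^2\rd\mu_t=\chi^2(\bar\mu,\mu_t)\le\bar\chi^2$. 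Squaring and combining with the dissipation identity yields $\frac{\rd}{\rd t}F(\mu_t)\le-\bar\chi^{-2}\delta^2$.

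The algebra above is routine; the part that needs care is the regularity justifying the two displayed identities — absolute continuity of $t\mapsto F(\mu_t)$ and the integration by parts in the dissipation identity, plus the interchange of $DG(h_\mu)$ with $\int(\cdot)(\nu-\mu)(\rd\theta)$ needed for $\deltaF(\mu,\theta)=DG(h_\mu)[h_\theta]$. For the objective $\LL$ and its birth-death variant these are supplied by the regularity estimates of Lemma~\ref{thm:funcgrad}; in the general statement one assumes $F$ is $C^1$ with square-integrable velocity field along the flow, which holds wherever the result is applied.
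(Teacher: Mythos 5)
Your proof is correct and follows essentially the same route as the paper's: both use the dissipation identity, the linearity of $\deltaF(\mu,(\ba,\bw))$ in $\ba$ to get $\deltaF=\ba^\top\nabla_{\ba}\deltaF$ and hence $|\deltaF|\leq\Norm{\nabla\deltaF}$ on $\mathbb{D}^k\times\RR^d$, and Cauchy--Schwarz against the chi-square divergence. Your explicit handling of the choice of representative of the functional derivative (fixing the one vanishing at $\ba=0$) is a point the paper leaves implicit, but it is the same argument.
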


\begin{proof}
We may express $F$ as $F(\mu)=J(h_\mu)$ for an auxiliary functional $h\mapsto J(h)$ defined on $C(\XX,\RR^d)$, which implies that
\begin{equation*}
\deltaF(\mu,\theta) = \int\frac{\delta J}{\delta h}(h_\mu,\bx)^\top h_\theta(\bx) \rd\bx.
\end{equation*}
In particular, since the dependency on the second layer $\ba$ is linear, it holds that $\ba^\top \nabla_{\ba} \deltaF = \deltaF$. We can then directly lower bound the decrease rate of the objective under \eqref{eqn:mfd} by isolating the gradient provided by $\ba$:
\begin{align*}
\frac{\rd}{\rd t}F(\mu_t) &= \int\deltaF(\mu_t,\theta) \partial_t\mu_t(\rd\theta)\\
&= -\int\left\lVert \nabla_\theta \deltaF(\mu_t,\theta)\right\rVert^2 \mu_t(\rd\theta)\\
&\leq -\int\left\lVert \nabla_{\ba} \deltaF(\mu_t,\theta)\right\rVert^2 \mu_t(\rd\theta)\\
&\leq -\int\left(\ba^\top \nabla_{\ba} \deltaF(\mu_t,\theta)\right)^2 \mu_t(\rd\theta)\\
&= -\int\left( \deltaF(\mu_t,\theta)\right)^2 \mu_t(\rd\theta).
\end{align*}
Starting from the first-order condition, by the Cauchy-Schwarz inequality we can also bound
\begin{align*}
\left(\frac{\rd}{\rd s}\bigg\vert_{s=0}F(\bar{\mu}_s)\right)^2 =\left(\int \deltaF(\mu_t,\theta) (\bar{\mu}-\mu_t)(\rd\theta)\right)^2 \leq \chi^2(\bar{\mu},\mu_t) \int\left( \deltaF(\mu_t,\theta)\right)^2 \mu_t(\rd\theta).
\end{align*}
Joining the two inequalities gives the desired bound.
\end{proof}

\paragraph{Proof of Proposition \ref{thm:mfdfirst}.} The functional derivative is computed as
\begin{equation}\label{eqn:deltaL}
\deltaL(\mu,\theta) = -\EEbig{\bx}{\zeta_{\mu^\circ,\mu}(\bx)^\top \bSig_{\mu^\circ,\mu}\bSig_{\mu,\mu}^{-1} h_\theta(\bx)},
\end{equation}
where we have normalized the additive constant such that the integral with respect to the current measure $\mu$ is zero, i.e. $\int\deltaL(\mu)\rd\mu=0$. This can be seen from
\begin{equation*}
\int \deltaL(\mu,\theta)\mu(d\theta) = -\int \EEbig{\bx}{\zeta_{\mu^\circ,\mu}(\bx)^\top \bSig_{\mu^\circ,\mu}\bSig_{\mu,\mu}^{-1} h_\theta(\bx)} \mu(d\theta) = -\EEbig{\bx}{\zeta_{\mu^\circ,\mu}(\bx)^\top \bSig_{\mu^\circ,\mu}\bSig_{\mu,\mu}^{-1} h_\mu(\bx)} = 0,
\end{equation*}
as in the proof of Theorem \ref{thm:landscape}. We remark that this differs from the usual normalization such that $\int \deltaL{\partial\mu}(\mu,\theta) \rd\theta = 0$. Due to the spherical symmetry of $\pi$ in the first component, it is also immediate that
\begin{equation*}
\int \deltaL(\mu)\rd\pi = -\EEbig{\bx}{\zeta_{\mu^\circ,\mu}(\bx)^\top \bSig_{\mu^\circ,\mu}\bSig_{\mu,\mu}^{-1} h_\pi(\bx)} = 0.
\end{equation*}
The chi-square divergence between $\bar{\mu} = \bR\sharp\mu^\circ$ and $\mu_t$ can be bounded as
\begin{equation*}
\int\left(\frac{\rd\bar{\mu}}{\rd\mu_t}-1 \right)^2 \rd\mu_t \leq \Norm{\frac{\rd\bar{\mu}}{\rd\mu_t}}_\infty -1 \leq\gamma^{-1} \Norm{\frac{\rd\bar{\mu}}{\rd\pi}}_\infty
\end{equation*}
where the birth-death mechanism prevents the density ratio $\frac{\rd\mu_t}{\rd\pi}$ from falling below the threshold $\gamma$ at any point. Writing the convex decomposition of $\bR$ in the sense of Lemma \ref{thm:hull} as $\sum_{j=1}^m\alpha_j\bR_j$ with $\bR_j\in \mathcal{O}(k)$, the density of $\bar{\mu}$ relative to $\pi$ is further bounded as
\begin{equation*}
    \Norm{\frac{\rd\bar{\mu}}{\rd\pi}}_\infty = \Norm{\frac{\rd\bR\sharp \mu^\circ}{\rd\pi}}_\infty \leq\sum_{j=1}^m \alpha_j \Norm{\frac{\rd\mu^\circ}{\rd\pi}}_\infty \leq R_4
\end{equation*}
by the spherical symmetry of $\pi$. Hence we may apply Proposition \ref{thm:generalfirst} with $\bar{\chi}^2 = \gamma^{-1}R_4$, showing that the objective decreases along MFD by a rate of at least $\frac{\rd}{\rd t}\LL(\mu_t) \leq -R_4^{-1}\gamma\delta^2$.

Moreover, whenever the discrete linear update is performed, along the homotopy $\hat{\mu}_s := (1-s\gamma)\mu_t+s\gamma\pi$ we have
\begin{equation*}
\frac{\rd}{\rd s}\LL(\hat{\mu}_s) = \gamma\int \deltaL(\hat{\mu}_s, \theta) (\pi-\mu_t)(\rd\theta) = 0.
\end{equation*}
Hence $t\mapsto\LL(\mu_t)$ is unaffected by the discrete updates, justifying the inequality for all time $t\geq 0$. \qed

We note that if the forcing term is applied continuously as in Remark \ref{rmk:forcing}, almost the exact same proof for Proposition \ref{thm:generalfirst} applies by bounding
\begin{equation*}
\frac{\rd}{\rd t}\LL(\mu_t) = -\int\left\lVert \nabla_\theta \deltaL(\mu_t,\theta)\right\rVert^2 \mu_t(\rd\theta) + \gamma \int\deltaL(\mu_t,\theta)(\pi-\mu_t)(\rd\theta) \leq -\int\left( \deltaL(\mu_t,\theta)\right)^2 \mu_t(\rd\theta).
\end{equation*}
As we mentioned briefly, the proof can also be easily modified to handle unbounded second layer $\ba$ by invoking the Cauchy-Schwarz inequality to lower bound the gradient
\begin{equation*}
    \int\left\lVert \nabla_{\ba} \deltaL(\mu_t,\theta)\right\rVert^2 \mu_t(\rd\theta)\geq \left(\int\norm{\ba}^2\mu_t(\rd\theta)\right)^{-1}\int\left(\ba^\top \nabla_{\ba} \deltaL(\mu_t,\theta)\right)^2 \mu_t(\rd\theta)
\end{equation*}
and bounding the second moment uniformly in time with the following result,
\begin{lemma}\label{thm:secondmoment}
Denote the second moment of $\mu\in\PP_2(\Theta)$ along the $\ba$ component as $\mm_{\ba}(\mu) =\int\norm{\ba}^2\mu(\rd\theta)$. Then the mean-field dynamics $\mu_t$ for all time $t\geq 0$ satisfies $\mm_{\ba}(\mu_t) \leq\mm_{\ba}(\mu_0)\vee\mm_{\ba}(\pi)$.
\end{lemma}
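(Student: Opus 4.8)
The plan is to differentiate $\mm_{\ba}(\mu_t)$ along the flow and exploit the fact that $h_\theta(\bx)=\ba\sigma(\bw^\top\bx)$ is linear in $\ba$. First consider the unperturbed mean-field dynamics $\partial_t\mu_t=\nabla\cdot(\mu_t\nabla_\theta\deltaL(\mu_t))$, i.e. the continuity equation with velocity field $-\nabla_\theta\deltaL(\mu_t)$. Integrating by parts against the test function $\theta\mapsto\norm{\ba}^2$ (using the no-flux boundary condition on $\mathbb{D}^k\times\RR^d$, or decay at infinity when $\Theta=\RR^k\times\RR^d$), and noting $\nabla_\theta\norm{\ba}^2=(2\ba,\mathbf{0})$, I would obtain
\begin{equation*}
\frac{\rd}{\rd t}\mm_{\ba}(\mu_t)=\int\norm{\ba}^2\partial_t\mu_t(\rd\theta)=-2\int\ba^\top\nabla_{\ba}\deltaL(\mu_t,\theta)\,\mu_t(\rd\theta).
\end{equation*}

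Next I would invoke the homogeneity identity. Since $\deltaL(\mu,\theta)=-\EE{\bx}{\zeta_{\mu^\circ,\mu}(\bx)^\top\bSig_{\mu^\circ,\mu}\bSig_{\mu,\mu}^{-1}h_\theta(\bx)}$ depends on $\theta$ only through $h_\theta$, which is positively homogeneous of degree one in $\ba$, Euler's identity gives $\ba^\top\nabla_{\ba}\deltaL(\mu,\theta)=\deltaL(\mu,\theta)$ --- precisely the identity already used in the proof of Proposition \ref{thm:mfdfirst}. Combining this with the normalization $\int\deltaL(\mu_t)\rd\mu_t=0$ established there yields $\frac{\rd}{\rd t}\mm_{\ba}(\mu_t)=0$; the $\ba$-second moment is in fact \emph{conserved} along pure mean-field dynamics. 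It then remains only to track the birth-death modification. At each discrete replacement $\mu_t\gets(1-\gamma)\mu_t+\gamma\pi$, linearity of $\mm_{\ba}$ in the measure gives $\mm_{\ba}((1-\gamma)\mu_t+\gamma\pi)=(1-\gamma)\mm_{\ba}(\mu_t)+\gamma\mm_{\ba}(\pi)\leq\mm_{\ba}(\mu_t)\vee\mm_{\ba}(\pi)$; for the continuous-forcing formulation of Remark \ref{rmk:forcing} one gets instead $\frac{\rd}{\rd t}\mm_{\ba}(\mu_t)=\mathbf{1}_{\{\cdot\}}\gamma(\mm_{\ba}(\pi)-\mm_{\ba}(\mu_t))$, a relaxation toward $\mm_{\ba}(\pi)$ that never pushes $\mm_{\ba}(\mu_t)$ above $\mm_{\ba}(\mu_t)\vee\mm_{\ba}(\pi)$. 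In either case a short induction over the (at most countably many) update times, or Gr\"onwall for the continuous version, gives $\mm_{\ba}(\mu_t)\leq\mm_{\ba}(\mu_0)\vee\mm_{\ba}(\pi)$ for all $t\geq 0$.

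The main obstacle is a matter of rigor rather than of ideas: justifying the integration by parts and the interchange of derivative and integral for the \emph{unbounded} parameter space $\Theta=\RR^k\times\RR^d$, which is exactly the setting the lemma is designed to support (for $\Theta=\mathbb{D}^k\times\RR^d$ the bound $\norm{\ba}\leq 1$ makes it trivial). One must verify a priori that $\mm_{\ba}(\mu_t)$ remains finite and $C^1$ --- e.g. by a truncation argument, or by observing along the particle flow that $\frac{\rd}{\rd t}\norm{\ba_t}^2=-2\deltaL(\mu_t,\theta_t)$ grows at most linearly whenever $\bSig_{\mu_t,\mu_t}^{-1}$ is locally bounded (as assumed in Theorem \ref{thm:naiveescape}) --- and that the boundary/decay contributions vanish; the remainder is the computation above.
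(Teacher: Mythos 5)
Your proposal is correct and follows essentially the same route as the paper: differentiate $\mm_{\ba}(\mu_t)$ along the continuity equation, use the degree-one homogeneity of $\deltaL$ in $\ba$ to reduce $\ba^\top\nabla_{\ba}\deltaL$ to $\deltaL$ itself, invoke the normalization $\int\deltaL(\mu_t)\rd\mu_t=0$ to conclude conservation, and then handle the birth-death step by linearity of the moment functional. Your additional remarks on justifying the integration by parts for unbounded $\Theta$ and on the continuous-forcing variant go slightly beyond what the paper writes down but do not change the argument.
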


\begin{proof}
In fact, $\mm_{\ba}(\cdot)$ remains unchanged by gradient flow:
\begin{align*}
    \frac{\rd}{\rd t}\mm_{\ba}(\mu_t) &= \int \norm{\ba}^2 \partial_t\mu_t(\rd\theta)\\
    &= -2\int (\ba\;\;0_d)^\top\nabla_\theta\deltaL(\mu_t,\theta) \mu_t(\rd\theta)\\
    &= -2\int \ba^\top\nabla_{\ba}\deltaL(\mu_t,\theta) \mu_t(\rd\theta)\\
    &= -2\int \deltaL(\mu_t,\theta) \mu_t(\rd\theta) = 0.
\end{align*}
Also if the discrete update is performed, the output satisfies $\mm_{\ba}((1-\gamma)\mu_t + \gamma\pi) = (1-\gamma)\mm_{\ba}(\mu_t) + \gamma\mm_{\ba}(\pi)$ by linearity of the moment functional $\mu\mapsto\mm_{\ba}(\mu)$. Hence $\mm_{\ba}(\mu_t)$ always interpolates between $\mm_{\ba}(\mu_0)$ and $\mm_{\ba}(\pi)$.
\end{proof}

\paragraph{Proof of Theorem \ref{thm:accelrate}.} Shifting the time index, suppose $\LL(\mu_0)\leq 0.49\underline{r}$. Then $\LL(\mu_t)\leq 0.49\underline{r}$ for all $t\geq 0$ and by Proposition \ref{thm:accel} we are guaranteed a direction of improvement $\bar{\mu}_s = (1-s)\mu_t+s\bar{\mu}$ with $\bar{\mu}=\bR\sharp\mu$ for some $\bR\in\BB_1(k)$ such that
\begin{equation*}
\frac{\rd}{\rd s}\bigg\vert_{s=0}\LL(\bar{\mu}_s) \leq -\frac{4}{R_1^2}\LL(\mu_t) \left(\frac{\underline{r}}{2} - \LL(\mu_t)
\right).
\end{equation*}
Proposition \ref{thm:mfdfirst} then ensures the objective decreases along the Wasserstein flow as
\begin{equation*}
\frac{\rd}{\rd t}\LL(\mu_t) \leq -\frac{16\gamma}{R_1^4 R_4} \LL(\mu_t)^2 \left(\frac{\underline{r}}{2} - \LL(\mu_t)\right)^2, \quad 0\leq\LL(\mu_t)\leq\frac{\underline{r}}{2}.
\end{equation*}
We now divide the band into two halves.
\begin{enumerate}[\normalfont(i)]
    \item $\frac{\underline{r}}{4}\leq\LL\leq\frac{\underline{r}}{2}$ (acceleration band). By substituting $\LL(\mu_t)^2 \geq \frac{\underline{r}^2}{16}$ above and solving the differential inequality, we obtain
    \begin{equation*}
        \LL(\mu_t)\leq \frac{\underline{r}}{2} - \left(\frac{100}{\underline{r}} - \frac{\underline{r}^2 \gamma t}{R_1^4 R_4}\right)^{-1}
    \end{equation*}
    and hence $\LL(\mu_t)$ decreases below $\frac{\underline{r}}{4}$ after time $t_1 \leq\frac{96R_1^4 R_4}{\underline{r}^3\gamma}$.

    \item $0\leq\LL\leq\frac{\underline{r}}{4}$ (deceleration band). By substituting $(\frac{\underline{r}}{2}-\LL(\mu_t))^2 \geq \frac{\underline{r}^2}{16}$ we likewise obtain
    \begin{equation*}
        \LL(\mu_t)\leq \left(\frac{4}{\underline{r}} + \frac{\underline{r}^2 \gamma (t-t_1)}{R_1^4 R_4}\right)^{-1}
    \end{equation*}
    and hence $\LL(\mu_t)$ achieves loss $\leq\epsilon$ after time $t_1+ \frac{R_1^4 R_4}{\underline{r}^2\gamma}\cdot\frac{1}{\epsilon}$.
\end{enumerate}
Finally, note that the second term dominates the first since $\epsilon = O(\underline{r})$. \qed

\subsection{Second-order Improvement}

\paragraph{Proof of Lemma \ref{thm:evo}.} It is straightforward to show that
\begin{align*}
\partial_t\left[\nabla_\theta\deltaF(\mu_t,\theta)\right] &= \nabla_\theta \int\ddeltaF(\mu_t,\theta,\theta') (\partial_t\mu_t)(\rd\theta')\\
&= - \nabla_\theta \int \nabla_{\theta'}\ddeltaF(\mu_t,\theta,\theta')\cdot \nabla_{\theta'}\deltaF(\mu_t,\theta') \mu_t(\rd\theta')\\
&= -\int\bH_{\mu_t}(\theta,\theta') \nabla_{\theta'}\deltaF(\mu_t,\theta') \mu_t(\rd\theta').
\end{align*}
Each term is well-defined as soon as the kernel is assumed to be Hilbert-Schmidt, or due to Lemma \ref{thm:kernelvalid} for the case $F=\LL$.
\qed

\begin{lemma}\label{thm:kernelvalid}
The kernel $\bH_\mu$ for the functional $\LL$ is Hilbert-Schmidt for all $\mu\in\PP_2^+(\Theta)$. Moreover, the corresponding integral operator $\HH_\mu f(\theta) = \int \bH_\mu(\theta,\theta') f(\theta')\mu(\rd\theta')$ is compact self-adjoint, hence there exists an orthonormal basis $\{\psi_j\}_{j\in\ZZ}$\, for $L^2(\Theta,\mu; \RR^{k+d})$ consisting of eigenfunctions of $\HH_\mu$.
\end{lemma}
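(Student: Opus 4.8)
The plan is to reduce everything to the abstract Lemma~\ref{thm:generalvalid}: it already shows that a Hilbert--Schmidt kernel which is symmetric in $\theta\leftrightarrow\theta'$ induces a compact self-adjoint integral operator with an orthonormal eigenbasis, and its proof established the symmetry of $\bH_\mu$ (hence self-adjointness of $\HH_\mu$) directly from the symmetry of $\ddeltaF$. So the only substantive claim to verify here is that $\bH_\mu$ is Hilbert--Schmidt, i.e. $\iint\norm{\bH_\mu(\theta,\theta')}^2\,\mu(\rd\theta)\,\mu(\rd\theta')<\infty$, for each \emph{fixed} $\mu\in\PP_2^+(\Theta)$; since $\mu$ is a probability measure it suffices to exhibit a uniform bound $\norm{\bH_\mu(\theta,\theta')}\leq C(\mu)$ over $\theta,\theta'\in\Theta$.

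To this end I would first compute $\ddeltaL(\mu,\theta,\theta')$ by differentiating the expression~\eqref{eqn:deltaL} for $\deltaL(\mu,\theta)=-\EE{\bx}{\zeta_{\mu^\circ,\mu}(\bx)^\top\bSig_{\mu^\circ,\mu}\bSig_{\mu,\mu}^{-1}h_\theta(\bx)}$ along the segment $\mu+\epsilon(\nu-\mu)$ and reading off the integrand paired against $\nu-\mu$. Because $\mu\mapsto h_\mu$ is affine, every occurrence of $h_\mu$ inside $\zeta_{\mu^\circ,\mu}$, $\bSig_{\mu^\circ,\mu}$, $\bSig_{\mu,\mu}$ becomes the test neuron $h_{\theta'}$ upon differentiation, while the inverse contributes $-\bSig_{\mu,\mu}^{-1}\bigl(\EE{\bx}{h_{\theta'}(\bx)h_\mu(\bx)^\top}+\EE{\bx}{h_\mu(\bx)h_{\theta'}(\bx)^\top}\bigr)\bSig_{\mu,\mu}^{-1}$; the result is a finite sum of terms, each an expectation over one or two copies of $\bx$ of products of $h_\theta,h_{\theta'},h_\mu,h_{\mu^\circ}$ and the constant matrices $\bSig_{\cdot,\cdot}$ and $\bSig_{\mu,\mu}^{-1}$. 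Crucially $h_\theta$ and $h_{\theta'}$ each appear only to the first power (the $\mu$-derivative is linear in the perturbation, and $h_\theta$ itself is never differentiated in $\mu$), so applying $\nabla_\theta\nabla_{\theta'}$ termwise lands each gradient on a single factor $h_\theta(\bx)$ resp.\ $h_{\theta'}(\bx)$. From $h_\theta(\bx)=\ba\sigma(\bw^\top\bx)$ one has the pointwise bounds $\norm{h_\theta(\bx)}\leq R_1$ and $\norm{\nabla_\theta h_\theta(\bx)}\leq R_1+R_2\norm{\bx}$ uniformly over $\theta\in\mathbb{D}^k\times\RR^d$ --- it is precisely the unit-ball constraint on $\ba$ together with $|\sigma|\leq R_1$, $|\sigma'|\leq R_2$ that makes these bounds free of $\theta$ --- and $\norm{\bSig_{\mu,\nu}}\leq R_1^2$ for all $\mu,\nu$. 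Hence, estimating each expectation by Cauchy--Schwarz, every term of $\bH_\mu(\theta,\theta')$ is bounded by a constant multiple of a small power of $\norm{\bSig_{\mu,\mu}^{-1}}$ times $(R_1+R_2 M_2^{1/2})^2$, yielding a finite $C(\mu)$ depending only on $R_1,R_2$, the finite moment bounds, the feature dimension $k$, and $\norm{\bSig_{\mu,\mu}^{-1}}<\infty$ --- finite exactly because $\mu\in\PP_2^+(\Theta)$. Then $\iint\norm{\bH_\mu}^2\,\rd\mu\otimes\rd\mu\leq C(\mu)^2<\infty$, and Lemma~\ref{thm:generalvalid} delivers the claimed eigenbasis of $L^2(\Theta,\mu;\RR^{k+d})$.

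I expect the main obstacle to be the bookkeeping in the second-variation computation: faithfully enumerating the terms produced by differentiating the composite $\zeta_{\mu^\circ,\mu}^\top\bSig_{\mu^\circ,\mu}\bSig_{\mu,\mu}^{-1}h_\theta$ --- in particular those arising from the inverse $\bSig_{\mu,\mu}^{-1}$ and from the $\mu$-dependence hidden inside $\zeta_{\mu^\circ,\mu}$ --- and then checking that no term carries unbounded $\theta$- or $\theta'$-growth. This last point is ultimately harmless: the only potentially unbounded parameter $\bw$ enters solely through the bounded maps $\sigma,\sigma'$, and $\ba$ lies in the unit ball, so the kernel is in fact uniformly bounded. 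The only non-uniform constant, $\norm{\bSig_{\mu,\mu}^{-1}}$, is irrelevant here since $\mu$ is fixed; tracking it along the flow is deferred to the companion Lemma~\ref{thm:kernelreg}.
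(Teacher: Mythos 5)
Your proposal is correct and follows essentially the same route as the paper: compute $\ddeltaL$ explicitly from \eqref{eqn:deltaL}, observe that each term carries exactly one $h_\theta$ and one $h_{\theta'}$ factor so that a uniform bound $\norm{\bH_\mu(\theta,\theta')}\leq C(\mu)$ follows from $|\sigma|\leq R_1$, $|\sigma'|\leq R_2$, $\norm{\ba}\leq 1$, the moment bounds and invertibility of $\bSig_{\mu,\mu}$ (the paper defers this bound to Lemma~\ref{thm:kernelreg}), and then invoke the spectral-theorem argument of Lemma~\ref{thm:generalvalid}. The only cosmetic difference is that the paper verifies symmetry of $\ddeltaL$ directly from its explicit trace expansion rather than citing the abstract mixed-partials argument, a route the paper itself notes as an alternative.
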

\begin{proof}
We extend our notation to write for example $\bSig_{\mu,\theta} = \EE{\bx\sim\DD_{\XX}}{h_\mu(\bx)h_\theta(\bx)^\top}$. From \eqref{eqn:deltaL} the second order functional derivative can be derived as
\begin{align*}
\ddeltaL(\mu,\theta,\theta') &= -\frac{\delta}{\delta\mu}\EEbig{\bx}{(h_{\mu^\circ}(\bx) - \bSig_{\mu^\circ,\mu}\bSig_{\mu,\mu}^{-1} h_\mu(\bx))^\top \bSig_{\mu^\circ,\mu}\bSig_{\mu,\mu}^{-1} h_\theta(\bx)}(\theta')\\
&= -\tr\left(\bSig_{\mu^\circ,\theta'}\bSig_{\mu,\mu}^{-1}\bSig_{\theta,\mu^\circ}\right) \\
&\qquad + \tr\left(\bSig_{\mu^\circ,\mu}\bSig_{\mu,\mu}^{-1} (\bSig_{\theta',\mu} + \bSig_{\mu,\theta'}) \bSig_{\mu,\mu}^{-1}\bSig_{\theta,\mu^\circ}\right) \\
&\qquad + \tr\left(\bSig_{\mu,\mu}^{-1}\bSig_{\mu,\mu^\circ} \bSig_{\mu^\circ,\mu}\bSig_{\mu,\mu}^{-1} \bSig_{\theta,\theta'}\right) \\
&\qquad - \tr\left(\bSig_{\mu,\mu}^{-1} (\bSig_{\theta',\mu} + \bSig_{\mu,\theta'}) \bSig_{\mu,\mu}^{-1}\bSig_{\mu,\mu^\circ} \bSig_{\mu^\circ,\mu}\bSig_{\mu,\mu}^{-1} \bSig_{\theta,\mu}\right)\\
&\qquad +\tr\left(\bSig_{\mu,\mu}^{-1} \bSig_{\theta',\mu^\circ}\bSig_{\mu^\circ,\mu}\bSig_{\mu,\mu}^{-1} \bSig_{\theta,\mu}\right)\\
&\qquad +\tr\left(\bSig_{\mu,\mu}^{-1} \bSig_{\mu,\mu^\circ}\bSig_{\mu^\circ,\theta'}\bSig_{\mu,\mu}^{-1} \bSig_{\theta,\mu}\right)\\
&\qquad -\tr\left(\bSig_{\mu,\mu}^{-1}\bSig_{\mu,\mu^\circ} \bSig_{\mu^\circ,\mu}\bSig_{\mu,\mu}^{-1} (\bSig_{\theta',\mu} + \bSig_{\mu,\theta'}) \bSig_{\mu,\mu}^{-1}\bSig_{\theta,\mu}\right).
\end{align*}
It is tedious but straightforward to check that this expression is symmetric in $\theta,\theta'$ (which would otherwise follow directly if we had a priori second order regularity estimates for $\LL$). We then have that
\begin{equation*}
[\bH_\mu(\theta,\theta')]_{i,j} = \partial_{\theta_i}\partial_{\theta_j'} \ddeltaL(\mu,\theta,\theta') = \partial_{\theta_j'}\partial_{\theta_i} \ddeltaL(\mu,\theta',\theta) =[\bH_\mu(\theta',\theta)]_{j,i}
\end{equation*}
which implies $\HH_\mu$ is self-adjoint as before. For the proof of the first claim, we refer to the uniform spectral bound for $\bH_\mu$ obtained in Lemma \ref{thm:kernelreg}; this also shows that $\HH_\mu$ is compact.
\end{proof}

In Lemma \ref{thm:funcgrad} and \ref{thm:kernelreg}, we derive various regularity bounds of the ICFL objective $\LL$. The constants $C_1,\cdots,C_5$, numbered as to be consistent with Theorem \ref{thm:generalescape}, are explicitly defined during the proofs and have at most polynomial dependency on all problem constants.

\begin{lemma}\label{thm:funcgrad}
The gradients of the functional derivative of $\LL$ at any $\mu\in\PP_2^+(\Theta)$ such that $\lambda_\textup{min}(\bSig_{\mu,\mu}) \geq\lambda$ uniformly satisfy $\norm{\nabla_{\ba} \deltaL}\leq C_{\ba}$, $\norm{\nabla_{\bw} \deltaL}\leq C_{\bw}$ and $\norm{\nabla \deltaL}\leq C_1$. Moreover, $\nabla \deltaL$ is $C_2$-Lipschitz on $\Theta$, where $C_{\ba}, C_2=O(\frac{1}{(k\lambda)^{1/2}})$ and $C_{\bw}, C_1=O(\frac{1}{k\lambda})$.
\end{lemma}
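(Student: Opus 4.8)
The plan is to compute $\nabla_\theta\deltaL$ explicitly and bound every piece using the boundedness of $\sigma,\sigma',\sigma''$, the moment bounds $M_2,M_4$, and one genuinely useful linear-algebra estimate on the regression matrix $\bA_\mu:=\bSig_{\mu^\circ,\mu}\bSig_{\mu,\mu}^{-1}$. The key estimate: the joint second-moment matrix of $(h_{\mu^\circ}(\bx),h_\mu(\bx))$ is positive semidefinite, so its Schur complement (with respect to the $\bSig_{\mu,\mu}$ block) is PSD, giving $\bA_\mu\bSig_{\mu,\mu}\bA_\mu^\top=\bSig_{\mu^\circ,\mu}\bSig_{\mu,\mu}^{-1}\bSig_{\mu,\mu^\circ}\preceq\bSig_{\mu^\circ,\mu^\circ}\preceq\overline{r}\bI_k$; combined with $\bSig_{\mu,\mu}\succeq\lambda\bI_k$ this yields $\bA_\mu\bA_\mu^\top\preceq\frac{\overline{r}}{\lambda}\bI_k$, i.e. $\norm{\bA_\mu}\leq\sqrt{\overline{r}/\lambda}$. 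Writing $\mathbf{g}_\mu(\bx):=\bA_\mu^\top\zeta_{\mu^\circ,\mu}(\bx)$, which does not depend on $\theta$, so that $\deltaL(\mu,\theta)=-\E{\mathbf{g}_\mu(\bx)^\top h_\theta(\bx)}$, the same bound together with $\E{\norm{\zeta_{\mu^\circ,\mu}(\bx)}^2}=2\LL(\mu)\leq\tr\bSig_{\mu^\circ,\mu^\circ}\leq R_1^2$ gives $\E{\norm{\mathbf{g}_\mu(\bx)}^2}\leq\frac{\overline{r}}{\lambda}R_1^2=O\big((k\lambda)^{-1}\big)$, using $\overline{r}=\Theta(1/k)$.

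For the first-order bounds, since $h_\theta(\bx)=\ba\,\sigma(\bw^\top\bx)$, differentiating gives $\nabla_{\ba}\deltaL(\mu,\theta)=-\E{\sigma(\bw^\top\bx)\,\mathbf{g}_\mu(\bx)}$ and $\nabla_{\bw}\deltaL(\mu,\theta)=-\E{(\mathbf{g}_\mu(\bx)^\top\ba)\,\sigma'(\bw^\top\bx)\,\bx}$. Cauchy--Schwarz with $|\sigma|\leq R_1$, $|\sigma'|\leq R_2$, $\norm{\ba}\leq 1$ and $\E{\norm{\bx}^2}=M_2$ bounds the two norms by $R_1\big(\E{\norm{\mathbf{g}_\mu}^2}\big)^{1/2}$ and $R_2 M_2^{1/2}\big(\E{\norm{\mathbf{g}_\mu}^2}\big)^{1/2}$, both of order $(k\lambda)^{-1/2}$ by the estimate above. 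This gives $C_{\ba}=O((k\lambda)^{-1/2})$; and since $k\lambda=O(1)$ whenever the hypothesis is non-vacuous (as then $\lambda\leq\lambda_\textup{min}(\bSig_{\mu,\mu})\leq R_1^2/k$), the $(k\lambda)^{-1/2}$ bound also yields the stated $C_{\bw}=O((k\lambda)^{-1})$, and $\norm{\nabla\deltaL}\leq(C_{\ba}^2+C_{\bw}^2)^{1/2}$ gives $C_1=O((k\lambda)^{-1})$.

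For the Lipschitz bound I would bound the operator norm of the $\theta$-Hessian of $\deltaL$ uniformly on $\Theta$. Its $\ba\ba$-block vanishes by linearity in $\ba$, while $\nabla_{\bw}\nabla_{\ba}\deltaL=-\E{\sigma'(\bw^\top\bx)\,\mathbf{g}_\mu(\bx)\bx^\top}$ (and its transpose) and $\nabla_{\bw}^2\deltaL=-\E{(\mathbf{g}_\mu(\bx)^\top\ba)\,\sigma''(\bw^\top\bx)\,\bx\bx^\top}$ have norms at most $R_2 M_2^{1/2}\big(\E{\norm{\mathbf{g}_\mu}^2}\big)^{1/2}$ and $R_3 M_4^{1/2}\big(\E{\norm{\mathbf{g}_\mu}^2}\big)^{1/2}$ by Cauchy--Schwarz (using $\norm{\ba}\leq 1$, $\E{\norm{\bx}^4}=M_4$, $|\sigma''|\leq R_3$). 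Summing the four blocks via Lemma \ref{thm:blocknorm} bounds the Hessian norm by $O((k\lambda)^{-1/2})$ uniformly in $\theta$, and since $\Theta=\mathbb{D}^k\times\RR^d$ is convex the mean value theorem gives that $\nabla\deltaL(\mu,\cdot)$ is $C_2$-Lipschitz with $C_2=O((k\lambda)^{-1/2})$.

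The only nonroutine step is the Schur-complement estimate $\norm{\bA_\mu}\leq\sqrt{\overline{r}/\lambda}$ and the bookkeeping that propagates $\overline{r}=\Theta(1/k)$ (and the automatic bound $\lambda\leq R_1^2/k$) into the stated powers of $(k\lambda)$; everything else is differentiation of $h_\theta(\bx)=\ba\sigma(\bw^\top\bx)$ followed by Cauchy--Schwarz against the moment bounds.
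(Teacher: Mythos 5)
Your proposal is correct and follows essentially the same route as the paper: explicit differentiation of $\deltaL(\mu,\theta)=-\E{\mathbf{g}_\mu(\bx)^\top h_\theta(\bx)}$, Cauchy--Schwarz against the moment bounds, and the spectral estimate on the regression matrix, which the paper performs inline by isolating $\bSig_{\mu^\circ,\mu}\bSig_{\mu,\mu}^{-1/2}$ (norm $\leq\overline{r}^{1/2}$) and $\bSig_{\mu,\mu}^{-1/2}$ (norm $\leq\lambda^{-1/2}$) and which your Schur-complement bound $\norm{\bA_\mu}\leq\sqrt{\overline{r}/\lambda}$ packages cleanly. The one substantive difference is the $\bw$-gradient: the paper squares first (Jensen) and then applies Cauchy--Schwarz to $\EE{\bx}{\norm{\mathbf{g}_\mu(\bx)}^2\norm{\bx}^2}$, which forces the fourth moment $M_4$ into the bound and yields $C_{\bw}=O((k\lambda)^{-1})$, whereas your first-moment Cauchy--Schwarz with $M_2$ gives the sharper $C_{\bw}=O((k\lambda)^{-1/2})$, which implies the stated order since $k\lambda\leq R_1^2$; your Hessian-plus-mean-value-theorem derivation of the Lipschitz constant is likewise equivalent to the paper's direct differencing and recovers the same $C_2=O((k\lambda)^{-1/2})$.
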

\begin{proof}
The gradient with respect to each component is given by
\begin{equation*}
\nabla_{\ba} \deltaL(\mu,\theta) 
=-\EEbig{\bx}{\zeta_{\mu^\circ,\mu}(\bx)^\top \bSig_{\mu^\circ,\mu}\bSig_{\mu,\mu}^{-1} \sigma(\bw^\top\bx)}^\top, \quad \nabla_{\bw} \deltaL(\mu,\theta) = -\EEbig{\bx}{\zeta_{\mu^\circ,\mu}(\bx)^\top \bSig_{\mu^\circ,\mu}\bSig_{\mu,\mu}^{-1} \ba\sigma'(\bw^\top\bx)\bx}.
\end{equation*}
Hence we can bound
\begin{align*}
    \Norm{\nabla_{\ba} \deltaL(\mu,\theta)}^2 &\leq \EEbig{\bx}{\norm{\zeta_{\mu^\circ,\mu}(\bx)^\top \bSig_{\mu^\circ,\mu}\bSig_{\mu,\mu}^{-1} \sigma(\bw^\top\bx)}^2}\\
    &\leq R_1^2\cdot \EEbig{\bx}{\zeta_{\mu^\circ,\mu}(\bx)^\top \bSig_{\mu^\circ,\mu}\bSig_{\mu,\mu}^{-2}\bSig_{\mu,\mu^\circ} \zeta_{\mu^\circ,\mu}(\bx)}\\
    &\leq \frac{R_1^2}{\lambda} \tr\left(\bL_\mu \bSig_{\mu^\circ,\mu}\bSig_{\mu,\mu}^{-1}\bSig_{\mu,\mu^\circ}\right)\\
    &\leq \frac{\overline{r} R_1^2}{\lambda}\LL(\mu) - \frac{2R_1^2}{\lambda}\tr\bL_\mu^2\\
    &\leq \frac{k\overline{r}^2 R_1^2}{2\lambda} =: C_{\ba}^2,
\end{align*}
and also
\begin{align*}
    \Norm{\nabla_{\bw} \deltaL(\mu,\theta)}^2 &\leq\EEbig{\bx}{\norm{\zeta_{\mu^\circ,\mu}(\bx)^\top \bSig_{\mu^\circ,\mu}\bSig_{\mu,\mu}^{-1} \ba\sigma'(\bw^\top\bx)\bx}^2}\\
    &\leq R_2^2\cdot \EEbig{\bx}{\norm{\zeta_{\mu^\circ,\mu}(\bx)^\top \bSig_{\mu^\circ,\mu}\bSig_{\mu,\mu}^{-1}}^2 \norm{\bx}^2}\\
    &\leq \frac{R_2^2}{\lambda}\cdot \EEbig{\bx}{\norm{\zeta_{\mu^\circ,\mu}(\bx)^\top \bSig_{\mu^\circ,\mu}\bSig_{\mu,\mu}^{-1/2}}^4}^{1/2} \EE{\bx}{\norm{\bx}^4}^{1/2}\\
    &\leq \frac{R_2^2 M_4^{1/2}}{\lambda} \left(\tr\EEbig{\bx}{\left(\bSig_{\mu^\circ,\mu}\bSig_{\mu,\mu}^{-1}\bSig_{\mu,\mu^\circ} \zeta_{\mu^\circ,\mu}(\bx)\zeta_{\mu^\circ,\mu}(\bx)^\top\right)^2}\right)^{1/2}\\
    &\leq \frac{\overline{r} R_2^2 M_4^{1/2}}{\lambda} \left(\tr\EEbig{\bx}{\zeta_{\mu^\circ,\mu}(\bx)\zeta_{\mu^\circ,\mu}(\bx)^\top \bSig_{\mu^\circ,\mu}\bSig_{\mu,\mu}^{-1}\bSig_{\mu,\mu^\circ} \zeta_{\mu^\circ,\mu}(\bx)\zeta_{\mu^\circ,\mu}(\bx)^\top}\right)^{1/2}\\
    &\qquad - \frac{2R_2^2 M_4^{1/2}}{\lambda} \left(\tr \bL_\mu\EEbig{\bx}{\zeta_{\mu^\circ,\mu}(\bx)\zeta_{\mu^\circ,\mu}(\bx)^\top \bSig_{\mu^\circ,\mu}\bSig_{\mu,\mu}^{-1}\bSig_{\mu,\mu^\circ} \zeta_{\mu^\circ,\mu}(\bx)\zeta_{\mu^\circ,\mu}(\bx)^\top}\right)^{1/2}\\
    &\leq \frac{\overline{r} R_2^2 M_4^{1/2}}{\lambda} \left(\overline{r} \tr\EEbig{\bx}{\left(\zeta_{\mu^\circ,\mu}(\bx)\zeta_{\mu^\circ,\mu}(\bx)^\top\right)^2} - 2\tr \bL_\mu \EEbig{\bx}{\left(\zeta_{\mu^\circ,\mu}(\bx)\zeta_{\mu^\circ,\mu}(\bx)^\top\right)^2}\right)^{1/2}\\
    &\leq \frac{\overline{r}^{3/2} R_2^2 M_4^{1/2}}{\lambda}\textstyle\sup_{\bx}\norm{\zeta_{\mu^\circ,\mu}(\bx)}(2\tr\bL_\mu)^{1/2}\\
    &\leq \frac{2k^{1/2}\overline{r}^{5/2} R_1^3 R_2^2 M_4^{1/2}}{\lambda^2} =: C_{\bw}^2,
\end{align*}
where for the last line we have used the coarser bounds $\norm{\zeta_{\mu^\circ,\mu}(\bx)} \leq R_1 + R_1^3\lambda^{-1}$ and $\lambda\leq\frac{1}{k}\tr\bSig_{\mu_0,\mu_0}\leq\frac{R_1^2}{k}$. Combining the two bounds yields
\begin{equation*}
\Norm{\nabla\deltaL(\mu,\theta)}\leq \left(\frac{\overline{r}^2 R_1^3}{2\lambda^2} (R_1 + 4k^{1/2}\overline{r}^{1/2} R_2^2 M_4^{1/2})\right)^{1/2} =: C_1.
\end{equation*}

Furthermore, for $\theta_1=(\ba_1,\bw_1)$, $\theta_2=(\ba_2,\bw_2)$ we have
\begin{align*}
\Norm{\nabla_{\ba}\deltaL(\mu,\theta_1) - \nabla_{\ba}\deltaL(\mu,\theta_2)} & = \Norm{\EEbig{\bx}{\zeta_{\mu^\circ,\mu}(\bx)^\top \bSig_{\mu^\circ,\mu}\bSig_{\mu,\mu}^{-1} (\sigma(\bw_1^\top\bx)-\sigma(\bw_2^\top\bx))}}\\
&\leq R_2\cdot\EEbig{\bx}{\norm{\zeta_{\mu^\circ,\mu}(\bx)^\top \bSig_{\mu^\circ,\mu}\bSig_{\mu,\mu}^{-1}}\cdot\norm{\bw_1^\top\bx -\bw_2^\top\bx}}\\
&\leq R_2 M_2^{1/2}\left(\frac{k\overline{r}^2}{2\lambda}\right)^{1/2} \norm{\bw_1-\bw_2},
\end{align*}
and also
\begin{align*}
\Norm{\nabla_{\bw}\deltaL(\mu,\theta_1) - \nabla_{\bw}\deltaL(\mu,\theta_2)} & = \Norm{\EEbig{\bx}{\zeta_{\mu^\circ,\mu}(\bx)^\top \bSig_{\mu^\circ,\mu}\bSig_{\mu,\mu}^{-1} (\ba_1\sigma'(\bw_1^\top\bx)-\ba_2\sigma'(\bw_2^\top\bx)) \bx}}\\
&\leq \Norm{\EEbig{\bx}{\zeta_{\mu^\circ,\mu}(\bx)^\top \bSig_{\mu^\circ,\mu}\bSig_{\mu,\mu}^{-1} (\ba_1-\ba_2)\sigma'(\bw_1^\top\bx) \bx}}\\
&\qquad + \Norm{\EEbig{\bx}{\zeta_{\mu^\circ,\mu}(\bx)^\top \bSig_{\mu^\circ,\mu}\bSig_{\mu,\mu}^{-1} \ba_2(\sigma'(\bw_1^\top\bx) -\sigma'(\bw_2^\top\bx)) \bx}}\\
&\leq R_2M_2^{1/2} \left(\frac{k\overline{r}^2}{2\lambda}\right)^{1/2}\norm{\ba_1-\ba_2}\\
&\qquad + R_3 \cdot \EEbig{\bx}{\norm{\zeta_{\mu^\circ,\mu}(\bx)^\top \bSig_{\mu^\circ,\mu}\bSig_{\mu,\mu}^{-1}}\cdot \norm{\bw_1-\bw_2}\norm{\bx}^2}\\
&\leq R_2M_2^{1/2} \left(\frac{k\overline{r}^2}{2\lambda}\right)^{1/2}\norm{\ba_1-\ba_2} + R_3M_4^{1/2} \left(\frac{k\overline{r}^2}{2\lambda}\right)^{1/2}\norm{\bw_1-\bw_2}.
\end{align*}
Combining the two yields that
\begin{equation*}
\Norm{\nabla\deltaL(\mu,\theta_1) - \nabla\deltaL(\mu,\theta_2)} \leq \left(\frac{k\overline{r}^2}{2\lambda} (2R_2^2 M_2 + R_3^2 M_4)\right)^{1/2} \norm{\theta_1-\theta_2} =: C_2\, \norm{\theta_1-\theta_2}.
\end{equation*}
\end{proof}

\begin{lemma}\label{thm:kernelreg}
For any $\mu\in\PP_2^+(\Theta)$ such that $\lambda_\textup{min}(\bSig_{\mu,\mu}) \geq\lambda$ it holds that $\norm{\bH_\mu(\theta,\theta')}\leq C_3$, $\bH_\mu(\theta,\theta')$ is uniformly $C_4$-Lipschitz w.r.t. $\theta$ and $\theta'$, and $\bH_\mu$ is $C_5$-Lipschitz w.r.t. $\mu$ in 1-Wasserstein distance, where $C_3, C_4=O(\lambda^{-2})$ and $C_5=O(d\lambda^{-3})$.
\end{lemma}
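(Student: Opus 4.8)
The plan is to differentiate the explicit formula for $\ddeltaL(\mu,\theta,\theta')$ recorded in Lemma~\ref{thm:kernelvalid} and to estimate the result term by term. The structural observation that makes this tractable is that, apart from the single term containing $\bSig_{\theta,\theta'}$, each of the seven trace terms has the shape $\tr\!\left(\bA(\mu)\,\bSig_{\theta,\cdot}\,\bA'(\mu)\,\bSig_{\cdot,\theta'}\right)$: exactly one factor depending on $\theta$ (namely $\bSig_{\theta,\mu^\circ}$ or $\bSig_{\theta,\mu}$, a rank-one matrix of the form $\ba\,(\EE{\bx}{\sigma(\bw^\top\bx)h(\bx)})^\top$ with $h\in\{h_\mu,h_{\mu^\circ}\}$), one factor depending on $\theta'$, and a ``coefficient'' matrix built only from $\bSig_{\mu,\mu}^{-1}$, $\bSig_{\mu,\mu^\circ}$, $\bSig_{\mu^\circ,\mu^\circ}$, independent of $\theta,\theta'$. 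Hence $\bH_\mu(\theta,\theta')=\nabla_\theta\nabla_{\theta'}\ddeltaL$ is a finite sum of traces of such coefficient matrices against $\nabla_\theta(\text{the }\theta\text{-factor})$ and $\nabla_{\theta'}(\text{the }\theta'\text{-factor})$ (plus $\nabla_\theta\nabla_{\theta'}\bSig_{\theta,\theta'}$ for the remaining term). Differentiating the rank-one $\theta$-factor in $\theta=(\ba,\bw)$ produces only $\sigma(\bw^\top\bx)$, $\sigma'(\bw^\top\bx)\bx$ and $\ba$, bounded pointwise by $R_1$, $R_2\norm{\bx}$, $1$; after taking $\EE{\bx}{\cdot}$ these contribute constants depending on $R_1,R_2,R_3,M_2,M_4$ but no powers of $\lambda$. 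Thus all the $\lambda$-dependence is concentrated in the operator norm of the coefficient matrices.

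For $\norm{\bH_\mu(\theta,\theta')}\le C_3$ I would use $\norm{\bSig_{\mu,\mu}^{-1}}\le\lambda^{-1}$ together with the Cauchy--Schwarz (Schur-complement) estimate $\norm{\bSig_{\mu,\mu}^{-1/2}\bSig_{\mu,\mu^\circ}}\le\overline{r}^{1/2}$, which holds because the joint second-moment matrix of $(h_\mu,h_{\mu^\circ})$ is positive semidefinite; this prevents overcounting inverse factors, so the worst coefficient matrix groups as $\bSig_{\mu,\mu}^{-1}(\bSig_{\mu,\mu}^{-1}\bSig_{\mu,\mu^\circ})(\bSig_{\mu,\mu}^{-1}\bSig_{\mu,\mu^\circ})^\top$ of norm at most $\overline{r}\lambda^{-2}$, and summing the seven contributions gives $C_3=O(\lambda^{-2})$. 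Since $\mu$ is a probability measure, this uniform pointwise bound immediately yields $\iint\norm{\bH_\mu}^2\,\rd\mu\otimes\rd\mu\le C_3^2<\infty$, which is the Hilbert--Schmidt property (hence compactness of $\HH_\mu$) left open in Lemma~\ref{thm:kernelvalid}. For the Lipschitz-in-$\theta$ bound I differentiate once more in a coordinate of $\theta$ (and invoke the kernel symmetry $\bH_\mu(\theta,\theta')^\top=\bH_\mu(\theta',\theta)$ to cover $\theta'$): this brings down at most one extra $\sigma''$ (bounded by $R_3$) and one extra factor of $\bx$ (so $M_4$ replaces $M_2$) but leaves the coefficient matrices untouched, giving $C_4=O(\lambda^{-2})$.

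The $\mu$-Lipschitz estimate is the main obstacle. I would expand $\bH_\mu(\theta,\theta')-\bH_\nu(\theta,\theta')$ by telescoping over the boundedly many factors in each term, replacing one factor at a time by its $\mu\to\nu$ difference and bounding the rest as above. The $\theta$- and $\theta'$-dependent factors and their derivatives are Lipschitz in $\mu$ via Lemma~\ref{thm:lip}, which gives $\norm{h_\mu(\bx)-h_\nu(\bx)}\le(kR_1^2+kR_2^2\norm{\bx}^2)^{1/2}\WW_1(\mu,\nu)$, hence $\norm{\bSig_{\mu,\cdot}-\bSig_{\nu,\cdot}}=O(\WW_1(\mu,\nu))$ and likewise for the $\theta$-derivatives after integrating the $\bx$-moments. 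For the inverse factors I use the resolvent identity $\norm{\bSig_{\mu,\mu}^{-1}-\bSig_{\nu,\nu}^{-1}}\le\lambda^{-2}\norm{\bSig_{\mu,\mu}-\bSig_{\nu,\nu}}$, which is legitimate precisely because the hypothesis forces $\lambda_\textup{min}(\bSig_{\cdot,\cdot})\ge\lambda$ uniformly over the relevant $\WW_1$-neighborhood (without this, perturbing an inverse is uncontrolled). Replacing an inverse factor therefore costs one extra $\lambda^{-1}$ relative to $C_3$, producing the $\lambda^{-3}$. The explicit factor $d$ enters only in the estimate for the $(\bw,\bw')$-block of $\bH_\mu$, where controlling the $\mu$-perturbation of a $d\times d$ matrix of the form $\EE{\bx}{c_\mu(\bx)\,\bx\bx^\top}$ and its $\theta$-derivatives requires a coordinate-wise bound in the $\bw$-directions, contributing $O(d)$; collecting everything gives $\norm{\bH_\mu-\bH_\nu}\le C_5\,\WW_1(\mu,\nu)$ with $C_5=O(d\lambda^{-3})$.

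The genuinely delicate part is the bookkeeping in this last step: being systematic about which factor is being perturbed, keeping the uniform lower bound $\lambda$ in force throughout the $\WW_1$-neighborhood so that every occurrence of $\bSig^{-1}$ stays bounded, and choosing the norms for the block-structured kernel carefully so as not to introduce spurious extra powers of $k$ or $d$. Once the factorized structure of $\bH_\mu$ above is in place, the $C_3$ and $C_4$ estimates are tedious but routine.
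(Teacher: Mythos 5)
Your proposal is correct, and for the bounds $C_3$ and $C_4$ it follows essentially the same route as the paper: differentiate the explicit trace expansion of $\ddeltaL$ from Lemma \ref{thm:kernelvalid} block by block in $(\ba,\bw)$ and $(\ba',\bw')$, bound the $\theta$- and $\theta'$-dependent rank-one factors by moments of $\sigma,\sigma',\sigma''$ and $\bx$, and control the coefficient matrices by grouping $\bSig_{\mu,\mu}^{-1/2}\bSig_{\mu,\mu^\circ}$ via the Schur-complement estimate so that the terms with three inverse factors still only cost $O(\lambda^{-2})$ --- this is exactly the paper's remark about ``removing a factor of $\lambda^{-1/2}$ twice.'' Your observation that the uniform pointwise bound closes the Hilbert--Schmidt claim deferred from Lemma \ref{thm:kernelvalid} also matches the paper. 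Where you genuinely diverge is the $\mu$-Lipschitz bound $C_5$: the paper differentiates once more in $\mu$, bounds the gradient of the third-order functional derivative $\nabla_{\tilde\theta}\frac{\delta}{\delta\mu}\bH_\mu(\theta,\theta')(\tilde\theta)$ by $O(\lambda^{-3})$ entrywise, and converts this to a $\WW_1$-Lipschitz constant via Kantorovich--Rubinstein duality, whereas you telescope the finite product of factors directly, using Lemma \ref{thm:lip} for the covariance factors and the resolvent identity $\bSig_{\mu,\mu}^{-1}-\bSig_{\nu,\nu}^{-1}=\bSig_{\mu,\mu}^{-1}(\bSig_{\nu,\nu}-\bSig_{\mu,\mu})\bSig_{\nu,\nu}^{-1}$ for the inverses. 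The two arguments are the differential and difference versions of the same estimate and give the same $O(d\lambda^{-3})$; yours avoids computing third functional derivatives at the cost of heavier bookkeeping over which factor is perturbed, and both share the same implicit requirement that $\lambda_\textup{min}(\bSig_{\cdot,\cdot})\geq\lambda$ holds for all measures being compared (your resolvent step needs it at both endpoints; the paper needs it along the linear interpolation). Your identification of where the factor $d$ enters (coordinate-wise control of the $d\times d$ $(\bw,\bw')$-block) is consistent with the paper's $(k+d)$ entrywise accounting.
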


\begin{proof}
To derive regularity estimates of $\bH_\mu$, we start from the expansion in Lemma \ref{thm:kernelvalid} and perform explicit computations for only the first trace term $t(\mu,\theta,\theta') = \tr\left(\bSig_{\mu^\circ,\theta'}\bSig_{\mu,\mu}^{-1}\bSig_{\theta,\mu^\circ}\right)$. $\nabla_\theta\nabla_{\theta'} t(\mu,\theta,\theta')$ consists of block matrices
\begin{align*}
\nabla_{\ba}\nabla_{\ba'}t(\mu,\theta,\theta') &= \EEbig{\bx}{\sigma(\bw^\top\bx) h_{\mu^\circ}(\bx)^\top} \EEbig{\bx}{\sigma(\bw'^\top\bx) h_{\mu^\circ}(\bx)}\bSig_{\mu,\mu}^{-1},\\
\nabla_{\ba}\nabla_{\bw'}t(\mu,\theta,\theta') &= \bSig_{\mu,\mu}^{-1}\ba' \EEbig{\bx}{\sigma(\bw^\top\bx) h_{\mu^\circ}(\bx)^\top} \EEbig{\bx}{\sigma'(\bw'^\top\bx) h_{\mu^\circ}(\bx)\bx^\top},\\
\nabla_{\bw}\nabla_{\ba'}t(\mu,\theta,\theta') &= \EEbig{\bx}{\sigma'(\bw^\top\bx)\bx h_{\mu^\circ}(\bx)^\top} \EEbig{\bx}{\sigma(\bw'^\top\bx) h_{\mu^\circ}(\bx)} \ba^\top \bSig_{\mu,\mu}^{-1},\\
\nabla_{\bw}\nabla_{\bw'}t(\mu,\theta,\theta') &= \EEbig{\bx}{\sigma'(\bw^\top\bx)\bx h_{\mu^\circ}(\bx)^\top} \EEbig{\bx}{\sigma'(\bw'^\top\bx) h_{\mu^\circ}(\bx)\bx^\top} \ba'^\top \bSig_{\mu,\mu}^{-1}\ba.
\end{align*}
It follows from Lemma \ref{thm:blocknorm} that $\norm{\nabla_\theta\nabla_{\theta'} t(\mu,\theta,\theta')} \leq (R_1^4+2R_1^2R_2k^{1/2}\overline{r}^{1/2} M_2^{1/2}+R_2^2k\overline{r} M_2)\lambda^{-1} = O(\lambda^{-1})$. Each term of $\bH_\mu$ is likewise uniformly bounded so that $\bH_\mu$ is a valid kernel.

The Lipschitz constant of $\nabla_{\theta}\nabla_{\theta'} t(\mu,\theta,\theta')$ w.r.t. $\theta$ can also be controlled by separately bounding
\begin{align*}
&\norm{\nabla_{\ba}\nabla_{\ba'}t(\mu,\theta_1,\theta') -  \nabla_{\ba}\nabla_{\ba'}t(\mu,\theta_2,\theta')}\\
&\qquad \leq \EEbig{\bx}{|\sigma(\bw_1^\top\bx)-\sigma(\bw_2^\top\bx)|\cdot \norm{h_{\mu^\circ}(\bx)}} \EEbig{\bx}{|\sigma(\bw'^\top\bx)|\cdot\norm{h_{\mu^\circ}(\bx)}} \norm{\bSig_{\mu,\mu}^{-1}}\\
&\qquad\leq R_1R_2 M_2^{1/2} k\overline{r}\lambda^{-1} \norm{\bw_1-\bw_2},\\
&\norm{\nabla_{\ba}\nabla_{\bw'}t(\mu,\theta_1,\theta') -  \nabla_{\ba}\nabla_{\bw'}t(\mu,\theta_2,\theta')}\\
&\qquad\leq \norm{\bSig_{\mu,\mu}^{-1}\ba'}\cdot \EEbig{\bx}{|\sigma(\bw_1^\top\bx) -\sigma(\bw_2^\top\bx)|\cdot\norm{h_{\mu^\circ}(\bx)}} \EEbig{\bx}{|\sigma'(\bw'^\top\bx)|\cdot\norm{h_{\mu^\circ}(\bx) \bx^\top}} \\
&\qquad\leq R_2^2M_2k\overline{r} \lambda^{-1} \norm{\bw_1-\bw_2},\\
&\norm{\nabla_{\bw}\nabla_{\ba'}t(\mu,\theta_1,\theta') -  \nabla_{\bw}\nabla_{\ba'}t(\mu,\theta_2,\theta')}\\
&\qquad\leq \EEbig{\bx}{|\sigma'(\bw_1^\top\bx) - \sigma'(\bw_2^\top\bx)|\cdot\norm{\bx h_{\mu^\circ}(\bx)^\top}} \EEbig{\bx}{|\sigma(\bw'^\top\bx)|\cdot\norm{h_{\mu^\circ}(\bx)}} \norm{\ba_1^\top \bSig_{\mu,\mu}^{-1}}\\
&\qquad\qquad + \EEbig{\bx}{|\sigma'(\bw_2^\top\bx)|\cdot\norm{\bx h_{\mu^\circ}(\bx)^\top}} \EEbig{\bx}{|\sigma(\bw'^\top\bx)|\cdot\norm{h_{\mu^\circ}(\bx)}} \norm{(\ba_1-\ba_2)^\top \bSig_{\mu,\mu}^{-1}}\\
&\qquad\leq R_1R_3M_4^{1/2} k\overline{r} \lambda^{-1}\norm{\bw_1-\bw_2} + R_1R_2 M_2^{1/2}k\overline{r}\lambda^{-1}\norm{\ba_1-\ba_2},\\
&\norm{\nabla_{\bw}\nabla_{\bw'}t(\mu,\theta_1,\theta') -  \nabla_{\bw}\nabla_{\bw'}t(\mu,\theta_2,\theta')}\\
&\qquad\leq \EEbig{\bx}{|\sigma'(\bw_1^\top\bx) - \sigma'(\bw_2^\top\bx)|\cdot\norm{\bx h_{\mu^\circ}(\bx)^\top}} \EEbig{\bx}{|\sigma'(\bw'^\top\bx)|\cdot\norm{h_{\mu^\circ}(\bx)\bx^\top}} \norm{\ba'^\top \bSig_{\mu,\mu}^{-1}\ba_1}\\
&\qquad\qquad + \EEbig{\bx}{|\sigma'(\bw_2^\top\bx)|\cdot\norm{\bx h_{\mu^\circ}(\bx)^\top}} \EEbig{\bx}{|\sigma'(\bw'^\top\bx)|\cdot\norm{h_{\mu^\circ}(\bx)\bx^\top}} \norm{\ba'^\top \bSig_{\mu,\mu}^{-1}(\ba_1-\ba_2)}\\
&\qquad\leq R_2R_3 M_2^{1/2}M_4^{1/2} k\overline{r}\lambda^{-1}\norm{\bw_1-\bw_2} + R_2^2 M_2k\overline{r} \lambda^{-1}\norm{\ba_1-\ba_2}.
\end{align*}
Therefore, $\nabla_{\theta}\nabla_{\theta'} t(\mu,\theta,\theta')$ is uniformly $O(\lambda^{-1})$-Lipschitz w.r.t. both $\theta$ and $\theta'$ by symmetry. All the remaining terms can also be bounded with at most an $O(\lambda^{-2})$ Lipschitz constant; in particular, the terms including three factors of $\bSig_{\mu,\mu}^{-1}$ can be controlled by removing a factor of $\lambda^{-1/2}$ twice and isolating $\bSig_{\mu,\mu}^{-1/2}\bSig_{\mu,\mu^\circ}$ and $\bSig_{\mu^\circ,\mu}\bSig_{\mu,\mu}^{-1/2}$ as in the proof of Lemma \ref{thm:funcgrad}.

Finally, the third-order functional derivative $\nabla_{\tilde{\theta}}\frac{\delta}{\delta\mu}\bH_\mu(\theta,\theta')(\tilde{\theta})$ can be bounded in a similar manner with spectral norm at most $O(\lambda^{-3})$, yielding via Kantorovich-Rubinstein duality that
\begin{equation*}
\norm{\bH_{\mu_1}(\theta,\theta') - \bH_{\mu_2}(\theta,\theta')} = \Norm{\int\frac{\delta}{\delta\mu}\bH_{(1-s)\mu_1+s\mu_2}(\theta,\theta')(\tilde{\theta}) (\mu_2-\mu_1)(\rd\tilde{\theta})} \lesssim (k+d)\lambda^{-3}\cdot\WW_1(\mu_1,\mu_2).
\end{equation*}
The additional $k+d$ factor arises from bounding each entry of $\bH_{\mu_1}-\bH_{\mu_2}$ separately. We omit the details.
\end{proof}

\begin{proposition}\label{thm:operator}
Let $F$ be a functional depending on $\mu$ only through the MLP layer $h_\mu$. Suppose MFD \eqref{eqn:mfd} at time $t$ admits a distribution $\bar{\mu}\in\PP_2(\Theta)$ with $\chi^2(\bar{\mu},\mu_t)\leq\bar{\chi}^2$ such that $\frac{\rd^2}{\rd s^2}\big|_{s=0}F(\bar{\mu}_s) \leq -\Lambda$. Then the smallest eigenvalue $\lambda_0$ of $\HH_{\mu_t}$ satisfies $\lambda_0\leq -\bar{\chi}^{-2}\Lambda$.
\end{proposition}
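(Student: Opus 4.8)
The plan is to turn the second-order decrease along the linear homotopy into a lower spectral bound for $\HH_{\mu_t}$ by testing the quadratic form against one explicit velocity field, mirroring the first-order argument behind Proposition~\ref{thm:generalfirst}. Since $\bH_{\mu_t}$ is Hilbert--Schmidt, Lemma~\ref{thm:generalvalid} makes $\HH_{\mu_t}$ compact self-adjoint, so $\lambda_0=\inf\{\langle f,\HH_{\mu_t}f\rangle_{L^2(\Theta,\mu_t)}:\|f\|_{L^2(\Theta,\mu_t)}=1\}$ whenever this infimum is negative. Hence it suffices to produce one $\bv\in L^2(\Theta,\mu_t;\RR^{k+d})$ with $\langle\bv,\HH_{\mu_t}\bv\rangle\leq-\Lambda$ and $\|\bv\|_{L^2(\Theta,\mu_t)}^2\leq\bar\chi^2$; dividing then yields $\lambda_0\leq-\bar\chi^{-2}\Lambda$.

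First I would take $c(\theta):=\frac{\rd\bar\mu}{\rd\mu_t}(\theta)-1$ and set $\bv(\theta):=(c(\theta)\ba,\,0_d)$, the field that dilates each second-layer weight by $1+\epsilon c(\theta)$ and fixes $\bw$. Because $h_\theta(\bx)=\ba\sigma(\bw^\top\bx)$ is linear in $\ba$, pushing forward $\mu_t$ along $\id_\Theta+\epsilon\bv$ multiplies each $h_\theta$ by $1+\epsilon c(\theta)$, so $h_{(\id_\Theta+\epsilon\bv)\sharp\mu_t}=h_{\mu_t}+\epsilon(h_{\bar\mu}-h_{\mu_t})=h_{\bar\mu_\epsilon}$ \emph{exactly} for every $\epsilon$, where $\bar\mu_\epsilon=(1-\epsilon)\mu_t+\epsilon\bar\mu$. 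Writing $F(\mu)=J(h_\mu)$ as in the proof of Proposition~\ref{thm:generalfirst}, this gives the identity $F((\id_\Theta+\epsilon\bv)\sharp\mu_t)=F(\bar\mu_\epsilon)$, so the right-hand second derivative at $\epsilon=0$ is at most $-\Lambda$ by hypothesis.

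Next I would differentiate $F$ twice along the pushforward curve $\mu_\epsilon=(\id_\Theta+\epsilon\bv)\sharp\mu_t$, as in the derivations of Lemmas~\ref{thm:localevo} and~\ref{thm:evo}. Starting from $\frac{\rd}{\rd\epsilon}F(\mu_\epsilon)=\int\nabla_\theta\deltaF(\mu_\epsilon)|_{\theta+\epsilon\bv(\theta)}\cdot\bv(\theta)\,\mu_t(\rd\theta)$ and differentiating once more produces the two terms
\begin{equation*}
\frac{\rd^2}{\rd\epsilon^2}\Big|_{\epsilon=0}F(\mu_\epsilon)=\langle\bv,\HH_{\mu_t}\bv\rangle_{L^2(\Theta,\mu_t)}+\int\bv(\theta)^\top\nabla_\theta^2\deltaF(\mu_t,\theta)\,\bv(\theta)\,\mu_t(\rd\theta),
\end{equation*}
where the first comes from the $\mu$-variation inside $\deltaF$ (giving exactly the kernel operator, by the computation in Lemma~\ref{thm:evo}) and the second from the drift of the base point. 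The crucial point is that the second term vanishes: $\bv$ has vanishing $\bw$-component, while $\deltaF(\mu,\theta)=\int\frac{\delta J}{\delta h}(h_\mu,\bx)^\top\ba\,\sigma(\bw^\top\bx)\,\rd\bx$ is linear in $\ba$, so $\nabla_\ba^2\deltaF\equiv0$ and $\bv^\top\nabla_\theta^2\deltaF\,\bv=c(\theta)^2\,\ba^\top\nabla_\ba^2\deltaF\,\ba=0$. Combining with the previous paragraph gives $\langle\bv,\HH_{\mu_t}\bv\rangle\leq-\Lambda$, while $\|\bv\|_{L^2(\Theta,\mu_t)}^2=\int c(\theta)^2\|\ba\|^2\mu_t(\rd\theta)\leq\int c(\theta)^2\mu_t(\rd\theta)=\chi^2(\bar\mu,\mu_t)\leq\bar\chi^2$ using $\|\ba\|\leq1$ on $\Theta=\mathbb{D}^k\times\RR^d$ (or carrying the uniform second-moment bound of Lemma~\ref{thm:secondmoment} in the unbounded-$\ba$ case), which closes the argument. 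A cleaner but longer variant avoids the curve entirely: compute $\langle\bv,\HH_{\mu_t}\bv\rangle$ via the $\ba$--$\ba'$ block of $\bH_{\mu_t}(\theta,\theta')$, which by bilinearity of the second functional derivative of $J$ reproduces $\iint\ddeltaL(\mu_t,\theta,\theta')(\bar\mu-\mu_t)(\rd\theta)(\bar\mu-\mu_t)(\rd\theta')$ once one notes that the normalization piece of $\ddeltaF$ depending on only one argument is annihilated both by $\nabla_{\theta'}$ and by integration against the zero-mass signed measure $\bar\mu-\mu_t$.

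The hard part is precisely this identification: showing that the second derivative of $F$ along the \emph{mixture} curve $\bar\mu_s$ — which is neither a Wasserstein geodesic nor a pure transport — equals the quadratic form $\langle\bv,\HH_{\mu_t}\bv\rangle$ of the second-order kernel against the specific field $\bv=(c\,\ba,0_d)$. This hinges entirely on the affine dependence of $h_\theta$ on $\ba$, both to realize the mixture as an exact pushforward at the level of $h_\mu$ and to kill the curve-acceleration term; the remaining regularity needed to differentiate under the integral sign and to guarantee $\bv\in L^2(\Theta,\mu_t)$ follows from the Hilbert--Schmidt hypothesis on $\bH_{\mu_t}$ together with $\chi^2(\bar\mu,\mu_t)<\infty$.
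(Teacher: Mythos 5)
Your proof is correct and takes essentially the same approach as the paper: both test the second-order kernel against the velocity field $\bv(\theta)=\bigl(\bigl(\tfrac{\rd\bar\mu}{\rd\mu_t}-1\bigr)\ba,\,0_d\bigr)$, exploit linearity of $h_\theta$ in $\ba$, and bound $\norm{\bv}_{L^2(\mu_t)}^2\leq\chi^2(\bar\mu,\mu_t)$ via $\norm{\ba}\leq 1$. The only cosmetic difference is that the paper identifies $\frac{\rd^2}{\rd s^2}\big|_{s=0}F(\bar\mu_s)$ with $\langle\bv,\HH_{\mu_t}\bv\rangle$ by directly expanding $\iint\ddeltaF\,(\bar\mu-\mu_t)^{\otimes 2}$, using bilinearity of $\ddeltaF$ in $\ba,\ba'$ and Parseval, rather than your pushforward-curve computation with the vanishing acceleration term — which is precisely the "longer variant" you mention at the end.
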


\begin{proof}
The second derivative along the linear homotopy $\bar{\mu}_s$ can be expanded as
\begin{align*}
\frac{\rd^2}{\rd s^2}\bigg\vert_{s=0}F(\bar{\mu}_s) &= \frac{\rd}{\rd s}\bigg\vert_{s=0} \int\deltaF(\bar{\mu}_s,\theta) (\bar{\mu}-\mu_t)(\rd\theta)\\
&= \iint\ddeltaF(\mu_t,\theta,\theta')(\bar{\mu}-\mu_t)(\rd\theta) (\bar{\mu}-\mu_t)(\rd\theta').
\end{align*}
Now similarly to the proof of Proposition \ref{thm:mfdfirst}, denoting $\theta=(\ba,\bw), \theta'=(\ba',\bw')$ we can exploit the fact that $\ddeltaF(\mu,\theta,\theta')$ is bilinear in $\ba,\ba'$ to relate it to the kernel $\bH_\mu$,
\begin{align*}
\ddeltaF(\mu,\theta,\theta') &= \ba^\top\left[ \nabla_{\ba}\nabla_{\ba'}\ddeltaF(\mu,\theta,\theta') \right]\ba' = (\ba\;\; 0_d)^\top\left[\nabla_\theta\nabla_{\theta'} \ddeltaF(\mu,\theta,\theta')\right] (\ba'\;\; 0_d)\\
&= (\ba\;\; 0_d)^\top \bH_\mu(\theta,\theta') (\ba'\;\; 0_d).
\end{align*}
Writing the eigenfunction decomposition of $\bH_{\mu_t}$ as (omitting the dependency on $t$ for brevity)
\begin{equation*}
    \bH_{\mu_t}(\theta,\theta') = \sum_{j\in\ZZ} \lambda_j\psi_j(\theta) \psi_j(\theta')^\top, \quad\int\norm{\psi_j}^2\rd\mu_t=1 \quad\forall j\in\ZZ, \quad \lambda_1\geq\lambda_2\geq\cdots\geq 0, \quad \lambda_0\leq\lambda_{-1}\leq\cdots\leq 0,
\end{equation*}
we may thus bound
\begin{align*}
    -\Lambda\geq \frac{\rd^2}{\rd s^2}\bigg\vert_{s=0}F(\bar{\mu}_s) &= \iint (\ba\;\; 0_d)^\top \bH_\mu(\theta,\theta') (\ba'\;\; 0_d) (\bar{\mu}-\mu_t)(\rd\theta) (\bar{\mu}-\mu_t)(\rd\theta')\\
    &= \sum_{j\in\ZZ}\lambda_j \left(\int (\ba\;\;0_d)^\top \psi_j(\theta)(\bar{\mu} - \mu_t)(\rd\theta)\right)^2\\
    &\geq -\abs{\lambda_0} \cdot \sum_{j\in\ZZ} \left(\int (\ba\;\;0_d)^\top \psi_j(\theta)(\bar{\mu} - \mu_t)(\rd\theta)\right)^2\\
    & = -\abs{\lambda_0} \cdot \sum_{j\in\ZZ} \left(\int \left(\frac{\rd\bar{\mu}}{\rd\mu_t}-1\right)(\ba\;\;0_d)^\top \psi_j(\theta) \mu_t(\rd\theta)\right)^2\\
    &= -\abs{\lambda_0} \int \left(\frac{\rd\bar{\mu}}{\rd\mu_t}-1\right)^2 \norm{\ba}^2 \mu_t(\rd\theta)\\
    &\geq - \bar{\chi}^2|\lambda_0|,
\end{align*}
where we have made use of Parseval's identity. Hence the largest negative eigenvalue is bounded as $\lambda_0 \leq -\bar{\chi}^{-2}\Lambda$.
\end{proof}

\begin{theorem}\label{thm:generalescape}
Assume $F:\PP_2(\Omega)\to\RR$, $\Omega\subseteq\RR^m$ satisfies $\norm{\nabla \deltaF}\leq C_1$, $\nabla \deltaF$ is $C_2$-Lipschitz, $\bH_\mu$ is Hilbert-Schmidt, $\norm{\bH_\mu}\leq C_3$, $\bH_\mu(\theta,\theta')$ is $C_4$-Lipschitz w.r.t. $\theta,\theta'$ and $C_5$-Lipschitz w.r.t. $\mu$ in $\WW_1$. Further suppose that $\lambda_0:=\lambda_\textup{min}(\HH_{\mu^\dagger})<0$ and the corresponding eigenfunction $\psi_0$ satisfies $|\int\psi_0^\top\nabla\deltaL(\mu_t)\rd\mu_t| \geq \alpha$ for some $\alpha> 0$. Then WGF initialized at $\mu_0=\mu^\dagger$ decreases $F$ by at least $F(\mu_\tau)\leq F(\mu_0) - \Omega\left(\frac{|\lambda_0|\alpha}{\sqrt{m}\tau}\right)$ in time $\tau= O\left(\frac{1}{|\lambda_0|} \log\frac{|\lambda_0|}{\sqrt{m}\alpha}\right)$.
\end{theorem}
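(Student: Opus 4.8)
The plan is to run the WGF $(\mu_t)$ from $\mu_0=\mu^\dagger$ and dichotomize on whether the flow has yet left a small Wasserstein ball around $\mu^\dagger$. Write $g_t:=\nabla\deltaF(\mu_t)$, so the continuity equation reads $\partial_t\mu_t=\nabla\cdot(\mu_t g_t)$, Lemma~\ref{thm:evo} gives $\partial_t g_t=-\HH_{\mu_t}g_t$, and an integration by parts (as in the proof of Proposition~\ref{thm:generalfirst}) yields the dissipation identity $\tfrac{\rd}{\rd t}F(\mu_t)=-\norm{g_t}_{L^2(\mu_t)}^2$; thus $F$ is non-increasing and $F(\mu_0)-F(\mu_\tau)=\int_0^\tau\norm{g_t}_{L^2(\mu_t)}^2\rd t$. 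Fix a radius $r>0$ (chosen below) and let $T:=\inf\{t:\WW_2(\mu_t,\mu^\dagger)>r\}$; throughout I assume the flow exists and the stated regularity bounds hold along it, as elsewhere in the paper. If $T\le\tau$ (escape), applying the time-rescaled Benamou--Brenier formula (Proposition~\ref{thm:benamou}) to $(\mu_t)_{t\in[0,T]}$ with velocity $-g_t$ and then Cauchy--Schwarz gives $r^2=\WW_2(\mu_0,\mu_T)^2\le T\int_0^T\norm{g_t}_{L^2(\mu_t)}^2\rd t\le\tau\bigl(F(\mu_0)-F(\mu_T)\bigr)$, hence $F(\mu_\tau)\le F(\mu_0)-r^2/\tau$.

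For the confined case $T>\tau$ (so $\WW_1(\mu_t,\mu^\dagger)\le\WW_2(\mu_t,\mu^\dagger)\le r$ on $[0,\tau]$) I exploit the unstable direction. Let $\psi_0$ be the unit eigenfunction of $\HH_{\mu^\dagger}$ with eigenvalue $\lambda_0<0$; from $\psi_0=\lambda_0^{-1}\HH_{\mu^\dagger}\psi_0$ together with $\norm{\bH_{\mu^\dagger}}\le C_3$ and $\Lip_\theta\bH_{\mu^\dagger}\le C_4$ one reads off $\norm{\psi_0}_\infty=O(|\lambda_0|^{-1})$ and $\Lip(\psi_0)=O(|\lambda_0|^{-1})$. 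Track the scalar $a_t:=\langle\psi_0,g_t\rangle_{L^2(\mu^\dagger)}$, with $|a_0|\ge\alpha$ by hypothesis (WLOG $a_0\ge\alpha$). Differentiating using $\partial_t g_t=-\HH_{\mu_t}g_t$: the leading contribution, obtained by replacing $\HH_{\mu_t}$ with $\HH_{\mu^\dagger}$ and using self-adjointness and the symmetry of $\bH_{\mu^\dagger}$ (Lemma~\ref{thm:generalvalid}), is $-\langle\HH_{\mu^\dagger}\psi_0,g_t\rangle_{L^2(\mu^\dagger)}=-\lambda_0 a_t=|\lambda_0|a_t$, while the remainder — the perturbation incurred by replacing $(\bH_{\mu^\dagger},\mu^\dagger)$ by $(\bH_{\mu_t},\mu_t)$ in the double integral — is bounded by the kernel discrepancy $\norm{\bH_{\mu_t}-\bH_{\mu^\dagger}}\le C_5\WW_1(\mu_t,\mu^\dagger)\le C_5 r$ and by the base-measure discrepancy $\WW_1(\mu_t,\mu^\dagger)\le r$ paired (Kantorovich--Rubinstein) against the Lipschitz integrand $\theta'\mapsto\psi_0(\theta)^\top\bH_{\mu^\dagger}(\theta,\theta')g_t(\theta')$, all of order $O(r/|\lambda_0|)$ after invoking $\norm{g_t}_\infty\le C_1$ and $\Lip(g_t)\le C_2$ (the dimensional factor in the final rate stems from converting between the spectral bound $C_3$ and Hilbert--Schmidt norms of the $m\times m$ kernel in such estimates). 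So $\dot a_t\ge|\lambda_0|a_t-O\!\left(\frac{\sqrt{m}\,r}{|\lambda_0|}\right)$; choosing $r$ small enough that the subtracted term is $\le\tfrac12|\lambda_0|\alpha$ gives $\dot a_t\ge\tfrac12|\lambda_0|a_t$ while $a_t\ge\alpha$, and Gr\"onwall yields $a_t\ge\alpha e^{|\lambda_0|t/2}$ on $[0,\tau]$. Since $\norm{g_t}_{L^2(\mu^\dagger)}\ge a_t$ and $\norm{g_t}_{L^2(\mu_t)}^2\ge\norm{g_t}_{L^2(\mu^\dagger)}^2-O(C_1 C_2 r)$ (again by $\Lip(g_t)\le C_2$ and $\WW_1\le r$), integrating the dissipation identity gives $F(\mu_0)-F(\mu_\tau)\gtrsim\frac{\alpha^2}{|\lambda_0|}e^{|\lambda_0|\tau}$ (absorbing the lower-order correction past a short burn-in). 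Finally, choosing $r=\Theta\!\left(\sqrt{\tfrac{|\lambda_0|\alpha}{\sqrt{m}}}\right)$ and $\tau=\Theta\!\left(\frac{1}{|\lambda_0|}\log\frac{|\lambda_0|}{\sqrt{m}\,\alpha}\right)$ — compatible in the regime $\alpha\ll|\lambda_0|/\sqrt{m}$ where the logarithm is positive — balances $r^2/\tau$ against $\frac{\alpha^2}{|\lambda_0|}e^{|\lambda_0|\tau}$, so in either case $F(\mu_\tau)\le F(\mu_0)-\Omega\!\left(\frac{|\lambda_0|\alpha}{\sqrt{m}\,\tau}\right)$.

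The main obstacle is the drift control in the confined case: $\psi_0$ belongs to $L^2(\mu^\dagger;\RR^m)$ but $g_t$ lives on $L^2(\mu_t;\RR^m)$ and $\HH_{\mu_t}$ is self-adjoint on the latter, not the former, so the clean growth term $|\lambda_0|a_t$ only emerges after quantifying both the kernel deformation (via the $\WW_1$-Lipschitz constant $C_5$) and the base-point deformation (via Kantorovich--Rubinstein and the a priori $L^\infty$/Lipschitz regularity of $\psi_0$ extracted from the eigenrelation), keeping each error linear in $r$. Balancing this smallness requirement on $r$ against the opposing need to make $r^2/\tau$ reach the target rate is the delicate point, and is precisely what produces the dimensional factor and the mild regime restriction implicit in the $\log(|\lambda_0|/\sqrt{m}\,\alpha)$ appearing in $\tau$.
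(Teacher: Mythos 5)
Your architecture is the same as the paper's: the dissipation identity, the dichotomy on whether the flow leaves a Wasserstein ball around $\mu^\dagger$, Benamou--Brenier in the escape case, and in the confined case tracking $a_t=\langle\psi_0,\nabla\frac{\delta F}{\delta\mu}(\mu_t)\rangle_{L^2(\mu^\dagger)}$ via the evolution equation with the perturbation split into a kernel discrepancy (the $C_5$, $\WW_1$-Lipschitz part) and a base-measure discrepancy (Kantorovich--Rubinstein against the Lipschitz integrand, which is where the $\sqrt{m}$ enters). The paper bounds the error term using only $\norm{\psi_0}_{L^2(\mu^\dagger)}=1$ and a uniform bound on $e(t,\theta)$, so its perturbation is $C_6\WW_1(\mu_t,\mu^\dagger)$ with $C_6=C_1C_5+(C_1C_4+C_2C_3)m^{1/2}$, without the extra $|\lambda_0|^{-1}$ you pick up from the pointwise bounds on $\psi_0$; that is a cosmetic difference.

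The genuine gap is in the final balancing. You impose two requirements on the radius $r$: (A) the confined-case drift error must satisfy $O(\sqrt{m}\,r/|\lambda_0|)\le\tfrac12|\lambda_0|\alpha$, i.e.\ $r\lesssim\lambda_0^2\alpha/\sqrt{m}$ (or $r\lesssim|\lambda_0|\alpha/\sqrt{m}$ with the paper's sharper error constant), and (B) the escape case must deliver $r^2/\tau\gtrsim|\lambda_0|\alpha/(\sqrt{m}\,\tau)$, i.e.\ $r\gtrsim\sqrt{|\lambda_0|\alpha/\sqrt{m}}$. These are incompatible precisely in the regime you invoke: when $\alpha\ll|\lambda_0|/\sqrt{m}$ one has $|\lambda_0|\alpha/\sqrt{m}\ll1$, hence $\sqrt{|\lambda_0|\alpha/\sqrt{m}}\gg|\lambda_0|\alpha/\sqrt{m}\ge\lambda_0^2\alpha/\sqrt{m}$, so the radius needed for (B) exceeds the radius permitted by (A) -- the claimed compatibility is backwards, and your correction term $O(C_1C_2 r\tau)$ in the confined case also swamps the exponential gain $\alpha/\sqrt{m}$ with this choice of $r$. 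The paper resolves this by fixing the radius from constraint (A), $\Delta=|\lambda_0|\alpha/(2C_6)$ (linear in $\alpha$, not $\sqrt{\alpha}$), choosing $\tau=|\lambda_0|^{-1}\log(C_7/\alpha)$ with $C_7$ tuned so that the confined-case term dominates, and accepting whatever the escape case then yields ($\Delta^2/\tau$, which the paper subsequently reads as $\Delta/\tau=|\lambda_0|\alpha/(2C_6\tau)$ when instantiating Theorem \ref{thm:naiveescape}). You need to adopt the constraint-(A) radius and rework the escape-case contribution accordingly rather than reverse-engineering $r$ from the target rate.
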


Unlike before, $F$ can be completely general and does not need to depend on $\mu$ through an MLP layer.

\begin{proof}
First note that the function $\theta'\mapsto \bH_{\mu^\dagger}(\theta,\theta') \nabla\deltaF(\mu_t,\theta')$ is uniformly Lipschitz: for any $\theta_1',\theta_2'$,
\begin{align*}
&\Norm{\bH_{\mu^\dagger}(\theta,\theta_1') \nabla\deltaF(\mu_t,\theta_1') - \bH_{\mu^\dagger}(\theta,\theta_2') \nabla\deltaF(\mu_t,\theta_2')}\\
&\leq \norm{\bH_{\mu^\dagger}(\theta,\theta_1') - \bH_{\mu^\dagger}(\theta,\theta_2')}\cdot\Norm{\nabla\deltaF(\mu_t,\theta_1')} + \norm{\bH_{\mu^\dagger}(\theta,\theta_2')} \cdot\Norm{\nabla\deltaF(\mu_t,\theta_1') - \nabla\deltaF(\mu_t,\theta_2')}\\
&\leq C_1C_4\norm{\theta_1'-\theta_2'} +C_2C_3\norm{\theta_1'-\theta_2'}.
\end{align*}
We re-expand the evolution equation \eqref{eqn:evo} for the dynamics $(\mu_t)_{t\geq 0}$ around $\mu^\dagger$ as
\begin{align*}
\partial_t\left[\nabla_\theta\deltaF(\mu_t,\theta)\right] &= -\int\bH_{\mu_t}(\theta,\theta') \nabla_{\theta'}\deltaF(\mu_t,\theta') \mu_t(\rd\theta')\\
&=: -\int\bH_{\mu^\dagger}(\theta,\theta') \nabla_{\theta'}\deltaF(\mu_t,\theta') \mu^\dagger(\rd\theta') + e(t,\theta),
\end{align*}
where the difference or error function $e(t,\theta)$ can be bounded as
\begin{align*}
\norm{e(t,\theta)} &\leq \Norm{\int (\bH_{\mu_t}(\theta,\theta')-\bH_{\mu^\dagger}(\theta,\theta')) \nabla_{\theta'}\deltaF(\mu_t,\theta') \mu_t(\rd\theta')} + \Norm{\int\bH_{\mu^\dagger}(\theta,\theta') \nabla_{\theta'}\deltaF(\mu_t,\theta') (\mu_t - \mu^\dagger)(\rd\theta')}\\
&\leq \left(C_1C_5 + (C_1C_4+C_2C_3)m^{1/2}\right) \WW_1(\mu_t,\mu^\dagger).\\
&=: C_6\WW_1(\mu_t,\mu^\dagger).
\end{align*}
For the second term, we have used the Lipschitz constant derived above to bound each entry separately. Then the $\psi_0$-component $\alpha_0(t):= \int\psi_0^\top\nabla\deltaF(\mu_t)\rd\mu^\dagger$ of the gradient evolves according to
\begin{align*}
\frac{\rd}{\rd t}\alpha_0(t) &= -\iint\psi_0(\theta)^\top \bH_{\mu^\dagger}(\theta,\theta') \nabla_{\theta'}\deltaF(\mu_t,\theta') \mu^\dagger(\rd\theta') \mu^\dagger(\rd\theta) + \int\psi_0(\theta)^\top e(t,\theta)\mu^\dagger(\rd\theta)\\
&= -\lambda_0 \int \psi_0(\theta)^\top \nabla_\theta\deltaF(\mu_t,\theta) \mu^\dagger(\rd\theta) + \int\psi_0(\theta)^\top e(t,\theta)\mu^\dagger(\rd\theta),
\end{align*}
and hence
\begin{equation*}
\abs{\frac{\rd}{\rd t}\alpha_0(t) + \lambda_0\alpha_0(t)} \leq \left(\int\norm{\psi_0}^2\rd\mu^\dagger\right)^{1/2} \sup_{\theta\in\Theta}\,\norm{e(t,\theta)} \leq C_8\WW_1(\mu_t,\mu^\dagger).
\end{equation*}
Without loss of generality, assume initially $\alpha_0(0)$ is positive so that $\alpha_0(0)\geq\alpha$. We consider a 1-Wasserstein ball centered at $\mu^\dagger$ with radius small enough so that the error term is negligible compared to the exponential growth,
\begin{equation*}
\BB_{\WW}(\Delta) = \left\{\mu\in\PP_2(\Theta): \WW_1(\mu,\mu^\dagger)\leq\Delta:= \frac{|\lambda_0|\alpha}{2C_6}\right\}.
\end{equation*}
Then for a set time interval $\tau>0$ to be determined, either of the following must happen:
\begin{enumerate}[\normalfont(i)]
\item $(\mu_t)_{t\in [0,\tau]} \subset \BB_{\WW}(\Delta)$. In this case, $\alpha_0(t)$ grows exponentially during the entire interval $t\in [0,\tau]$ as
\begin{equation*}
\frac{\rd}{\rd t}\alpha_0(t) \geq |\lambda_0| \alpha_0(t) - C_6\Delta = |\lambda_0|\left(\alpha_0(t)-\frac{\alpha}{2}\right) >0,
\end{equation*}
showing that
\begin{equation*}
\alpha_0(t) \geq e^{|\lambda_0|t}\left(\alpha_0(0)-\frac{\alpha}{2}\right) +\frac{\alpha}{2} \geq \frac{\alpha(e^{|\lambda_0|t}+1)}{2}.
\end{equation*}
Then the decrease of $F$ after time $\tau$ can be bounded below by retrieving the $\psi_0$-component as
\begin{align*}
    F(\mu_0) -F(\mu_\tau) &=\int_0^\tau\int \Norm{\nabla\deltaF(\mu_t,\theta)}^2 \mu_t(\rd\theta)\rd t\\
    &\geq \int_0^\tau \bigg(\int \Norm{\nabla\deltaF(\mu_t,\theta)}^2 \mu^\dagger(\rd\theta)\rd t -2C_1C_2\WW_1(\mu_t,\mu^\dagger)\bigg)\\
    &\geq \int_0^\tau \left(\int \psi_0(\theta)^\top \nabla\deltaF(\mu_t,\theta)\mu^\dagger(\rd\theta)\right)^2 \rd t -2C_1C_2\Delta\tau\\
    & = \int_0^\tau\alpha_0(t)^2\rd t-2C_1C_2\Delta\tau\\
    &\geq \frac{\alpha^2}{4} \left(\frac{1}{2|\lambda_0|}(e^{2|\lambda_0|\tau}-1) + \frac{2}{|\lambda_0|}(e^{|\lambda_0|\tau}-1) +\tau\right) - \frac{C_1C_2}{C_6}|\lambda_0|\alpha\tau.
\end{align*}

\item $\mu_{\tau_e}\notin \BB_{\WW}(\Delta)$ for some $\tau_e\leq\tau$. If the mean-field flow has managed to escape the ball $\BB_{\WW}(\Delta)$ in time $\tau_e$, the Benamou-Brenier formula (Proposition \ref{thm:benamou}) immediately guarantees that
\begin{equation*}
F(\mu_0) -F(\mu_\tau) \geq F(\mu_0) -F(\mu_{\tau_e}) =\int_0^{\tau_e}\int \Norm{\nabla\deltaF(\mu_t,\theta)}^2 \mu_t(\rd\theta)\rd t \geq \frac{\WW_2(\mu_{\tau_e},\mu^\dagger)^2}{\tau_e} > \frac{\Delta^2}{\tau}.
\end{equation*}
\end{enumerate}
Thus we have proved that:
\begin{equation}\label{eqn:balance}
F(\mu_0) -F(\mu_\tau) \geq \left(\frac{\alpha^2}{4} \left(\frac{1}{2|\lambda_0|}(e^{2|\lambda_0|\tau}-1) + \frac{2}{|\lambda_0|}(e^{|\lambda_0|\tau}-1) +\tau\right) - \frac{C_1C_2}{C_6}|\lambda_0|\alpha\tau\right) \wedge \frac{\lambda_0^2\alpha^2}{4C_6^2\tau}.
\end{equation}
Due to the exponential terms, we see $\tau\asymp\log\frac{1}{\alpha}$ is enough to ensure that the two terms become roughly equal so that the guarantee is close to optimal. For the remainder of the proof, we derive the exact formula. Choose
\begin{equation*}
\tau = \frac{1}{|\lambda_0|} \log\frac{C_7}{\alpha}
\end{equation*}
for some $C_7>\alpha$. The first term in the right-hand side of \eqref{eqn:balance} can be bounded as
\begin{align*}
&\frac{\alpha^2}{4} \left(\frac{1}{2|\lambda_0|}(e^{2|\lambda_0|\tau}-1) + \frac{2}{|\lambda_0|}(e^{|\lambda_0|\tau}-1) +\tau\right) - \frac{C_1C_2}{C_6}|\lambda_0|\alpha\tau \\
& =\frac{\alpha^2}{8|\lambda_0|}\left(\frac{C_7^2}{\alpha^2}-1\right) + \frac{\alpha^2}{2|\lambda_0|} \left(\frac{C_7}{\alpha}-1\right) + \frac{\alpha^2}{4|\lambda_0|}\log\frac{C_7}{\alpha} - \frac{C_1C_2}{C_6}\alpha\log\frac{C_7}{\alpha}\\
&\geq \left(\frac{C_7^2}{24|\lambda_0|} -\frac{5\alpha^2}{8|\lambda_0|}\right) + \left(\frac{C_7^2}{24|\lambda_0|} -\frac{C_1C_2}{C_6}\alpha\log\frac{C_7}{\alpha}\right) + \frac{C_7^2}{24|\lambda_0|} + \frac{C_7\alpha}{2|\lambda_0|} + \frac{\alpha^2}{4|\lambda_0|}\log\frac{C_7}{\alpha}\\
&\geq  \left(\frac{C_7^2}{24|\lambda_0|} -\frac{5\alpha^2}{8|\lambda_0|}\right) + \left(\frac{C_7^2}{24|\lambda_0|} -\frac{C_1C_2C_7}{C_6e}\right) + \frac{C_7^2}{24|\lambda_0|},
\end{align*}
where we have used the fact that the function $x\mapsto x\log\frac{c}{x}$ has maximum $\frac{c}{e}$. Then the first term of \eqref{eqn:balance} will dominate the second as long as
\begin{equation*}
\frac{C_7^2}{24|\lambda_0|} \geq \frac{5\alpha^2}{8|\lambda_0|} \vee \frac{C_1C_2C_7}{C_6e} \vee \frac{|\lambda_0|^3\alpha^2}{4C_6^2\log\frac{C_7}{\alpha}}.
\end{equation*}
Manipulating terms shows that
\begin{equation}\label{eqn:see7}
C_7 = \sqrt{15}\alpha \vee \frac{24C_1C_2|\lambda_0|}{C_6e} \vee \sqrt{\frac{12}{\log 15}} \frac{\lambda_0^2\alpha}{C_6}
\end{equation}
is sufficient. For the purposes of the general statement, we focus on asymptotic behavior w.r.t. $\alpha$ and hide all regularity constants $C_1,\cdots,C_5$, yielding $C_6=O(m^{1/2})$ and $C_7=O(|\lambda_0|m^{-1/2})$ as the second term in \eqref{eqn:see7} dominates.
\end{proof}

\paragraph{Proof of Theorem \ref{thm:naiveescape}.} Let us fix the lower bound $\lambda_\textup{min}(\bSig_{\mu_r,\mu_r})\geq\lambda=\Theta(\frac{1}{k})$. (The bound only needs to hold either locally for the $\WW_2$-ball of radius $\Delta$ in the proof of Theorem \ref{thm:generalescape}, or along the dynamics $\mu_t$ until escape.) We first need a robust version of Theorem \ref{thm:landscape}\ref{item:secondorder} since $\mu_t$ cannot be exactly on a critical point. If $\frac{\rd}{\rd s}\big|_{s=0}\LL(\bar{\mu}_s) > -\delta$ it must hold that $\norm{\bL_\mu \bSig_{\mu^\circ,\mu}\bSig_{\mu,\mu}^{-1}}_* <\frac{\delta}{2}$ by \eqref{eqn:firstderiv}. Then from \eqref{eqn:secondderiv}, again choosing $\bR\in\mathcal{O}(k)$ such that $\bSig_{\mu^\circ,\mu} \bSig_{\mu,\mu}^{-1}\bR$ is symmetric,
\begin{align*}
\frac{\rd^2}{\rd s^2}\bigg\vert_{s=0}\LL(\bar{\mu}_s) &= -4\tr\left(\bL_\mu^2\bR^\top\bSig_{\mu,\mu}^{-1}\bR\right) + 2\tr\left(\bL_\mu(2\bSig_{\mu^\circ,\mu} \bSig_{\mu,\mu}^{-1}\bR+\bR^\top \bSig_{\mu,\mu}^{-1}\bSig_{\mu,\mu^\circ}-2\bI_k)\bSig_{\mu^\circ,\mu} \bSig_{\mu,\mu}^{-1}\bR\right)\\
&= -4\tr\left(\bL_\mu^2\bR^\top\bSig_{\mu,\mu}^{-1}\bR\right) + 2\tr\left((2\bSig_{\mu^\circ,\mu} \bSig_{\mu,\mu}^{-1}\bR+\bR^\top \bSig_{\mu,\mu}^{-1}\bSig_{\mu,\mu^\circ}-2\bI_k)^\top \bL_\mu\bSig_{\mu^\circ,\mu} \bSig_{\mu,\mu}^{-1}\bR\right)\\
&\leq -\frac{4}{kR_1^2}\LL(\mu_t)^2 +2\, \norm{\bL_\mu\bSig_{\mu^\circ,\mu} \bSig_{\mu,\mu}^{-1}}_* \norm{2\bSig_{\mu^\circ,\mu} \bSig_{\mu,\mu}^{-1}\bR+\bR^\top \bSig_{\mu,\mu}^{-1}\bSig_{\mu,\mu^\circ}-2\bI_k}\\
&\leq -\frac{4}{kR_1^2}\LL(\mu_t)^2 + \left(3\,\norm{\bSig_{\mu^\circ,\mu} \bSig_{\mu,\mu}^{-1/2}}\cdot \norm{\bSig_{\mu,\mu}^{-1/2}\bR}+2\right)\delta\\
&\leq -\frac{4}{kR_1^2}\LL(\mu_t)^2 + (3\overline{r}^{1/2}\lambda^{-1/2}+2)\delta.
\end{align*}
Hence if $\delta\leq \frac{2}{kR_1^2(3\overline{r}^{1/2}\lambda^{-1/2}+2)} \LL(\mu_t)^2$ then $\frac{\rd^2}{\rd s^2}\big\vert_{s=0}\LL(\bar{\mu}_s) \leq -\frac{2}{kR_1^2}\LL(\mu_t)^2$, and by Proposition \ref{thm:operator} it holds that
\begin{align*}
\lambda_0=\lambda_\textup{min}(\HH_{\mu_t}) \leq - \frac{2\gamma}{kR_1^2 R_4}\LL(\mu_t)^2.
\end{align*}
Then Theorem \ref{thm:generalescape} applies to $F=\LL$ by virtue of Lemma \ref{thm:kernelvalid} and the regularity constants derived in Lemma \ref{thm:funcgrad} and \ref{thm:kernelreg}. One can check that
\begin{equation*}
C_6=C_1C_5 + (C_1C_4+C_2C_3)(k+d)^{1/2} = O\left(\frac{d}{k\lambda^4}\right)
\end{equation*}
and
\begin{equation*}
C_7 = O\left(\alpha\vee\frac{\lambda^{5/2}\gamma}{k^{3/2}d}\LL(\mu_t)^2 \vee \frac{\lambda^2\gamma^{3/2}\alpha^{1/2}}{kd^{1/2}} \LL(\mu_t)^3\right) = O\left(\alpha + \frac{\gamma}{k^4d}\right);
\end{equation*}
the third term is dominated by the geometric mean of the first two, and $\LL(\mu_t) = O(1)$. Hence the time interval of interest is
\begin{equation*}
\tau = O\left(\frac{k}{\gamma\LL(\mu_t)^2}\left(\log\frac{\gamma}{k^4d\alpha}\right)\vee 1\right),
\end{equation*}
and the guaranteed decrease of the objective is
\begin{equation*}
\LL(\mu_t)-\LL(\mu_{t+\tau}) \geq \frac{|\lambda_0|\alpha}{2C_6\tau} \geq \Omega\left(\frac{\gamma^2\alpha\LL(\mu_t)^4}{k^5d}\left(\log\frac{\gamma}{k^4d\alpha}\vee 1\right)^{-1}\right).
\end{equation*}

\subsection{Escaping from Saddle Points}

The usual theory of Gaussian processes can be readily extended to multivariable outputs.
\begin{definition}[vector-valued Gaussian process]\label{def:gp}
The random function $\bxi:\Omega\to\RR^m$ is said to follow a Gaussian process if any finite collection of variables $\bxi(\theta_1),\cdots,\bxi(\theta_j)$ are jointly normally distributed. The process is determined by the mean function $\bm{m}:\Omega\to\RR^m$, $\bm{m}(\theta) = \E{\bxi(\theta)}$ and matrix-valued covariance function
\begin{equation*}
\bK:\Omega\times\Omega\to\RR^{m\times m},\quad\bK(\theta,\theta') = \E{(\bxi(\theta)-\bm{m}(\theta))(\bxi(\theta')-\bm{m}(\theta'))^\top}.
\end{equation*}
We denote this process as $\bxi\sim\GP(\bm{m},\bK)$. See \citet{Alvarez12} for further details.
\end{definition}

\begin{lemma}\label{thm:gpnormal}
For any $\mu\in\PP_2(\Omega)$, square-integrable test function $\psi\in L^2(\Omega,\mu;\RR^m)$ and covariance function $\bK:\Omega\times\Omega\to\RR^{m\times m}$ satisfying $\int\norm{\bK(\theta,\theta)}\mu(\rd\theta)<\infty$ the inner product $\langle \psi,\bxi\rangle_{L^2(\Omega,\mu;\RR^m)}$ for $\bxi\sim \GP(0,\bK)$ is normally distributed.
\end{lemma}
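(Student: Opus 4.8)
The plan is to identify $\langle\psi,\bxi\rangle_{L^2(\Omega,\mu;\RR^m)}$ as an element of the $L^2(\mathbb{P})$-closed linear span $G$ of the evaluations $\{b^\top\bxi(\theta):\theta\in\Omega,\,b\in\RR^m\}$, which is a Gaussian Hilbert space and hence consists entirely of centered Gaussian variables; the conclusion then follows immediately (the possibly degenerate case $\sigma^2=0$ being admissible, and a nonzero mean function $\bm m$ only adding the constant $\langle\psi,\bm m\rangle$). Throughout I would work with a jointly measurable modification of the sample paths $(\omega,\theta)\mapsto\bxi(\theta)$, which exists for Gaussian processes with sufficiently regular covariance and is the one nontrivial external input; I discuss it as the main obstacle below.

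The first computational step is integrability. Since $\E{\Norm{\bxi(\theta)}^2}=\tr\bK(\theta,\theta)\le m\norm{\bK(\theta,\theta)}$, Tonelli gives $\E{\Norm{\bxi}_{L^2(\Omega,\mu;\RR^m)}^2}\le m\int\norm{\bK(\theta,\theta)}\mu(\rd\theta)<\infty$, so $\bxi\in L^2(\Omega,\mu;\RR^m)$ almost surely and $\langle\psi,\bxi\rangle$ is a bona fide random variable. For its variance I would use the pointwise bound $\norm{\bK(\theta,\theta')}\le\norm{\bK(\theta,\theta)}^{1/2}\norm{\bK(\theta',\theta')}^{1/2}$, obtained by applying Cauchy--Schwarz to $u^\top\bK(\theta,\theta')v=\E{(u^\top\bxi(\theta))(v^\top\bxi(\theta'))}$ and taking the supremum over unit $u,v$. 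Together with Fubini (justified exactly as in the previous estimate) this yields
\begin{equation*}
\E{\langle\psi,\bxi\rangle^2}=\iint\psi(\theta)^\top\bK(\theta,\theta')\psi(\theta')\,\mu(\rd\theta)\mu(\rd\theta')\le\Bigl(\int\norm{\psi(\theta)}\norm{\bK(\theta,\theta)}^{1/2}\mu(\rd\theta)\Bigr)^2\le\norm{\psi}_{L^2(\Omega,\mu)}^2\int\norm{\bK(\theta,\theta)}\mu(\rd\theta)<\infty,
\end{equation*}
so $\langle\psi,\bxi\rangle\in L^2(\mathbb{P})$ and orthogonality statements are meaningful.

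The core step is an orthogonality argument: it suffices to show $\langle\psi,\bxi\rangle\perp\eta$ for every $\eta\in L^2(\mathbb{P})$ with $\eta\perp G$. For such an $\eta$ the integrand $\E{\abs{\eta}\norm{\psi(\theta)}\norm{\bxi(\theta)}}\le\norm{\eta}_{L^2(\mathbb{P})}\norm{\psi(\theta)}\,\E{\norm{\bxi(\theta)}^2}^{1/2}$ is $\mu$-integrable by the estimate above, so Fubini gives
\begin{equation*}
\E{\eta\,\langle\psi,\bxi\rangle}=\int_\Omega\psi(\theta)^\top\,\E{\eta\,\bxi(\theta)}\,\mu(\rd\theta)=0,
\end{equation*}
since each coordinate $\E{\eta\,b^\top\bxi(\theta)}$ vanishes because $b^\top\bxi(\theta)\in G$ and $\eta\perp G$. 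Hence $\langle\psi,\bxi\rangle\in G$. To close the loop I would record the two elementary facts making $G$ a Gaussian space: any finite subfamily of the generators is jointly centered Gaussian by the defining property of a Gaussian process, so all finite linear combinations are centered Gaussian; and an $L^2(\mathbb{P})$-limit of centered Gaussians $N(0,\sigma_n^2)$ is $N(0,\sigma^2)$ with $\sigma^2=\lim\sigma_n^2$ (compare characteristic functions $e^{-\sigma_n^2t^2/2}$), so $G$ consists of centered Gaussians. Therefore $\langle\psi,\bxi\rangle$ is centered Gaussian with variance the displayed double integral.

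The step I expect to be the main obstacle is not any of the above calculus, which is routine Cauchy--Schwarz and Fubini bookkeeping, but rather the tacit measurability assumption in the first step: without a jointly measurable version of $\bxi$, the quantity $\langle\psi,\bxi\rangle$ need not be measurable and the Fubini manipulations are unjustified. This is resolved by invoking the standard existence of a measurable (indeed separable) modification of a Gaussian process under a mild continuity or measurability hypothesis on $\bK$, which is satisfied in all our applications where $\bK$ is bounded and continuous; I would state this explicitly rather than leave it implicit.
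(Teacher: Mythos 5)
Your proof is correct and follows essentially the same route as the paper's: both identify $\langle\psi,\bxi\rangle$ as an element of a closed Gaussian linear span via the double orthogonal complement $(\mathscr{E}^\perp)^\perp=\mathscr{E}$ together with a Fubini argument. You additionally spell out details the paper leaves implicit (the $L^2(\mathbb{P})$ bound on the inner product, the joint-measurability of the sample paths, and why the closed span consists of Gaussians), which is a welcome tightening but not a different argument.
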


\begin{proof}
Note that the inner product is defined almost surely since
\begin{equation*}
\EEbig{\bxi}{\norm{\bxi}_{L^2(\Omega,\mu;\RR^m)}^2} = \int\EE{\bxi}{\norm{\bxi(\theta)}^2} \mu(\rd\theta) = \int \tr\bK(\theta,\theta) \mu(\rd\theta) <\infty.
\end{equation*}
We denote by $\mathscr{E}$ the closed linear span of the set of square-integrable random variables $\{\psi(\theta)^\top\bxi(\theta):\theta\in\Omega\}$. For any $\bZ\in \mathscr{E}^\perp$ it holds that $\EE{\bxi}{\bZ \psi(\theta)^\top\bxi(\theta)} = 0$, so that by Fubini's theorem
\begin{equation*}
\EEbig{\bxi}{\bZ \langle \psi,\bxi\rangle_{L^2(\Omega,\mu;\RR^m)}} = \EEbig{\bxi}{\int\bZ\psi(\theta)^\top\bxi(\theta) \mu(\rd\theta)} = 0.
\end{equation*}
Hence $\langle \psi,\bxi\rangle_{L^2(\Omega,\mu;\RR^m)} \in (\mathscr{E}^\perp)^\perp= \mathscr{E}$, and so is normally distributed.
\end{proof}

For the proposed perturbation process, the change in the gradient field along the flow of $\bxi$ can be quantified as
\begin{align*}
\nabla\deltaF(\mu_{\Delta t},\theta) - \nabla\deltaF(\mu^\dagger,\theta) &= \int_0^{\Delta t} \partial_t\left[\nabla\deltaF(\mu_t,\theta)\right] \rd t\\
&= -\int_0^{\Delta t} \int \nabla_\theta\nabla_{\theta'}\ddeltaF(\mu_t, \theta,\theta') \bxi(\theta') \mu_t(\rd\theta') \rd t\\
&= -\int_0^{\Delta t} \HH_{\mu_t}\left[\bxi\right] \rd t.
\end{align*}
The resulting $\psi_0$-component is
\begin{align*}
\alpha(\bxi) &= \int \psi_0(\theta)^\top \nabla\deltaF(\mu_{\Delta t},\theta)\mu^\dagger(\rd\theta)\\
&= \int \psi_0(\theta)^\top \bigg(\nabla\deltaF(\mu^\dagger,\theta) -\int_0^{\Delta t} \HH_{\mu^\dagger}\left[\bxi\right] \rd t + \int_0^{\Delta t} (\HH_{\mu^\dagger} -\HH_{\mu_t}) \left[\bxi\right] \rd t\bigg) \mu^\dagger(\rd\theta)\\
& = -\lambda_0\Delta t \int \psi_0(\theta)^\top \bxi(\theta) \mu^\dagger(\rd\theta) + \alpha +O(\Delta t^2),
\end{align*}
and first term is normally distributed by Lemma \ref{thm:gpnormal}.

\end{document}